\theoremstyle{plain}
\newtheorem{theorem}{Theorem}[section]
\newtheorem{proposition}[theorem]{Proposition}
\newtheorem{lemma}[theorem]{Lemma}
\newtheorem{corollary}[theorem]{Corollary}
\theoremstyle{definition}
\newtheorem{assumption}[theorem]{Assumption}
\theoremstyle{remark}
\newtheorem{remark}[theorem]{Remark}
\newcommand{\si}{\sum\limits_{i=1}^{M}}
\newcommand{\sj}{\sum\limits_{j=1}^{M}}
\newcommand{\sm}{\sum\limits_{m=1}^{M}}
\newcommand{\Q}{\tilde{Q}}
\newcommand{\A}{\tilde{A}}
\newcommand{\sa}{\sum\limits_{a\in \cA}}
\newcommand{\del}{\frac{d}{d\theta}}
\newcommand{\tr}{\tilde{r}}
\icmltitlerunning{Actor-Critic based Improper Reinforcement Learning}
\begin{document}

\twocolumn[
\icmltitle{Actor-Critic based Improper Reinforcement Learning}

% It is OKAY to include author information, even for blind
% submissions: the style file will automatically remove it for you
% unless you've provided the [accepted] option to the icml2022
% package.

% List of affiliations: The first argument should be a (short)
% identifier you will use later to specify author affiliations
% Academic affiliations should list Department, University, City, Region, Country
% Industry affiliations should list Company, City, Region, Country

% You can specify symbols, otherwise they are numbered in order.
% Ideally, you should not use this facility. Affiliations will be numbered
% in order of appearance and this is the preferred way.
\icmlsetsymbol{equal}{*}

\begin{icmlauthorlist}
\icmlauthor{Mohammadi Zaki}{yyy}
\icmlauthor{Avinash Mohan}{comp}
\icmlauthor{Aditya Gopalan}{yyy}
\icmlauthor{Shie Mannor}{sch}
\end{icmlauthorlist}

\icmlaffiliation{yyy}{Department of ECE, IISc, Bangalore, India}
\icmlaffiliation{comp}{Boston University, Massachusetts, USA}
\icmlaffiliation{sch}{Faculty of Electrical Engineering, Technion, Haifa, Israel and NVIDIA Research, Israel.}

\icmlcorrespondingauthor{Mohammadi Zaki}{mohammadi@iisc.ac.in}
% \icmlcorrespondingauthor{Firstname2 Lastname2}{first2.last2@www.uk}

% You may provide any keywords that you
% find helpful for describing your paper; these are used to populate
% the "keywords" metadata in the PDF but will not be shown in the document
\icmlkeywords{Improper Reinforcement Learning, Gradient-based Learning, Actor-critic}

\vskip 0.3in
]

% this must go after the closing bracket ] following \twocolumn[ ...

% This command actually creates the footnote in the first column
% listing the affiliations and the copyright notice.
% The command takes one argument, which is text to display at the start of the footnote.
% The \icmlEqualContribution command is standard text for equal contribution.
% Remove it (just {}) if you do not need this facility.

% \printAffiliationsAndNotice{}  % leave blank if no need to mention equal contribution
% \printAffiliationsAndNotice{\icmlEqualContribution} % otherwise use the standard text.

\begin{abstract}
%   The abstract paragraph should be indented \nicefrac{1}{2}~inch (3~picas) on
%   both the left- and right-hand margins. Use 10~point type, with a vertical
%   spacing (leading) of 11~points.  The word \textbf{Abstract} must be centered,
%   bold, and in point size 12. Two line spaces precede the abstract. The abstract
%   must be limited to one paragraph.
%   The abstract paragraph should be indented \nicefrac{1}{2}~inch (3~picas) on
%   both the left- and right-hand margins. Use 10~point type, with a vertical
%   spacing (leading) of 11~points.  The word \textbf{Abstract} must be centered,
%   bold, and in point size 12. Two line spaces precede the abstract. The abstract
%   must be limited to one paragraph.
We consider an improper reinforcement learning setting where a
learner is given $M$ base controllers for an unknown Markov
decision process, and wishes to combine them optimally to produce
a potentially new controller that can outperform each of the base
ones. This can be useful in tuning across controllers, learnt possibly in mismatched or simulated  environments, to obtain a good controller for a given target environment with relatively few trials. 
% Applications of this paradigm include the \emph{Sim2Real} problem -- simulators are typically (crude) approximations to real world scenarios, so optimal strategies devised for them may need further tweaking to perform well in reality. 
Towards this, we propose two algorithms: (1) a Policy Gradient-based approach; and (2) an algorithm that can switch between a simple Actor-Critic (AC) based scheme and a Natural Actor-Critic (NAC) scheme depending on the available information. Both algorithms operate over a
class of improper mixtures of the given controllers. 
For the first case, we derive convergence rate guarantees assuming access to a gradient oracle. 
% However, the value function of a mixture policy (and its gradient) may not be available in closed-form. We show that we can employ rollouts and simultaneous perturbation stochastic approximation (SPSA) for
% explicit gradient descent optimization. 
For the AC-based approach we provide convergence rate guarantees to a stationary point in the basic AC case and to a global optimum in the NAC case. Numerical results on (i) the standard control theoretic benchmark of stabilizing an cartpole; and (ii) a  constrained
queueing task show that our improper policy optimization algorithm can stabilize the system even when the base policies at its disposal are unstable.

\end{abstract}

\section{Introduction}\label{sec:Introduction}
%\avim{mention why improper learning is important}

A natural approach to design effective controllers for large, complex systems is to first approximate the system using a tried-and-true Markov decision process (MDP) model, such as the Linear Quadratic Regulator (LQR) \cite{SarahDean} or tabular MDPs \cite{UCRL}, and then compute (near-) optimal policies for the assumed model. Though this yields favorable results in principle, it is quite possible that errors in describing or understanding the system --  leading to misspecified models -- may lead to `overfitting', resulting in subpar controllers in practice. % An alternative to this is to construct an optimal controller in an \emph{online} fashion using a single (or multiple) chain of black-box interactions with the system. However, recent results on regret performance uncover exponential dependence on system parameters even for a system as simple as an LQR \cite{chen-hazan20black-box-control-linear-dynamical}, casting a shadow on online optimization as a viable option.
%\avim{Mention Hazan's paper $\mathcal{O}(2^d)$ lower bound for LQRs here. Add recent negative result on PG} 
Moreover, in many cases, the stability of the designed controller may be  crucial and more desirable than optimizing a fine-grained cost function. From the controller design standpoint, it is often easier, cheaper and more interpretable to specify or hardcode control policies based on domain-specific principles, e.g., anti-lock braking system (ABS) controllers \cite{ABScitation}. For these reasons, we investigate in this paper a promising,  \emph{general-purpose} reinforcement learning (RL) approach towards designing controllers\footnote{We use the terms 'policy' and 'controller' interchangeably in this article.} given  pre-designed ensembles of \emph{basic} or \emph{atomic} controllers, which (a) allows for flexibly combining the given controllers to obtain richer policies than the atomic policies, and, at the same time, (b) can preserve the basic structure of the given class of controllers and confer a high degree of interpretability on the resulting hybrid policy. 

{\bf Overview of the approach.} We consider a situation where we are given `black-box access' to $M$ controllers (maps from state to action distributions) $\{K_1, \ldots, K_M\}$ for an unknown MDP. By this we mean that we can choose to invoke any of the given controllers at any point during the operation of the system. With the understanding that the given family of controllers is 'reasonable,' we frame the problem of learning the best combination of the controllers by trial and error. We first set up an {\color{blue}improper policy class} of all randomized mixtures of the $M$ given controllers -- each such mixture is parameterized by a probability distribution over the $M$ base controllers. Applying an improper policy in this class amounts to selecting independently at each time a base controller according to this distribution and implementing the recommended action as a function of the present state of the system.  The learner's goal is to find the best performing mixture policy by iteratively testing from the pool of given controllers and observing the resulting state-action-reward trajectory. %To this end we develop a new gradient-based RL optimization algorithm that operates on a softmax parameterization of each mixture (probability distribution) of the $M$ basic controllers, and takes steps by following the gradient of the return of the current probability distribution to reach the optimum mixture. This is reminiscent of the standard policy gradient (PG) method with a softmax parameterization of the policy over a discrete state and action space. 

%However, there is a b
Note that the underlying parameterization in our setting is over a set of given \emph{controllers} which could be potentially abstract and defined for complex MDPs with continuous state/action spaces, instead of the (standard) policy gradient (PG) view where the parameterization directly defines the policy in terms of the state-action map. Our problem, therefore, hews more closely to a \emph{meta RL} framework, in that we operate over a set of controllers that have themselves been designed using some optimization framework to which we are agnostic. This has the advantage of  conferring a great deal of generality, since the class of controllers can now be chosen to promote any desirable secondary characteristic such as interpretability, ease of implementation or cost effectiveness.
% \ag{to-do: Stress `generality'?}\shie{Yes. In particular emphasize this is a meta algorithm}
% \avim{meta in many senses: (1)we're not concerned with lower-level RL optimization that gives rise to the $k_1,\cdots,k_m$, our algorithm sits on top of this layer and selects among the available controllers (2) while we use a softmax mapping from the weights to sampling probabilities, one could replace this with a different, potentially better mapping without overturning the presented philosophy signnificantly.}

It is also worth noting that our approach is different from treating each of the base controllers as an `expert' and applying standard mixture-of-experts algorithms, e.g., Hedge or Exponentiated Gradient \cite{EXP-WTS, Exp3,KocakExpIX,Neu15a}. Whereas the latter approach is tailored to converge to the best single controller (under the usual gradient approximation framework) and hence qualifies as a 'proper' learning algorithm, the former optimization problem is in the improper class of mixture policies which not only contains each atomic controller but also allows for a true mixture (i.e., one which puts positive probability on at least two elements) of many atomic controllers to achieve optimality; we exhibit concrete examples where this is indeed possible. \\
\noindent {\bf Our Contributions.} We make the following contributions in this context:
\vspace{-5pt}
\begin{list}{\labelitemi}{\leftmargin=0.0em}
\item We develop a gradient-based RL algorithm to iteratively tune a softmax parameterization of an improper (mixture) policy defined over the base controllers (Algorithm~\ref{alg:mainPolicyGradMDP}). While this algorithm, \emph{\color{blue}Softmax Policy Gradient} (or Softmax PG), relies on the availability of value function gradients, we later propose a modification that we call  \emph{\color{blue}GradEst} (see Alg.~\ref{alg:gradEst} in appendix) to Softmax PG to rectify this. GradEst uses a combination of rollouts and Simultaneously Perturbed Stochastic Approximation (SPSA) \cite{Borkar} to estimate the value gradient at the current mixture distribution. %At this level of generality, the
 \vspace{-10pt}
 \item We show a convergence rate of $\mathcal{O}(1/t)$ to the optimal value function for finite state-action MDPs. To do this, we employ a novel Non-uniform Łojasiewicz-type inequality \cite{lojasiewicz1963equations}, that lower bounds the 2-norm of the value gradient in terms of the suboptimality of the current mixture policy's value. Essentially, this helps establish that when the gradient of the value function hits zero, the value function is itself close to the optimum.
    %This helps in circumventing the "vanishing gradient problem" (citation for this problem) that could result in PG getting stuck in local maxima (since this problem is nonconcave) \avim{need some deeper reasoning here}. 
%Along the way, we also establish the  $\beta$-smoothness of value function of our \emph{improper} controller, which may be of independent interest. %both these facts aid our convergence proof.

%\item We demonstrate the performance of Softmax PG with the instructive special case of Multi-armed Bandits (Sec.~\ref{subsec:theory-estimated-gradients}). For a horizon of $T$ steps, we recover the well-known $\mathcal{O}\left(\log(T)\right)$ bound on regret \cite{lattimoreszepesvari2020} with both perfect and estimated value function gradients. Further, when perfect value gradients are available, we show a $\mathcal{O}\left(1/t\right)$ rate of convergence to the optimal value function, $t$ being the current round.
\vspace{-5pt}
\item Policy-gradient methods are well-known to suffer from high variance \cite{PetersNAC,BhatnagarNAC}. To circumvent this issue, we develop an algorithm that can switch between a simple Actor-Critic (AC) based scheme and a Natural Actor-Critic (NAC) scheme depending on the available information. The algorithm, `{ACIL}' (Sec.~\ref{sec:Actor-Critic based Improper Learning}), executes on a single sample path, without requiring any \textit{forced} resets, as is common in many RL algorithms. We provide convergence rate guarantees to a stationary point in the basic AC case and to a global optimum in the NAC case, under some additional (but standard) assumptions (of uniform ergodicty). The  total complexity of AC is measured to attain an $(\epsilon +$ $\tt{Critic\_error}$)-accurate stationary point. The  total complexity of NAC is measured to attain an $(\epsilon + \tt{Critic\_error} + \tt{Actor\_error})$-accurate stationary point. We use linear function approximation to approximate the value function and our convergence analysis show exactly how this approximation affects the final complexity bound.
% \vspace{-10pt}
\item We corroborate our theory using extensive simulation studies. For the PG based method we use \textit{GradEst} in two different settings (a) the well-known \emph{CartPole} system and (b) a scheduling task in a constrained queueing system. We discuss both these settings in detail in Sec.~\ref{sec:motivating-examples}, where we also demonstrate the power of our improper learning approach in finding control policies with provably good performance. In our experiments (see Sec.~\ref{sec:simulation-results}), we eschew access to exact value gradients and instead rely on a combination of roll outs and SPSA to \emph{estimate} them. For the actor-critic based learner, we demonstrate simulations on various queuing theoretic simulations using the natural-actor-critic based \emph{ACIL}.  All the results show that our proposed algorithms quickly converge to the correct mixture of available atomic controllers.
%We refer the reader to our {\bf technical report} \cite{zaki-improper} for all the proofs and additional material due to paucity of space.

    %\avim{simulation for bandits with estimated gradients?}
%     Our PG algorithm comments on convergence and stability. 
    
%      it prevents
% the gradient vanishing before reaching a maximum
\end{list}
%%%%%%%%%%%%%%%%%Putting the figure here%%%%%%%%%%%%%%%%%%%%%%%%%
\vspace{-10pt}
{\bf Related Work (brief).} We provide a quick survey of relevant literature. A detailed survey is deferred to the appendix.
\newline\textbf{Policy gradient.} The basic policy gradient method has become a cornerstone of modern RL and given birth to an entire class of highly efficient policy search techniques such as CPI \cite{kakade2002approximately}, TRPO \cite{TRPO}, PPO \cite{PPO}, and MADDPG \cite{MADDPG}. A growing body of recent work shows promising results about convergence rates for PG algorithms over finite state-action MDPs \cite{Agarwal2020,Shani2020AdaptiveTR,Bhandari2019GlobalOG,Mei2020}, where the parameterization is over the entire space of state -action pairs, i.e., $\Real^{S\times A}$. %In particular, \cite{Bhandari2019GlobalOG} shows that projected gradient descent does not suffer from spurious local optima on the simplex, \cite{Agarwal2020} shows that the with softmax parameterization PG converges to the global optima asymptotically. \cite{Shani2020AdaptiveTR} show a $\cO(1/\sqrt{t})$ convergence rate for mirror descent. \cite{Mei2020} shows that with softmax policy gradient convergence to the global optima occurs at a rate $\cO(1/t)$ and at $\cO(e^{-t})$ with entropic regularization. 
These advances, however, are partially offset by negative results such as those in \citet{li-etal21softmax-policy-gradient-exponential-converge-time}, which show that the convergence time is $\Omega \left(\abs{\cS}^{2^{1/(1-\gamma)}}\right)$, where $\cS$ is the state space of the MDP and $\gamma$ the discount factor, even with exact gradient knowledge. 
%We are thus left with a model-free technique, whose convergence rate shows desirable dependence on the number of iterations, but is exponential in system parameters. Our objective in this paper, therefore, is to attempt to alleviate precisely this latter issue.
\newline\textbf{Improper learning.} The above works concern \emph{proper} learning, where the policy search space is usually taken to be the set of all deterministic policies for an MDP. {\em Improper} learning, on the other hand, has been studied in statistical learning theory for the IID setting \cite{Improper1, Improper2}. In this \emph{representation independent} learning framework, the learning algorithm is not restricted to output a hypothesis from a given set of hypotheses. 
\newline\textbf{Boosting.} \citet{hazan-etal20boosting-control-dynamical} attempts to frame and solve policy optimization over an improper class by boosting a given class of controllers. This work, however, is situated in the context of non-stochastic control and assumes perfect knowledge of (i) the memory-boundedness of the MDP, and (ii) the state noise vector in every round, which amounts to essentially knowing the MDP transition dynamics. We work in the stochastic MDP setting and assume no access to the MDP's transition kernel. Further, it is assumed in  \cite{hazan-etal20boosting-control-dynamical} that all the atomic controllers available  are \emph{stabilizing} which, when working with an unknown MDP, is a very strong assumption to make. While making no such assumptions on our atomic controller class; we show our algorithms can begin with provably unstable controllers and yet succeed in stabilizing the system (Sec.~\ref{sec:motivating-queueing-example} and ~\ref{sec:simulation-results}). 
\newline\textbf{Options framework.} Our work differs from the options framework \cite{options1, options2} for hierarchical RL {\em in spirit}, in that we allow for each controller to be applied in each round rather than waiting for a sub-task to complete. The current work deals with finding an optimal mixture of basic controllers to solve a particular task. However, if we allow for a state-dependent choice of controllers, then the methods proposed can be generalized for solving hierarchical RL tasks.
\newline \textbf{Ensemble policy-based RL.} Our current work deals with accessing {\em  given} (possibly separately trained) controllers as \textit{black-boxes} and {\em learning to combine} them optimally. In contrast, in ensemble RL approaches \cite{ensembleRL1,ensembleRL2,ensembleRL3} the base policies are learnt on the fly (e.g., Q-learning, SARSA) by the agent whereas the {\em combining rule is fixed upfront} (e.g., majority voting, rank voting, Boltzmann multiplication, etc.). Moreover, the base policies \emph{have access} to the new system in Ensemble RL, which gives them a distinct advantage. Our method can serve as a meta-RL adaptation framework with theoretical guarantees which can use such pre-trained models to combine them optimally. 
To the best of our knowledge, ensemble RL works like \cite{ensembleRL2,ensembleRL3} do not provide  theoretical guarantees on the learnt combined policy. Our work on the other hand provides a firm theoretical as well as empirical basis for the methods we propose.
\newline\textbf{Improper learning with given base controllers.} Probably the closest resemblance with our work is that of \citet{pmlr-v89-banijamali19a} which aims at finding the best convex combination of a given set of base controllers for a given MDP. They however frame it as a \textit{planning} problem where the transition kernel $P$ is known to the agent. Furthermore, we treat the base controllers as black-box entities, whereas they exploit their structure to compute the state-occupancy measures. 
\newline\textbf{Actor-critic methods.} Actor-critic (AC) methods were first introduced in \citet{Konda_NIPS_AC}. Natural actor-critic methods were first introduced in \cite{PetersNAC,BhatnagarNAC}. While many studies are available for the asymptotic convergence of AC and NAC, we use the new techniques proposed by \citet{Improving-AC-NAC} and \citet{Target-AC} for showing convergence results. 
%Please see the tech report for a {\bf detailed survey} of other related work.   
% \section{Motivating Example: Ergodic Control of the Inverted Pendulum System}\label{sec:motivating-cartpole-example}
\vspace{-5pt}
\section{Motivating Examples}\label{sec:motivating-examples}
We begin with two examples that help illustrate the need for improper learning over a given set of atomic controllers. These examples concretely  demonstrate the power of this approach to find (improper) control policies that go well beyond what the atomic set can accomplish, while retaining some of their desirable properties (such as interpretability and simplicity of implementation).
\vspace{-5pt}
\subsection{Ergodic Control of the Cartpole System}\label{sec:motivating-cartpole-example}

Consider the Cartpole system which has, over the years, become a benchmark for testing control strategies \cite{khalil15nonlinear-control}. %Noting the ubiquity of this system in control theory, and in an effort to maintain focus on elucidating our approach, we relegate details about construction and modeling to the appendix. Therein, we also describe how 
The system's dynamics, evolving in $\Real^4$, can be approximated via a Linear Quadratic Regulator around an (unstable) equilibrium state vector that we designate the origin ($\mathbf{x}=\mathbf{0}$).
% As shown in Fig.~\ref{fig:inverted-pendulum}, it comprises a pendulum whose pivot is mounted on a cart which can be moved in the horizontal direction by applying a force. The objective is to modulate the direction and magnitude of this force $F$ to keep the pendulum from keeling over under the influence of gravity.
% To maintain focus on the 
% The state of the system at time $t,$ is given by the 4-tuple $\mathbf{x}(t):=[s,\Dot{s},\theta,\Dot{\theta}]$, with $\mathbf{x}(\cdot)=\mathbf{0}$ corresponding to the pendulum being upright and stationary. One of the strategies used to design control policies for this system is by first approximating the dynamics around $\mathbf{x}(\cdot)=\mathbf{0}$ with a linear, quadratic cost model and designing a linear controller for these approximate dynamics. This, after time discretization, 
The objective now reduces to finding a (potentially randomized) control policy $u\equiv \{u(t),t\geq0\}$ that solves $\inf_{u} J\left(\mathbb{E}_{u}\sum_{t=0}^\infty \mathbf{x}^\intercal(t) Q \mathbf{x}(t) + R u^2(t) \right)$ subject to $\mathbf{x}(t+1)=A_{open}\mathbf{x}(t) + \mathbf{b} u(t)$ at all times $t\geq0$.

% \begin{eqnarray}\label{eqn:cartpole-LQR-approx-WITHOUT-Details}
% \inf_{u} J(\mathbf{x}(0)) &=& \mathbb{E}_{u}\sum_{t=0}^\infty \mathbf{x}^\intercal(t) Q \mathbf{x}(t) + R u^2(t), \nonumber \\
% s.t. \mathbf{x}(t+1) &=& A_{open}\mathbf{x}(t) + \mathbf{b} u(t)
% %\nonumber\\
% % s.t.~{\mathbf{x}}(t+1) &=& \underbrace{\begin{pmatrix}
% % 0 & 1 & 0                                                       & 0\\
% % 0 & 0 & \frac{g}{l\left(\frac{4}{3}-\frac{m_p}{m_p+m_k}\right)} & 0 \\
% % 0 & 0 &  0                                                      & 1 \\
% % 0 & 0 & \frac{g}{l\left(\frac{4}{3}-\frac{m_p}{m_p+m_k}\right)} & 0 \\
% % \end{pmatrix}}_{\color{blue}A_{open}} \mathbf{x}(t)
% % + \underbrace{\begin{pmatrix}
% % 0 \\
% % \frac{1}{m_p+m_k}\\
% % 0 \\
% % \frac{1}{l\left(\frac{4}{3}-\frac{m_p}{m_p+m_k}\right)}  \\
% % \end{pmatrix}}_{\color{blue}\mathbf{b}} u(t).
% \end{eqnarray}%\Dot{\mathbf{x}}(t) 
\looseness=-1 Under standard assumptions of controllability and observability, this optimization has a stationary, linear solution $u^*(t) = -\mathbf{K}^\intercal\mathbf{x}(t)$ ( \cite{bertsekas11dynamic}). Moreover, setting $A:=A_{open}-\mathbf{b}\mathbf{K}^\intercal$, it is well know that the dynamics ${\mathbf{x}}(t+1) = A \mathbf{x}(t),~t\geq0,$
% \begin{equation*}
%     {\mathbf{x}}(t+1) = A \mathbf{x}(t)
% \end{equation*}
are stable. 
The usual design strategy for a given Cartpole involves a combination of system identification, followed by linearization and computing the controller gain $\mathbf{K}$. This would typically produce a controller with tolerable performance fairly quickly, but would also suffer from nonidealities of parameter estimation.

To alleviate this problem, first consider a generic (ergodic) control policy that builds on this strategy by switching across a menu of controllers $\{K_1,\cdots,K_N\}$ produced as above. That is, at any time $t$, this policy chooses $K_i,~i\in[N],$ w.p. $p_i$, so that the control input at time $t$ is $u(t)=-\mathbf{K}_i^\intercal\mathbf{x}(t)$ w.p. $p_i.$ Let $A(i) := A_{open}-\mathbf{b}\mathbf{K}_i^\intercal$. The resulting {\em controlled} dynamics are given by $\mathbf{x}(t+1) = A(r(t)) \mathbf{x}(t),~t\geq0,$
% \begin{eqnarray}\label{eqn:EPLS-definition}
%     {\mathbf{x}}(t+1) &=& A(r(t)) \mathbf{x}(t)\nonumber\\
%     \mathbf{x}(0) &=& \mathbf{0},
% \end{eqnarray}
where $r(t)=i$ w.p. $p_i,$ IID across $t$.  

This is an example of an \emph{\color{black} ergodic parameter linear system} (EPLS) \cite{bolzern-etal08almost-sure-stability-ergodic-linear}, which is said to be \emph{\color{blue}Exponentially Almost Surely Stable} (EAS) if the state norm decays at least exponentially fast with time: $\mathbb{P}\left\lbrace \limsup_{t\rightarrow\infty}\frac{1}{t}\log{\norm{\mathbf{x}(t)}}\leq-\rho\right\rbrace = 1$ for some $\rho > 0$. Let the random variable $\lambda(\omega):=\limsup_{t\rightarrow\infty}\frac{1}{t}\log{\norm{\mathbf{x}(t,\omega)}}$. For our dynamics $\mathbf{x}(t+1) = A(r(t)) \mathbf{x}(t),~t\geq0$, it is seen that the {\em Lyapunov exponent}  $\frac{1}{t}\log{\norm{\mathbf{x}(t)}}$ is at most the quantity $\sum_{i=1}^N p_i \log{\norm{A(i)}} a.s.$  (see appendix for details).  

A good mixture controller can now be designed by choosing $\{p_1,\cdots,p_N\}$ such that $\lambda(\omega)<-\rho$ for some $\rho>0$, ensuring exponentially almost sure stability (subject to $\log \norm{A(i)} < 0$ for some $i$). As we show in the sequel, our policy gradient algorithm (SoftMax PG) learns an improper mixture $\{p_1,\cdots,p_N\}$ that (i) can stabilize the system even when a majority of the constituent \emph{atomic} controllers $\{K_1,\cdots,K_N\}$ are \emph{unstable}, i.e., converges to a mixture that ensures that the average exponent $\lambda(\omega)<0$,  and (ii) shows better performance than that each of the atomic controllers. %Stability corresponds to a specific, coarse-grained,  cost measure, so we can expect to see a similar phenomenon for more general cost structures. 

%%%%%%%%%%%%%%%%%%%%%%%%%%%%%%%%%%%%%%%%%%%%%%%%%%%%%%%%%%%%%%%%%%
\vspace{-5pt}
\subsection{Scheduling in Constrained Queueing Networks} \label{sec:motivating-queueing-example}
%\avim{I'd prefer if the Queueing example were within the intro.}
%\ag{Not sure where this should go.}
%\avim{mention that this is a minimization problem as opposed to the rest of the paper}
%Another ideal example that helps motivate the the need for improper learning, while simultaneously illustrating its capabilities, is the problem of scheduling in a constrained queueing network. Such systems are widely used to model communication networks in the literature \cite{bertsekas-gallager92computer-networks-neelyYY}. 

%The system, shown in Fig.~\ref{subfig:the-two-queues}, 
%\avim{Text in Fig.~1 is almost invisible.}
\begin{wrapfigure}{r}{0.3\textwidth}
    \begin{minipage}{0.3\textwidth}
        \vspace{-10pt}
        \tikzset{every picture/.style={line width=0.75pt}} %set default line width to 0.75pt        
\tikzset{fontscale/.style = {font=\relsize{#1}}
    }
% \resizebox{7.00cm}{4.25cm}{
\resizebox{5.00cm}{3.5cm}{

\begin{tikzpicture}[x=0.57pt,y=0.57pt,yscale=-1,xscale=1]
%uncomment if require: \path (0,441); %set diagram left start at 0, and has height of 441

%Shape: Axis 2D [id:dp7452783749628158] 
\draw [line width=2.25]  (22,380.5) -- (652.8,380.5)(85.08,6.97) -- (85.08,422) (645.8,375.5) -- (652.8,380.5) -- (645.8,385.5) (80.08,13.97) -- (85.08,6.97) -- (90.08,13.97)  ;
%Straight Lines [id:da9142352961854645] 
\draw [line width=1.5]    (85.8,22.98) -- (573.8,380.97) ;
%Straight Lines [id:da9886485978498722] 
\draw [color={rgb, 255:red, 74; green, 99; blue, 226 }  ,draw opacity=1 ][line width=2.25]    (85.08,379.4) -- (115.27,94.95) ;
\draw [shift={(115.8,89.98)}, rotate = 456.06] [fill={rgb, 255:red, 74; green, 99; blue, 226 }  ,fill opacity=1 ][line width=0.08]  [draw opacity=0] (16.07,-7.72) -- (0,0) -- (16.07,7.72) -- (10.67,0) -- cycle    ;
%Straight Lines [id:da01735236817527741] 
\draw [color={rgb, 255:red, 74; green, 99; blue, 226 }  ,draw opacity=1 ][line width=2.25]    (85.08,379.4) -- (461.82,349.37) ;
\draw [shift={(466.8,348.98)}, rotate = 535.44] [fill={rgb, 255:red, 74; green, 99; blue, 226 }  ,fill opacity=1 ][line width=0.08]  [draw opacity=0] (16.07,-7.72) -- (0,0) -- (16.07,7.72) -- (10.67,0) -- cycle    ;
%Shape: Rectangle [id:dp025384842686548836] 
\draw  [fill={rgb, 255:red, 128; green, 128; blue, 128 }  ,fill opacity=0.3 ] (85.08,89.98) -- (115.8,89.98) -- (115.8,379.4) -- (85.08,379.4) -- cycle ;
%Shape: Rectangle [id:dp5886792180583069] 
\draw  [fill={rgb, 255:red, 128; green, 128; blue, 128 }  ,fill opacity=0.3 ] (85.08,348.98) -- (466.8,348.98) -- (466.8,379.4) -- (85.08,379.4) -- cycle ;
%Shape: Right Triangle [id:dp16395811499058488] 
\draw  [fill={rgb, 255:red, 87; green, 19; blue, 254 }  ,fill opacity=0.3 ][dash pattern={on 2.53pt off 3.02pt}][line width=2.25]  (115.8,89.98) -- (466.8,348.98) -- (115.8,348.98) -- cycle ;
%Straight Lines [id:da10681329777672421] 
\draw    (48.8,145) -- (77.8,145) -- (98.8,145) ;
\draw [shift={(100.8,145)}, rotate = 180] [fill={rgb, 255:red, 0; green, 0; blue, 0 }  ][line width=0.08]  [draw opacity=0] (12,-3) -- (0,0) -- (12,3) -- cycle    ;
%Straight Lines [id:da2568518717516832] 
\draw    (412.8,413.97) -- (412.8,366.97) ;
\draw [shift={(412.8,364.97)}, rotate = 450] [fill={rgb, 255:red, 0; green, 0; blue, 0 }  ][line width=0.08]  [draw opacity=0] (12,-3) -- (0,0) -- (12,3) -- cycle    ;
%Straight Lines [id:da4078365096901999] 
\draw    (287.8,90.2) -- (189.45,188.55) ;
\draw [shift={(188.04,189.96)}, rotate = 315] [color={rgb, 255:red, 0; green, 0; blue, 0 }  ][line width=0.75]    (10.93,-3.29) .. controls (6.95,-1.4) and (3.31,-0.3) .. (0,0) .. controls (3.31,0.3) and (6.95,1.4) .. (10.93,3.29)   ;
%Straight Lines [id:da6666951497559499] 
\draw    (484,198) -- (427.08,266.44) ;
\draw [shift={(425.8,267.97)}, rotate = 309.75] [color={rgb, 255:red, 0; green, 0; blue, 0 }  ][line width=0.75]    (10.93,-3.29) .. controls (6.95,-1.4) and (3.31,-0.3) .. (0,0) .. controls (3.31,0.3) and (6.95,1.4) .. (10.93,3.29)   ;

% Text Node
\draw (628,388.4) node [anchor=north west][inner sep=0.75pt]  [font=\LARGE]  {$\mathbf{\lambda _{1}}$};
% Text Node
\draw (48,22.4) node [anchor=north west][inner sep=0.75pt]  [font=\LARGE]  {$\mathbf{\lambda _{2}}$};
% Text Node
\draw (4,82.4) node [anchor=north west][inner sep=0.75pt] [font=\LARGE]    {$( 0,1-\epsilon )$};
% Text Node
\draw (435,388.4) node [anchor=north west][inner sep=0.75pt]  [font=\LARGE]   {$( 1-\epsilon ,0)$};
% Text Node
\draw (26,341.4) node [anchor=north west][inner sep=0.75pt]  [font=\LARGE]   {$( 0,\epsilon )$};
% Text Node
\draw (96,389.4) node [anchor=north west][inner sep=0.75pt]  [font=\LARGE]   {$( \epsilon ,0)$};
% Text Node
\draw (124,320) node [anchor=north west][inner sep=0.75pt]   [align=left] {\textbf{{\Large A}}};
% Text Node
\draw (121,69) node [anchor=north west][inner sep=0.75pt]   [align=left] {\textbf{{\large B}}};
% Text Node
\draw (468,335) node [anchor=north west][inner sep=0.75pt]   [align=left] {\textbf{{\large C}}};
% Text Node
\draw (13,138.4) node [anchor=north west][inner sep=0.75pt]    {$\mathbf{C_{2}}$};
% Text Node
\draw (401.8,418.37) node [anchor=north west][inner sep=0.75pt]    {$\mathbf{C_{2}}$};
% Text Node
\draw (237,46) node [anchor=north west][inner sep=0.75pt]   [align=left] {\begin{minipage}[lt]{126.871pt}\setlength\topsep{0pt}
\begin{center}
\begin{LARGE}
extra capacity achieved\\through \textbf{improper learning}
\end{LARGE}
\end{center}

\end{minipage}};
% Text Node
\draw (472,170.4) node [anchor=north west][inner sep=0.75pt]  [font=\LARGE]  {$\mathbf{\lambda _{1} \ +\ \lambda _{2} =1}$};

\end{tikzpicture}
}
        \vspace{-5pt}
        \caption{\footnotesize{$K_1$ and $K_2$ by themselves can only stabilize $\mathcal{C}_1\cup\mathcal{C}_2$ (gray rectangles). With improper learning, we enlarge the set of stabilizable arrival rates by the triangle $\Delta ABC$ shown in purple, above.}}
        \label{fig:capacity-and-stability-regions}
    \end{minipage}
    \vspace{-0.5cm}
\end{wrapfigure}

We consider a system that comprises two queues fed by independent, stochastic arrival processes $A_i(t),i\in\{1,2\},t\in\mathbb{N}$. The length of queue~$i$, measured at the beginning of time slot $t,$ is denoted by $Q_i(t)\in\mathbb{Z}_+$. A common server serves both queues and can drain at most one packet from the system in a time slot. The server, therefore, needs to decide which of the two queues it intends to serve in a given slot (we assume that once the server chooses to serve a packet, service succeeds with probability 1). The server's decision is denoted by the vector $\mathbf{D}(t)\in\mathcal{A}:=\left\lbrace[0,0],[1,0],[0,1]\right\rbrace,$ where a \enquote{$1$} denotes service and a \enquote{$0$} denotes lack thereof.
Let $\mathbb{E}A_i(t)=\lambda_i,$ and note that the arrival rate $\boldsymbol{\lambda}=[\lambda_1,\lambda_2]$ is {\color{blue}unknown} to the learner. 
% Defining $(x)^+:=\max\{0,x\},~\forall~x\in\mathbb{R},$ queue length evolution is given by the equations $Q_i(t+1) = \left(Q_i(t)-D_i(t)\right)^+ + A_i(t+1),~i\in\{1,2\}.$
% \begin{equation}
%     Q_i(t+1) = \left(Q_i(t)-D_i(t)\right)^+ + A_i(t+1),~i\in\{1,2\}.
% \end{equation}
%Clearly, $\mathbf{Q}(t) = [Q_i(t),Q_2(t)]$ is a Markov process and, with $\mathbf{D}(t)$ as our control variable, $(\mathbf{Q},\mathbf{D})$
%Let $\mathcal{F}_t$ denote the state-action history until time $t,$ and $\mathcal{P}(\mathcal{A})$ the space of all probability distributions on $\mathcal{A}.$ 
We aim to find a (potentially randomized) policy $\pi$
%$\pi: \mathcal{F}_t\rightarrow\mathcal{P}\left(\mathcal{A}\right)$ 
to minimize the discounted system backlog given by %$J_\pi(\mathbf{Q}(0)):=\mathbb{E}^\pi_{\mathbf{Q}(0)}\sum_{t=0}^{\infty}\gamma^t\left(Q_1(t)+Q_2(t)\right).$ %\left(\mathbf{D}(t)\right)_{t=0}^\infty
$
     J_\pi(\mathbf{Q}(0)):=\mathbb{E}^\pi_{\mathbf{Q}(0)}\sum_{t=0}^{\infty}\gamma^t\left(Q_1(t)+Q_2(t)\right).
%     \label{eqn:queueing-discounted-cost}
$

Any policy with $J_\pi(\cdot)<\infty,$ %\forall \mathbf{Q}(0) \in \mathbb{Z}_+^2
is said to be \emph{\color{blue}stabilizing} (or, equivalently, a \emph{stable} policy). It is well known that there exist stabilizing policies iff $\lambda_1+\lambda_2<1$ \cite{tassiulas-ephremides92stability-queueing}. A policy $\pi_{\mu_1,\mu_2}$ that chooses Queue~$i$ w.p. $\mu_i$ in every slot, 
can provably stabilize a system iff $\mu_i>\lambda_i,\forall~i\in\{1,2\}$. Now, assume our control set consists of two stationary policies $K_1,K_2$ with $K_1\equiv\pi_{\epsilon,1-\epsilon}$, $K_1\equiv\pi_{1-\epsilon,\epsilon}$ and sufficiently small $\epsilon>0.$ That is, we have $M=2$ controllers $K_1,K_2.$ Clearly, neither of these can, by itself, stabilize a network with $\boldsymbol{\lambda}=[0.49,0.49].$ 

However, an \emph{improper} mixture of the two that selects $K_1$ and $K_2$ each with probability $1/2$ can. In fact, as Fig.~\ref{fig:capacity-and-stability-regions} shows, our improper learning algorithm can stabilize {\em all} arrival rates in $\mathcal{C}_1\cup\mathcal{C}_2\cup\Delta ABC,$ without prior knowledge of $[\lambda_1,\lambda_2].$ In other words, our algorithm enlarges the stability region by the triangle $\Delta ABC,$ over and above $\mathcal{C}_1\cup\mathcal{C}_2$. 
We will return to these examples in Sec.~\ref{sec:simulation-results}, and show, using experiments, (1) how our improper learner converges to the stabilizing mixture of the available policies and (2) if the optimal policy is among the available controllers, how our algorithm can find and converge to it.

%  In Sec.~\ref{sec:simulation-results}, we will demonstrate the  policy gradient algorithm (SoftMax PG) simulate our 
\vspace{-5pt}
\section{Problem Statement and Notation}\label{sec:problemStatementAndNotation}

% Consider a Markov Decision Process $\left(\cS,\cA,\tP,r,\rho,\gamma\right)$ evolving over a finite state space $\cS$. Let $\abs{\cS} =S$ and $\abs{\cA}=A(<\infty)$ denote the sizes of the state and action spaces respectively. We denote by $\cP(\cX)$ the set of probability distributions over a given set $\cX$ and assume that the transition probability matrix $\tP:\cS\times\cA\rightarrow\cP(\cS)$ is unknown to the learner. The function $r:\cS\times\cA\rightarrow\mathbb{R}$ denotes the single stage reward and $\gamma\in(0,1)$ is the discount factor.  The initial distribution over the state space is denoed by $\rho \in \cP\left(\cS\right)$.
A (finite) Markov Decision Process {\color{blue}$(\cS, \cA, \tP, r,\rho, \gamma)$} is specified by a finite state
space $\cS$, a finite action space $\cA$, a transition probability matrix $\tP$, where $\tP\left(\Tilde{s}|s, a\right)$ is the probability of transitioning into state $\Tilde{s}$ upon taking action $a\in\cA$ in state $s$, a single stage reward function $r:\cS\times\cA\rightarrow\mathbb{R}$, a starting state distribution $\rho$ over $\cS$ and a discount factor $\gamma \in (0, 1)$.
A (stationary) \emph{policy} or \emph{\color{blue}controller} $\pi : \cS \rightarrow \cP(\cA)$ specifies a decision-making strategy in which the learner chooses actions ($a_t$) adaptively based on the current state ($s_t$), i.e., $a_t\sim\pi(s_t)$. $\pi$ and $\rho$, together with $\tP,$ induce a probability measure $\mathbb{P}^{\pi}_{\rho}$ on the space of all sample paths of the underlying Markov process and we denote by $\mathbb{E}^{\pi}_{\rho}$ the associated expectation operator. The {\color{blue}value function} of policy $\pi$ (also called the value of policy $\pi$), denoted by $V^{\pi}$ is the total discounted reward obtained by following $\pi$, i.e., $V^{\pi}(\rho):=\mathbb{E}^{\pi}_{\rho}\sum_{t=0}^{\infty}\gamma^tr(s_t,a_t).$
% \begin{equation}
% V^{\pi}(\rho):=\mathbb{E}^{\pi}_{\rho}\sum_{t=0}^{\infty}\gamma^tr(s_t,a_t)
% \end{equation}

\noindent{\bfseries{Improper Learning.}} We assume that the learner is provided with a finite number of (stationary) controllers $\mathcal{C}:=\{K_1,\cdots,K_M\}$ and, as described below, set up a parameterized improper policy class ${\color{blue}\mathcal{I}_{soft}(\mathcal{C})}$ that depends on $\mathcal{C}.$ The aim therefore, is to identify the best policy for the given MDP within this class, i.e.,
% \vspace{-10pt}
\begin{equation}\label{eq:main optimization problem}
\pi^* = \argmax\limits_{\pi\in \mathcal{I}_{soft}(\mathcal{C})} V^{\pi}(\rho).     
\end{equation}
% \vspace{-2pt}
We now describe the construction of the class $\mathcal{I}_{soft}(\mathcal{C}).$
\newline{\bfseries{The Softmax Policy Class.}} We assign weights $\theta_m\in \Real$, to each controller $K_m\in\mathcal{C}$ and define  $\theta:=[\theta_1,\cdots,\theta_M]$. The improper class $\mathcal{I}_{soft}$ is parameterized by $\theta$ as follows. In each round, the policy $\pi_{\theta}\in\mathcal{I}_{soft}(\mathcal{C})$ chooses a controller drawn from {\color{blue}$\softmax(\theta)$}, i.e., the probability of choosing Controller~$K_m$ is given by, $\pi_\theta(m):= {e^{\theta_m}}/\left({\sum_{m'=1}^M e^{\theta_{m'}}}\right).$
%\vspace{-0.3cm}
% \begin{equation}
%   \pi_\theta(m):= {e^{\theta_m}}/\left({\sum\limits_{m'=1}^M e^{\theta_{m'}}}\right).  
%   \label{eqn:softmax-defn}
% \end{equation}
% \[\pi_\theta(m):= \frac{e^{\theta_m}}{\sum\limits_{m'=1}^M e^{\theta_{m'}}}.\]
%The probability of choosing a controller at time $t$ is denoted as $\pi_{\theta_t}(m):=\softmax(\theta_m^t)$, where $\theta_m^t$ is the weight assigned to controller $m$ at round $t$. 
Note, therefore, that in every round, our algorithm interacts with the MDP only \emph{through} the controller sampled in that round. In the rest of the paper, we will deal exclusively with a fixed and given $\mathcal{C}$ and the resultant  $\mathcal{I}_{soft}$. therefore, we overload the notation $\pi_{\theta_t}(a|s)$ for any $a\in \cA$ and $s\in \cS$ to denote the probability with which the algorithm chooses action $a$ in state $s$ at time $t$. For ease of notation,  whenever the context is clear, we will also drop the subscript $\theta$ i.e., $\pi_{\theta_t}\equiv \pi_t$. Hence, we have at any time $t\geq 0: \pi_{\theta_t}(a|s) = \sum_{m=1}^{M}\pi_{\theta_t}(m)K_m(s,a).$ %= \sum\limits_{m=1}^{M}\frac{e^{\theta_m^t}}{\sum\limits_{m'}^{M}e^{\theta_{m'}^t}}K_{m}(s,a).
% \vspace{-0.3cm}
% \begin{equation}\label{eq:policy action selection criteria}
%     \pi_{\theta_t}(a|s) = \sum\limits_{m=1}^{M}\pi_{\theta_t}(m)K_m(s,a). %= \sum\limits_{m=1}^{M}\frac{e^{\theta_m^t}}{\sum\limits_{m'}^{M}e^{\theta_{m'}^t}}K_{m}(s,a).
% \end{equation}
Since we deal with gradient-based methods in the sequel, we define the \emph{\color{blue}value gradient} of policy $\pi_{\theta}\in\mathcal{I}_{soft},$ by $\nabla_{\theta}V^{\pi_\theta}\equiv\frac{dV^{\pi_{\theta_t}}}{d\theta^t}$. We say that $V^{\pi_\theta}$ is {\color{blue}$\beta$-smooth} if $\nabla_{\theta}V^{\pi_\theta}$ is $\beta$-Lipschitz \cite{Agarwal2020}. Finally, let for any two integers $a$ and $b$, $\ind_{ab}$ denote the indicator that $a=b$.
\newline\textbf{Comparison to the standard PG setting.}
This problem we define is different from the usual policy gradient setting where the parameterization completely defines the policy in terms of the state-action mapping. One can use the methodology followed in \cite{Mei2020}, by assigning a parameter $\theta_{s,m}$ for every $s\in \cS, m\in [M]$. With some calculation, it can be shown that this is equivalent to the tabular setting with $S$ states and $M$ actions, with the new `reward' defined by $r(s,m)\bydef \sum_{a\in \cA} K_m(s,a)r(s,a)$ where $r(s,a)$ is the usual expected reward obtained at state $s$ and playing action $a\in \cA$. By following the approach in \cite{Mei2020} on this modified setting, it can be shown that the policy converges for each $s\in \cS$, $\pi_\theta(m^*(s)\given s) \to 1$, for every $s\in \cS$, which is the optimum policy. However, the problem that we address, is to select \emph{a single} controller (from within $\mathcal{I}_{soft}$, the convex hull of the given $M$ controllers) , which would guarantee maximum return if one plays that single mixture for all time, from among the given set of controllers. %\avim{mention improper learning somewhere here as well.}
\vspace{-5pt}
\section{Improper Learning using Gradients}\label{sec:PG theory}
% \begin{wrapfigure}{r}{0.5\textwidth}
% \vspace{-10pt}
% \begin{algorithm2e}
%   \caption{{\color{blue}SoftMax PG}}
%   \label{alg:mainPolicyGradMDP}
%   \KwIn{ learning rate $\eta>0$, initial state distribution $\mu$}
%   \DontPrintSemicolon
%   %\REPEAT
%   Initialize each $\theta^1_m=1$, for all $m\in [M]$, $s_1\sim \mu$\;
%   \For {t=1 to T}{
%   Choose controller $m_t\sim \pi_t$\;
%   Play action $a_t \sim K_{m_t}(s_{t},:)$\;
%   %\IF{$x_i > x_{i+1}$}
%   Observe $s_{t+1}\sim \tP(.|s_t,a_t)$\;
%   Update: $\theta_{t+1} = \theta_{t} + \eta\; \nabla_{\theta_t}V^{\pi_{\theta_t}}$\;\label{algo:softmaxpg-gradAscent}
%   }
% \end{algorithm2e}
\setlength{\textfloatsep}{5pt}
\begin{algorithm}[tb]
   \caption{{\color{blue}SoftMax PG}}
   \label{alg:mainPolicyGradMDP}
\begin{algorithmic}
   \STATE {\bfseries Input:} learning rate $\eta>0$, initial state distribution $\mu$
%   \REPEAT
   \STATE {\bfseries Initialize:} each $\theta^1_m=1$, for all $m\in [M]$, $s_1\sim \mu$.
   \FOR{$t=1$ {\bfseries to} $T$}
%   \IF{$x_i > x_{i+1}$}
   \STATE Choose controller $m_t\sim \pi_t$
   \STATE Play action $a_t \sim K_{m_t}(s_{t},:)$
   \STATE Observe $s_{t+1}\sim \tP(.|s_t,a_t)$
   \STATE Update: $\theta_{t+1} = \theta_{t} + \eta 
   \nabla_{\theta_t}V^{\pi_{\theta_t}}$\;\label{algo:softmaxpg-gradAscent}
%   \ENDIF
   \ENDFOR
%   \UNTIL{$noChange$ is $true$}
\end{algorithmic}
\end{algorithm}

% \vspace{-10pt}
% \end{wrapfigure}

%\ag{to do MZ: Make a new sec called "Perf Analysis" with all convergence results - part (i) MDP with exact gradient, (ii) special case of bandit over bandits with direct parameterization and noisy gradient; In this section add arguments for non concavity and the fact that improper mixtures can outdo the components}
In this and the following sections, we propose and analyze a policy gradient-based algorithm that provably finds the best, potentially improper, mixture of controllers for the given MDP. While we employ gradient ascent to optimize the mixture weights, the fact that this procedure works at all is far from obvious. We begin by noting that $V^{\pi_\theta}$, as described in Section \ref{sec:problemStatementAndNotation}, is \emph{\color{blue}nonconcave} in $\theta$ for both direct and softmax parameterizations, which renders analysis with standard tools of convex optimization inapplicable. %Formally,%We formalize this as follows.
\begin{lemma}(Non-concavity of Value function)\label{lemma:nonconcavity of V main}
There is an MDP and a set of controllers, for which the maximization problem  of the value function (i.e. \eqref{eq:main optimization problem}) is non-concave for both the SoftMax and direct parameterizations, i.e., $\theta\mapsto V^{\pi_{\theta}}$ is non-concave.
\end{lemma}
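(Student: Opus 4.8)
The statement is an existence claim, so the plan is to exhibit one explicit MDP together with two controllers $K_1,K_2$ and verify non-concavity by a direct curvature computation. The conceptual starting point is that for any fixed mixture the induced policy $\pi$ enters the value only through the reward vector $r^{\pi}$ and the transition matrix $P^{\pi}$, both of which are \emph{affine} in the mixture weights, while the value itself equals $V^{\pi}(\rho)=\rho^{\top}(I-\gamma P^{\pi})^{-1}r^{\pi}$. The matrix inverse makes $V$ a \emph{rational} function of the mixture weights, and this rationality is exactly what I would exploit: a suitably chosen MDP makes $V$ strictly convex on part of its domain, which already contradicts concavity.

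For the direct parameterization I would take the simplest chain in which the reward is only reachable after two controller-dependent transitions. Concretely, three states $s_1,s_2,s_3$ with $s_3$ absorbing and rewarding ($r(s_3,\cdot)=1$, $r(s_1,\cdot)=r(s_2,\cdot)=0$); from $s_1$ action $a$ moves to $s_2$ and action $b$ stays; from $s_2$ action $a$ moves to $s_3$ and action $b$ returns to $s_1$; the controller $K_1$ plays $a$ in both states and $K_2$ plays $b$ in both states. Writing $p$ for the weight on $K_1$ and solving the three Bellman equations in closed form gives
$$V^{\pi_p}(s_1)=\frac{\gamma^{2}p^{2}}{(1-\gamma)\bigl(1-\gamma(1-p)-\gamma^{2}p(1-p)\bigr)}.$$
Expanding near $p=0$, where the bracketed denominator equals $1-\gamma>0$, yields
$$V^{\pi_p}(s_1)=\frac{\gamma^{2}}{(1-\gamma)^{2}}\,p^{2}+o(p^{2}),$$
so $V^{\pi_p}(s_1)=0$, its first derivative vanishes, and its second derivative at $p=0$ equals $2\gamma^{2}/(1-\gamma)^{2}>0$. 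Hence $p\mapsto V^{\pi_p}(s_1)$ is strictly convex in a neighborhood of $0$ and therefore not concave on $[0,1]$, which settles the direct case with $\rho=\delta_{s_1}$.

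For the softmax case the key point I would flag is that non-concavity is \emph{not} automatically preserved under a nonlinear reparameterization (for instance $p\mapsto p^{2}$ is non-concave, yet $u\mapsto(\sqrt{u})^{2}=u$ is affine), so the logistic link must be treated directly rather than inherited from the previous step. With two controllers the softmax weight depends only on $z:=\theta_1-\theta_2$ through $p=\sigma(z)=1/(1+e^{-z})$, and it suffices to show the restriction $h(z):=V^{\pi_{\sigma(z)}}(s_1)$ is non-concave, since the restriction of a concave function to a line is concave. As $z\to-\infty$ we have $p=\sigma(z)\sim e^{z}$, so by the expansion above $h(z)\sim\frac{\gamma^{2}}{(1-\gamma)^{2}}e^{2z}$, whose second derivative is positive for all sufficiently negative $z$; thus $h$ is convex in the left tail and the softmax-parameterized value is non-concave as well, on the very same MDP.

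The main obstacle is the design choice behind the chain rather than any difficult estimate: the example must force the reward to be collected only after \emph{two} successive controller-dependent transitions, so that the value acquires a genuine $p^{2}$ (product) term. A one-step reachability MDP would make $V$ affine in $p$ and hence trivially concave, and would fail; it is the two-step structure that injects positive curvature into the rational function $V^{\pi_p}$. The secondary subtlety is the reparameterization gap noted above for the softmax, which I resolve through the exponential tail behavior of $\sigma$ rather than by appealing to the direct-parameterization conclusion.
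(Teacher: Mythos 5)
Your proposal is correct, and it follows the same high-level route as the paper -- an explicit small counterexample in which the reward can only be collected after \emph{two} successive controller-dependent transitions, so that the value acquires positive curvature in the mixture weight -- but the execution differs in both the example and the verification. The paper uses a five-state MDP with absorbing terminal states and reward on a single transition, so that the value of the mixture $pK_1+(1-p)K_2$ is \emph{exactly} the convex quadratic $\left(3/4-p/2\right)^2 r$; non-concavity is then certified by pure arithmetic, checking the midpoint inequality at three explicit points (and, for softmax, at the logit points $\bigl(\log(1-\epsilon),\log\epsilon\bigr)$ and $\bigl(\log\epsilon,\log(1-\epsilon)\bigr)$, whose midpoint yields the uniform mixture). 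Your three-state chain instead has recurrence (action $b$ loops back), so the value is a genuine rational function of $p$, and you certify non-concavity locally via the Taylor expansion $V(p)=\frac{\gamma^2}{(1-\gamma)^2}p^2+o(p^2)$ at $p=0$; your closed form and the sign of $V''(0)$ check out. What the paper's route buys is a calculus-free, global verification; what yours buys is a cleaner, more principled softmax treatment -- restricting to the line $\theta_2=0$ and using tail behavior of the sigmoid -- together with an explicit acknowledgment of the reparameterization subtlety (non-concavity in $p$ need not survive the logistic link), which the paper handles implicitly by simply redoing the computation at chosen points. One small step you should tighten: asymptotic equivalence $h(z)\sim C e^{2z}$ as $z\to-\infty$ does not by itself control $h''$, since second derivatives of asymptotically equivalent functions can differ in sign. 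The fix is immediate in your setting: $h$ is a rational function of $w=e^z$ whose denominator does not vanish at $w=0$, so $h(z)=Cw^2+O(w^3)$ as a convergent power series in $w$, and term-by-term differentiation in $z$ gives $h''(z)=4Ce^{2z}+O(e^{3z})>0$ for $z$ sufficiently negative. With that justification supplied, your argument is complete.
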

%We construct such an MDP and prove 
The proof follows from a counterexample whose construction we show  in the appendix. Our PG algorithm, {\color{blue}SoftMax PG}, is shown in Algorithm~\ref{alg:mainPolicyGradMDP}. The parameters $\theta\in \Real^M$ which define the policy are updated by following the gradient of the value function at the current policy parameters. %The policy $\pi_{\theta}(m)$ is defined as in \eqref{eqn:softmax-defn}.

\noindent{\bfseries{Convergence Guarantees.}}
% In this section, we derive the convergence rate of SoftMax PG for finite MDPs, under the assumption that exact gradient of the value function is available to the learner. 
The following result shows that with SoftMax PG, the value function converges to that of the \emph{best in-class} policy at a rate $\cO\left(1/t\right)$. Furthermore, the theorem shows an explicit dependence on the number of controllers $M$, in place of the usual $\abs{\cS}$. Note that with perfect gradient knowledge the algorithm becomes deterministic. This is a standard assumption in the analysis of PG algorithms \cite{fazel2019global, Agarwal2020, Mei2020}. %\avim{nothing randomized + perfect gradient assumed in Kakade LQR, Agarwal-Kakade asymptotic, Mei et al}
\begin{restatable}[Convergence of Policy Gradient]{theorem}{maintheorem}\label{thm:convergence of policy gradient}
%Assumption \ref{assumption:positivity of advantage} and \ref{assumption:positive c}, and 
With $\{\theta_t \}_{t\geq 1}$ generated as in Algorithm \ref{alg:mainPolicyGradMDP} and using a learning rate $\eta = \frac{\left(1-\gamma\right)^2}{7\gamma^2+4\gamma+5}$, for all $t\geq 1$,
% \[V^*(\rho) -V^{\pi_{\theta_t}}(\rho) \leq {\color{blue}\frac{1}{t}} M \left(\frac{7\gamma^2+4\gamma+5}{c^2(1-\gamma)^3}\right)\norm{\frac{d_\mu^{\pi^*}}{\mu}}_{\infty}^2 \norm{\frac{1}{\mu}}_\infty.  \]
    $V^*(\rho) -V^{\pi_{\theta_t}}(\rho) = \mathcal{O}\left({\color{blue}\frac{1}{t}}  \frac{M\gamma^2}{c_t^2(1-\gamma)^3}\right)$, % \begin{equation}\label{eqn:PGConvergeFiniteMDPs}
%     V^*(\rho) -V^{\pi_{\theta_t}}(\rho) \leq {\color{blue}\frac{1}{t}} M \left(\frac{7\gamma^2+4\gamma+5}{c_t^2(1-\gamma)^3}\right)\norm{\frac{d_\mu^{\pi^*}}{\mu}}_{\infty}^2 \norm{\frac{1}{\mu}}_\infty. 
% \end{equation}
where $c_t\bydef \min\limits_{1\leq s\leq t} \min\limits_{m:\pi^*(m)>0}\pi_{\theta_s}(m)$.
\end{restatable}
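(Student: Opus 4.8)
The plan is to follow the by-now standard three-part recipe for softmax policy gradient --- a smoothness/ascent lemma, a non-uniform Łojasiewicz inequality, and an inductive extraction of the rate --- but adapted to the \emph{controller-level} parameterization, where the central difficulty is that the weights $\theta_m$ are shared across \emph{all} states rather than attached to individual state--action pairs.

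First I would show that $\theta \mapsto V^{\pi_\theta}(\rho)$ is $\beta$-smooth with $\beta = (7\gamma^2+4\gamma+5)/(1-\gamma)^2$, i.e. exactly $1/\eta$ for the prescribed step size. This is plausible because $\pi_\theta(a|s)=\sum_m \pi_\theta(m)K_m(s,a)$ is an affine image of the softmax vector $\pi_\theta(\cdot)$, so the first and second $\theta$-derivatives of $\pi_\theta(a|s)$ are controlled by those of the softmax map (uniformly bounded, independent of the $K_m$), and a routine computation bounding $\|\nabla_\theta^2 V^{\pi_\theta}\|$ through the discounted occupancy measure yields the constant. Since $\eta = 1/\beta$, the descent lemma then gives the ascent guarantee $V^{\pi_{\theta_{t+1}}}(\rho)-V^{\pi_{\theta_t}}(\rho) \ge \tfrac{\eta}{2}\,\|\nabla_\theta V^{\pi_{\theta_t}}(\rho)\|_2^2$; in particular the values are monotonically nondecreasing along the trajectory, which I will use when inverting the recursion.

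The heart of the argument is the non-uniform Łojasiewicz inequality. I would first invoke the policy gradient theorem together with the mixture structure to obtain, for each controller $m$, $\partial V^{\pi_\theta}(\rho)/\partial\theta_m = \tfrac{1}{1-\gamma}\sum_s d^{\pi_\theta}_\rho(s)\,\pi_\theta(m)\,\bar A^{\pi_\theta}(s,m)$, where $d^{\pi_\theta}_\rho$ is the discounted state-visitation distribution and $\bar A^{\pi_\theta}(s,m):=\sum_a K_m(s,a)Q^{\pi_\theta}(s,a)-V^{\pi_\theta}(s)$ is the one-step advantage of invoking $K_m$ at state $s$. Since $\sum_m \partial V/\partial\theta_m = 0$, the lower bound cannot come from a single blindly chosen coordinate; instead I would project the gradient onto the direction determined by the optimal mixture $\pi^*$ and apply the performance-difference lemma, $V^*(\rho)-V^{\pi_\theta}(\rho)=\tfrac{1}{1-\gamma}\sum_s d^{\pi^*}_\rho(s)\sum_m \pi^*(m)\bar A^{\pi_\theta}(s,m)$, to identify this projection with the suboptimality. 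A change of measure from $d^{\pi^*}_\rho$ to $d^{\pi_\theta}_\rho$, the uniform lower bound $\pi_\theta(m)\ge c_t$ on the support of $\pi^*$, and $\|\cdot\|_2 \ge \|\cdot\|_1/\sqrt{M}$ should then produce an inequality of the form $\|\nabla_\theta V^{\pi_\theta}(\rho)\|_2 \ge \tfrac{c_t}{\sqrt{M}}\,(V^*(\rho)-V^{\pi_\theta}(\rho))$, up to the $\gamma$ and distribution-mismatch factors that surface in the stated bound. This is the step I expect to be the main obstacle: unlike the tabular softmax case, where the gradient factorizes per $(s,a)$ so one coordinate can be isolated, here every coordinate aggregates all states through $d^{\pi_\theta}_\rho$, and the projection direction and the $d^{\pi^*}\!\to d^{\pi_\theta}$ change of measure must be arranged so that the shared, state-independent weights $\pi_\theta(m)$ can still be cleanly lower-bounded by $c_t$.

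Finally I would extract the rate. Writing $\delta_t := V^*(\rho)-V^{\pi_{\theta_t}}(\rho)$ and combining the ascent lemma with the Łojasiewicz inequality yields a recursion $\delta_t - \delta_{t+1} \ge \kappa_t\,\delta_t^2$ with $\kappa_t \asymp \eta\, c_t^2 /M$. Because $\delta_t$ is nonincreasing, $\tfrac{1}{\delta_{t+1}}-\tfrac{1}{\delta_t} = \tfrac{\delta_t-\delta_{t+1}}{\delta_t\delta_{t+1}} \ge \kappa_t\,\tfrac{\delta_t}{\delta_{t+1}} \ge \kappa_t$, and since $c_t$ (hence $\kappa_t$) is nonincreasing in $t$, telescoping gives $\tfrac{1}{\delta_t} \ge \sum_{s<t}\kappa_s \ge (t-1)\,\kappa_t$, whence $\delta_t = \mathcal{O}\big(1/(\kappa_t\,t)\big)$. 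Substituting $\eta = (1-\gamma)^2/(7\gamma^2+4\gamma+5)$ and the $\gamma$/mismatch factors from the Łojasiewicz step recovers the claimed $\mathcal{O}\!\big(\tfrac{1}{t}\cdot\tfrac{M\gamma^2}{c_t^2(1-\gamma)^3}\big)$. I emphasize that the bound is stated in terms of the trajectory-dependent quantity $c_t$, so no separate (and genuinely delicate) argument lower-bounding $c_t$ away from zero is required here.
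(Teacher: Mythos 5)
Your proposal follows the same three-part architecture as the paper's own proof: the same smoothness constant with $\eta=1/\beta$ and the resulting ascent lemma, the same non-uniform {\L}ojasiewicz inequality obtained by Cauchy--Schwarz ($\|\cdot\|_2\ge\|\cdot\|_1/\sqrt{M}$), projection of the gradient onto the $\pi^*$ direction, extraction of the factor $\min_{m:\pi^*(m)>0}\pi_\theta(m)$, and a change of measure feeding into the performance-difference lemma; your telescoping of $1/\delta_t$ at the end is only a cosmetic variant of the paper's induction with $g(x)=x-x^2/\phi_t$.

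However, there is a genuine gap, and it sits exactly at the step you flag as ``the main obstacle'' and then wave through. To pass from $\sum_s d_\mu^{\pi_\theta}(s)\sum_m \pi^*(m)\tilde{A}^{\pi_\theta}(s,m)$ to $\left\|d_\rho^{\pi^*}/d_\mu^{\pi_\theta}\right\|_\infty^{-1}\sum_s d_\rho^{\pi^*}(s)\sum_m\pi^*(m)\tilde{A}^{\pi_\theta}(s,m)$ (so that the performance-difference lemma converts the latter into $V^*(\rho)-V^{\pi_\theta}(\rho)$), the per-state quantity $\sum_m \pi^*(m)\tilde{A}^{\pi_\theta}(s,m)$ must be nonnegative at \emph{every} state: a change of measure with a multiplicative $\|\cdot\|_\infty^{-1}$ factor only goes the right way for nonnegative summands. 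In the tabular softmax analysis this is free, because the comparator is greedy and $\max_a A^{\pi}(s,a)\ge 0$ always holds; here the comparator $\pi^*$ is a fixed, state-independent mixture, and its aggregated advantage can be strictly negative at some states. The paper's own non-monotonicity counterexample in the appendix witnesses this: with the controllers given there and $\pi_\theta$ (nearly) concentrated on $K_1$, one computes $\tilde{A}^{\pi_\theta}(s_2,1)=0$ and $\tilde{A}^{\pi_\theta}(s_2,2)=-r/2$, so $\sum_m\pi^*(m)\tilde{A}^{\pi_\theta}(s_2,m)=-r/4<0$ for the optimal mixture $\pi^*=(1/2,1/2)$. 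The paper closes this hole by imposing an explicit additional hypothesis (its ``positivity of advantage'' assumption: $\mathbb{E}_{m\sim\pi^*}[\tilde{Q}^{\pi_\theta}(s,m)]-V^{\pi_\theta}(s)\ge 0$ for all $s$ and all $\pi_\theta$), which is invoked verbatim in the proof of the NU{\L}I lemma. Your proposal neither states such an assumption nor supplies a substitute argument, so the {\L}ojasiewicz inequality---and with it the claimed $\mathcal{O}(1/t)$ rate---does not follow as written.
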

%\vspace{-0.3cm}
\begin{remark}
The quantity $c_t$ in the statement is the minimum probability that SoftMax PG puts on the controllers for which the best mixture $\pi^*$ has positive probability mass. Empirical evidence (Sec.~\ref{sec:simulation-results}) makes us conjecture that $\lim\limits_{t\to \infty}c_t$ is {\color{blue}{positive}}, which shows a convergence rate of $\mathcal{O}\left(1/t\right)$. %We discuss about $c_t$ in more detail in Sec. 4 in the tech report.%\avim{where specifically?}.%\avim{Relegate everything about 'c' to the Appendix. There's no space for anything other than bare-bones O() results.}
%While we currently do not supply a lower bound for $c,$ empirical studies presented in Sec.~\ref{sec:simulation-results} clearly show that $c$ is indeed strictly positively lower bounded, rendering the bound in \eqref{eqn:PGConvergeFiniteMDPs} non vacuous.
\end{remark}
%\ag{Please define $c$ mathematically and mention in the figure caption}
\begin{remark}
The proof of the above theorem uses the $\beta-$ smoothness property of the value function under the softmax parameterization along with a new non-uniform {\L}ojaseiwicz-type inequality (NU{\L}I) for our probabilistic mixture class, which lower bounds the magnitude of the gradient of the value function, which we mention below.
\end{remark}
%\vspace{-0.5cm}
% \begin{proof}[Proof sketch of Theorem \ref{thm:convergence of policy gradient}] We highlight here the main steps of the proof. We begin by showing that $V^{\pi_\theta}(\mu)$ is $\beta-$ smooth, for some $\beta>0$.
% %[Smoothness of ${V}^{\pi_{\theta}}\left(\mu\right)$]
% \begin{restatable} {lemma}{smoothnesslemmaMDP}\label{lemma:smoothness of V}
% ${V}^{\pi_{\theta}}\left(\mu\right)$ is $\frac{7\gamma^2+4\gamma+5}{2\left(1-\gamma\right)^2}$-smooth.
% \end{restatable}
% Next, we observe that for the softmax parameterization, the gradient of the value function simplifies to the following. 
% \begin{restatable}[Gradient Simplification]{lemma}{gradientofV}\label{lemma:Gradient simplification}
% The softmax policy gradient with respect to the parameter $\theta\in \Real^M$ is $\frac{\partial}{\partial \theta_m}V^{\pi_{\theta}}(\mu) = \frac{1}{1-\gamma}\sum_{s\in \cS}d_\mu^{\pi_\theta}(s) \pi_\theta(m)\tilde{A}(s,m)$,
% where $\tilde{A}(s,m):=\Q(s,m)-V(s)$ and $\Q(s,m):= \sum_{a\in \cA}K_m(s,a)Q^{\pi_\theta}(s,a) $, and $d_\mu^{\pi_\theta}(.)$ is the \textit{discounted state visitation measure} starting with an initial distribution $\mu$ and following policy $\pi_\theta$.
% \end{restatable}
% %\vspace{-0.3cm}
% Next, we derive a new non-uniform {\L}ojaseiwicz-type inequality (NU{\L}I) for our probabilistic mixture class, which lower bounds the magnitude of the gradient of the value function.
% \vspace{-5pt}
\begin{restatable}[NU{\L}I]{lemma}{nonuniformLE}\label{lemma:nonuniform lojaseiwicz inequality}
% \begin{equation*}
% \begin{split}
$\norm{\frac{\partial}{\partial\theta}V^{\pi_\theta}(\mu)}_2 \geq \frac{1}{\sqrt{M}}\left(\min\limits_{m:\pi^*_{\theta_{m}}>0} \pi_{\theta_m} \right) \times \norm{\frac{d_{\rho}^{\pi^*}}{d_{\mu}^{\pi_\theta}}}_{\infty}^{-1}
\times\left[V^*(\rho) -V^{\pi_\theta}(\rho) \right]. $
% \end{split}
% \end{equation*}
\end{restatable}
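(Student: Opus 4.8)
The plan is to adapt the non-uniform {\L}ojasiewicz argument of \citet{Mei2020} from the tabular state--action parameterization to our softmax-over-controllers class. The backbone consists of two identities. First, differentiating the mixture policy $\pi_\theta(a|s)=\sum_{m'}\pi_\theta(m')K_{m'}(s,a)$ through the softmax gives the Jacobian $\frac{\partial \pi_\theta(a|s)}{\partial\theta_m}=\pi_\theta(m)\left[K_m(s,a)-\pi_\theta(a|s)\right]$, so that the policy gradient theorem yields, for each $m$,
\[
\frac{\partial V^{\pi_\theta}(\mu)}{\partial\theta_m}=\frac{\pi_\theta(m)}{1-\gamma}\sum_{s}d_\mu^{\pi_\theta}(s)\,A^{\pi_\theta}(s,K_m),
\]
where $A^{\pi_\theta}(s,K_m):=\sum_a K_m(s,a)Q^{\pi_\theta}(s,a)-V^{\pi_\theta}(s)$ is the advantage of invoking controller $K_m$ for one step. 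Second, the performance-difference lemma applied to $\pi^*$ and $\pi_\theta$ gives
\[
V^*(\rho)-V^{\pi_\theta}(\rho)=\frac{1}{1-\gamma}\sum_s d_\rho^{\pi^*}(s)\sum_m \pi^*(m)\,A^{\pi_\theta}(s,K_m).
\]
These two expressions are what I would stitch together.

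The core chain of inequalities I would then run is: (i) pass from the $2$-norm to a restricted $1$-norm, $\norm{\frac{\partial}{\partial\theta}V^{\pi_\theta}(\mu)}_2\ge \frac{1}{\sqrt M}\sum_{m:\pi^*(m)>0}\bigl|\frac{\partial}{\partial\theta_m}V^{\pi_\theta}(\mu)\bigr|$, using $\norm{x}_1\le\sqrt M\,\norm{x}_2$ and restricting to the support of $\pi^*$; (ii) lower bound the summed-coordinate mass by inserting the weights $\pi^*(m)\le 1$ and then factoring $\min_{m:\pi^*(m)>0}\pi_\theta(m)$ out of the $\pi_\theta(m)$ appearing in each coordinate; (iii) perform the change of measure from $d_\mu^{\pi_\theta}$ to $d_\rho^{\pi^*}$ via the mismatch coefficient, $d_\rho^{\pi^*}(s)\le \norm{d_\rho^{\pi^*}/d_\mu^{\pi_\theta}}_\infty\, d_\mu^{\pi_\theta}(s)$; and (iv) discard the absolute values using $|A|\ge A$ and recognise the result as $(1-\gamma)\bigl(V^*(\rho)-V^{\pi_\theta}(\rho)\bigr)$ through the performance-difference identity. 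Assembling the constants $\tfrac{1}{\sqrt M}$, $\min_{m:\pi^*(m)>0}\pi_\theta(m)$ and $\norm{d_\rho^{\pi^*}/d_\mu^{\pi_\theta}}_\infty^{-1}$ then produces exactly the claimed bound.

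The step I expect to be the main obstacle is (iii)--(iv), the change of measure combined with removal of the absolute values. Unlike the tabular case of \citet{Mei2020}, where each gradient coordinate is indexed by a single state and the measure change can be carried out pointwise in $s$ \emph{before} any absolute value is taken, here the state sum $\sum_s d_\mu^{\pi_\theta}(s)A^{\pi_\theta}(s,K_m)$ sits inside each controller coordinate, so advantages of opposite sign across states may cancel within a coordinate and the naive per-coordinate measure change runs in the wrong direction. I would therefore try to carry the absolute value at the level of the aggregated advantage $\bar A^{*}(s):=\sum_m\pi^*(m)A^{\pi_\theta}(s,K_m)$ rather than per controller, exploiting that $\sum_s d_\rho^{\pi^*}(s)\bar A^{*}(s)=(1-\gamma)(V^*-V^{\pi_\theta})\ge 0$ and that $\sum_m\pi_\theta(m)A^{\pi_\theta}(s,K_m)=0$ for every $s$, so as to pin down the sign pattern before invoking the mismatch coefficient. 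Making this reduction rigorous --- showing that the cancellation inside each coordinate does not destroy the bound --- is the delicate part, and is precisely where the softmax-mixture structure (all coordinates sharing the same advantages, weighted by $\pi_\theta(m)$) must be used rather than treated coordinate-by-coordinate.
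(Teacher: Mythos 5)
Your route is essentially the paper's own proof: the same gradient simplification $\frac{\partial V^{\pi_\theta}(\mu)}{\partial\theta_m}=\frac{1}{1-\gamma}\sum_{s} d_\mu^{\pi_\theta}(s)\,\pi_\theta(m)\,\tilde{A}^{\pi_\theta}(s,m)$, the same $1/\sqrt{M}$ Cauchy--Schwarz passage from the $2$-norm to a coordinate sum, the same insertion of the weights $\pi^*(m)\le 1$ and extraction of $\min_{m:\pi^*(m)>0}\pi_{\theta}(m)$, the same change of measure through $\norm{d_\rho^{\pi^*}/d_\mu^{\pi_\theta}}_{\infty}$, and the same closing appeal to the performance-difference lemma. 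You have also put your finger on exactly the step that does not carry over from the tabular argument of \cite{Mei2020}: here each gradient coordinate aggregates over \emph{all} states, so the absolute value sits outside the state sum and advantages of opposite sign may cancel inside a coordinate.

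However, your proposed repair of that step is a genuine gap. The two facts you want to lean on --- that $\sum_{s} d_\rho^{\pi^*}(s)\bar{A}^{*}(s)=(1-\gamma)\bigl(V^*(\rho)-V^{\pi_\theta}(\rho)\bigr)\ge 0$, and that $\sum_{m}\pi_\theta(m)\tilde{A}^{\pi_\theta}(s,m)=0$ for every $s$ --- do not determine the sign of $\bar{A}^{*}(s)=\sum_m\pi^*(m)\tilde{A}^{\pi_\theta}(s,m)$ at individual states, and pointwise nonnegativity is precisely what the mismatch-coefficient step needs: at any state with $\bar{A}^{*}(s)<0$, multiplying $d_\mu^{\pi_\theta}(s)\ge \norm{d_\rho^{\pi^*}/d_\mu^{\pi_\theta}}_{\infty}^{-1}d_\rho^{\pi^*}(s)$ by $\bar{A}^{*}(s)$ reverses the inequality, so the aggregated bound can fail. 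Worse, pointwise nonnegativity can genuinely fail in this improper-mixture setting: the paper's own appendix example on non-monotonicity exhibits two base controllers whose optimal mixture $K^*$ satisfies $V^{K^*}(s_2)<V^{K_1}(s_2)$, and by the performance-difference lemma this forces $\bar{A}^{*}(s)<0$ at some state when $\pi_\theta$ is (near) $K_1$. The paper does not prove this step at all; it closes the hole by fiat, imposing an explicit hypothesis (the appendix ``positivity of advantage'' assumption, $\mathbb{E}_{m\sim\pi^*}[\tilde{Q}^{\pi_\theta}(s,m)]-V^{\pi_\theta}(s)\ge 0$ for all $s$ and all $\pi_\theta$), under which the absolute value can be dropped and the per-state change of measure is legitimate. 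So your derivation is correct up to the point you flagged, but the ``delicate part'' cannot be made rigorous from the ingredients you cite; completing it requires either this additional assumption or a genuinely different argument.
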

The proof of Theorem \ref{thm:convergence of policy gradient}, then follows by  an induction argument over $t\geq 1$. 

\textbf{Technical Challenges.} We note here that while the basic recipe for the analysis of Theorem \ref{thm:convergence of policy gradient} is similar to \cite{Mei2020}, our setting does not directly inherit the intuition of standard PG (sPG) analysis. %\avim{ because our controller interacts with the system only through the available policies and }
{\color{blue}\bf(1)} With $\abs{\mathcal{S}\times\mathcal{A}}<\infty,$ the sPG analysis critically depends on the fact that a  deterministic optimal policy exists and shows convergence to it. In contrast, in our setting, $\pi^*$ could be a strictly randomized mixture of the base controllers (see Sec. \ref{sec:motivating-examples}). {\color{blue}\bf(2)} A crucial step in sPG analysis is establishing that the value function $V^\pi(s), \forall s\in \cS$ increases monotonically with time such that parameter of the optimal action $\theta_{s,a^*}\uparrow\infty$. In the appendix, we supply a simple counterexample showing that monotonicity of the $V$ function is not guaranteed in our setting for every $s\in \cS$. %This is because we aim at finding the best mixture policy from \emph{within} the convex hull of the \emph{given} base controllers, which may not be rich enough to allow for such a favorable property.
{\color{blue}\bf(3)} The value function gradient in sPG has no `cross contamination' from other states, in the sense that modifying the parameter at one state does not affect the values of the others. This plays a crucial part in simplifying the proof of global convergence to the optimal policy in sPG analysis. Our setting cannot leverage this property since the value function gradient at a given {\em controller} possesses contributions from \emph{all} states.% -- notice the summation over \emph{all states} in the simplified value gradient in Lem.24 in the tech report. On the contrary, in sPG analysis, $\partial V/\partial\theta_{s,a}$ has contributions only from that particular $(s,a)$ pair. Hence, analysis in our setting becomes more intricate than existing techniques or simple modifications thereof.
%\begin{remark}

For the special case of $S=1$, which is the Multiarmed Bandits, each controller is a probability distribution over the $A$ arms of the bandit. %This is different from the standard MABs because the learner cannot choose the actions directly, instead chooses from a \emph{given} set of controllers, to play actions. 
We call this special case \emph{\color{blue}Bandit-over-Bandits}. %We have the following result, when softmax PG algorithm with exact gradient knowledge is applied to this special case. $\mathcal{O}\left(\frac{M^2}{t}\right)$
We obtain a convergence rate of $\mathcal{O}\left(M^2/t\right)$  to the optimum and recover ${\color{blue}{M^2}}\log T$ regret bound when our softmax PG algorithm is applied to this special case. We refer to the appendix for details.
\newline{\bfseries{Discussion on $c_t$.}}
Convergence in Theorem~\ref{thm:convergence of policy gradient} depends inversely on $c^2_t$. It follows that in order for SoftMax PG to converge, $c_t$ must either (a) converge to a positive constant, or (b) decay (to $0$) slower than $\mathcal{O}\left(1/\sqrt{t} \right)$. The technical challenges discussed above, render proving this extremely hard analytically. Hence, while we currently do not show this theoretically, our experiments in Sec.~\ref{sec:simulation-results} repeatedly confirm that its empirical analog, i.e., $\bar{c}_t$ (defined formally in Sec.~\ref{sec:simulation-results})  approaches a {\em positive} value. Hence, we conjecture that the rate of convergence in Thm~\ref{thm:convergence of policy gradient} is $\mathcal{O}(1/t)$. 

%%%%%%%%%%%%%%%%%%%%%%%%%%%%%%%%%%%%%%%%%%%%%%%%%%%%%%%%%%%%%%%%%%%%%%%%%%%%%%%%
% \vspace{-10pt}
\section{Actor-Critic based Improper Learning}\label{sec:Actor-Critic based Improper Learning}
Softmax PG follows a gradient ascent scheme to solve the optimization problem \eqref{eq:main optimization problem}, but is limited by the requirement of the true gradient in every round. To address situations where this might be unavailable, we resort to a Monte-carlo sampling based procedure (see appendix: Alg~\ref{alg:gradEst}), which may lead to high variance. In this section, we take an alternative approach and provide a new algorithm based on an actor-critic framework for solving our problem. Actor-Critic methods are well-known to have low variance than their Monte-carlo counterparts \cite{Konda_NIPS_AC}.

We begin by proposing modifications to the standard $Q$-function and advantage function definitions. Recall that we wish to solve for the following optimization problem: $\max_{\pi\in \cI_{\tt{soft}}}\mathbb{E}_{s\sim\rho}[V^{\pi}(s)]$, where $\pi$ is some distribution over the $M$ base controllers. Let $\tilde{Q}^\pi(s,m):=\sum_{a\in \cA}K_m(s,a)Q^\pi(s,a)$. Let $\tilde{A}^\pi(s,m) := \sum_{a\in \cA}K_m(s,a)A^\pi(s,a) = \sum_{a\in \cA}K_m(s,a)Q^\pi(s,a)-V^\pi(s)$, where $Q^\pi$ and $A^\pi$ are the usual action-value functions and advantage functions respectively. We also define the new reward function $\tilde{r}(s,m):=\sum_{a\in \cA}K_m(s,a)r(s,a)$ and a new transition kernel $\tilde{P(s'|s,m)}:=\sum_{a\in \cA}K_m(s,a)P(s'|s,a)$. Then, following the distribution $\pi$ over the controllers induces a Markov Chain on the state space $\cS$. Define $\nu_\pi(s,m)$ as the state-controller visitation measure induced by the \textit{policy} $\pi$: $\nu_\pi(s,m):=(1-\gamma)\sum_{t\geq 0}\gamma^t\mathbb{P}^\pi(s_t=s,m_t=m) = d_\rho^\pi(s)\pi(m)$. With these definitions, we have the following variant of the policy-gradient theorem.
% \vspace{-5pt}
% \begin{lemma}
% $\nabla_\theta V^{\pi_\theta}(\rho) = \mathbb{E}_{(s,m)\sim \nu_{\pi_\theta}}[\tilde{Q}^{\pi_\theta}(s,m)\nabla_{\theta} \psi_\theta(m)] = \mathbb{E}_{(s,m)\sim\nu_{\pi_\theta}}[\tilde{A}^{\pi_\theta}(s,m)\nabla_{\theta} \psi_\theta(m)]$
%  where $\psi_\theta(m):=\nabla_{\theta}\log(\pi_{\theta}(m))$.
% \end{lemma}
% \vspace{-0pt}

\begin{restatable}[Modified Policy Gradient Theorem]{lemma}{maintheorem}
$\nabla_\theta V^{\pi_\theta}(\rho) = \mathbb{E}_{(s,m)\sim \nu_{\pi_\theta}}[\tilde{Q}^{\pi_\theta}(s,m)\psi_\theta(m)] = \mathbb{E}_{(s,m)\sim\nu_{\pi_\theta}}[\tilde{A}^{\pi_\theta}(s,m) \psi_\theta(m)],$
 where $\psi_\theta(m):=\nabla_{\theta}\log(\pi_{\theta}(m))$.
\end{restatable}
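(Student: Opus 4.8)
The plan is to reduce the statement to the classical policy gradient theorem applied at the level of \emph{actions}, and then aggregate over controllers. The key observation is that, although the softmax parameterization acts on the controllers, the induced action-level policy $\pi_\theta(a\mid s)=\sum_{m=1}^{M}\pi_\theta(m)K_m(s,a)$ is an ordinary differentiable stochastic policy on $(\mathcal{S},\mathcal{A})$. Hence the classical policy gradient theorem (as in \cite{Agarwal2020}) applies verbatim and gives
\[
\nabla_\theta V^{\pi_\theta}(\rho)=\mathbb{E}_{s\sim d_\rho^{\pi_\theta}}\left[\sum_{a\in\mathcal{A}}\nabla_\theta\pi_\theta(a\mid s)\,Q^{\pi_\theta}(s,a)\right],
\]
where $d_\rho^{\pi_\theta}$ is the discounted state-visitation measure. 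Everything then hinges on rewriting the inner score term in terms of the controller score $\psi_\theta$.

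First I would differentiate the induced policy by the chain rule. Since $K_m(s,a)$ does not depend on $\theta$, we have $\nabla_\theta\pi_\theta(a\mid s)=\sum_m \nabla_\theta\pi_\theta(m)\,K_m(s,a)=\sum_m \pi_\theta(m)\,\psi_\theta(m)\,K_m(s,a)$, using $\nabla_\theta\pi_\theta(m)=\pi_\theta(m)\psi_\theta(m)$. Substituting this into the classical formula and swapping the order of the (finite) sums over $a$ and $m$, the factor $\sum_{a}K_m(s,a)Q^{\pi_\theta}(s,a)$ collapses exactly to $\tilde{Q}^{\pi_\theta}(s,m)$ by definition. This leaves $\nabla_\theta V^{\pi_\theta}(\rho)=\sum_{s}d_\rho^{\pi_\theta}(s)\sum_m \pi_\theta(m)\,\tilde{Q}^{\pi_\theta}(s,m)\,\psi_\theta(m)$. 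Recognizing the product $d_\rho^{\pi_\theta}(s)\pi_\theta(m)$ as the state-controller visitation measure $\nu_{\pi_\theta}(s,m)$ yields the first claimed identity $\nabla_\theta V^{\pi_\theta}(\rho)=\mathbb{E}_{(s,m)\sim\nu_{\pi_\theta}}[\tilde{Q}^{\pi_\theta}(s,m)\psi_\theta(m)]$.

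For the second equality I would subtract the state-dependent baseline $V^{\pi_\theta}(s)$ and show it contributes nothing. Indeed, $\mathbb{E}_{(s,m)\sim\nu_{\pi_\theta}}[V^{\pi_\theta}(s)\psi_\theta(m)]=\sum_s d_\rho^{\pi_\theta}(s)V^{\pi_\theta}(s)\sum_m\pi_\theta(m)\psi_\theta(m)$, and the inner sum vanishes because $\sum_m\pi_\theta(m)\psi_\theta(m)=\sum_m\nabla_\theta\pi_\theta(m)=\nabla_\theta\sum_m\pi_\theta(m)=\nabla_\theta 1=0$. Since $\tilde{A}^{\pi_\theta}(s,m)=\tilde{Q}^{\pi_\theta}(s,m)-V^{\pi_\theta}(s)$ by definition, subtracting this zero-mean baseline converts $\tilde{Q}$ into $\tilde{A}$ without changing the expectation, giving the second identity.

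The individual steps are routine, so the main thing to get right is the bookkeeping that legitimizes invoking the classical theorem in our setting. Two points deserve care: (i) because the controller distribution $\pi_\theta(m)$ is \emph{state-independent}, the score $\psi_\theta(m)=\nabla_\theta\log\pi_\theta(m)$ depends only on $m$ and factors out of the sum over $a$ — this is exactly what makes the aggregation clean and is what distinguishes our setting from standard PG, where the score depends jointly on $(s,a)$; and (ii) the factorization $\nu_{\pi_\theta}(s,m)=d_\rho^{\pi_\theta}(s)\pi_\theta(m)$, already recorded in the construction of $\nu_\pi$, is what allows the state marginal and the controller marginal to decouple. An equivalent and perhaps more conceptual route is to note that $(\mathcal{S},[M],\tilde{P},\tilde{r},\rho,\gamma)$ is itself a bona fide MDP whose value and action-value functions coincide with $V^{\pi_\theta}$ and $\tilde{Q}^{\pi_\theta}$; applying the classical policy gradient theorem to this lifted MDP gives the result directly, and I would use this viewpoint as a consistency check on the definitions of $\tilde{r}$, $\tilde{P}$, and $\tilde{Q}$.
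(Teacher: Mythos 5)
Your proposal is correct and takes essentially the same route as the paper: the paper's own proof (the ``Gradient Simplification'' lemma in the appendix) likewise starts from the classical policy gradient theorem applied to the induced action-level policy $\pi_\theta(a\mid s)=\sum_{m}\pi_\theta(m)K_m(s,a)$, differentiates the softmax weights by the chain rule, swaps the finite sums over $a$ and $m$, and collapses $\sum_{a}K_m(s,a)Q^{\pi_\theta}(s,a)$ into $\tilde{Q}^{\pi_\theta}(s,m)$, with the factorization $d_\rho^{\pi_\theta}(s)\pi_\theta(m)=\nu_{\pi_\theta}(s,m)$ doing the rest. The only cosmetic difference is that the paper reaches the advantage form directly from the explicit softmax Jacobian $\pi_\theta(m')\left(\mathbbm{1}\{m=m'\}-\pi_\theta(m)\right)$, whereas you first establish the $\tilde{Q}$ form and then pass to $\tilde{A}$ via the standard zero-mean-score baseline argument $\sum_{m}\pi_\theta(m)\psi_\theta(m)=\nabla_\theta 1=0$ --- an equivalent piece of bookkeeping.
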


 Note the independence of the score function $\psi$ from the state $s$.  For the gradient ascent update of the parameters $\theta$ we need to estimate $\tilde{A}^{\pi_\theta}(s,m)$ where $(s,m)$ are drawn according to $\nu_{\pi_{\theta}}(\cdot,\cdot)$. We recall how to sample from $\nu_\pi$. Following  \citet{Konda_NIPS_AC} and the recent works like \citet{Improving-AC-NAC, Target-AC} and casting into our setting, observe that $\nu_\pi$ is a stationary distribution of a Markov chain over the pair $(s,m)$ with state-to-state transition kernel defined by $\bar{P}(s'|s,m):=\gamma \tilde{P}(s'|s,m) + (1-\gamma) \rho(s')$ and $m\sim \pi(.)$.
 
 \textbf{Algorithm Description.} We present the algorithm in detail in Algorithm \ref{alg:actor-critic improper RL alg} along with a subroutine Alg~\ref{alg:Critic-TD} which updates the critic's parameters. ACIL is a single-trajectory based algorithm, in the sense that it does not require a forced reset along the run. We begin with the critic's updates. The \textbf{critic} uses linear function approximation $V_w(s):=\phi(s)^\top w$, and uses TD learning to update its parameters $w\in \Real^d$. We assume that $\phi(\cdot):\cS\to\Real^d$ is a known feature mapping. Let $\Phi$ be the corresponding $\abs{S}\times d$ matrix. We assume that the columns of $\Phi$ are linearly independent. Next, based on the critic's parameters, the \textbf{actor} approximates the $\A(s,m)$ function  using the TD error: $\cE_{w}(s, m, s')= \tr(s, m) + \left(\gamma\phi(s')- \phi(s)\right)^\top w$.  
% \setlength{\textfloatsep}{0pt}
% \begin{algorithm2e}[t]
% \caption{Actor-Critic based Improper RL (ACIL)}
% \label{alg:actor-critic improper RL alg}
% \DontPrintSemicolon
% \KwIn{$\phi$, actor stepsize $\alpha$, critic stepsize $\beta$}
% \textbf{Initialize:} $\theta_0=(1,1,\ldots,1)_{M\times 1}$, $s_0\sim \rho$\;

% \For{$t\leftarrow 0$ \KwTo $T-1$}{
% $s_{init} = s_{t-1,B}$ (when $t=0$, $s_{init}=s_0$)\;
% Critic update: $w_t, s_{t,0} \gets \tt{Critic-TD}(s_{init}, \pi_{\theta_t}, \phi, \beta, T_c, H)$\;
% \For{$i\gets0$ \KwTo $B-1$}{
% $m_{t,i}\sim \pi_{\theta_t}$, $a_{t,i}\sim K_{m_{t,i}}(s_{t,i},.)$, $s_{t,{i+1}}\sim \bar{P}(.|s_{t,i}, m_{t,i})$\;
% $\cE_{w_t}(s_{t,i}, m_{t,i}, s_{t,i+1})= \tr(s_{t,i}, m_{t,i}) + \left(\gamma\phi(s_{t,i+1})- \phi(s_{t,i})\right)^\top w_t    $
% }
% $\theta_{t+1} = \theta_t + \alpha.\frac{1}{B} \sum_{i=0}^{B-1} \cE_{w_t}(s_{t,i}, m_{t,i}, s_{t,i+1})\psi_{\theta_t}(m_{t,i}) $\;
% $\pi_{\theta_{t+1}}=\tt{softmax}(\theta_{t+1})$
% }
% \KwOut{$\theta_{\hat{T}}$ with $\hat{T}$ chosen uniformly at random from $\{1,\ldots, T\}$}
% \end{algorithm2e}

\setlength{\textfloatsep}{5pt}
% \begin{algorithm2e}[t]
% \caption{Critic-TD}
% \label{alg:Critic-TD}
% \DontPrintSemicolon
% \KwIn{$s_{init}, \pi, \phi, \beta, T_c, H$}
% \textbf{Initialize:} $w_0$\;

% \For{$k\leftarrow 0$ \KwTo $T_c-1$}{
% $s_{k,0} = s_{k-1,H}$ (when $k=0$, $s_{k,0}=s_{init}$)\;
% \For{$j\gets 0$ \KwTo $H-1$}{
% $m_{k,j}\sim \pi(.)$, $a_{k,j}\sim K_{m_{k,j}}(s_{k,j},.)$, $s_{k,{j+1}}\sim \tilde{P}(.|s_{k,j}, m_{k,k})$\;
% $\cE_{w_k}(s_{k,j}, m_{k,j}, s_{k,j+1})= \tr(s_{k,j}, m_{k,j}) + \left(\gamma\phi(s_{k,j+1})- \phi(s_{k,j})\right)^\top w_k    $
% }
% $w_{k+1} = w_k + \beta.\frac{1}{H} \sum_{i=0}^{H-1} \cE_{w_k}(s_{k,j}, m_{k,j}, s_{k,j+1})\phi(s_{k,j}) $
% }
% \KwOut{$ w_{T_c}, s_{T_c-1,H} $}
% \end{algorithm2e}

\begin{algorithm}[tb]
\caption{{\color{blue}Actor-Critic based Improper RL (ACIL)}}
\label{alg:actor-critic improper RL alg}
\begin{algorithmic}
% \DontPrintSemicolon
\STATE {\bfseries Input:} {$\phi$, actor stepsize $\alpha$, critic stepsize $\beta$, regularization parameter $\lambda$, '{\tt AC}' or '{\tt NAC}' }
\STATE {\bfseries Initialize:} $\theta_0=(1,1,\ldots,1)_{M\times 1}$, $s_0\sim \rho$
\STATE ${\tt{flag}} = \mathbbm{1}\{\texttt{NAC} \}$ {\color{blue}\COMMENT{Selects AC or NAC}}
\FOR{$t\leftarrow 0$ {\bfseries{ to }} $T-1$}
\STATE $s_{init} = s_{t-1,B}$ (when $t=0$, $s_{init}=s_0$)
\STATE $w_t, s_{t,0} \gets {\color{blue}\tt{Critic-TD}}(s_{init},    \pi_{\theta_t}, \phi, \beta, T_c, H)$ %\COMMENT{Alg~\ref{alg:Critic-TD}}
\STATE $F_t(\theta_t) \gets 0$.
\FOR{$i\gets0$ {\bfseries{ to }} $B-1$}
\STATE $m_{t,i}\sim \pi_{\theta_t}$, $a_{t,i}\sim K_{m_{t,i}}(s_{t,i},.)$ 
\STATE $s_{t,{i+1}}\sim \bar{P}(.|s_{t,i}, m_{t,i})$
\STATE $\cE_{w_t}(s_{t,i}, m_{t,i}, s_{t,i+1})= \tr(s_{t,i}, m_{t,i}) + \left(\gamma\phi(s_{t,i+1})- \phi(s_{t,i})\right)^\top w_t    $
\STATE $F_t(\theta_t) \gets F_t(\theta_t) + \frac{1}{B} \psi_{\theta_t}(m_{t,i})\psi_{\theta_t}(m_{t,i})^\top$
\ENDFOR
\IF {\{{\tt{flag}}\}}
\STATE $G_t := [F_t(\theta_t)+\lambda I]$
%Update $\theta_t$
\STATE $\theta_{t+1} = \theta_t + G_t^{-1} \frac{\alpha}{B} \sum\limits_{i=0}^{B-1} \cE_{w_t}(s_{t,i}, m_{t,i}, s_{t,i+1}) \psi_{\theta_t}(m_{t,i}) $
% \begin{eqnarray*}
% \hspace{-1.00cm}
% \theta_{t+1} &=& \theta_t + [F_t(\theta_t)+\lambda I]^{-1} \times\\
% && \frac{\alpha}{B} \left(\sum_{i=0}^{B-1} \cE_{w_t}(s_{t,i}, m_{t,i}, s_{t,i+1})\times \psi_{\theta_t}(m_{t,i})\right) 
% \end{eqnarray*}
%
\ELSE 
\STATE $\theta_{t+1} = \theta_t + \frac{\alpha}{B} \sum\limits_{i=0}^{B-1} \cE_{w_t}(s_{t,i}, m_{t,i}, s_{t,i+1})\psi_{\theta_t}(m_{t,i}) $

\ENDIF
\STATE $\pi_{\theta_{t+1}}=\tt{softmax}(\theta_{t+1})$
\ENDFOR
\STATE {\bfseries{Output:}}{ $\theta_{\hat{T}}$ with $\hat{T}$ chosen uniformly at random from $\{1,\ldots, T\}$}
\end{algorithmic}
\end{algorithm}

\setlength{\textfloatsep}{5pt}
\begin{algorithm}[tb]
\caption{{\color{blue}Critic-TD Subroutine}}
\label{alg:Critic-TD}
% \DontPrintSemicolon
\begin{algorithmic}
\STATE {\bfseries Input:} {$s_{init}, \pi, \phi, \beta, T_c, H$}
\STATE {\bfseries Initialize:} $w_0$

\FOR{$k\leftarrow 0$ {\bfseries{ to }} $T_c-1$}
\STATE $s_{k,0} = s_{k-1,H}$ (when $k=0$, $s_{k,0}=s_{init}$)
\FOR{$j\gets 0$ {\bfseries{ to }} $H-1$}
\STATE $m_{k,j}\sim \pi(.)$, $a_{k,j}\sim K_{m_{k,j}}(s_{k,j},.)$
\STATE $s_{k,{j+1}}\sim \tilde{P}(.|s_{k,j}, m_{k,k})$
\STATE $\cE_{w_k}(s_{k,j}, m_{k,j}, s_{k,j+1})= \tr(s_{k,j}, m_{k,j}) + \left(\gamma\phi(s_{k,j+1})- \phi(s_{k,j})\right)^\top w_k    $
\ENDFOR
\STATE $w_{k+1} = w_k + \frac{\beta}{H} \sum\limits_{i=0}^{H-1} \cE_{w_k}(s_{k,i}, m_{k,i}, s_{k,i+1})\phi(s_{k,i}) $
\ENDFOR
\STATE {\bfseries {Output:}}{$ w_{T_c}, s_{T_c-1,H} $}
\end{algorithmic}
\end{algorithm}

In order to provide guarantees of the convergence rates of Algorithm ACIL, we make the following assumptions, which are standard in RL literature \citep{Konda_NIPS_AC, Bhandari-TD, Improving-AC-NAC}.
\begin{assumption}[Uniform Ergodicity]\label{assumption:ergodicity}
For any $\theta\in \Real^M$, consider the Markov Chain induced by the policy $\pi_\theta$, and following the transition kernel $\bar{P}(.|s,m)$. Let $\xi_{\pi_\theta}$ be the stationary distribution of this Markov Chain. We assume that there exists constants $\kappa>0$ and $\xi\in (0,1)$ such that 
\vspace{-5pt}
\[\sup_{s\in \cS} \norm{\mathbb{P}\left(s_t\in\cdot|s_0=s, \pi_\theta \right)-\xi_{\pi_\theta}(\cdot) }_{TV} \leq \kappa\xi^t. \]
\end{assumption}
\vspace{-5pt}
Further, let $L_\pi:=\mathbb{E}_{\nu_\pi}[\phi(s)(\gamma\phi(s')-\phi(s))^\top]$ and $v_\pi:=\mathbb{E}_{\nu_\pi}[r(s,m,s')\phi(s)]$. The optimal solution to the critic's TD learning is now $w^*:=-L_\pi^{-1}v_\pi$.
% \vspace{-5pt}
\begin{assumption}\label{assumption:bound on w}
There exists a positive constant $\Gamma_L$ such that for all $w\in \Real^d$, we have $\innprod{w-w^*,L_\pi(w-w^*)}\leq -{\Gamma_L}\norm{w-w^*}_2^2.$
\end{assumption}
\vspace{-5pt}
Based on the above two assumptions, let $L_V:=\frac{2\sqrt{2}C_{\kappa \xi} +1}{1-\gamma}$, where $C_{\kappa \xi}=\left( 1+\left\lceil{\log_\xi\frac{1}{\kappa}}\right\rceil +\frac{1}{1-\xi}  \right)$.
\begin{theorem}\label{thm:AC main theorem}
Consider the Actor-Critic improper learning algorithm ACIL (Alg~\ref{alg:actor-critic improper RL alg}). Assume $\sup_{s\in \cS}\norm{\phi(s)}_2 \leq 1$. Under Assumptions~\ref{assumption:ergodicity} and \ref{assumption:bound on w} with step-sizes chosen as $\alpha=\left(\frac{1}{4L_V\sqrt{M}}\right)$, $\beta=\min\left\{\cO\left(\Gamma_L\right), \cO\left(1/\Gamma_L\right)\right\}$, batch-sizes $H=\mathcal{O}\left(\frac{1}{\epsilon}  \right)$,  $B=\cO\left(1/\epsilon \right)$, $T_c=\mathcal{O}\left(\frac{\sqrt{M}}{\Gamma_L}\log(1/\epsilon)\right)$, $T= \cO\left( \frac{\sqrt{M}}{(1-\gamma)^2\epsilon} \right)$, we have 
$\mathbb{E} [\norm{\nabla_\theta V(\theta_{\hat{T}})}^2_2] \leq \epsilon +  \cO(\Delta_{critic}).$
Hence, the total sample complexity is $\cO\left(M(1-\gamma)^{-2}\epsilon^{-2}\log(1/\epsilon) \right)$.
\end{theorem}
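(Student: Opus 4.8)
The plan is to read ACIL's actor step as an approximate, biased, stochastic gradient ascent on the nonconcave objective $V^{\pi_\theta}(\rho)$ and to follow the standard ``descent-lemma plus telescoping'' template for convergence to a stationary point, concentrating the real work in the control of two error sources: the Markovian sampling error of the inner actor batch and the critic's approximation error. Write the actor update as $\theta_{t+1} = \theta_t + \alpha u_t$ with $u_t := \frac{1}{B}\sum_{i=0}^{B-1}\mathcal{E}_{w_t}(s_{t,i},m_{t,i},s_{t,i+1})\,\psi_{\theta_t}(m_{t,i})$. First I would invoke the $L_V$-smoothness of $\theta \mapsto V^{\pi_\theta}(\rho)$ over the softmax mixture class (with $L_V$ as defined just before the statement) to obtain the ascent inequality
\[ V^{\pi_{\theta_{t+1}}}(\rho) \;\ge\; V^{\pi_{\theta_t}}(\rho) + \alpha\,\langle \nabla_\theta V^{\pi_{\theta_t}}(\rho),\, u_t\rangle - \tfrac{L_V\alpha^2}{2}\,\|u_t\|_2^2. \]
Splitting $\langle \nabla V, u_t\rangle = \|\nabla V\|_2^2 + \langle \nabla V, u_t-\nabla V\rangle$, applying Young's inequality, and bounding $\|u_t\|_2^2 \le 2\|\nabla V\|_2^2 + 2\|u_t-\nabla V\|_2^2$, the choice $\alpha = 1/(4L_V\sqrt M)$ lets me absorb the quadratic term and retain a positive multiple of $\|\nabla V\|_2^2$. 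Summing over $t=1,\dots,T$, telescoping the $V$-differences (bounded since $|V^{\pi_\theta}| \le r_{\max}/(1-\gamma)$), taking expectations, and using that $\hat T$ is uniform on $\{1,\dots,T\}$, everything reduces to
\[ \mathbb{E}\big[\|\nabla_\theta V^{\pi_{\theta_{\hat T}}}(\rho)\|_2^2\big] \;\le\; \frac{\mathcal{O}\!\left(V^*(\rho)-V^{\pi_{\theta_1}}(\rho)\right)}{\alpha T} \;+\; \mathcal{O}\!\left(\tfrac{1}{T}\textstyle\sum_{t=1}^{T}\mathbb{E}\,\|u_t - \nabla_\theta V^{\pi_{\theta_t}}(\rho)\|_2^2\right), \]
where the first term is $\mathcal{O}(\epsilon)$ under the stated $\alpha$ and $T$ (since $V^* = \mathcal{O}(1/(1-\gamma))$ and $L_V = \mathcal{O}(1/(1-\gamma))$).

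The core work is bounding the per-iteration gradient-estimation error $\mathbb{E}\,\|u_t - \nabla_\theta V^{\pi_{\theta_t}}(\rho)\|_2^2$, which I would split as $u_t - \nabla V = (u_t - \bar u_t) + (\bar u_t - \nabla V)$, where $\bar u_t := \mathbb{E}_{(s,m)\sim\nu_{\pi_{\theta_t}}}[\mathcal{E}_{w_t}(s,m,s')\,\psi_{\theta_t}(m)]$ is the population update direction at the \emph{current} critic $w_t$. For the sampling term I would use Assumption~\ref{assumption:ergodicity}: the inner batch is generated from a single trajectory of the $\bar P$-chain whose stationary law is $\nu_{\pi_{\theta_t}}$, so the bias $\|\mathbb{E}[u_t\mid\theta_t,w_t] - \bar u_t\|_2$ decays geometrically in the mixing time (captured through $C_{\kappa\xi}$) while the variance decays like $1/B$; since $\|\psi_{\theta_t}(m)\|_2 \le \sqrt 2$ and $\mathcal{E}_{w_t}$ is bounded under $\sup_s\|\phi(s)\|_2\le1$, the choice $B = \mathcal{O}(1/\epsilon)$ makes this contribution $\mathcal{O}(\epsilon)$. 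For the critic-bias term, the Modified Policy Gradient Theorem gives $\nabla_\theta V^{\pi_{\theta_t}}(\rho) = \mathbb{E}_{\nu}[\tilde{A}^{\pi_{\theta_t}}(s,m)\,\psi_{\theta_t}(m)]$, and since $\mathbb{E}[\mathcal{E}_w(s,m,s')\mid s,m] = \tilde{r}(s,m) + \gamma\,\mathbb{E}[\phi(s')^\top w] - \phi(s)^\top w$ while $\tilde{A}^{\pi}(s,m) = \tilde{r}(s,m) + \gamma\,\mathbb{E}[V^{\pi}(s')] - V^{\pi}(s)$, the difference $\bar u_t - \nabla V$ is controlled by $\|\Phi w_t - V^{\pi_{\theta_t}}\|$, which I further split as $\|\Phi(w_t - w^*)\| + \|\Phi w^* - V^{\pi_{\theta_t}}\|$.

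The critic analysis closes the first of these pieces. By Assumption~\ref{assumption:bound on w}, $L_\pi$ is negative definite with gap $\Gamma_L$, so the expected TD update $w \mapsto w + \beta L_\pi(w - w^*)$ is a contraction toward the fixed point $w^* = -L_\pi^{-1}v_\pi$; with $\beta = \min\{\mathcal{O}(\Gamma_L),\mathcal{O}(1/\Gamma_L)\}$ the batched TD iterates in \texttt{Critic-TD} contract geometrically, and $T_c = \mathcal{O}((\sqrt M/\Gamma_L)\log(1/\epsilon))$ inner iterations (each built from a length-$H=\mathcal{O}(1/\epsilon)$ sub-trajectory so that the within-critic Markov/variance error of each semi-gradient is $\mathcal{O}(\epsilon)$) drive $\mathbb{E}\,\|w_t - w^*\|_2^2 = \mathcal{O}(\epsilon)$. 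The residual $\|\Phi w^* - V^{\pi_{\theta_t}}\|$ is the irreducible linear-approximation error and is exactly what is collected into $\Delta_{critic}$; it does not shrink with more samples, which is why the final bound reads $\epsilon + \mathcal{O}(\Delta_{critic})$ rather than $\epsilon$. Assembling the three contributions --- the optimization term $\mathcal{O}(1/(\alpha T)) = \mathcal{O}(\epsilon)$, the sampling term $\mathcal{O}(\epsilon)$, and the critic term $\mathcal{O}(\epsilon) + \mathcal{O}(\Delta_{critic})$ --- yields the stated accuracy. The sample-complexity bound then follows by multiplying the outer iterations $T$ by the per-iteration sample count $B + T_c H$, whose dominant contribution is $T\cdot T_c\cdot H = \mathcal{O}\!\left(M(1-\gamma)^{-2}\epsilon^{-2}\log(1/\epsilon)\right)$.

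I expect the main obstacle to be the simultaneous control of the Markovian sampling bias and the propagation of the critic error: unlike i.i.d.\ SGD, the batch samples are correlated along a single trajectory, so one must carefully quantify how the mixing time (through $C_{\kappa\xi}$ and Assumption~\ref{assumption:ergodicity}) trades off against $B$, $H$, and $T_c$, while ensuring that the single-trajectory, reset-free structure --- where the critic is warm-started from the actor's last visited state --- does not introduce uncontrolled statistical dependence between $w_t$ and the actor's batch at iteration $t$. Getting these couplings to separate cleanly into an $\epsilon$-part plus the fixed, sample-independent $\Delta_{critic}$ is the technically delicate heart of the argument.
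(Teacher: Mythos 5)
Your proposal is correct and follows essentially the same route as the paper's proof: the $\sqrt{M}L_V$-smoothness ascent inequality with $\alpha = 1/(4L_V\sqrt{M})$, telescoping over $t$ with the uniformly random output index $\hat T$, a three-way decomposition of the gradient-estimation error into a Markovian-sampling term decaying as $O(1/B)$ (via Assumption~\ref{assumption:ergodicity}), a critic-convergence term $\mathbb{E}\|w_t-w^*_{\theta_t}\|_2^2$ driven to $O(\epsilon)$ by batched TD under Assumption~\ref{assumption:bound on w}, and the irreducible approximation error collected into $\Delta_{critic}$, followed by the same $(B+HT_c)T$ sample count. The only cosmetic difference is that you center the sampling error at the current critic $w_t$ (splitting $u_t-\nabla V=(u_t-\bar u_t)+(\bar u_t-\nabla V)$) whereas the paper centers it at the optimal critic $w^*_{\theta_t}$ (splitting into $v_t(w_t)-v_t(w^*_{\theta_t})$, $v_t(w^*_{\theta_t})-g(w^*_{\theta_t})$, and $g(w^*_{\theta_t})-\nabla V$); the two decompositions are equivalent and yield the same bounds.
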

\vspace{-5pt}
Here, $\Delta_{critic}:=\max_{\theta\in \Real^M}\mathbb{E}_{\nu_{\pi_\theta}}\left[\abs{V^{\pi_\theta}(s)-V^{w^*_{\pi_\theta}}}^2\right]$, which equals zero, if the value function lies in the linear space spanned by the features.

Next we provide the {\color{blue}global optimality} guarantee for the Natural-Actor-Critic version of ACIL.
% algorithm~\ref{alg:actor-critic improper RL alg}. 
\begin{theorem}\label{thm:NAC main theorem}
% Consider the Natural Actor Critic improper learning algorithm ACIL (Alg~\ref{alg:actor-critic improper RL alg}). 
Assume $\sup_{s\in \cS}\norm{\phi(s)}_2 \leq 1$. Under Assumptions~\ref{assumption:ergodicity} and \ref{assumption:bound on w} with step-sizes chosen as $\alpha=\left(\frac{\lambda^2}{2\sqrt{M}L_V(1+\lambda)}\right)$, $\beta=\min\left\{\cO\left(\Gamma_L\right), \cO\left(1/\Gamma_L\right)\right\}$, batch-sizes $H=\mathcal{O}\left(\frac{1}{\Gamma_L\epsilon^2}  \right)$,  $B=\cO\left(\frac{1}{(1-\gamma)^2\epsilon^2}  \right)$, $T_c=\mathcal{O}\left(\frac{\sqrt{M}}{\Gamma_L}\log(1/\epsilon)\right)$, $T= \cO\left( \frac{\sqrt{M}}{(1-\gamma)^2\epsilon} \right)$ and $\lambda=\cO(\Delta_{critic})$ we have 
$V({\pi^*})-\frac{1}{T}\sum\limits_{t=0}^{T-1}\expect{V(\pi_{\theta_t})} \leq \epsilon + \cO\left(\sqrt{\frac{\Delta_{actor}}{(1-\gamma)^3} }\right)  + \cO(\Delta_{critic}).$
Hence, the total sample complexity is $\cO\left(\frac{M}{(1-\gamma)^4\epsilon^3} \log\frac{1}{\epsilon}\right)$.
\end{theorem}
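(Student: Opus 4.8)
The plan is to establish global optimality via the Natural Policy Gradient (NPG) framework of exponentiated-gradient/mirror-descent type (in the spirit of \citet{Agarwal2020, Mei2020} and the actor-critic refinements of \citet{Improving-AC-NAC, Target-AC}), specialized to our improper mixture class. The structural fact I would lean on is the one noted right after the Modified Policy Gradient Theorem: for the softmax parameterization the score function $\psi_\theta(m)=\nabla_\theta\log\pi_\theta(m)=e_m-\pi_\theta$ is \emph{independent of the state} $s$. Hence the empirical Fisher matrix $F_t(\theta_t)=\frac1B\sum_i\psi_{\theta_t}(m_{t,i})\psi_{\theta_t}(m_{t,i})^\top$ approximates $\mathrm{diag}(\pi_{\theta_t})-\pi_{\theta_t}\pi_{\theta_t}^\top$, and the idealized preconditioned direction $G_t^{-1}\nabla_\theta V$ collapses to a per-controller update driven by the state-averaged modified advantage $\bar A^{\pi_{\theta_t}}(m):=\mathbb{E}_{s\sim d_\rho^{\pi_{\theta_t}}}[\A^{\pi_{\theta_t}}(s,m)]$. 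This lets me read the NAC actor step as approximate mirror ascent over the probability simplex of mixing weights, which is exactly what converts stationary-point guarantees into global ones.

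First I would prove a per-iteration lemma for the idealized update (exact gradient, exact $G_t^{-1}$, $\lambda\to0$). Combining $L_V$-smoothness of $\theta\mapsto V^{\pi_\theta}$ with the performance-difference identity $V(\pi^*)-V(\pi_{\theta_t})=\frac{1}{1-\gamma}\mathbb{E}_{s\sim d_\rho^{\pi^*}}\mathbb{E}_{m\sim\pi^*}[\A^{\pi_{\theta_t}}(s,m)]$ and the softmax update identity $\log\frac{\pi_{\theta_{t+1}}(m)}{\pi_{\theta_t}(m)}=\frac{\alpha}{1-\gamma}\bar A^{\pi_{\theta_t}}(m)-(\text{normalization})$, I would derive a bound of the form $V(\pi^*)-V(\pi_{\theta_t})\le\frac1\alpha\left(\mathrm{KL}(\pi^*\,\|\,\pi_{\theta_t})-\mathrm{KL}(\pi^*\,\|\,\pi_{\theta_{t+1}})\right)+\mathrm{err}_t$. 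Telescoping over $t=0,\dots,T-1$ and using $\mathrm{KL}(\pi^*\,\|\,\pi_{\theta_0})\le\log M$ (the initialization $\theta_0$ is uniform) gives the skeleton $V(\pi^*)-\frac1T\sum_t\mathbb{E}[V(\pi_{\theta_t})]\le\frac{\log M}{\alpha T}+\frac1T\sum_t\mathbb{E}[\mathrm{err}_t]$, where the leading term is driven to $\mathcal{O}(\epsilon)$ by the stated choice of $T$.

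The substance is in controlling $\mathbb{E}[\mathrm{err}_t]$, which I would split into three pieces. (i) The \textbf{critic error}: the gap between the TD fixed point and the true $V^{\pi_{\theta_t}}$, whose irreducible part is $\Delta_{critic}$ and whose optimization part I would bound by showing the batched recursion of Algorithm~\ref{alg:Critic-TD} contracts geometrically. Assumption~\ref{assumption:bound on w} ($\Gamma_L$-negative-definiteness of $L_\pi$) supplies the contraction, so $T_c=\mathcal{O}(\Gamma_L^{-1}\log(1/\epsilon))$ inner rounds with horizon $H=\mathcal{O}(1/(\Gamma_L\epsilon^2))$ suffice, while Assumption~\ref{assumption:ergodicity} (mixing bound $\kappa\xi^t$) controls the Markovian-sampling bias --- essential because ACIL runs on one trajectory with no resets, so successive batch start states ($s_{init}=s_{t-1,B}$) are correlated. (ii) The \textbf{actor/compatible-function error} $\Delta_{actor}$, i.e.\ the transfer error measuring how well the advantage is realized in the span of the score features; this is the residual that cannot be removed by more samples and surfaces as $\mathcal{O}(\sqrt{\Delta_{actor}/(1-\gamma)^3})$. (iii) The \textbf{statistical and regularization errors}, handled by $B=\mathcal{O}(1/((1-\gamma)^2\epsilon^2))$ for the advantage-estimate variance and by $\lambda=\mathcal{O}(\Delta_{critic})$, which pins the bias of $(F_t+\lambda I)^{-1}$ at the critic's accuracy; the step-size $\alpha=\Theta(\lambda^2/(\sqrt M L_V(1+\lambda)))$ is then the largest value keeping $\alpha L_V$ admissible for the smoothness argument while remaining consistent with the $(F_t+\lambda I)^{-1}$ scaling.

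Assembling the telescoped skeleton with these choices yields $\epsilon+\mathcal{O}(\sqrt{\Delta_{actor}/(1-\gamma)^3})+\mathcal{O}(\Delta_{critic})$, and multiplying the $T=\mathcal{O}(\sqrt M/((1-\gamma)^2\epsilon))$ outer iterations by the per-iteration sample budget $T_cH+B$ gives the claimed $\mathcal{O}(M(1-\gamma)^{-4}\epsilon^{-3}\log(1/\epsilon))$ complexity. \textbf{The main obstacle} I expect is the coupled tuning of $\lambda$ and $\alpha$ in piece (iii): $\lambda$ must be small enough that $(F_t+\lambda I)^{-1}$ still realizes a genuine natural-gradient (mirror-descent) step so that \emph{global} rather than merely stationary convergence is retained, yet large enough that the ill-conditioning of the singular Fisher matrix does not amplify the finite-sample and critic errors. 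Showing simultaneously that these errors, together with the single-trajectory Markovian correlations between the critic's and the actor's batches, do not accumulate across the $T$ outer iterations is where the careful estimation concentrates.
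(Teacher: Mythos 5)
Your proposal follows essentially the same route as the paper's proof: a KL potential $KL(\pi^*\,\|\,\pi_{\theta_t})$ analyzed as (approximate) mirror ascent, the value-difference lemma to convert the $\pi^*$-weighted modified advantage into $(1-\gamma)\bigl(V(\pi^*)-V(\pi_{\theta_t})\bigr)$, telescoping over $t$, and an error decomposition into critic error (TD contraction under Assumptions~\ref{assumption:ergodicity} and~\ref{assumption:bound on w}, irreducible part $\Delta_{critic}$), compatible-function/actor error ($\Delta_{actor}$ with the distribution-mismatch factor $\|\nu_{\pi^*}/\nu_{\pi_{\theta_0}}\|_{\infty}$), Markovian-sampling/batch variance, and the regularization bias $\mathcal{O}(\lambda)$, with the same parameter choices and the same sample-complexity bookkeeping.

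There is, however, one concrete ingredient missing from your plan, and it does not fall under any of your error categories (i)--(iii): the quadratic term $-\tfrac{\alpha^2}{2}\|u_t(w_t)\|_2^2$ produced by the smoothness expansion of $\log\pi_{\theta_{t+1}}-\log\pi_{\theta_t}$. Splitting $\|u_t(w_t)\|_2^2 \lesssim \|u_t(w_t)-u^{\lambda}_{\theta_t}\|_2^2 + \lambda^{-2}\|\nabla_\theta V(\theta_t)\|_2^2$ (since $(F_t+\lambda I)^{-1}\preceq \lambda^{-1}I$) and dividing through by $(1-\gamma)\alpha$ leaves a term of order $\tfrac{\alpha}{\lambda^2(1-\gamma)}\|\nabla_\theta V(\theta_t)\|_2^2$; with your prescribed $\alpha=\Theta\bigl(\lambda^2/(\sqrt{M}L_V(1+\lambda))\bigr)$ the $\lambda^2$ cancels, so this term is \emph{not} made small by the step size, nor is it a statistical, critic, or regularization error --- it requires that the average squared gradient norm itself be small. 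The paper closes this by a two-stage argument: it first proves a separate stationary-point convergence lemma for the NAC iterates (Lemma~\ref{lemma:NAC stationary pt convergence}, the analogue of the analysis behind Theorem~\ref{thm:AC main theorem}), giving $\tfrac{1}{T}\sum_{t}\mathbb{E}\|\nabla_\theta V(\theta_t)\|_2^2 \le \mathcal{O}(\epsilon^2)+\ldots$ under the stated choices of $T$, $B$, $T_c$, $H$, and only then plugs this into the KL-telescoped inequality. Your skeleton needs this first-order stage (or an equivalent bound on $\tfrac1T\sum_t\mathbb{E}\|\nabla_\theta V(\theta_t)\|_2^2$) to close; with it added, your argument coincides with the paper's.
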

\vspace{-10pt}
 where $\Delta_{actor}:=\max_{\theta\in \Real^M}\min_{w\in \Real^d}\mathbb{E}_{\nu_{\pi_\theta}}[[\psi_\theta^\top w - A_{\pi_\theta}(s,m)]^2]$ and $\Delta_{critic}$ is same as before.
\vspace{-5pt}
\section{Numerical  Results}\label{sec:simulation-results}
% \vspace{-5pt}
\subsection{Simulations with Softmax PG}\label{subsec:softmaxPG simulations}
We now discuss the results of implementing Softmax PG (Alg~\ref{alg:mainPolicyGradMDP}) on the cartpole system and on the constrained queueing examples described in Sec.~\ref{sec:motivating-examples}. 
Since neither value functions nor value gradients for these problems are available in closed-form, we modify SoftMax PG (Algorithm~\ref{alg:mainPolicyGradMDP}) to make it generally implementable using a combination of (1) \emph{rollouts} to estimate the value function of the current (improper) policy and (2) \emph{simultaneous perturbation stochastic approximation} (SPSA) to estimate its value gradient. 
% All the simulations have been carried out using this recipe. In each round $t$, we estimate the gradient of the value function of the current policy w.r.t the parameters $\theta^t$. 
%The gradient estimation algorithm, \emph{\color{blue}GradEst}, is shown in Algorithm \ref{alg:gradEst}. 
Specifically, we use the approach in \cite{Flaxman05}, noting that for a function $V:\Real^M\to\Real$, the gradient, $\nabla V$, $\nabla V(\theta) \approx \expect{\left(V(\theta+\alpha. u)-V(\theta) \right)u}.\frac{M}{\alpha},$ 
% \begin{equation}\label{eq:approximation of grad}
%     \nabla V(\theta) \approx \expect{\left(V(\theta+\alpha. u)-V(\theta) \right)u}.\frac{M}{\alpha}. 
% \end{equation}
where the perturbation parameter $\alpha \in (0,1)$ and $u$ is sampled uniformly randomly from the unit sphere.%, the second term is zero, ie., $ \expect{\left(V(\theta+\alpha. u)-V(\theta) \right)u}.\frac{M}{\alpha} = \expect{\left(V(\theta+\alpha. u) \right)u}.\frac{M}{\alpha}$. 
%
%\ag{Move SPGE and GradEst to appendix}
%
This expression requires evaluation of the value function at the point $(\theta+\alpha.u)$. Since the value function may not be explicitly computable, we employ rollouts, for its evaluation. The full algorithm, \emph{\color{blue}GradEst}, can be  found in the appendix (Alg.~\ref{alg:gradEst}).  
% In addition, we approimate the expectation in equation \ref{eq:approximation of grad}, by a sample average of $num_of_rollouts$ \avim{Use \#runs and \#rollouts instead. This looks weird.} rollouts. 

% \begin{figure} \centering
%     \begin{subfigure}[b]{\linewidth}
%         \includegraphics[width=60mm]{a}
%         \label{fig:a}
%     \end{subfigure} %

%     \begin{subfigure}[b]{\linewidth}    
%         \includegraphics[width=60mm]{b}
%         \label{fig:b}    
%     \end{subfigure} 
%     \caption{my caption}
% \end{figure}

\begin{figure*}[t]
\centering     %%% not \center
\subfigure[Cartpole with $\{K_1=K_{opt}, K_2=K_{opt}+\Delta\}$. ]{\label{subfig:cartpole--asym}\includegraphics[ height=3.5cm, width=4.2cm]{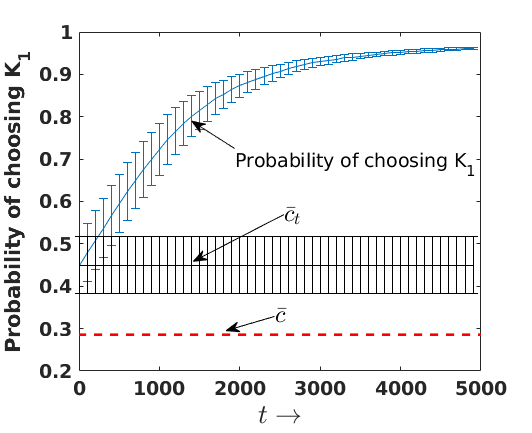}}
\hfill
\subfigure[Cartpole with $\{K_1=K_{opt}-\Delta, K_2=K_{opt}+\Delta\}$. ]{\label{subfig:cartpole--symm}\includegraphics[height=3.5cm, width=4.2cm]{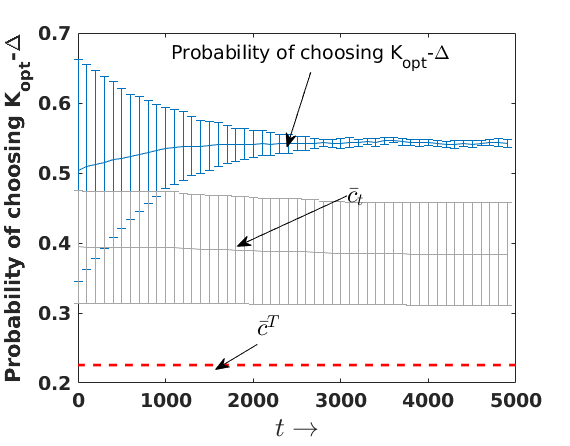}}
\hfill
\subfigure[Softmax PG applied to a Path Graph Network.]{\label{subfig:BQ5}\includegraphics[height=3.5cm, width=4.2cm]{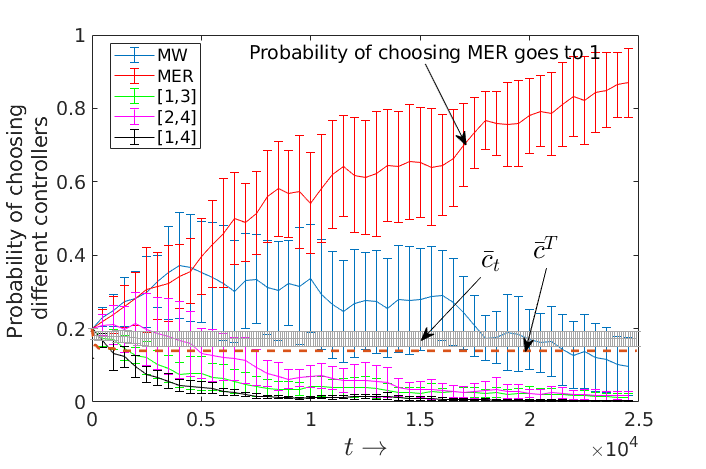}}
\hfill
\subfigure[2 queue system with time-varying arrival rates]{\label{subfig:nonstatbernoulliqueues}\includegraphics[height=3.5cm, width=4.2cm]{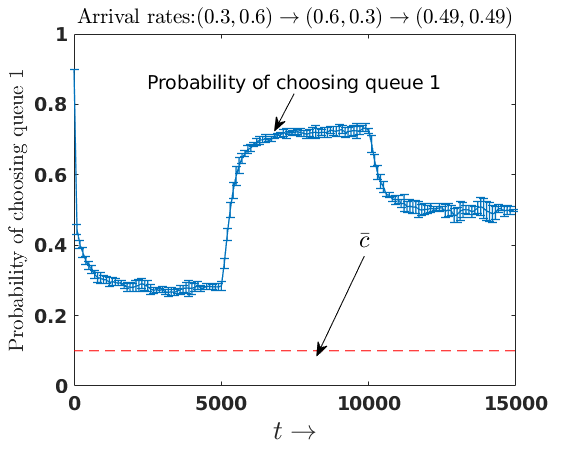}}
\caption{Softmax PG algorithm applied to the cartpole control and path graph scheduling tasks. Each plot shows (a) the  learnt probabilities of various base controllers over time, and (b) the minimum probability $\bar{c_t}$ and $\bar{c}^T$ as described in text.}
\vspace{-10pt}
\label{fig:simulations2}
\end{figure*}

\begin{figure*}[t]
 \centering
\subfigure[Arrival rate:$(\lambda_1,\lambda_2)=(0.4,0.4)$]{\label{subfig:mixture optimal queue NAC}\includegraphics[height=3.5cm, width=5.3cm]{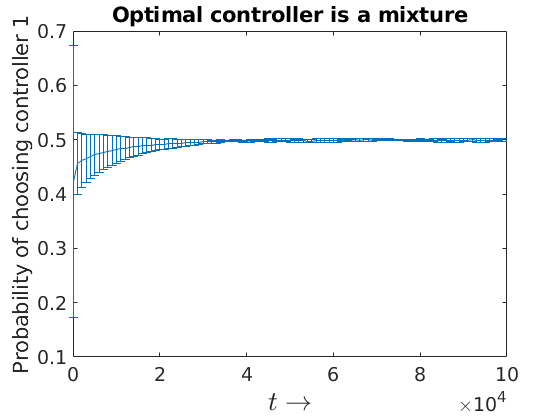}}
\quad
\subfigure[Arrival rate:$(\lambda_1,\lambda_2)=(0.35,0.35)$]{\label{subfig:corner point optimal NAC}\includegraphics[height=3.5cm, width=5.3cm]{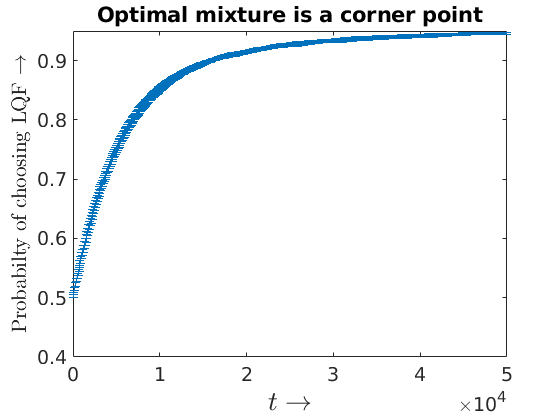}}
\quad
\subfigure[(Estimated) 2 queue system with time-varying arrival rates ]{\label{subfig:transition NAC}       
\includegraphics[height=3.5cm, width=5.3cm]{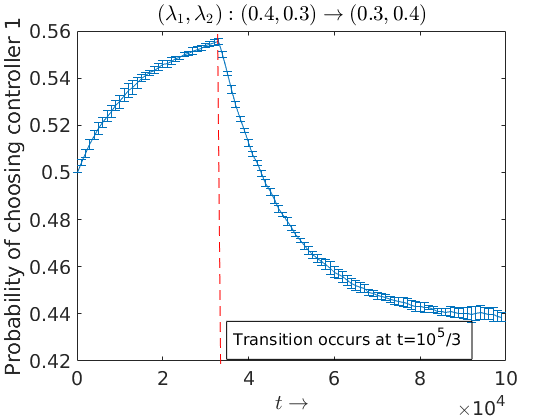}}
\vspace{-0.5cm}
\caption{Natural-actor-critic based improper learning algorithm applied to various queuing networks show convergence to the best mixture policy.\vspace{-20pt}}
\label{fig:simulations}
\end{figure*}
\vspace{-5pt}

\setlength{\textfloatsep}{10pt}% Remove \textfloatsep
 Note that all of the simulations shown have been averaged over $\#\text{\tt L}=20$ trials, and the mean and standard deviations plotted. We also show empirically that $c_t$ in Theorem \ref{thm:convergence of policy gradient} is indeed strictly positive. In the sequel, for every trial $l\in[\#\text{\tt L}]$, let $\bar{c}_t^l:=\inf\limits_{1\leq s\leq t} \min\limits_{m\in \{m'\in [M]: \pi^*(m')>0\}}\pi_{\theta_s}(m),$ and  $\bar{c}_t:= \frac{1}{\#\text{\tt L}} \sum_{l=1}^{\#\text{\tt L}}\bar{c}_t^l.$ Also let $\bar{c}^T := \min\limits_{l\in[\#\text{\tt L}]}\min\limits_{1\leq t\leq T}\bar{c}_t^l$. That is the sequences $\{\bar{c}_t^l\}_{t=1, l=1}^{T,\#\text{\tt L}}$ define the minimum probabilities that the algorithm puts, over rounds $1:t$ in trial $l$, on controllers with $\pi^*(\cdot)>0$. $\{\bar{c}_t\}_{t=1}^T$ represents its average across the different trials, and $\bar{c}^T$ is the minimum such probability that the algorithm learns across all rounds $1\leq t\leq T$ and across trials.

\noindent{\bf Simulations for the Cartpole.} We study two different settings for the Cartpole example. Let $K_{opt}$ be the optimal controller for the given system, computed via standard procedures (details can be found in \cite{bertsekas11dynamic}). We set $M=2$ and consider two scenarios: (i) the two base controllers are $\cC\equiv\{K_{opt}, K_{opt}+\Delta\}$, where $\Delta$ is a random matrix, each entry of which is drawn IID $\cN(0,0.1)$, (ii) $\cC\equiv\{K_{opt}-\Delta, K_{opt}+\Delta\}$. In the first case a corner point of the simplex is optimal. In the second case a strict improper mixture of the available controllers is optimum. As we can see in Fig. 2(a) and 2(b) our policy gradient algorithm converges to the best controller/mixture in both the cases. The details of all the hyperparameters for this setting are provided in the appendix.
We note here that in the second setting even though none of the controllers, applied individually, stabilizes the system, our Softmax PG algorithm finds and follows a improper mixture of the controllers which stabilizes the given Cartpole. %The mean costs of the learnt policy and those of the component controllers, which corroborate our claims of optimality, are deferred to the appendix due to paucity of space.
\newline{\bfseries{ Constrained Queueing Networks.}} We present simulation results for the following networks.\\
{\bfseries{(i) Path Graph Networks.}} The scheduling constraints in the first network we study dictate that Queues $i$ and $i+1$ cannot be served simultaneously for $i\in[N-1]$ in any round $t\geq0$. Such queueing systems are called \emph{path graph} networks \cite{Mohan2020ThroughputOD}. We work with $N=4$. Therefore, sets of queues which can be served simultaneously are $\cA=\{\emptyset,\{1\}, \{2\}, \{3\}, \{4\}, \{1,3\}, \{2,4\}, \{1,4\} \}$. The constituents of $\cA$ are called \emph{independent sets} in the literature. In each round $t$, the scheduler selects an independent set to serve the queues therein. Let $Q_j(t)$ be the backlog of Queue~$j$ at time $t$. We use the following base controllers: (i) $K_1:$ Max Weight (MW) controller \cite{tassiulas-ephremides92stability-queueing} chooses a set $s_t:=\argmax_{\underbar{s}\in \mathcal{A}}\sum_{j\in \underbar{s}} Q_j(t)$, i.e, the set with the largest backlog, (ii) $K_2:$ Maximum Egress Rate (MER) controller chooses a set $s_t:=\argmax_{\underbar{s}\in \mathcal{A}}\sum_{j\in \underbar{s}} \ind\{Q_j(t)>0\}$, i.e, the set which has the maximum number of non-empty queues.%, provided all the queues in the set should be non-empty.
We also choose $K_3,K_4$ and $K_5$ which serve the sets $\{1,3\}, \{2,4\}, \{1,4\}$ respectively with probability 1. We fix the arrival rates to the queues $(0.495,0.495,0.495,0.495)$. It is well known that the MER rule is mean-delay optimal in this case \cite{Mohan2020ThroughputOD}. In Fig. 2(c), we plot the probability of choosing $K_i, i\in [5]$, learnt by our algorithm. The probability of choosing MER indeed converges to 1.\\
{\bfseries{(ii) Non-stationary arrival rates.}} Recall the example discussed in Sec.~\ref{sec:motivating-queueing-example} of two queues. The scheduler there is now given two base/atomic controllers $\mathcal{C}:=\{K_1,K_2\}$, i.e. $M=2$. Controller $K_i$ serves Queue~$i$ with probability $1$, $i=1,2$. As can be seen in Fig. 2(d), the arrival rates $\boldsymbol{\lambda}$ to the two queues vary over time (adversarially) during the learning. In particular, $\boldsymbol{\lambda}$ varies from $(0.3,0.6)\to (0.6,0.3)\to(0.49,0.49)$. Our PG algorithm successfully \emph{tracks} this change and \emph{adapts} to the optimal improper stationary policies in each case. 
\\In all the simulations shown above we note that the empirical trajectories of $\bar{c}_t$ and $\bar{c}^T$ become flat after some initial rounds and are bounded away from zero. This supports our conjecture that $\lim_{t\to\infty}c_t$ in Theorem \ref{thm:convergence of policy gradient} is bounded away from zero, rendering the theorem statement non-vacuous. 
Note that Alg.~\ref{alg:mainPolicyGradMDP} performs well in challenging scenarios, \emph{even} with estimates of the value function and its gradient. %Analyzing convergence with estimated gradients will form part of future work. 
\vspace{-10pt}
\subsection{Simulations with ACIL}
\vspace{-5pt}
We perform some queueing theoretic simulations on the natural actor critic version of ACIL, which we will call NACIL in this section.
Unlike Softmax PG, ACIL estimates gradients using temporal difference instead of SPSA. We study three different settings (1) where in the first case the optimal policy is a strict improper combination of the available controllers and (2) where it is at a corner point, i.e., one of the available controllers itself is optimal (3) arrival rates are time-varying as in the previous section. Our simulations show that in all the cases, ACIL converges to the correct controller mixture. 

Recall the example that we discussed in Sec.~\ref{sec:motivating-queueing-example}. We consider the case with Bernoulli arrivals with rates $\boldsymbol{\lambda}=[\lambda_1,\lambda_2]$ and are given two base/atomic controllers $\{K_1,K_2\}$, where controller $K_i$ serves Queue~$i$ with probability $1$, $i=1,2$. As can be seen in Fig.~\ref{subfig:mixture optimal queue NAC} when $\boldsymbol{\lambda}=[0.4,0.4]$ (equal arrival rates), NACIL converges to an improper mixture policy that serves each queue with probability $[0.5,0.5]$.  
Next in Fig~\ref{subfig:corner point optimal NAC} shows a situation where one of the base controllers, i.e., the ``Longest-Queue-First" (LQF) is the optimal controller. NACIL converges correctly to the corner point.
%Fig.~\ref{subfig:Value function comparisom} shows the evolution of the value function of GradEst (in blue) compared with those of the base controllers (red) and the \emph{Longest Queue First} policy (LQF) which, as the name suggests, always serves the longest queue in the system (black). LQF, like any policy that always serves a nonempty queue in the system whenever there is one\footnote{Tie-breaking rule is irrelevant.}, is known to be optimal in the sense of delay minimization for this system \cite{LQF2016}. See Sec.~\ref{sec:simulation-details} in the Appendix for more details about this experiment.

Lastly, Fig.~\ref{subfig:transition NAC} shows a setting similar to (ii) Sec.~\ref{subsec:softmaxPG simulations} above. Here there is a single transition of $(\lambda_1,\lambda_2)$ from $(0.4,0.3)\to(0.3,0.4)$ which occurs at $t=\left\lceil 10^5/3\right\rceil$, which is unknown to the learner. We show the probability of choosing controller 1. NACIL tracks the changing arrival rates over time.
We supply some more simulations with NACIL in the appendix due to space limitations.

\section*{Acknowledgment}

This work was partially supported by the Israel Science Foundation under contract 2199/20. Mohammadi Zaki was supported by the Aerospace Network Research Consortium (ANRC) Grant on Airplane IOT Data Management.
\newpage
\nocite{langley00}

\bibliography{MDPbib}

\begin{thebibliography}{58}
\providecommand{\natexlab}[1]{#1}
\providecommand{\url}[1]{\texttt{#1}}
\expandafter\ifx\csname urlstyle\endcsname\relax
  \providecommand{\doi}[1]{doi: #1}\else
  \providecommand{\doi}{doi: \begingroup \urlstyle{rm}\Url}\fi

\bibitem[Abbasi-Yadkori \& Szepesvári(2011)Abbasi-Yadkori and
  Szepesvári]{abbasi-yadkori}
Abbasi-Yadkori, Y. and Szepesvári, C.
\newblock Regret bounds for the adaptive control of linear quadratic systems.
\newblock In \emph{Proceedings of the 24th Annual Conference on Learning
  Theory}, volume~19 of \emph{Proceedings of Machine Learning Research}, pp.\
  1--26, Budapest, Hungary, 09--11 Jun 2011. JMLR Workshop and Conference
  Proceedings.

\bibitem[Agarwal et~al.(2020{\natexlab{a}})Agarwal, Kakade, Lee, and
  Mahajan]{Agarwal2020}
Agarwal, A., Kakade, S.~M., Lee, J.~D., and Mahajan, G.
\newblock Optimality and approximation with policy gradient methods in markov
  decision processes.
\newblock In \emph{Proceedings of Thirty Third Conference on Learning Theory},
  pp.\  64--66. PMLR, 2020{\natexlab{a}}.

\bibitem[Agarwal et~al.(2020{\natexlab{b}})Agarwal, Brukhim, Hazan, and
  Lu]{hazan-etal20boosting-control-dynamical}
Agarwal, N., Brukhim, N., Hazan, E., and Lu, Z.
\newblock Boosting for control of dynamical systems.
\newblock In III, H.~D. and Singh, A. (eds.), \emph{Proceedings of the 37th
  International Conference on Machine Learning}, volume 119 of
  \emph{Proceedings of Machine Learning Research}, pp.\  96--103. PMLR, 13--18
  Jul 2020{\natexlab{b}}.

\bibitem[{Auer} et~al.(1995){Auer}, {Cesa-Bianchi}, {Freund}, and
  {Schapire}]{Exp3}
{Auer}, P., {Cesa-Bianchi}, N., {Freund}, Y., and {Schapire}, R.~E.
\newblock Gambling in a rigged casino: The adversarial multi-armed bandit
  problem.
\newblock In \emph{Proceedings of IEEE 36th Annual Foundations of Computer
  Science}, pp.\  322--331, 1995.

\bibitem[Auer et~al.(2009)Auer, Jaksch, and Ortner]{UCRL}
Auer, P., Jaksch, T., and Ortner, R.
\newblock Near-optimal regret bounds for reinforcement learning.
\newblock In \emph{Advances in Neural Information Processing Systems},
  volume~21, pp.\  89--96. Curran Associates, Inc., 2009.

\bibitem[Banijamali et~al.(2019)Banijamali, Abbasi-Yadkori, Ghavamzadeh, and
  Vlassis]{pmlr-v89-banijamali19a}
Banijamali, E., Abbasi-Yadkori, Y., Ghavamzadeh, M., and Vlassis, N.
\newblock Optimizing over a restricted policy class in mdps.
\newblock In Chaudhuri, K. and Sugiyama, M. (eds.), \emph{Proceedings of the
  Twenty-Second International Conference on Artificial Intelligence and
  Statistics}, volume~89 of \emph{Proceedings of Machine Learning Research},
  pp.\  3042--3050. PMLR, 16--18 Apr 2019.

\bibitem[Barakat et~al.(2021)Barakat, Bianchi, and Lehmann]{Target-AC}
Barakat, A., Bianchi, P., and Lehmann, J.
\newblock Analysis of a target-based actor-critic algorithm with linear
  function approximation.
\newblock \emph{CoRR}, abs/2106.07472, 2021.

\bibitem[Barreto et~al.(2017)Barreto, Dabney, Munos, Hunt, Schaul, van Hasselt,
  and Silver]{options1}
Barreto, A., Dabney, W., Munos, R., Hunt, J.~J., Schaul, T., van Hasselt,
  H.~P., and Silver, D.
\newblock Successor features for transfer in reinforcement learning.
\newblock In Guyon, I., Luxburg, U.~V., Bengio, S., Wallach, H., Fergus, R.,
  Vishwanathan, S., and Garnett, R. (eds.), \emph{Advances in Neural
  Information Processing Systems}, volume~30. Curran Associates, Inc., 2017.

\bibitem[Bertsekas(2011)]{bertsekas11dynamic}
Bertsekas, D.~P.
\newblock Dynamic programming and optimal control 3rd edition, volume ii.
\newblock \emph{Belmont, MA: Athena Scientific}, 2011.

\bibitem[Bhandari \& Russo(2019)Bhandari and Russo]{Bhandari2019GlobalOG}
Bhandari, J. and Russo, D.
\newblock Global optimality guarantees for policy gradient methods.
\newblock \emph{ArXiv}, abs/1906.01786, 2019.

\bibitem[Bhandari et~al.(2018)Bhandari, Russo, and Singal]{Bhandari-TD}
Bhandari, J., Russo, D., and Singal, R.
\newblock A finite time analysis of temporal difference learning with linear
  function approximation.
\newblock \emph{Oper. Res.}, 69:\penalty0 950--973, 2018.

\bibitem[Bhatnagar et~al.(2009)Bhatnagar, Sutton, Ghavamzadeh, and
  Lee]{BhatnagarNAC}
Bhatnagar, S., Sutton, R.~S., Ghavamzadeh, M., and Lee, M.
\newblock Natural actor–critic algorithms.
\newblock \emph{Automatica}, 45\penalty0 (11):\penalty0 2471--2482, 2009.
\newblock ISSN 0005-1098.

\bibitem[Bolzern et~al.(2008)Bolzern, Colaneri, and
  De~Nicolao]{bolzern-etal08almost-sure-stability-ergodic-linear}
Bolzern, P., Colaneri, P., and De~Nicolao, G.
\newblock Almost sure stability of stochastic linear systems with ergodic
  parameters.
\newblock \emph{European Journal of Control}, 14\penalty0 (2):\penalty0
  114--123, 2008.

\bibitem[Borkar(2008)]{Borkar}
Borkar, V.~S.
\newblock \emph{{Stochastic Approximation}}.
\newblock Cambridge Books. Cambridge University Press, December 2008.

\bibitem[Cassel et~al.(2020)Cassel, Cohen, and Koren]{pmlr-v119-cassel20a}
Cassel, A., Cohen, A., and Koren, T.
\newblock Logarithmic regret for learning linear quadratic regulators
  efficiently.
\newblock In \emph{Proceedings of the 37th International Conference on Machine
  Learning}, volume 119 of \emph{Proceedings of Machine Learning Research},
  pp.\  1328--1337. PMLR, 13--18 Jul 2020.

\bibitem[Chen \& Hazan(2020)Chen and
  Hazan]{chen-hazan20black-box-control-linear-dynamical}
Chen, X. and Hazan, E.
\newblock Black-box control for linear dynamical systems.
\newblock \emph{arXiv preprint arXiv:2007.06650}, 2020.

\bibitem[Daniely et~al.(2013)Daniely, Linial, and Shalev-Shwartz]{Improper2}
Daniely, A., Linial, N., and Shalev-Shwartz, S.
\newblock More data speeds up training time in learning halfspaces over sparse
  vectors.
\newblock In \emph{Advances in Neural Information Processing Systems},
  volume~26, pp.\  145--153. Curran Associates, Inc., 2013.

\bibitem[Daniely et~al.(2014)Daniely, Linial, and Shalev-Shwartz]{Improper1}
Daniely, A., Linial, N., and Shalev-Shwartz, S.
\newblock From average case complexity to improper learning complexity.
\newblock In \emph{Proceedings of the Forty-Sixth Annual ACM Symposium on
  Theory of Computing}, STOC '14, pp.\  441–448, New York, NY, USA, 2014.
  Association for Computing Machinery.

\bibitem[{Dean} et~al.(2017){Dean}, {Mania}, {Matni}, {Recht}, and
  {Tu}]{SarahDean}
{Dean}, S., {Mania}, H., {Matni}, N., {Recht}, B., and {Tu}, S.
\newblock {On the Sample Complexity of the Linear Quadratic Regulator}.
\newblock \emph{arXiv e-prints}, art. arXiv:1710.01688, October 2017.

\bibitem[Denisov \& Walton(2020)Denisov and Walton]{Denisov2020RegretAO}
Denisov, D. and Walton, N.
\newblock Regret analysis of a markov policy gradient algorithm for multi-arm
  bandits.
\newblock \emph{ArXiv}, abs/2007.10229, 2020.

\bibitem[Durrett(2011)]{Durrett11probability:theory}
Durrett, R.
\newblock Probability: Theory and examples, 2011.

\bibitem[Fazel et~al.(2018)Fazel, Ge, Kakade, and Mesbahi]{fazel2019global}
Fazel, M., Ge, R., Kakade, S.~M., and Mesbahi, M.
\newblock Global convergence of policy gradient methods for the linear
  quadratic regulator, 2018.

\bibitem[Flaxman et~al.(2005)Flaxman, Kalai, and McMahan]{Flaxman05}
Flaxman, A.~D., Kalai, A.~T., and McMahan, H.~B.
\newblock Online convex optimization in the bandit setting: Gradient descent
  without a gradient.
\newblock SODA '05, pp.\  385–394, USA, 2005. Society for Industrial and
  Applied Mathematics.

\bibitem[Gao \& Pavel(2017)Gao and Pavel]{softmax}
Gao, B. and Pavel, L.
\newblock On the properties of the softmax function with application in game
  theory and reinforcement learning.
\newblock \emph{ArXiv}, abs/1704.00805, 2017.

\bibitem[Gopalan \& Mannor(2015)Gopalan and Mannor]{pmlr-v40-Gopalan15}
Gopalan, A. and Mannor, S.
\newblock {Thompson Sampling for Learning Parameterized Markov Decision
  Processes}.
\newblock In \emph{Proceedings of The 28th Conference on Learning Theory},
  volume~40 of \emph{Proceedings of Machine Learning Research}, pp.\  861--898,
  Paris, France, 03--06 Jul 2015. PMLR.

\bibitem[Ibrahimi et~al.(2012)Ibrahimi, Javanmard, and Roy]{Ibrahimi}
Ibrahimi, M., Javanmard, A., and Roy, B.
\newblock Efficient reinforcement learning for high dimensional linear
  quadratic systems.
\newblock In \emph{Advances in Neural Information Processing Systems},
  volume~25, pp.\  2636--2644. Curran Associates, Inc., 2012.

\bibitem[Kakade \& Langford(2002)Kakade and Langford]{kakade2002approximately}
Kakade, S. and Langford, J.
\newblock Approximately optimal approximate reinforcement learning.
\newblock In \emph{In Proc. 19th International Conference on Machine Learning},
  2002.

\bibitem[Khalil(2015)]{khalil15nonlinear-control}
Khalil, H.~K.
\newblock \emph{Nonlinear Control}.
\newblock Pearson, 2015.

\bibitem[Koc\'{a}k et~al.(2014)Koc\'{a}k, Neu, Valko, and Munos]{KocakExpIX}
Koc\'{a}k, T., Neu, G., Valko, M., and Munos, R.
\newblock Efficient learning by implicit exploration in bandit problems with
  side observations.
\newblock In \emph{Advances in Neural Information Processing Systems},
  volume~27, pp.\  613--621. Curran Associates, Inc., 2014.

\bibitem[Konda \& Tsitsiklis(2000)Konda and Tsitsiklis]{Konda_NIPS_AC}
Konda, V. and Tsitsiklis, J.
\newblock Actor-critic algorithms.
\newblock In Solla, S., Leen, T., and M\"{u}ller, K. (eds.), \emph{Advances in
  Neural Information Processing Systems}, volume~12. MIT Press, 2000.

\bibitem[Lai \& Robbins(1985)Lai and Robbins]{LAI19854}
Lai, T. and Robbins, H.
\newblock Asymptotically efficient adaptive allocation rules.
\newblock \emph{Advances in Applied Mathematics}, 6\penalty0 (1):\penalty0 4 --
  22, 1985.
\newblock ISSN 0196-8858.

\bibitem[Li et~al.(2021)Li, Wei, Chi, Gu, and
  Chen]{li-etal21softmax-policy-gradient-exponential-converge-time}
Li, G., Wei, Y., Chi, Y., Gu, Y., and Chen, Y.
\newblock Softmax policy gradient methods can take exponential time to
  converge.
\newblock \emph{arXiv preprint arXiv:2102.11270}, 2021.

\bibitem[Littlestone \& Warmuth(1994)Littlestone and Warmuth]{EXP-WTS}
Littlestone, N. and Warmuth, M.~K.
\newblock The weighted majority algorithm.
\newblock \emph{Inform. Comput.}, 108\penalty0 (2):\penalty0 212--261, 1994.

\bibitem[{\L}ojasiewicz(1963)]{lojasiewicz1963equations}
{\L}ojasiewicz, S.
\newblock Les {\'e}quations aux d{\'e}riv{\'e}es partielles (paris, 1962),
  1963.

\bibitem[Lowe et~al.(2020)Lowe, Wu, Tamar, Harb, Abbeel, and Mordatch]{MADDPG}
Lowe, R., Wu, Y., Tamar, A., Harb, J., Abbeel, P., and Mordatch, I.
\newblock Multi-agent actor-critic for mixed cooperative-competitive
  environments, 2020.

\bibitem[Maclin \& Opitz(2011)Maclin and Opitz]{ensembleRL1}
Maclin, R. and Opitz, D.~W.
\newblock Popular ensemble methods: An empirical study.
\newblock \emph{CoRR}, abs/1106.0257, 2011.

\bibitem[Mania et~al.(2019)Mania, Tu, and Recht]{mania2019certainty}
Mania, H., Tu, S., and Recht, B.
\newblock Certainty equivalence is efficient for linear quadratic control.
\newblock In \emph{Advances in Neural Information Processing Systems},
  volume~32, pp.\  10154--10164. Curran Associates, Inc., 2019.

\bibitem[Mei et~al.(2020)Mei, Xiao, Szepesvari, and Schuurmans]{Mei2020}
Mei, J., Xiao, C., Szepesvari, C., and Schuurmans, D.
\newblock On the global convergence rates of softmax policy gradient methods.
\newblock In \emph{Proceedings of the 37th International Conference on Machine
  Learning}, pp.\  6820--6829. PMLR, 2020.

\bibitem[Mohan et~al.(2016)Mohan, Chattopadhyay, and Kumar]{LQF2016}
Mohan, A., Chattopadhyay, A., and Kumar, A.
\newblock Hybrid mac protocols for low-delay scheduling.
\newblock In \emph{2016 IEEE 13th International Conference on Mobile Ad Hoc and
  Sensor Systems (MASS)}, pp.\  47--55, Los Alamitos, CA, USA, oct 2016. IEEE
  Computer Society.

\bibitem[Mohan et~al.(2020)Mohan, Gopalan, and Kumar]{Mohan2020ThroughputOD}
Mohan, A., Gopalan, A., and Kumar, A.
\newblock Throughput optimal decentralized scheduling with single-bit state
  feedback for a class of queueing systems.
\newblock \emph{ArXiv}, abs/2002.08141, 2020.

\bibitem[Neu(2015)]{Neu15a}
Neu, G.
\newblock Explore no more: Improved high-probability regret bounds for
  non-stochastic bandits.
\newblock In \emph{Advances in Neural Information Processing Systems},
  volume~28, pp.\  3168--3176. Curran Associates, Inc., 2015.

\bibitem[Osband et~al.(2013)Osband, Russo, and Van~Roy]{OsbandTS_NIPS2013}
Osband, I., Russo, D., and Van~Roy, B.
\newblock (more) efficient reinforcement learning via posterior sampling.
\newblock In \emph{Advances in Neural Information Processing Systems},
  volume~26, pp.\  3003--3011. Curran Associates, Inc., 2013.

\bibitem[Ouyang et~al.(2017)Ouyang, Gagrani, Nayyar, and
  Jain]{Ouyang2017LearningUM}
Ouyang, Y., Gagrani, M., Nayyar, A., and Jain, R.
\newblock Learning unknown markov decision processes: A thompson sampling
  approach.
\newblock In \emph{NIPS}, 2017.

\bibitem[Peters \& Schaal(2008)Peters and Schaal]{PetersNAC}
Peters, J. and Schaal, S.
\newblock Natural actor-critic.
\newblock \emph{Neurocomputing}, 71\penalty0 (7):\penalty0 1180--1190, 2008.
\newblock ISSN 0925-2312.
\newblock Progress in Modeling, Theory, and Application of Computational
  Intelligenc.

\bibitem[Radac \& Precup(2018)Radac and Precup]{ABScitation}
Radac, M.-B. and Precup, R.-E.
\newblock Data-driven model-free slip control of anti-lock braking systems
  using reinforcement q-learning.
\newblock \emph{Neurocomput.}, 275\penalty0 (C):\penalty0 317–329, January
  2018.

\bibitem[Rummery \& Niranjan(1994)Rummery and
  Niranjan]{Rummery94on-lineq-learning-SARSA}
Rummery, G.~A. and Niranjan, M.
\newblock On-line q-learning using connectionist systems.
\newblock Technical report, 1994.

\bibitem[Schulman et~al.(2015)Schulman, Levine, Abbeel, Jordan, and
  Moritz]{TRPO}
Schulman, J., Levine, S., Abbeel, P., Jordan, M., and Moritz, P.
\newblock Trust region policy optimization.
\newblock In Bach, F. and Blei, D. (eds.), \emph{Proceedings of the 32nd
  International Conference on Machine Learning}, volume~37 of \emph{Proceedings
  of Machine Learning Research}, pp.\  1889--1897, Lille, France, 07--09 Jul
  2015. PMLR.

\bibitem[Schulman et~al.(2017)Schulman, Wolski, Dhariwal, Radford, and
  Klimov]{PPO}
Schulman, J., Wolski, F., Dhariwal, P., Radford, A., and Klimov, O.
\newblock Proximal policy optimization algorithms, 2017.

\bibitem[Shani et~al.(2020)Shani, Efroni, and Mannor]{Shani2020AdaptiveTR}
Shani, L., Efroni, Y., and Mannor, S.
\newblock Adaptive trust region policy optimization: Global convergence and
  faster rates for regularized mdps.
\newblock \emph{ArXiv}, abs/1909.02769, 2020.

\bibitem[Silver et~al.(2016)Silver, Huang, Maddison, Guez, Sifre, van~den
  Driessche, Schrittwieser, Antonoglou, Panneershelvam, Lanctot, Dieleman,
  Grewe, Nham, Kalchbrenner, Sutskever, Lillicrap, Leach, Kavukcuoglu, Graepel,
  and Hassabis]{AlphaGo}
Silver, D., Huang, A., Maddison, C.~J., Guez, A., Sifre, L., van~den Driessche,
  G., Schrittwieser, J., Antonoglou, I., Panneershelvam, V., Lanctot, M.,
  Dieleman, S., Grewe, D., Nham, J., Kalchbrenner, N., Sutskever, I.,
  Lillicrap, T., Leach, M., Kavukcuoglu, K., Graepel, T., and Hassabis, D.
\newblock Mastering the game of go with deep neural networks and tree search.
\newblock \emph{Nature}, 529:\penalty0 484--503, 2016.

\bibitem[{Singh} et~al.(2017){Singh}, {Okun}, and {Jackson}]{AlphaGozero}
{Singh}, S., {Okun}, A., and {Jackson}, A.
\newblock {Artificial intelligence: Learning to play Go from scratch}.
\newblock 550\penalty0 (7676):\penalty0 336--337, October 2017.
\newblock \doi{10.1038/550336a}.

\bibitem[Sutton \& Barto(2018)Sutton and Barto]{Sutton1998}
Sutton, R.~S. and Barto, A.~G.
\newblock \emph{Reinforcement Learning: An Introduction}.
\newblock The MIT Press, second edition, 2018.

\bibitem[Sutton et~al.(1999)Sutton, Precup, and Singh]{options2}
Sutton, R.~S., Precup, D., and Singh, S.
\newblock Between mdps and semi-mdps: A framework for temporal abstraction in
  reinforcement learning.
\newblock \emph{Artificial Intelligence}, 112\penalty0 (1):\penalty0 181--211,
  1999.
\newblock ISSN 0004-3702.

\bibitem[Sutton et~al.(2000)Sutton, McAllester, Singh, and Mansour]{Sutton2000}
Sutton, R.~S., McAllester, D., Singh, S., and Mansour, Y.
\newblock Policy gradient methods for reinforcement learning with function
  approximation.
\newblock In \emph{Advances in Neural Information Processing Systems},
  volume~12, pp.\  1057--1063. MIT Press, 2000.

\bibitem[{Tassiulas} \& {Ephremides}(1992){Tassiulas} and
  {Ephremides}]{tassiulas-ephremides92stability-queueing}
{Tassiulas}, L. and {Ephremides}, A.
\newblock Stability properties of constrained queueing systems and scheduling
  policies for maximum throughput in multihop radio networks.
\newblock \emph{IEEE Transactions on Automatic Control}, 37\penalty0
  (12):\penalty0 1936--1948, 1992.
\newblock \doi{10.1109/9.182479}.

\bibitem[Wiering \& van Hasselt(2008)Wiering and van Hasselt]{ensembleRL3}
Wiering, M.~A. and van Hasselt, H.
\newblock Ensemble algorithms in reinforcement learning.
\newblock \emph{IEEE Transactions on Systems, Man, and Cybernetics, Part B
  (Cybernetics)}, 38\penalty0 (4):\penalty0 930--936, 2008.

\bibitem[Xiliang et~al.(2018)Xiliang, Cao, Li, Xu, and Lai]{ensembleRL2}
Xiliang, C., Cao, L., Li, C.-x., Xu, Z.-x., and Lai, J.
\newblock Ensemble network architecture for deep reinforcement learning.
\newblock \emph{Mathematical Problems in Engineering}, 2018:\penalty0 1--6, 04
  2018.

\bibitem[Xu et~al.(2020)Xu, Wang, and Liang]{Improving-AC-NAC}
Xu, T., Wang, Z., and Liang, Y.
\newblock Improving sample complexity bounds for (natural) actor-critic
  algorithms.
\newblock In Larochelle, H., Ranzato, M., Hadsell, R., Balcan, M.~F., and Lin,
  H. (eds.), \emph{Advances in Neural Information Processing Systems},
  volume~33, pp.\  4358--4369. Curran Associates, Inc., 2020.

\end{thebibliography}
\bibliographystyle{icml2022}

%%%%%%%%%%%%%%%%%%%%%%%%%%%%%%%%%%%%%%%%%%%%%%%%%%%%%%%%%%%%%%%%%%%%%%%%%%%%%%%
%%%%%%%%%%%%%%%%%%%%%%%%%%%%%%%%%%%%%%%%%%%%%%%%%%%%%%%%%%%%%%%%%%%%%%%%%%%%%%%
% APPENDIX
%%%%%%%%%%%%%%%%%%%%%%%%%%%%%%%%%%%%%%%%%%%%%%%%%%%%%%%%%%%%%%%%%%%%%%%%%%%%%%%
%%%%%%%%%%%%%%%%%%%%%%%%%%%%%%%%%%%%%%%%%%%%%%%%%%%%%%%%%%%%%%%%%%%%%%%%%%%%%%%
\newpage
\appendix
\onecolumn
\onecolumn
\appendix
\section{Glossary of Symbols}
\begin{enumerate}
    \item $\cS$: State space
    \item $\cA:$ Action space
    \item $S:$ Cardinality of $\cS$
    \item $A:$ Cardinality of $\cA$
    \item $M:$ Number of controllers
    \item $K_i$ Controller $i$, $i=1,\cdots,M$. For finite SA space MDP, $K_i$ is a matrix of size $S\times A$, where each row is a probability distribution over the actions. 
    \item $\cC:$ Given collection of $M$ controllers.
    \item $\cI_{soft}(\cC):$ Improper policy class setup by the learner.
    \item  $\theta\in \Real^M:$ Parameter assigned to the controllers to controllers, representing weights, updated each round by the learner. 
    \item $\pi(.):$ Probability of choosing controllers
    \item $\pi(.\given s)$ Probability of choosing action given state $s$. Note that in our setting, given $\pi(.)$ over controllers (see previous item) and the set of controllers, $\pi(.\given s)$ is completely defined, i.e., $\pi(a\given s)=\sm \pi(m)K_m(s,a)$. Hence we use simply $\pi$ to denote the policy followed, whenever the context is clear.
    \item $r(s,a):$ Immediate (one-step) reward obtained if action $a$ is played in state $s$.
    \item $\tP(s'\given s,a)$ Probability of transitioning to state $s'$ from state $s$ having taken action $a$.
    \item $V^{\pi}(\rho):=\mathbb{E}_{s_0\sim \rho}\left[V^\pi(s_0) \right]  = \mathbb{E}^{\pi}_{\rho}\sum_{t=0}^{\infty}\gamma^tr(s_t,a_t)$ Value function starting with initial distribution $\rho$ over states, and following policy $\pi$. 
    \item $Q^\pi(s,a):= \expect{r(s,a) + \gamma\sum\limits_{s'\in \cS}\tP(s'\given s,a)V^\pi(s')}$. 
    \item ${\Q}^\pi(s,m):= \expect{\sum\limits_{a\in \cA}K_m(s,a)r(s,a) + \gamma\sum\limits_{s'\in \cS}\tP(s'\given s,a)V^\pi(s')}$.
    \item $A^\pi(s,a):=Q^\pi(s,a)-V^\pi(s)$
    \item $\tilde{A}(s,m):=\Q^\pi(s,m)-V^\pi(s)$.
    \item $d_\nu^\pi:=\mathbb{E}_{s_0\sim \nu}\left[(1-\gamma)\sum\limits_{t=0}^\infty \prob{s_t=s\given s_o,\pi,\tP} \right]$. Denotes a distribution over the states, is called the ``discounted state visitation measure"
    \item $c : \inf_{t\geq 1}\min\limits_{m\in \{m'\in[M]:\pi^*(m') >0\}}\pi_{\theta_t}(m)$.
    \item $\norm{\frac{d_\mu^{\pi^*}}{\mu}}_\infty = \max_s \frac{d_\mu^{\pi^*}(s)}{\mu(s)}$.
    \item $\norm{\frac{1}{\mu}}_\infty = \max_s \frac{1}{\mu(s)}$.
  %  \item $\Pi(\Theta):$ denotes the set of policies parametrized by $\theta$.
\end{enumerate}
%%%%%%%%%%%%%%%%%%%%%%%%%%%%%%%%%%%%%%%%%%%%%%%%%%%%%%%%%%%%%%%%%%

\section{Expanded Survey of Related Work}
\label{sec:relatedworkfull}
In this section, we provide a detailed survey of related works. It is vital to distinguish the approach investigated in the present paper from the plethora of existing algorithms based on 'proper learning'. Essentially, these algorithms try to find an (approximately) optimal policy for the MDP under investigation. %In stochastic control parlance, these proposals try to get close to the Bellman fixed point of the MDP. 
These approaches can broadly be classified in two groups: \emph{\color{blue}model-based} and \emph{\color{blue}model-free}. 

The former is based on first learning the dynamics of the unknown MDP followed by planning for this learnt model. Algorithms in this class  include Thompson Sampling-based approaches \cite{OsbandTS_NIPS2013,Ouyang2017LearningUM,pmlr-v40-Gopalan15}, Optimism-based approaches such as the UCRL algorithm \cite{UCRL}, both achieving order-wise optimal $\cO(\sqrt{T})$ regret bound.% \avim{what regret scaling do they achieve?}

A particular class of MDPs which has been studied extensively is the Linear Quadratic Regulator (LQR) which is a continuous state-action MDP with linear state dynamics and quadratic cost \cite{SarahDean}. Let $x_t\in \Real^m$ be the current state and let $u_t\in \Real^n$ be the action applied at time $t$. The infinite horizon average cost minimization problem for LQR is to find a policy to choose actions $\{u_t\}_{t\geq 1}$ so as to minimize
\[\lim\limits_{T\to \infty}\expect{\frac{1}{T}\sum\limits_{t=1}^T x_t\transpose Qx_t +u_t\transpose Ru_t}\]
such that $x_{t+1}=Ax_t+Bu_t+n(t)$, $n(t)$ is iid zero-mean noise. Here the matrices $A$ and $B$ are unknown to the learner.
Earlier works like \cite{abbasi-yadkori, Ibrahimi}  proposed algorithms based on the well-known optimism principle (with confidence ellipsoids around estimates of $A$ and $B$). These show regret bounds of $\cO(\sqrt{T})$. 

However, these approaches do not focus on the stability of the closed-loop system. \cite{SarahDean} describes a robust controller design which seeks  to  minimize  the  worst-case  performance  of  the  system  given  the error in the estimation process. They show a sample complexity analysis guaranteeing convergence rate of $\cO(1/\sqrt{N})$ to the optimal policy for the given LQR, $N$ being the number of rollouts. More recently, certainity equivalence \cite{mania2019certainty} was shown to achieve $\cO(\sqrt{T})$ regret for LQRs. Further, \cite{pmlr-v119-cassel20a} show  that it is possible to achieve $\cO(\log T)$ regret if either one of the matrices $A$ or $B$ are known to the learner, and also provided a lower bound showing that $\Omega(\sqrt{T})$ regret is unavoidable when both are unknown.

The \textit{model-free} approach on the other hand, bypasses model estimation and directly learns the value function of the unknown MDP. While the most popular among these have historically been Q-learning, TD-learning \cite{Sutton1998} and SARSA \cite{Rummery94on-lineq-learning-SARSA}, algorithms based on gradient-based policy optimization have been gaining considerable attention of late, following their stunning success with playing the game of Go which has long been viewed as the most challenging of classic games for artificial intelligence owing to its enormous search space and the difficulty of evaluating board positions and moves. \cite{AlphaGo} and more recently \cite{AlphaGozero} use policy gradient method combined with a neural network representation to beat human experts. Indeed, the Policy Gradient method has become the cornerstone of modern RL and given birth to an entire class of highly efficient policy search algorithms such as TRPO \cite{TRPO}, PPO\cite{PPO}, and MADDPG \cite{MADDPG}. 

Despite its excellent empirical performance, 
%\cite{pmlr-v37-schulman15}, \cite{DBLP:journals/corr/SchulmanWDRK17}, 
not much was known about theoretical guarantees for this approach until recently. There is now a growing body of promising results showing convergence rates for PG algorithms over finite state-action MDPs \cite{Agarwal2020,Shani2020AdaptiveTR,Bhandari2019GlobalOG,Mei2020}, where the parameterization is over the entire space of state -action pairs, i.e., $\Real^{S\times A}$. In particular, \cite{Bhandari2019GlobalOG} show that projected gradient descent does not suffer from spurious local optima on the simplex, \cite{Agarwal2020} show that the with softmax parameterization PG converges to the global optima asymptotically. \cite{Shani2020AdaptiveTR} show a $\cO(1/\sqrt{t})$ convergence rate for mirror descent. \cite{Mei2020} show that with softmax policy gradient convergence to the global optima occurs at a rate $\cO(1/t)$ and at $\cO(e^{-t})$ with entropy regularization.% \avim{Again, what scaling is each showing??}

%Learning MDP's, \avim{model-based RL}: Thompson sampling, UCB based approaches \avim{UCRL,AG and SM papers}, expert algorithms \avim{EXP3 and EXP4 + we're not doing this}. \avim{Policy gradient latest results: start with Mei and Agarwal-Kakade}

% \avim{Perhaps we can also add recent LQR learning approaches to help distinguish our proposal from 'proper' learning?}\shie{The latter is very important: we should provide a long and boring list of LQR proper learning works.}\\

 We end this section noting once again that all of the above works concern \emph{proper} learning.
Improper learning, on the other hand, has been separately studied in statistical learning theory in the IID setting \cite{Improper1, Improper2}.  In this framework, which is also called \emph{Representation Independent} learning, the learning algorithm is  not restricted to output a hypothesis from a given set of hypotheses. 
 We note that improper learning has not been studied in RL literature to the best of our knowledge.
%\avim{include these references in the intro itself no need for annother paragraph here} regret minimization by following the best expert has also been studied extensively in \cite{Exp3}, \cite{KocakExpIX} and \cite{Neu15a}. As mentioned above in Sec. ~\ref{sec:Introduction}, this setting is different from ours as these policies would quickly converge to the best expert in hindsight, whereas the true optimum (in the class) could be a strict mixture of more than one \emph{experts}.
 %, in that, these works focus on finding the optimal controller for the given MDP.
 
To our knowledge, \cite{hazan-etal20boosting-control-dynamical} is the only existing work that attempts to frame and solve policy optimization over an improper class via boosting a given class of controllers. However, the paper is situated in the rather different context of non-stochastic control and assumes perfect knowledge of (i) the memory-boundedness of the MDP, and (ii) the state noise vector in every round, which amounts to essentially knowing the MDP transition dynamics. We work in the stochastic MDP setting and moreover assume no access to the MDP's transition kernel. Further, \cite{hazan-etal20boosting-control-dynamical} also assumes that all the atomic controllers available to them are \emph{stabilizing} which, when working with an unknown MDP, is a very strong assumption to make. We make no such assumptions on our atomic controller class and, as we show in Sec.~\ref{sec:motivating-examples} and Sec.~\ref{sec:simulation-results}, our algorithms even begin with provably unstable controllers and yet succeed in stabilizing the system.

In summary, the problem that we address concerns finding the best among a \emph{given} class of controllers. None of these need be optimal for the MDP at hand. Moreover, our PG algorithm could very well converge to an improper mixture of these controllers meaning that the output of our algorithms need not be any of the atomic controllers we are provided with. This setting, to the best of our knowledge has not been investigated in the RL literature hitherto.
%%%%%%%%%%%%%%%%%%%%%%%%%%%%%%%%%%%%%%%%%%%%%%%%%%%%%%%%%%%%%%%%%%%%%%%%%%%%%%%
\section{Details of Setup and Modelling of the Cartpole}\label{appendix:Details of Setup and Modelling of the Inverted Pendulum}

% \ag{to do AM: Move figures, details to appendix, just have one motivating sec. with both examples - cartpole and network in brief}
% %------------------------------- Cartpole Figure Begins -------------------------------
    \begin{figure}[H]
\centering
\tikzset{every picture/.style={line width=0.75pt}} %set default line width to 0.75pt        

\resizebox{5.50cm}{6.0cm}{
\begin{tikzpicture}[x=0.75pt,y=0.75pt,yscale=-1,xscale=1, scale=0.75]
%uncomment if require: \path (0,440); %set diagram left start at 0, and has height of 440

%Shape: Rectangle [id:dp9396708260329392] 
\draw  [fill={rgb, 255:red, 155; green, 155; blue, 155 }  ,fill opacity=0.5 ][line width=2.25]  (209,268) -- (439,268) -- (439,334) -- (209,334) -- cycle ;
%Shape: Circle [id:dp1569695895234542] 
\draw  [fill={rgb, 255:red, 155; green, 155; blue, 155 }  ,fill opacity=0.5 ][line width=2.25]  (409.68,347.3) .. controls (410.55,339.22) and (417.8,333.37) .. (425.89,334.24) .. controls (433.97,335.11) and (439.82,342.36) .. (438.95,350.44) .. controls (438.09,358.53) and (430.83,364.38) .. (422.75,363.51) .. controls (414.67,362.64) and (408.82,355.39) .. (409.68,347.3) -- cycle ;
%Shape: Circle [id:dp5192934735246437] 
\draw  [fill={rgb, 255:red, 155; green, 155; blue, 155 }  ,fill opacity=0.5 ][line width=2.25]  (211.68,348.3) .. controls (212.55,340.22) and (219.8,334.37) .. (227.89,335.24) .. controls (235.97,336.11) and (241.82,343.36) .. (240.95,351.44) .. controls (240.09,359.53) and (232.83,365.38) .. (224.75,364.51) .. controls (216.67,363.64) and (210.82,356.39) .. (211.68,348.3) -- cycle ;
%Straight Lines [id:da2999439717505592] 
\draw [line width=2.25]    (190,365) -- (455,365) ;
%Straight Lines [id:da5691818216060303] 
\draw [line width=2.25]  [dash pattern={on 2.53pt off 3.02pt}]  (324,15) -- (324,436) ;
%Straight Lines [id:da04523051565656089] 
\draw [color={rgb, 255:red, 74; green, 74; blue, 74 }  ,draw opacity=1 ][fill={rgb, 255:red, 208; green, 2; blue, 27 }  ,fill opacity=1 ][line width=3.75]    (323,268) -- (458,51) ;
%Shape: Arc [id:dp5532646375618675] 
\draw  [draw opacity=0][dash pattern={on 1.69pt off 2.76pt}][line width=1.5]  (322.77,53.04) .. controls (333.78,37.52) and (360.79,30.92) .. (389.01,38.26) .. controls (417.32,45.63) and (437.72,64.66) .. (439.66,83.65) -- (378.7,77.86) -- cycle ; \draw  [dash pattern={on 1.69pt off 2.76pt}][line width=1.5]  (322.77,53.04) .. controls (333.78,37.52) and (360.79,30.92) .. (389.01,38.26) .. controls (417.32,45.63) and (437.72,64.66) .. (439.66,83.65) ;
%Straight Lines [id:da03501707272003696] 
\draw [line width=1.5]    (250,63) -- (250,97) ;
\draw [shift={(250,100)}, rotate = 270] [color={rgb, 255:red, 0; green, 0; blue, 0 }  ][line width=1.5]    (14.21,-4.28) .. controls (9.04,-1.82) and (4.3,-0.39) .. (0,0) .. controls (4.3,0.39) and (9.04,1.82) .. (14.21,4.28)   ;
%Straight Lines [id:da8659555094412419] 
\draw [line width=1.5]    (107,304) -- (189,304) ;
\draw [shift={(192,304)}, rotate = 180] [color={rgb, 255:red, 0; green, 0; blue, 0 }  ][line width=1.5]    (14.21,-4.28) .. controls (9.04,-1.82) and (4.3,-0.39) .. (0,0) .. controls (4.3,0.39) and (9.04,1.82) .. (14.21,4.28)   ;
%Straight Lines [id:da5066362403101705] 
\draw [line width=1.5]    (323,403) -- (405,403) ;
\draw [shift={(408,403)}, rotate = 180] [color={rgb, 255:red, 0; green, 0; blue, 0 }  ][line width=1.5]    (14.21,-4.28) .. controls (9.04,-1.82) and (4.3,-0.39) .. (0,0) .. controls (4.3,0.39) and (9.04,1.82) .. (14.21,4.28)   ;
%Straight Lines [id:da6128462657678351] 
\draw [line width=1.5]    (392,181) -- (334.61,271.47) ;
\draw [shift={(333,274)}, rotate = 302.39] [color={rgb, 255:red, 0; green, 0; blue, 0 }  ][line width=1.5]    (14.21,-4.28) .. controls (9.04,-1.82) and (4.3,-0.39) .. (0,0) .. controls (4.3,0.39) and (9.04,1.82) .. (14.21,4.28)   ;
%Straight Lines [id:da30183911055120016] 
\draw [line width=1.5]    (468.39,61.53) -- (411,152) ;
\draw [shift={(470,59)}, rotate = 122.39] [color={rgb, 255:red, 0; green, 0; blue, 0 }  ][line width=1.5]    (14.21,-4.28) .. controls (9.04,-1.82) and (4.3,-0.39) .. (0,0) .. controls (4.3,0.39) and (9.04,1.82) .. (14.21,4.28)   ;

% Text Node
\draw (359,3) node [anchor=north west][inner sep=0.75pt]  [font=\Large,color={rgb, 255:red, 16; green, 19; blue, 226 }  ,opacity=1 ] [align=left] {$\displaystyle \theta ,\dot{\theta }$};
% Text Node
\draw (255,55) node [anchor=north west][inner sep=0.75pt]  [font=\Large,color={rgb, 255:red, 16; green, 19; blue, 226 }  ,opacity=1 ] [align=left] {$\displaystyle g$};
% Text Node
\draw (395,156) node [anchor=north west][inner sep=0.75pt]  [font=\Large,color={rgb, 255:red, 16; green, 19; blue, 226 }  ,opacity=1 ] [align=left] {$\displaystyle 2l$};
% Text Node
\draw (389,77) node [anchor=north west][inner sep=0.75pt]  [font=\Large,color={rgb, 255:red, 16; green, 19; blue, 226 }  ,opacity=1 ] [align=left] {$\displaystyle m_{p}$};
% Text Node
\draw (142,273) node [anchor=north west][inner sep=0.75pt]  [font=\Large,color={rgb, 255:red, 16; green, 19; blue, 226 }  ,opacity=1 ] [align=left] {$\displaystyle F$};
% Text Node
\draw (352,375) node [anchor=north west][inner sep=0.75pt]  [font=\Large,color={rgb, 255:red, 16; green, 19; blue, 226 }  ,opacity=1 ] [align=left] {$\displaystyle s,\dot{s}$};
% Text Node
\draw (291,284) node [anchor=north west][inner sep=0.75pt]  [font=\Large,color={rgb, 255:red, 16; green, 19; blue, 226 }  ,opacity=1 ] [align=left] {$\displaystyle m_{k}$};

\end{tikzpicture}
}
\caption{The Cartpole system. The mass of the pendulum is denoted by $m_p$, that of the cart by $m_K,$ the force used to drive the cart by $F$, and the distance of the center of mass of the cart from its starting position by $s.$ $\theta$ denotes the angle the pendulum makes with the normal and its length is denoted by $2l.$ Gravity is denoted by $g.$ }
%\vspace{-2.5cm}
\label{fig:inverted-pendulum}
\end{figure}
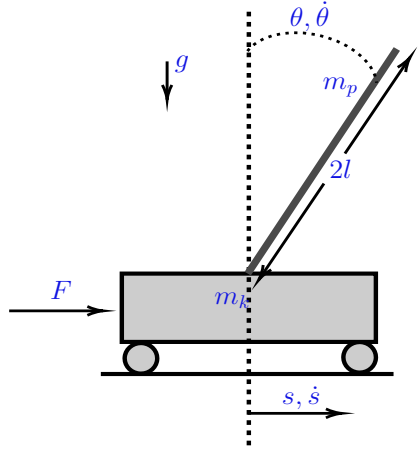

 As shown in Fig.~\ref{fig:inverted-pendulum}, it comprises a pendulum whose pivot is mounted on a cart which can be moved in the horizontal direction by applying a force. The objective is to modulate the direction and magnitude of this force $F$ to keep the pendulum from keeling over under the influence of gravity. The state of the system at time $t,$ is given by the 4-tuple $\mathbf{x}(t):=[s,\Dot{s},\theta,\Dot{\theta}]$, with $\mathbf{x}(\cdot)=\mathbf{0}$ corresponding to the pendulum being upright and stationary. One of the strategies used to design control policies for this system is by first approximating the dynamics around $\mathbf{x}(\cdot)=\mathbf{0}$ with a linear, quadratic cost model and designing a linear controller for these approximate dynamics. This, after time discretization, 
The objective now reduces to finding a (potentially randomized) control policy $u\equiv \{u(t),t\geq0\}$ that solves: %$\inf_{u} J\left(\mathbb{E}_{u}\sum_{t=0}^\infty \mathbf{x}^\intercal(t) Q \mathbf{x}(t) + R u^2(t) \right)$ subject to $\mathbf{x}(t+1)=A_{open}\mathbf{x}(t) + \mathbf{b} u(t)$ at all times $t\geq0$.

\begin{eqnarray}\label{eqn:cartpole-LQR-approx-WITHOUT-Details}
\inf_{u} J(\mathbf{x}(0)) &=& \mathbb{E}_{u}\sum_{t=0}^\infty \mathbf{x}^\intercal(t) Q \mathbf{x}(t) + R u^2(t), \nonumber \\
%s.t. \mathbf{x}(t+1) &=& A_{open}\mathbf{x}(t) + \mathbf{b} u(t)
%\nonumber\\
s.t.~{\mathbf{x}}(t+1) &=& \underbrace{\begin{pmatrix}
0 & 1 & 0                                                       & 0\\
0 & 0 & \frac{g}{l\left(\frac{4}{3}-\frac{m_p}{m_p+m_k}\right)} & 0 \\
0 & 0 &  0                                                      & 1 \\
0 & 0 & \frac{g}{l\left(\frac{4}{3}-\frac{m_p}{m_p+m_k}\right)} & 0 \\
\end{pmatrix}}_{\color{blue}A_{open}} \mathbf{x}(t)
+ \underbrace{\begin{pmatrix}
0 \\
\frac{1}{m_p+m_k}\\
0 \\
\frac{1}{l\left(\frac{4}{3}-\frac{m_p}{m_p+m_k}\right)}  \\
\end{pmatrix}}_{\color{blue}\mathbf{b}} u(t).
\end{eqnarray}%\Dot{\mathbf{x}}(t) 
\looseness=-1 Under standard assumptions of controllability and observability, this optimization has a stationary, linear solution $u^*(t) = -\mathbf{K}^\intercal\mathbf{x}(t)$ (details are available in \cite[Chap.~3]{bertsekas11dynamic}). Moreover, setting $A:=A_{open}-\mathbf{b}\mathbf{K}^\intercal$, it is well know that the dynamics ${\mathbf{x}}(t+1) = A \mathbf{x}(t),~t\geq0,$
% \begin{equation*}
%     {\mathbf{x}}(t+1) = A \mathbf{x}(t)
% \end{equation*}
are stable. 
\subsection{Details of simulations settings for the cartpole system}
In this section we supply the adjustments we made for specifically for the cartpole experiments. We first mention that we scale down the estimated gradient of the value function returned by the GradEst subroutine (Algorithm ~\ref{alg:gradEst}) (in the cartpole simulation only). The scaling that worked for us is $\frac{10}{\norm{\widehat{{\nabla}V^{\pi}(\mu)}}}$.

Next, we provide the values of the constants that were described in Sec. ~\ref{appendix:Details of Setup and Modelling of the Inverted Pendulum} in Table ~\ref{table:hyperparameters of inverted pend}.
\begin{table}[H]
\centering
\begin{tabular}{c c}
\hline\hline
Parameter & Value \\
\hline
 Gravity $g$  &    9.8\\
 Mass of pole $m_p$  &  0.1\\
 Length of pole $l$  & 1\\
 Mass of cart $m_k$  &  1\\
 Total mass $m_t$  & 1.1\\
 \hline
\end{tabular}
\caption{Values of the hyperparameters used for the cartpole simulation
}
\label{table:hyperparameters of inverted pend}
\end{table}
\section{Stability for Ergodic Parameter Linear Systems (EPLS)}
For simplicity and ease of understanding, we connect our current discussion to the cartpole example discussed in Sec. ~\ref{sec:motivating-cartpole-example}. Consider a generic (ergodic) control policy that switches across a menu of controllers $\{K_1,\cdots,K_N\}$. That is, at any time $t$, it chooses controller $K_i,~i\in[N],$ w.p. $p_i$, so that the control input at time $t$ is $u(t)=-\mathbf{K}_i^\intercal\mathbf{x}(t)$ w.p. $p_i.$ Let $A(i) := A_{open}-\mathbf{b}\mathbf{K}_i^\intercal$. The resulting controlled dynamics are given by
\begin{eqnarray}\label{eqn:EPLS-definition}
    {\mathbf{x}}(t+1) &=& A(r(t)) \mathbf{x}(t)\nonumber\\
    \mathbf{x}(0) &=& \mathbf{0},
\end{eqnarray}
where $r(t)=i$ w.p. $p_i,$ IID across time. In the literature, this belongs to a class of systems known as \emph{Ergodic Parameter Linear Systems} (EPLS) \cite{bolzern-etal08almost-sure-stability-ergodic-linear}, which are said to be \emph{Exponentially Almost Surely Stable} (EAS) if there exists $\rho>0$ such that for any $\mathbf{x}(0),$
\begin{eqnarray}\label{eqn:exp-almost-sure-definition}
    \mathbb{P}\left\lbrace\omega\in\Omega\bigg|\limsup_{t\rightarrow\infty}\frac{1}{t}\log{\norm{\mathbf{x}(t,\omega)}}\leq-\rho\right\rbrace = 1.
\end{eqnarray}
In other words, w.p.~1, the trajectories of the system decay to the origin exponentially fast. The random variable $\lambda(\omega):=\limsup_{t\rightarrow\infty}\frac{1}{t}\log{\norm{\mathbf{x}(t,\omega)}}$ in \eqref{eqn:exp-almost-sure-definition} is called the \emph{Lyapunov Exponent} of the system. For our EPLS,
%in \eqref{eqn:EPLS-definition},
\begin{eqnarray}
    \lambda(\omega) &=& \limsup_{t\rightarrow\infty}\frac{1}{t}\log{\norm{\mathbf{x}(t,\omega)}}= \limsup_{t\rightarrow\infty}\frac{1}{t}\log{\norm{\prod_{s=1}^tA(r(s,\omega))\mathbf{x}(0)}}\nonumber\\
                    % &=& \limsup_{t\rightarrow\infty}\frac{1}{t}\log{\norm{\prod_{s=1}^tA(r(s,\omega))\mathbf{x}(0)}}\nonumber\\
                    &\leq& \limsup_{t\rightarrow\infty}\cancelto{0}{\frac{1}{t}\log{\norm{\mathbf{x}(0)}}} + \limsup_{t\rightarrow\infty}\frac{1}{t}\log{\norm{\prod_{s=1}^tA(r(s,\omega))}} \nonumber\\
                    &\leq& \limsup_{t\rightarrow\infty}\frac{1}{t}\sum_{s=1}^t\log{\norm{A(r(s,\omega))}} \stackrel{(\ast)}{=} \lim_{t\rightarrow\infty}\frac{1}{t}\sum_{s=1}^t\log{\norm{A(r(s,\omega))}}\nonumber\\
                    % &\stackrel{(\ast)}{=}& \lim_{t\rightarrow\infty}\frac{1}{t}\sum_{s=1}^t\log{\norm{A(r(s,\omega))}} \nonumber\\
                    &\stackrel{(\dagger)}{=}& \mathbb{E}\log{\norm{A(r)}}= \sum_{i=1}^N p_i \log{\norm{A(i)}},\label{eqn:lyapExponentLQR}%\nonumber\\
                    % &=& \sum_{i=1}^N p_i \log{\norm{A(i)}},\label{eqn:lyapExponentLQR}
\end{eqnarray}
where the equalities $(\ast)$ and $(\dagger)$ are due to the ergodic law of large numbers. The control policy can now be designed by choosing $\{p_1,\cdots,p_N\}$ such that $\lambda(\omega)<-\rho$ for some $\rho>0$, ensuring exponentially almost sure stability.

\section{The Constrained Queuing Example}

The system, shown in Fig.~\ref{subfig:the-two-queues}, comprises two queues fed by independent, stochastic arrival processes $A_i(t),i\in\{1,2\},t\in\mathbb{N}.$ The length of Queue~$i$, measured at the beginning of time slot $t,$ is denoted by $Q_i(t)\in\mathbb{Z}_+$. A common server serves both queues and can drain at most one packet from the system in a time slot\footnote{Hence, a \emph{constrained} queueing system.}. The server, therefore, needs to decide which of the two queues it intends to serve in a given slot (we assume that once the server chooses to serve a packet, service succeeds with probability 1). The server's decision is denoted by the vector $\mathbf{D}(t)\in\mathcal{A}:=\left\lbrace[0,0],[1,0],[0,1]\right\rbrace,$ where a \enquote{$1$} denotes service and a \enquote{$0$} denotes lack thereof.
%------------------------------- Figure Begins -------------------------------
% \begin{figure*}[tb]
% \centering
% \input{}
% \caption{Motivating example: Constrained queueing network with 2 queues. $Q_i(t)$ is the length of Queue~$i$ ($i\in\{1,2\}$) at the beginning of time slot $t,$ $A_i(t)$ is the packet arrival process and $\mathbf{D}(t)\in\left\lbrace[1,0],[0,1]\right\rbrace$}
% \label{fig:constrained-queueing-network}
% \end{figure*}

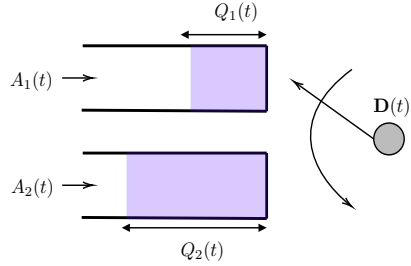
\begin{figure*}[tbh]
\centering
        \tikzset{every picture/.style={line width=0.75pt}} %set default line width to 0.75pt        

% SYNTAX: \resizebox{width}{height}
\resizebox{5.50cm}{3.5cm}{

\begin{tikzpicture}[x=0.5pt,y=0.5pt,yscale=-1,xscale=1]
%uncomment if require: \path (0,273); %set diagram left start at 0, and has height of 273

%Straight Lines [id:da09715700995858345] 
\draw [line width=1.5]    (91.11,161.23) -- (292.11,161.23) ;
%Straight Lines [id:da8887479227991981] 
\draw [line width=1.5]    (89.99,227.37) -- (292.11,228.47) ;
%Straight Lines [id:da6217209993953612] 
\draw    (68.66,194.3) -- (100.34,194.3) ;
\draw [shift={(102.34,194.3)}, rotate = 180] [color={rgb, 255:red, 0; green, 0; blue, 0 }  ][line width=0.75]    (10.93,-3.29) .. controls (6.95,-1.4) and (3.31,-0.3) .. (0,0) .. controls (3.31,0.3) and (6.95,1.4) .. (10.93,3.29)   ;
%Straight Lines [id:da875509031522369] 
\draw [line width=1.5]    (292.11,161.23) -- (292.11,228.47) ;
%Straight Lines [id:da37671038244024846] 
\draw    (201.01,39.58) -- (289.11,39.58) ;
\draw [shift={(292.11,39.58)}, rotate = 180] [fill={rgb, 255:red, 0; green, 0; blue, 0 }  ][line width=0.08]  [draw opacity=0] (8.93,-4.29) -- (0,0) -- (8.93,4.29) -- cycle    ;
\draw [shift={(198.01,39.58)}, rotate = 0] [fill={rgb, 255:red, 0; green, 0; blue, 0 }  ][line width=0.08]  [draw opacity=0] (8.93,-4.29) -- (0,0) -- (8.93,4.29) -- cycle    ;
%Straight Lines [id:da0678614898035077] 
\draw    (136.78,237.99) -- (289.11,237.99) ;
\draw [shift={(292.11,237.99)}, rotate = 180] [fill={rgb, 255:red, 0; green, 0; blue, 0 }  ][line width=0.08]  [draw opacity=0] (8.93,-4.29) -- (0,0) -- (8.93,4.29) -- cycle    ;
\draw [shift={(133.78,237.99)}, rotate = 0] [fill={rgb, 255:red, 0; green, 0; blue, 0 }  ][line width=0.08]  [draw opacity=0] (8.93,-4.29) -- (0,0) -- (8.93,4.29) -- cycle    ;
%Shape: Ellipse [id:dp028275738414681006] 
\draw  [fill={rgb, 255:red, 155; green, 155; blue, 155 }  ,fill opacity=0.67 ] (409.11,146.7) .. controls (409.11,137.68) and (416.56,130.37) .. (425.75,130.37) .. controls (434.94,130.37) and (442.39,137.68) .. (442.39,146.7) .. controls (442.39,155.72) and (434.94,163.04) .. (425.75,163.04) .. controls (416.56,163.04) and (409.11,155.72) .. (409.11,146.7) -- cycle ;
%Curve Lines [id:da7202837492936005] 
\draw    (385.31,75.25) .. controls (313.05,125.45) and (338.74,181.05) .. (383.93,215.31) ;
\draw [shift={(385.31,216.35)}, rotate = 216.57999999999998] [color={rgb, 255:red, 0; green, 0; blue, 0 }  ][line width=0.75]    (10.93,-3.29) .. controls (6.95,-1.4) and (3.31,-0.3) .. (0,0) .. controls (3.31,0.3) and (6.95,1.4) .. (10.93,3.29)   ;
%Straight Lines [id:da4678251099443136] 
\draw    (409.11,146.7) -- (322.96,88.5) ;
\draw [shift={(321.3,87.38)}, rotate = 394.03999999999996] [color={rgb, 255:red, 0; green, 0; blue, 0 }  ][line width=0.75]    (10.93,-3.29) .. controls (6.95,-1.4) and (3.31,-0.3) .. (0,0) .. controls (3.31,0.3) and (6.95,1.4) .. (10.93,3.29)   ;
%Shape: Rectangle [id:dp29784833864235405] 
\draw  [color={rgb, 255:red, 95; green, 19; blue, 254 }  ,draw opacity=0.23 ][fill={rgb, 255:red, 95; green, 19; blue, 254 }  ,fill opacity=0.23 ] (139.8,161.01) -- (292.11,161.01) -- (292.11,228.47) -- (139.8,228.47) -- cycle ;
%Straight Lines [id:da014819478856842094] 
\draw [line width=1.5]    (91.11,51) -- (292.11,51) ;
%Straight Lines [id:da19136676020039034] 
\draw [line width=1.5]    (89.99,117.14) -- (292.11,118.24) ;
%Straight Lines [id:da019274737615810222] 
\draw    (68.66,84.07) -- (100.34,84.07) ;
\draw [shift={(102.34,84.07)}, rotate = 180] [color={rgb, 255:red, 0; green, 0; blue, 0 }  ][line width=0.75]    (10.93,-3.29) .. controls (6.95,-1.4) and (3.31,-0.3) .. (0,0) .. controls (3.31,0.3) and (6.95,1.4) .. (10.93,3.29)   ;
%Straight Lines [id:da43759945225025065] 
\draw [line width=1.5]    (292.11,51) -- (292.11,118.24) ;
%Shape: Rectangle [id:dp8414125012045715] 
\draw  [color={rgb, 255:red, 95; green, 19; blue, 254 }  ,draw opacity=0.23 ][fill={rgb, 255:red, 95; green, 19; blue, 254 }  ,fill opacity=0.23 ] (209.8,50.78) -- (292.11,50.78) -- (292.11,118.24) -- (209.8,118.24) -- cycle ;

% Text Node
\draw (10.64,184.95) node [anchor=north west][inner sep=0.75pt]   [align=left] {$\displaystyle A_{2}( t)$};
% Text Node
\draw (9.52,74.72) node [anchor=north west][inner sep=0.75pt]   [align=left] {$\displaystyle A_{1}( t)$};
% Text Node
\draw (233.03,4.18) node [anchor=north west][inner sep=0.75pt]   [align=left] {$\displaystyle Q_{1}( t)$};
% Text Node
\draw (195.98,249.99) node [anchor=north west][inner sep=0.75pt]   [align=left] {$\displaystyle Q_{2}( t)$};
% Text Node
\draw (406.65,101.42) node [anchor=north west][inner sep=0.75pt]    {$\mathbf{D}( t)$};

\end{tikzpicture}
}
        \caption{$Q_i(t)$ is the length of Queue~$i$ ($i\in\{1,2\}$) at the beginning of time slot $t$, $A_i(t)$ is its packet arrival process and $\mathbf{D}(t)\in\left\lbrace[0,0],[1,0],[0,1]\right\rbrace.$}
        \label{subfig:the-two-queues}
    %\end{subfigure}
    %\label{fig:constrained-queueing-network}
\end{figure*}
%------------------------------- Figure Ends -------------------------------
For simplicity, we assume that the processes $\left(A_i(t)\right)_{t=0}^\infty$ are both IID Bernoulli, with $\mathbb{E}A_i(t)=\lambda_i.$ Note that the arrival rate $\boldsymbol{\lambda}=[\lambda_1,\lambda_2]$ is {\color{blue}unknown} to the learner.  Defining $(x)^+:=\max\{0,x\},~\forall~x\in\mathbb{R},$ queue length evolution is given by the equations
\begin{equation}
    Q_i(t+1) = \left(Q_i(t)-D_i(t)\right)^+ + A_i(t+1),~i\in\{1,2\}.
\end{equation}

%%%%%%%%%%%%%%%%%%%%%%%%%%%%%%%%%%%%%%%%%%%%%%%%%%%%%%%%%%%%%%%%%%%%%%%%
\section{Non-concavity of the Value function}\label{appendix:nonconcavity of V}
We show here that the value function $V^{\pi}(\rho)$ is in general non-concave, and hence standard convex optimization techniques for maximization may get stuck in local optima. We note once again that this is \emph{different} from the non-concavity of $V^\pi$ when the parameterization is over the entire state-action space, i.e., $\Real^{S\times A}$. 

We show here that for both SoftMax and direct parameterization, the value function is non-concave where, by \enquote{direct} parameterization we mean that the controllers $K_m$ are parameterized by weights $\theta_m\in \Real$, where $\theta_i\geq 0,~\forall i\in[M]$ and $\sum\limits_{i=1}^M\theta_i=1$.
A similar argument holds for softmax parameterization, which we outline in Note \ref{remark:nonconcavity for softmax}.
\begin{lemma}(Non-concavity of Value function)\label{lemma:nonconcavity of V}
There is an MDP and a set of controllers, for which the maximization problem  of the value function (i.e. \eqref{eq:main optimization problem}) is non-concave for SoftMax parameterization, i.e., $\theta\mapsto V^{\pi_{\theta}}$ is non-concave.
\end{lemma}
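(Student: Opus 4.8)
The plan is to prove the statement by producing an explicit counterexample and defeating concavity along a single line in $\theta$-space; since a concave function must be concave along every line, it is enough to kill concavity on one restriction. The conceptual point I would stress is that, although the induced action distribution $\pi_\theta(a\mid s)=\sum_m \pi_\theta(m)K_m(s,a)$ depends \emph{affinely} on the mixture weights $\pi_\theta(\cdot)$, the softmax link $\theta\mapsto\pi_\theta(\cdot)$ is itself S-shaped, and this curvature is what I would exploit. This already hints that the cleanest example for the softmax case is different from (and simpler than) the one needed for the direct parameterization.

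Concretely, I would take the minimal MDP: a single state ($\abs{\cS}=1$) with two actions $\cA=\{a_1,a_2\}$, rewards $r(a_1)=1$, $r(a_2)=0$, and arbitrary $\gamma\in(0,1)$, with the trivial (self-loop) transition kernel. I would then pick $M=2$ deterministic controllers $K_1,K_2$ that select \emph{different} actions, $K_1(a_1)=1$ and $K_2(a_2)=1$. For a single state the value function collapses to $V^{\pi_\theta}=\tfrac{1}{1-\gamma}\,\mathbb{E}_{a\sim\pi_\theta}[r(a)]=\tfrac{1}{1-\gamma}\,\pi_\theta(1)$, and writing $\sigma(z):=(1+e^{-z})^{-1}$ with $z:=\theta_1-\theta_2$ gives $\pi_\theta(1)=\sigma(z)$, hence $V^{\pi_\theta}=\tfrac{1}{1-\gamma}\,\sigma(\theta_1-\theta_2)$.

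To finish, I would restrict to the line $\theta_2=0$ and view $V$ as a function of the scalar $\theta_1$. Differentiating twice yields $\frac{d^2}{d\theta_1^2}V=\tfrac{1}{1-\gamma}\,\sigma''(\theta_1)$, where $\sigma''(z)=\sigma(z)\big(1-\sigma(z)\big)\big(1-2\sigma(z)\big)$, which is \emph{strictly positive} for every $z<0$ (there $\sigma(z)<\tfrac12$). Thus $V$ is strictly convex on a sub-interval and therefore not concave, which proves the claim. Equivalently, to avoid derivatives one can exhibit the three collinear points $\theta_1\in\{-c,-c/2,0\}$ and verify $V(-c/2)<\tfrac12\big(V(-c)+V(0)\big)$ for a suitable $c>0$.

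The main subtlety to watch is that the two controllers must induce genuinely distinct action distributions, since $K_1=K_2$ makes $V$ constant and trivially concave, and that the restriction-to-a-line reduction is legitimate. I would also highlight the contrast with the \emph{direct} parameterization analyzed separately: there a single-state MDP makes $V$ affine (hence concave) in $\theta$, so the direct case genuinely requires a multi-state MDP whose value is a non-affine rational function of the mixture weights, whereas in the softmax case the non-concavity is driven purely by the curvature of the softmax link. This is why the single-state construction is the right minimal witness here, and the same sigmoidal mechanism embeds into any richer MDP by choosing controllers that differ only in a reward-relevant coordinate.
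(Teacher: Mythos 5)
Your proof is correct, but it takes a genuinely different route from the paper's. The paper constructs a five-state MDP (three terminal states, with the single rewarding transition $s_2\rightarrow s_4$) and two stochastic controllers chosen so that the value factors as the product $\pi(a_1|s_1)\,\pi(a_2|s_2)\,r$; it then exhibits two parameter vectors (log-scaled mixtures $\theta^{(1)}=(\log(1-\epsilon),\log\epsilon)^{\top}$, $\theta^{(2)}=(\log\epsilon,\log(1-\epsilon))^{\top}$ in the softmax case) whose midpoint violates midpoint concavity, via the computation $\left(1/4+\epsilon/2\right)^2 r+\left(3/4-\epsilon/2\right)^2 r > 2\cdot r/4$; essentially the same example also handles the direct parameterization. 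You instead use the minimal single-state (bandit) instance with two deterministic controllers picking different actions, collapse the value to $V^{\pi_\theta}=\frac{1}{1-\gamma}\,\sigma(\theta_1-\theta_2)$, and observe that the sigmoid is strictly convex on the negative half-line, so $V$ fails concavity along the line $\theta_2=0$ (and the restriction-to-a-line reduction is indeed legitimate, since concavity on $\mathbb{R}^M$ is equivalent to concavity along every line). Both arguments are valid; yours is shorter and can even be made derivative-free via three collinear points. The trade-off, which you correctly identify yourself, is that your example isolates the non-concavity entirely in the curvature of the softmax link: under the direct parameterization the same one-state value is affine in $\theta$, hence concave, so your construction cannot establish the companion claim in the paper's main text (Lemma 4.1) that non-concavity holds for the direct parameterization as well. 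The paper's multi-state product construction buys exactly that: there the non-concavity lives in the dependence of the value on the mixture weights themselves, so a single example covers both parameterizations and shows the phenomenon is intrinsic to improper mixtures rather than an artifact of the link function. For the appendix statement as literally posed (softmax only), your proof is a complete and correct witness.
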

\begin{proof}
\begin{figure}
    \centering
\includegraphics[scale=0.3]{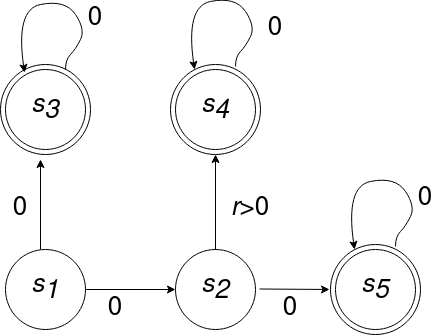}    
\caption{An example of an MDP with controllers as defined in \eqref{eqn:controllers-for-non-concavity-counterexample} having a non-concave value function. The MDP has $S=5$ states and $A=2$ actions. States $s_3,s_4\text{ and }s_5$ are terminal states. The only transition with nonzero reward is $s_2\rightarrow s_4.$}
    \label{fig:example showing non-concavity of the value function}
\end{figure}

Consider the MDP shown in Figure \ref{fig:example showing non-concavity of the value function} with 5 states, $s_1,\ldots,s_5$ . States $s_3,s_4$ and $s_5$ are terminal states. In the figure we also show the allowed transitions and the rewards obtained by those transitions. Let the action set $\cA$ consists of only three actions $\{a_1,a_2,a_3\} \equiv \{\tt{right}, \tt{up}, \tt{null}\}$, where 'null' is a dummy action included to accommodate the three terminal states.  Let us consider the case when $M=2$. The two controllers $K_i\in \Real^{S\times A}$, $i=1,2$ (where each row is probability distribution over $\cA$) are shown below.
\begin{equation}
    K_1 = \begin{bmatrix}1/4 & 3/4 & 0\\ 3/4 & 1/4 & 0\\ 0 & 0 & 1\\ 0 & 0 & 1\\ 0 & 0 & 1 \end{bmatrix}, K_2 = \begin{bmatrix}3/4 & 1/4 & 0\\ 1/4 & 3/4 & 0\\ 0 & 0 & 1\\ 0 & 0 & 1\\ 0 & 0 & 1 \end{bmatrix}.    
    \label{eqn:controllers-for-non-concavity-counterexample}
\end{equation}
% \[K_1 = \begin{bmatrix}1/4 & 3/4 & 0\\ 3/4 & 1/4 & 0\\ 0 & 0 & 1\\ 0 & 0 & 1\\ 0 & 0 & 1 \end{bmatrix}, K_2 = \begin{bmatrix}3/4 & 1/4 & 0\\ 1/4 & 3/4 & 0\\ 0 & 0 & 1\\ 0 & 0 & 1\\ 0 & 0 & 1 \end{bmatrix}.    \]
%We consider here direct parameterization for simplicity. 
Let $\theta^{(1)} = (1,0)\transpose$ and $\theta^{(2)} = (0,1)\transpose$. Let us fix the initial state to be $s_1$. Since a nonzero reward is only earned during a $s_2\rightarrow s_4$ transition, we note for any policy $\pi:\cA\to\cS$ that $V^\pi(s_1)=\pi(a_1|s_1)\pi(a_2|s_2)r$.
We also have,
\[(K_1 + K_2)/2 = \begin{bmatrix}1/2 & 1/2 & 0\\ 1/2 & 1/2 & 0\\ 0 & 0 & 1\\ 0 & 0 & 1\\ 0 & 0 & 1 \end{bmatrix}.\]

We will show that $\frac{1}{2}V^{\pi_{\theta^{(1)}}}+\frac{1}{2}V^{\pi_{\theta^{(2)}}} > V^{\pi_{\left(\theta^{(1)}+\theta^{(2)}\right)/2}}$.
\newline We observe the following.
\begin{align*}
    V^{\pi_{\theta^{(1)}}}(s_1) &= V^{K_1}(s_1) = (1/4).(1/4).r=r/16.\\
    V^{\pi_{\theta^{(2)}}}(s_1) &= V^{K_2}(s_1) = (3/4).(3/4).r=9r/16.
\end{align*}
where $V^{K}(s)$ denotes the value obtained by starting from state $s$ and following a controller matrix $K$ for all time.

Also, on the other hand we have, 
\[V^{\pi_{\left(\theta^{(1)}+\theta^{(2)}\right)/2}} = V^{\left(K_1+K_2\right)/2}(s_1) = (1/2).(1/2).r=r/4. \]
Hence we see that, \[\frac{1}{2}V^{\pi_{\theta^{(1)}}}+\frac{1}{2}V^{\pi_{\theta^{(2)}}} = r/32+9r/32 = 10r/32 =1.25r/4 > r/4 = V^{\pi_{\left(\theta^{(1)}+\theta^{(2)}\right)/2}}. \]
This shows that $\theta\mapsto V^{\pi_{\theta}}$ is non-concave, which concludes the proof for direct parameterization.
\begin{remark}\label{remark:nonconcavity for softmax}
For softmax parametrization, we choose the same 2 controllers $K_1,K_2$ as above. Fix some $\epsilon\in (0,1)$ and set $\theta^{(1)}= \left(\log(1-\epsilon), \log\epsilon \right)\transpose$ and $\theta^{(2)}= \left(\log\epsilon, \log(1-\epsilon) \right)\transpose$. A similar calculation using softmax projection, and using the fact that $\pi_\theta(a|s) = \sum\limits_{m=1}^M\pi_\theta(m)K_m(s,a)$, shows that under $\theta^{(1)}$ we follow matrix $(1-\epsilon)K_1+\epsilon K_2$, which yields a Value of $\left(1/4+\epsilon/2\right)^2r$. Under $\theta^{(2)}$ we follow matrix $\epsilon K_1+(1-\epsilon) K_2$, which yields a Value of $\left(3/4-\epsilon/2\right)^2r$. On the other hand,  $(\theta^{(1)}+\theta^{(2)})/2$ amounts to playing the matrix $(K_1+K_2)/2$, yielding the a value of $r/4$, as above.  One can verify easily that $\left(1/4+\epsilon/2\right)^2r + \left(3/4-\epsilon/2\right)^2r > 2.r/4$. This shows the non-concavity of $\theta\mapsto V^{\pi_{\theta}}$ under softmax parameterization.
\end{remark}
\end{proof}
%%%%%%%%%%%%%%%%%%%%%%%%%%%%%%%%%%%%%%%%%%%%%%%%%%%%%%%%%%%%%%%%%%%
\section{Example showing that the value function need not be\\  pointwise (over states) monotone over the improper class}\label{appendix:monotonicity does not hold}
% We show here that the value function, over improper mixtures, is in general non-concave, and hence standard convex optimization techniques for maximization may get stuck in local optima. We note once again that this is \emph{different} from the non-concavity of $V^\pi$ when the parameterization is over the entire state-action space, i.e., $\Real^{S\times A}$. 

% We show here that for both SoftMax and direct parameterization, the value function is non-concave where, by \enquote{direct} parameterization we mean that the controllers $K_m$ are parameterized by weights $\theta_m\in \Real$, where $\theta_i\geq 0,~\forall i\in[M]$ and $\sum\limits_{i=1}^M\theta_i=1$.
% A similar argument holds for softmax parameterization, which we outline in Note \ref{remark:nonconcavity for softmax}.
% \begin{lemma}(Non-concavity of Value function)\label{lemma:nonconcavity of V}
% There is an MDP and a set of controllers, for which the maximization problem  of the value function (i.e. \eqref{eq:main optimization problem}) is non-concave for SoftMax parameterization, i.e., $\theta\mapsto V^{\pi_{\theta}}$ is non-concave.
% \end{lemma}
Consider the same MDP as in Sec~\ref{appendix:nonconcavity of V}, however with different base controllers. Let the initial state be $s_1$. 
% \begin{figure}
%     \centering
% \includegraphics[scale=0.3]{figures/Nonconcave.png}    
% \caption{An example of an MDP with controllers as defined in \eqref{eqn:controllers-for-non-concavity-counterexample1} showing that monotonicity of value function does not hold \textit{for all} states in our setting. States $s_3,s_4\text{ and }s_5$ are terminal states. The only transition with nonzero reward is $s_2\rightarrow s_4.$}
%     \label{fig:example showing non-concavity of the value function1}
% \end{figure}

% Consider the MDP shown in Figure \ref{fig:example showing non-concavity of the value function1} with 5 states, $s_1,\ldots,s_5$ . States $s_3,s_4$ and $s_5$ are terminal states. In the figure we also show the allowed transitions and the rewards obtained by those transitions. Let the action set $\cA$ consists of only three actions $\{a_1,a_2,a_3\} \equiv \{\tt{right}, \tt{up}, \tt{null}\}$, where 'null' is a dummy action included to accommodate the three terminal states.  Let us consider the case when $M=2$. 
The two base controllers $K_i\in \Real^{S\times A}$, $i=1,2$ (where each row is probability distribution over $\cA$) are shown below.
\begin{equation}
    K_1 = \begin{bmatrix}1/4 & 3/4 & 0\\ 1/4 & 3/4 & 0\\ 0 & 0 & 1\\ 0 & 0 & 1\\ 0 & 0 & 1 \end{bmatrix}, K_2 = \begin{bmatrix}3/4 & 1/4 & 0\\ 3/4 & 1/4 & 0\\ 0 & 0 & 1\\ 0 & 0 & 1\\ 0 & 0 & 1 \end{bmatrix}.    
    \label{eqn:controllers-for-non-concavity-counterexample1}
\end{equation}
% \[K_1 = \begin{bmatrix}1/4 & 3/4 & 0\\ 3/4 & 1/4 & 0\\ 0 & 0 & 1\\ 0 & 0 & 1\\ 0 & 0 & 1 \end{bmatrix}, K_2 = \begin{bmatrix}3/4 & 1/4 & 0\\ 1/4 & 3/4 & 0\\ 0 & 0 & 1\\ 0 & 0 & 1\\ 0 & 0 & 1 \end{bmatrix}.    \]
%We consider here direct parameterization for simplicity. 
Let $\theta^{(1)} = (1,0)\transpose$ and $\theta^{(2)} = (0,1)\transpose$. Let us fix the initial state to be $s_1$. Since a nonzero reward is only earned during a $s_2\rightarrow s_4$ transition, we note for any policy $\pi$, that $V^\pi(s_1)=\pi(a_1|s_1)\pi(a_2|s_2)r$ and $V^\pi(s_2)=\pi(a_2|s_2)r$.
Note here that the optimal policy of this MDP is \emph{deterministic} with $\pi^*(a_1|s_1)=1$ and $\pi^*(a_2|s_2)=1$. The transitions are all deterministic. 

However, notice that the optimal policy (with initial state $s_1$) given $K_1$ and $K_2$ is \emph{strict mixture}, because, given any $\boldsymbol{\theta}=[\theta, 1-\theta],~\theta\in[0,1],$ the value of the policy $\pi_{\boldsymbol{\theta}}$ is 
\begin{equation}
    v^{\pi_{\boldsymbol{\theta}}}=\frac{1}{4}(3-2\theta)(1+2\theta)r,
\end{equation}
which is maximized at $\theta=1/2.$ This means that the optimal \emph{non deterministic} policy chooses $K_1$ and $K_2$ with probabilites $(1/2,1/2)$, i.e.,
\[K^*=(K_1 + K_2)/2 = \begin{bmatrix}1/2 & 1/2 & 0\\ 1/2 & 1/2 & 0\\ 0 & 0 & 1\\ 0 & 0 & 1\\ 0 & 0 & 1 \end{bmatrix}.\]
\newline We observe the following.
\begin{align*}
    V^{\pi_{\theta^{(1)}}}(s_1) &= V^{K_1}(s_1) = (1/4).(3/4).r=3r/16.\\
    V^{\pi_{\theta^{(2)}}}(s_1) &= V^{K_2}(s_1) = (3/4).(1/4).r=3r/16.\\
    V^{\pi_{\theta^{(1)}}}(s_2) &= V^{K_1}(s_2) = (3/4).r=3r/4.\\
    % V^{\pi_{\theta^{(2)}}}(s_2) &= V^{K_2}(s_2) = (1/4).r=r/4.
\end{align*}

On the other hand we have, 
\begin{align*}
V^{\pi_{\left(\theta^{(1)}+\theta^{(2)}\right)/2}}(s_1) = V^{K^*}(s_1) = (1/2).(1/2).r=r/4.\\
V^{\pi_{\left(\theta^{(1)}+\theta^{(2)}\right)/2}}(s_2) = V^{K^*}(s_2) = (1/2).r=r/2.\\
\end{align*}
We see that $V^{K^*}(s_1)> \max\{V^{K_1}(s_1), V^{K_2}(s_1)\}$. However, $V^{K^*}(s_2)< V^{K_1}(s_2)$. This implies that playing according to an improved mixture policy (here the optimal given the initial state is $s_1$) does not necessarily improve the value across \textit{all} states.
% Hence we see that, \[\frac{1}{2}V^{\pi_{\theta^{(1)}}}+\frac{1}{2}V^{\pi_{\theta^{(2)}}} = r/32+9r/32 = 10r/32 =1.25r/4 > r/4 = V^{\pi_{\left(\theta^{(1)}+\theta^{(2)}\right)/2}}. \]
% This shows that $\theta\mapsto V^{\pi_{\theta}}$ is non-concave, which concludes the proof for direct parameterization.
% \begin{remark}\label{remark:nonconcavity for softmax}
% For softmax parametrization, we choose the same 2 controllers $K_1,K_2$ as above. Fix some $\epsilon\in (0,1)$ and set $\theta^{(1)}= \left(\log(1-\epsilon), \log\epsilon \right)\transpose$ and $\theta^{(2)}= \left(\log\epsilon, \log(1-\epsilon) \right)\transpose$. A similar calculation using softmax projection, and using the fact that $\pi_\theta(a|s) = \sum\limits_{m=1}^M\pi_\theta(m)K_m(s,a)$, shows that under $\theta^{(1)}$ we follow matrix $(1-\epsilon)K_1+\epsilon K_2$, which yields a Value of $\left(1/4+\epsilon/2\right)^2r$. Under $\theta^{(2)}$ we follow matrix $\epsilon K_1+(1-\epsilon) K_2$, which yields a Value of $\left(3/4-\epsilon/2\right)^2r$. On the other hand,  $(\theta^{(1)}+\theta^{(2)})/2$ amounts to playing the matrix $(K_1+K_2)/2$, yielding the a value of $r/4$, as above.  One can verify easily that $\left(1/4+\epsilon/2\right)^2r + \left(3/4-\epsilon/2\right)^2r > 2.r/4$. This shows the non-concavity of $\theta\mapsto V^{\pi_{\theta}}$ under softmax parameterization.
% \end{remark}

%%%%%%%%%%%%%%%%%%%%%%%%%%%%%%%%%%%%%%%%%%%%%%%%%%%%%%%%%%%%%%%%%%
\section{Proof details for Bandit-over-bandits}\label{appendix:proofs for MABs}

In this section we consider the instructive sub-case when $S=1$, which is also called the Multiarmed Bandit. We provide regret bounds for two cases (1) when the value gradient $\frac{dV^{\pi_{\theta_t}}(\mu)}{d\theta^t}$ (in the gradient update) is available in each round, and (2) when it needs to be estimated.

Note that each controller in this case, is a probability distribution over the $A$ arms of the bandit. We consider the scenario where the agent at each time $t\geq 1$, has to choose a probability distribution $K_{m_t}$ from a set of $M$ probability distributions over actions $\cA$. She then plays an action $a_t\sim K_{m_t}$. This is different from the standard MABs because the learner cannot choose the actions directly, instead chooses from a \emph{given} set of controllers, to play actions. 
% Again, the goal of the learner is to find the best controller in the \enquote{convex hull} of the given set of $M$ controllers, i.e., find
% \begin{equation}
%     \pi^* =\argmax\limits_{\pi\in \cP([M])} V^\pi.
% \end{equation}
%\[\pi^* =\argmax\limits_{\pi\in \cP([M])} V^\pi. \]
Note the $V$ function has no argument as $S=1$.
Let $\mu\in [0,1]^A$ be the mean vector of the arms $\cA$. The value function for any given mixture $\pi\in \cP([M])$, 
\begin{align}\label{eq:bandits-value function}
    V^\pi &\bydef \expect{\sum\limits_{t=0}^\infty \gamma^tr_t\given \pi } = \sum\limits_{t=0}^\infty \gamma^t\expect{r_t\given \pi}\nonumber\\
    &=\sum\limits_{t=0}^\infty \gamma^t\sum\limits_{a\in \cA}\sm  \pi(m)K_m(a)\mu_a. \nonumber\\
    &= \frac{1}{1-\gamma} \sm \pi_m \mu\transpose K_m = \frac{1}{1-\gamma} \sm \pi_m \mathfrak{r}^\mu_m.
\end{align}
where the interpretation of $\kr_m^\mu$ is that it is the mean reward one obtains if the controller $m$ is chosen at any round $t$. 
 Since $V^{\pi}$ is linear in $\pi$, the maximum is attained at one of the base controllers $\pi^*$ puts mass 1 on $m^*$ where
 $m^*:= \argmax\limits_{m\in [M]} V^{K_m},$
and $V^{K_m}$ is the value obtained using $K_m$ for all time. In the sequel, we assume $\Delta_i\bydef \mathfrak{r}^\mu_{m^*}-\mathfrak{r}^\mu_i>0$.
%For this special case we show two results.

\subsection{Proofs for MABs with perfect gradient knowledge}

With access to the exact value gradient at each step, we have the following result, when Softmax PG (Algorithm ~\ref{alg:mainPolicyGradMDP}) is applied for the bandits-over-bandits case.
\begin{restatable}{theorem}{convergencebandits}\label{thm:convergence for bandit}
With $\eta=\frac{2(1-\gamma)}{5}$ and with $\theta^{(1)}_m=1/M$ for all $m\in [M]$, with the availability for true gradient, we have $\forall t\geq 1$,
\[V^{\pi^*}-V^{\pi_{\theta_t}} \leq \frac{5}{1-\gamma} \frac{M^2}{t}.\]
\end{restatable}
Also, defining regret for a time horizon of $T$ rounds as %for the MAB setups as
\begin{equation}\label{eq:regret definition MAB}
    \mathcal{R}(T):= \sum\limits_{t=1}^T V^{\pi^*} - V^{\pi_{\theta_t}},
\end{equation}
we show as a corollary to Thm.~\ref{thm:convergence for bandit} that,
\begin{corollary}\label{cor:regret true gradient MAB}
\[\mathcal{R}(T)\leq \min\left\{ \frac{5M^2}{1-\gamma}{\color{blue}\log T}, \sqrt{\frac{5}{1-\gamma}}M{\color{blue}\sqrt{T}} \right\}. \]
\end{corollary}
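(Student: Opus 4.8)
The plan is to obtain both branches of the minimum directly from the per-round convergence guarantee of Theorem~\ref{thm:convergence for bandit}, treating the corollary as a pure summation exercise with no new dynamics. Write $\delta_t := V^{\pi^*} - V^{\pi_{\theta_t}}$ and $C := \frac{5M^2}{1-\gamma}$, so that Theorem~\ref{thm:convergence for bandit} reads $\delta_t \le C/t$ for all $t\ge 1$, while by definition $\mathcal{R}(T) = \sum_{t=1}^T \delta_t$.

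For the logarithmic branch I would simply sum this per-round bound and invoke the harmonic-series estimate:
\[
\mathcal{R}(T) = \sum_{t=1}^T \delta_t \;\le\; C\sum_{t=1}^T \frac{1}{t} \;\le\; C\,(1+\log T),
\]
absorbing the harmless additive constant to get $\mathcal{R}(T) \le \frac{5M^2}{1-\gamma}\log T$. This branch is the tighter one precisely when $C$ is small (small $M$, or $\gamma$ away from $1$).

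For the $\sqrt{T}$ branch, summing $C/t$ by itself is too lossy (it can only ever produce $\log T$), so I would first record a crude uniform per-round bound. Since $V^\pi = \frac{1}{1-\gamma}\sum_m \pi_m \mathfrak{r}^\mu_m$ with $\mathfrak{r}^\mu_m = \mu^\top K_m \in [0,1]$, every $\delta_t$ is bounded by an absolute constant $B$ independent of $t$. I then combine the two estimates pointwise via $\min\{B,\,C/t\}\le \sqrt{BC/t}$ together with $\sum_{t=1}^T t^{-1/2}\le 2\sqrt{T}$:
\[
\mathcal{R}(T) = \sum_{t=1}^T \delta_t \;\le\; \sum_{t=1}^T \sqrt{\frac{BC}{t}} \;\le\; 2\sqrt{BC}\,\sqrt{T},
\]
which, after substituting $C=\frac{5M^2}{1-\gamma}$ and tracking the value normalization, yields the advertised $\sqrt{\frac{5}{1-\gamma}}\,M\sqrt{T}$. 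An equivalent route is to split the horizon at a threshold $\tau$, bound the first $\tau$ rounds by $B$ each and the tail by $C/\tau$ each, and balance $\tau \sim \sqrt{CT/B}$. Taking the smaller of the two resulting expressions gives the stated minimum.

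The step I expect to require the most care is reconciling constants in the $\sqrt{T}$ branch: landing on exactly $\sqrt{\frac{5}{1-\gamma}}\,M\sqrt{T}$, rather than a looser $\frac{M}{1-\gamma}\sqrt{T}$ or a version carrying an extra factor of $2$, depends on the precise uniform bound $B$ and on whether the reward/value is normalized to $[0,1]$ or to $[0,\frac{1}{1-\gamma}]$. Apart from this bookkeeping there is essentially no obstacle: the corollary follows immediately from Theorem~\ref{thm:convergence for bandit} and the two elementary sums $\sum 1/t$ and $\sum 1/\sqrt{t}$.
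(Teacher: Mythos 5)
Your logarithmic branch is exactly the paper's argument (sum the $\mathcal{O}(1/t)$ bound of Theorem~\ref{thm:convergence for bandit} against the harmonic series), so that part is fine. The gap is in the $\sqrt{T}$ branch. Writing $\delta_t := V^{\pi^*}-V^{\pi_{\theta_t}}$, your route $\delta_t\le\min\{B,C/t\}\le\sqrt{BC/t}$ followed by $\sum_{t=1}^T t^{-1/2}\le 2\sqrt{T}$ can only ever produce $2\sqrt{BC}\,\sqrt{T}$, and since values (hence gaps) are only bounded by $\tfrac{1}{1-\gamma}$ in this model --- $\delta_1$ is genuinely of order $\tfrac{1}{1-\gamma}$ when, say, one controller has mean reward $1$ and the rest $0$ --- the best uniform bound is $B=\Theta\bigl(\tfrac{1}{1-\gamma}\bigr)$. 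With $C=\tfrac{5M^2}{1-\gamma}$ this gives $\tfrac{2\sqrt{5}M}{1-\gamma}\sqrt{T}$, which misses the stated $\sqrt{\tfrac{5}{1-\gamma}}\,M\sqrt{T}$ by a factor $\tfrac{2}{\sqrt{1-\gamma}}$ that diverges as $\gamma\to 1$. This is not reconcilable bookkeeping: your method forces the product $BC\sim(1-\gamma)^{-2}$ under the square root, and even if you sharpen $B$ to $\delta_1$ (using the monotonicity of the value under gradient ascent, Lemma~\ref{lemma:gradient ascent lemma}) you are still left with an extra factor of $2$.

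The paper's $\sqrt{T}$ branch uses an ingredient your proposal never touches: the per-step descent recursion \eqref{eq:induction hyp MAB}, $\delta_{t+1}-\delta_t\le-\tfrac{(1-\gamma)}{5}c_t^2\delta_t^2$ (smoothness plus the {\L}ojasiewicz bound), rather than only its $\mathcal{O}(1/t)$ consequence. Rearranged as $\delta_t^2\le\tfrac{5}{(1-\gamma)c_t^2}(\delta_t-\delta_{t+1})$, it makes the sum of squared gaps telescope, and Cauchy--Schwarz finishes:
\begin{equation*}
\mathcal{R}(T)\;\le\;\sqrt{T}\,\Bigl(\sum_{t=1}^T\delta_t^2\Bigr)^{1/2}\;\le\;\sqrt{T}\,\Bigl(\tfrac{5}{(1-\gamma)c_T^2}\,\delta_1\Bigr)^{1/2}\;\le\;M\sqrt{\tfrac{5\,\delta_1\,T}{1-\gamma}},
\end{equation*}
using $c_T\ge 1/M$ under uniform initialization; bounding $\delta_1\le 1$ then yields exactly the stated constant. (Your worry about whether one should take $\delta_1\le 1$ or $\le\tfrac{1}{1-\gamma}$ is legitimate --- the paper is itself cavalier at this step --- but the structural point stands: telescoping puts a single factor of $\delta_1$ under the root, giving the $(1-\gamma)^{-1/2}$ dependence, whereas your pointwise bound cannot.) To repair your write-up, replace the $\min\{B,C/t\}$ step by this Cauchy--Schwarz-plus-telescoping argument.
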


\begin{proof}
Recall from eq (\ref{eq:bandits-value function}), that the value function for any given policy $\pi \in \cP([M])$, that is a distribution over the given $M$ controllers (which are itself distributions over actions $\cA$) can be simplified as:
\[V^\pi = \frac{1}{1-\gamma} \sm \pi_m \mu\transpose K_m = \frac{1}{1-\gamma} \sm \pi_m \mathfrak{r}^\mu_m \]
where $\mu$ here is the (unknown) vector of mean rewards of the arms $\cA$. Here, $\kr^\mu_m:= \mu\transpose K_m$, $i=1,\cdots, M$, represents the mean reward obtained by choosing to play controller $K_m, m\in {M}$. For ease of notation, we will drop the superscript $\mu$ in the proofs of this section. We first show a simplification of the gradient of the value function w.r.t. the parameter $\theta$. Fix a $m\in [M]$, 
\begin{equation}\label{eq:grad simplification for MAB}
    \frac{\partial}{\partial \theta_{m'}} V^{\pi_\theta} =  \frac{1}{1-\gamma} \sm \frac{\partial}{\partial \theta_m} \pi_\theta(m) \mathfrak{r}_m = \frac{1}{1-\gamma} \sm \pi_\theta(m') \left\{\ind_{mm'}-\pi_\theta(m) \right\} \kr_m. 
\end{equation}
Next we show that $V^\pi$ is $\beta-$ smooth. A function $f:\Real^M\to \Real$ is $\beta-$ smooth, if $\forall \theta',\theta \in \Real^M$
\[\abs{f(\theta') - f(\theta) -\innprod{\frac{d}{d\theta}f(\theta), \theta'-\theta}} \leq \frac{\beta}{2} \norm{\theta'-\theta}_2^2.\]

Let $S:=\frac{d^2}{d\theta^2} V^{\pi_{\theta}} $. This is a matrix of size $M\times M$. Let $1\leq i,j\leq M$.
\begin{align}
    S_{i,j}&=\left(\del\left(\del V^{\pi_{\theta}} \right)\right)_{i,j}\\
    &=\frac{1}{1-\gamma} \frac{d(\pi_\theta(i)(\kr(i)-\pi_\theta\transpose \kr))}{d\theta_j}\\
    &=\frac{1}{1-\gamma} \left(\frac{d\pi_\theta(i)}{d\theta_j}(\kr(i)-\pi_\theta\transpose \kr) + \pi_\theta(i)\frac{d(\kr(i)-\pi_\theta\transpose \kr)}{d\theta_j}      \right)\\
    &=\frac{1}{1-\gamma} \left(\pi_\theta(j)(\kr(i)-\pi_\theta\transpose\kr) - \pi_\theta(i)\pi_\theta(j)(\kr(i)-\pi_\theta\transpose\kr) - \pi_\theta(i)\pi_\theta(j)(\kr(j)-\pi_\theta\transpose\kr) \right).
\end{align}
Next, let $y\in \Real^M$, 
\begin{align*}
    \abs{y\transpose Sy} &= \abs{\si\sj S_{ij}y(i)y(j)}\\
    &=\frac{1}{1-\gamma} \abs{\si\sj  \left(\pi_\theta(j)(\kr(i)-\pi_\theta\transpose\kr) - \pi_\theta(i)\pi_\theta(j)(\kr(i)-\pi_\theta\transpose\kr) - \pi_\theta(i)\pi_\theta(j)(\kr(j)-\pi_\theta\transpose\kr) \right) y(i)y(j) }\\
    &=\frac{1}{1-\gamma} \abs{\si  \pi_\theta(i)(\kr(i)-\pi_\theta\transpose\kr)y(i)^2 - 2\si\sj \pi_\theta(i)\pi_\theta(j)(\kr(i)-\pi_\theta\transpose\kr) y(i)y(j) }\\
    &= \frac{1}{1-\gamma} \abs{ \si  \pi_\theta(i)(\kr(i)-\pi_\theta\transpose\kr)y(i)^2 -2 \si \pi_\theta(i)(\kr(i)-\pi_\theta\transpose\kr) y(i)\sj \pi_\theta(j)y(j)  }\\
    &\leq \frac{1}{1-\gamma} \abs{ \si  \pi_\theta(i)(\kr(i)-\pi_\theta\transpose\kr)y(i)^2} +\frac{2}{1-\gamma}\abs{ \si \pi_\theta(i)(\kr(i)-\pi_\theta\transpose\kr) y(i)\sj \pi_\theta(j)y(j)  } \\
    &\leq  \frac{1}{1-\gamma} \norm{\pi_\theta\odot (\kr-\pi_\theta\transpose\kr)}_\infty \norm{y\odot y }_1 + \frac{2}{1-\gamma} \norm{\pi_\theta\odot (\kr-\pi_\theta\transpose\kr)}_1.\norm{y}_\infty. \norm{\pi_\theta}_1\norm{y}_\infty.
\end{align*}
The last equality is by the assumption that reward are bounded in [0,1].
We observe that,
\begin{align*}
        \norm{\pi_\theta\odot (\kr-\pi_\theta\transpose\kr)}_1 &=\sm \abs{\pi_\theta(i)(\kr(i) -\pi_\theta\transpose\kr)  }\\
        &= \sm \pi_\theta(i)\abs{\kr(i) -\pi_\theta\transpose\kr }\\
        &=\max\limits_{i=1,\ldots, M} \abs{\kr(i) -\pi_\theta\transpose\kr }\leq 1. 
\end{align*}
Next, for any $i\in [M]$,
\begin{align*}
    \abs{\pi_\theta(i) (\kr(i)-\pi_\theta\transpose\kr) } &=\abs{\pi_\theta(i)\kr(i) -\pi_\theta(i)^2r(i)-\sum\limits_{j\neq i}\pi_\theta(i)\pi_\theta(j)\kr(j) }\\
    &=\pi_\theta(i)(1-\pi_\theta(i)) + \pi_\theta(i) (1-\pi_\theta(i)) \leq 2.1/4 =1/2.
\end{align*}
Combining the above two inequalities with the fact that $\norm{\pi_\theta}_1=1$ and $\norm{y}_\infty \leq \norm{y}_2$, we get,
\[\abs{y\transpose Sy} \leq \frac{1}{1-\gamma} \norm{\pi_\theta\odot (\kr-\pi_\theta\transpose\kr)}_\infty \norm{y\odot y }_1 + \frac{2}{1-\gamma} \norm{\pi_\theta\odot (\kr-\pi_\theta\transpose\kr)}_1.\norm{y}_\infty. \norm{\pi_\theta}_1\norm{y}_\infty \leq \frac{1}{1-\gamma}(1/2+2)\norm{y}_2^2. \]
Hence $V^{\pi_\theta}$ is $\beta-$smooth with $\beta = \frac{5}{2(1-\gamma)}$.

We establish a lower bound on the norm of the gradient of the value function at every step $t$ as below (these type of inequalities are called {\L}ojaseiwicz inequalities \cite{lojasiewicz1963equations})
\begin{lemma}\label{lemma:nonuniform lojaseiwicz MAB}[Lower bound on norm of gradient]
\[\norm{\frac{\partial V^{\pi_\theta}}{\partial \theta}}_2 \geq \pi_{\theta_{m^*}} \left(V^{\pi^*}-V^{\pi_{\theta}}\right). \]
\end{lemma}

\underline{Proof of Lemma \ref{lemma:nonuniform lojaseiwicz MAB}.}

\begin{proof}
Recall from the simplification of gradient of $V^\pi$, i.e., eq (\ref{eq:grad simplification for MAB}):
\begin{align*}
    \frac{\partial}{\partial \theta_{m}} V^{\pi_\theta} &=   \frac{1}{1-\gamma} \sum\limits_{m'=1}^M \pi_\theta(m) \left\{\ind_{mm'}-\pi_\theta(m') \right\} \kr_m'\\
    &=\frac{1}{1-\gamma} \pi(m)\left(\kr(m)-\pi\transpose \kr \right).
\end{align*}
Taking norm both sides,
\begin{align*}
    \norm{\frac{\partial}{\partial \theta} V^{\pi_\theta}} &=\frac{1}{1-\gamma} \sqrt{\sm (\pi(m))^2\left(\kr(m)-\pi\transpose \kr \right)^2  }\\
    &\geq \frac{1}{1-\gamma} \sqrt{ (\pi(m^*))^2\left(\kr(m^*)-\pi\transpose \kr \right)^2  }\\
    &=\frac{1}{1-\gamma}  (\pi(m^*))\left(\kr(m^*)-\pi\transpose \kr \right)  \\
    &=\frac{1}{1-\gamma}  (\pi(m^*))\left(\pi^*-\pi\right)\transpose \kr \\
    &=(\pi(m^*)) \left[ V^{\pi^*}-V^{\pi_\theta}\right].
\end{align*}
where $\pi^*=e_{m^*}$.
\end{proof}

We will now prove Theorem \ref{thm:convergence for bandit} and corollary \ref{cor:regret true gradient MAB}. We restate the result here.
\begin{restatable}{theorem}{convergencebandits}\label{thm:convergence for bandit}
With $\eta=\frac{2(1-\gamma)}{5}$ and with $\theta^{(1)}_m=1/M$ for all $m\in [M]$, with the availability for true gradient, we have $\forall t\geq 1$,
\[V^{\pi^*}-V^{\pi_{\theta_t}} \leq \frac{5}{1-\gamma} \frac{M^2}{t}.\]
\end{restatable}
\begin{proof}
First, note that since $V^\pi$ is smooth we have:
\begin{align*}
    V^{\pi_{\theta_t}}- V^{\pi_{\theta_{t+1}}} &\leq -\innprod{\frac{d}{d\theta_t}V^{\pi_{\theta_t}}, \theta_{t+1}-\theta_t} +\frac{5}{2(1-\gamma)} \norm{\theta_{t+1}-\theta_t}_2^2\\
    &=-\eta\norm{ \frac{d}{d\theta_t}V^{\pi_{\theta_t}}}_2^2 + \frac{5}{4(1-\gamma)}\eta^2 \norm{ \frac{d}{d\theta_t}V^{\pi_{\theta_t}}}_2^2\\
    &=\norm{ \frac{d}{d\theta_t}V^{\pi_{\theta_t}}}_2^2\left(\frac{5\eta^2}{4(1-\gamma)} -\eta \right)\\
    &= -\left(\frac{1-\gamma}{5}\right) \norm{ \frac{d}{d\theta_t}V^{\pi_{\theta_t}}}_2^2.\\
    &\leq -\left(\frac{1-\gamma}{5}\right) (\pi_{\theta_t}(m^*))^2 \left[ V^{\pi^*}-V^{\pi_\theta}\right]^2 \qquad \text{Lemma \ref{lemma:nonuniform lojaseiwicz MAB}}\\
    &\leq -\left(\frac{1-\gamma}{5}\right) (\underbrace{\inf\limits_{1\leq s\leq t}\pi_{\theta_t}(m^*)}_{=:c_t})^2 \left[ V^{\pi^*}-V^{\pi_\theta}\right]^2.
\end{align*}
The first equality is by smoothness, second inequality is by the update equation in algorithm \ref{alg:mainPolicyGradMDP}.

Next, let $\delta_t:= V^{\pi^*}-V^{\pi_{\theta_t}}$. We have,
\begin{equation}\label{eq:induction hyp MAB}
    \delta_{t+1}-\delta_t \leq -\frac{(1-\gamma)}{5}c_t^2 \delta_t^2.
\end{equation}
\textbf{Claim:}
$\forall t\geq1, \delta_t\leq \frac{5}{c_t^2(1-\gamma)} \frac{1}{t}.$ 
\newline
We prove the claim by using induction on $t\geq 1$.
\newline \underline{Base case.} Since $\delta_t\leq \frac{1}{1-\gamma}$, the claim is true for all $t\leq 5$.
\newline \underline{Induction step:} Let $\phi_t:=\frac{5}{c_t^2(1-\gamma)}$. Fix a $t\geq 2$, assume $\delta_t \leq \frac{\phi_t}{t}$.

Let $g:\Real\to \Real$ be a function defined as $g(x) = x-\frac{1}{\phi_t}x^2$. One can verify easily that $g$ is monotonically increasing in $\left[ 0, \frac{\phi_t}{2}\right]$. Next with equation \ref{eq:induction step}, we have
\begin{align*}
    \delta_{t+1} &\leq \delta_t -\frac{1}{\phi_t} \delta_t^2\\
    &= g(\delta_t)\\
    &\leq g(\frac{\phi_t}{ t})\\
    &\leq \frac{\phi_t}{t} - \frac{\phi_t}{t^2}\\
    &= \phi_t\left(\frac{1}{t}-\frac{1}{t^2} \right)\\
    &\leq \phi_t\left(\frac{1}{t+1}\right).
\end{align*}
This completes the proof of the claim. We will show that $c_t\geq 1/M$ in the next lemma. We first complete the proof of the corollary assuming this.

We fix a $T\geq 1$. Observe that, $\delta_t\leq \frac{5}{(1-\gamma)c_t^2}\frac{1}{t}\leq \frac{5}{(1-\gamma)c_T^2}\frac{1}{t}$.
\[\sum\limits_{t=1}^T V^{\pi^*}-V^{\pi_{\theta_t}} = \frac{1}{1-\gamma} \sum\limits_{t=1}^T (\pi^*-\pi_{\theta_t})\transpose \kr\leq \frac{5\log T}{(1-\gamma)c_T^2}+1. \]
Also we have that, 
\[\sum\limits_{t=1}^T V^{\pi^*}-V^{\pi_{\theta_t}} = \sum\limits_{t=1}^T \delta_t \leq \sqrt{T} \sqrt{\sum\limits_{t=1}^T\delta_t^2}\leq \sqrt{T} \sqrt{\sum\limits_{t=1}^T \frac{5}{(1-\gamma)c_T^2}(\delta_t-\delta_{t+1}) }\leq \frac{1}{c_T}\sqrt{\frac{5T}{(1-\gamma)}}. \]
We next show that with $\theta_m^{(1)}=1/M,\forall m$, i.e., uniform initialization, $\inf_{t\geq 1}c_t=1/M$, which will then complete the proof of Theorem \ref{thm:convergence for bandit} and of corollary \ref{cor:regret true gradient MAB}.

\begin{lemma}\label{lemma:c bounded from zero :MAB}
We have $\inf_{t\geq 1} \pi_{\theta_t}(m^*) >0$. Furthermore, with uniform initialization of the parameters $\theta_m^{(1)}$, i.e., $1/M, \forall m\in [M]$, we have $\inf_{t\geq 1} \pi_{\theta_t}(m^*) = \frac{1}{M}$.
\end{lemma}
\begin{proof}
We will show that there exists $t_0$ such that $\inf_{t\geq 1} \pi_{\theta_t}(m^*) = \min\limits_{1\leq t\leq t_0}\pi_{\theta_t}(m^*)$, where $t_0=\min\left\{t: \pi_{\theta_t}(m^*)\geq C \right\}$. We define the following sets.
\begin{align*}
    \cS_1 &= \left\{\theta: \frac{dV^{\pi_\theta}}{d\theta_{m^*}} \geq \frac{dV^{\pi_\theta}}{d\theta_{m}}, \forall m\neq m^*  \right\}\\
    \cS_2 &= \left\{\theta: \pi_\theta(m^*) \geq \pi_\theta(m), \forall m\neq m^*  \right\} \\
    \cS_3 &= \left\{\theta: \pi_\theta(m^*) \geq C \right\}
\end{align*}
Note that $\cS_3$ depends on the choice of $C$.  Let $C:=\frac{M-\Delta}{M+\Delta}$. We claim the following:
\newline \textbf{Claim 2.} $(i) \theta_t\in \cS_1\implies \theta_{t+1}\in \cS_1$ and $(ii) \theta_t\in \cS_1 \implies \pi_{\theta_{t+1}}(m^*) \geq \pi_{\theta_{t}}(m^*)$.
\begin{proof}[Proof of Claim 2]
$(i)$ Fix a $m\neq m^*$. We will show that if $\frac{dV^{\pi_\theta}}{d\theta_t(m^*)} \geq \frac{dV^{\pi_\theta}}{d\theta_t(m)} $, then $\frac{dV^{\pi_\theta}}{d\theta_{t+1}(m^*)} \geq \frac{dV^{\pi_\theta}}{d\theta_{t+1}(m)} $. This will prove the first part. 
\newline \underline{Case (a): $\pi_{\theta_t}(m^*)\geq \pi_{\theta_t}(m)$}. This implies, by the softmax property, that $\theta_t(m^*) \geq \theta_t(m)$. After gradient ascent update step we have:
\begin{align*}
    \theta_{t+1}(m^*) &= \theta_{t}(m^*) +\eta \frac{dV^{\pi_{\theta_t}}}{d\theta_t(m^*)}\\
    &\geq \theta_t(m) + \eta \frac{dV^{\pi_{\theta_t}}}{d\theta_t(m)}\\
    &= \theta_{t+1}(m).
\end{align*}
This again implies that $\theta_{t+1}(m^*) \geq \theta_{t+1}(m)$. By the definition of derivative of $V^{\pi_{\theta}}$ w.r.t $\theta_t$ (see eq (\ref{eq:grad simplification for MAB})), 
\begin{align*}
    \frac{dV^{\pi_\theta}}{d\theta_{t+1}(m^*)} &= \frac{1}{1-\gamma} \pi_{\theta_{t+1}(m^*)}(\kr(m^*)-\pi_{\theta_{t+1}}\transpose \kr)\\
    &=\frac{1}{1-\gamma} \pi_{\theta_{t+1}(m)}(\kr(m)-\pi_{\theta_{t+1}}\transpose \kr)\\
    &= \frac{dV^{\pi_\theta}}{d\theta_{t+1}(m)}.
\end{align*}
This implies $\theta_{t+1}\in \cS_1$.
\newline \underline{Case (b): $\pi_{\theta_t}(m^*)< \pi_{\theta_t}(m)$}. We first note the following equivalence:
\[\frac{dV^{\pi_{\theta}}}{d\theta(m^*)}\geq \frac{dV^{\pi_{\theta}}}{d\theta(m)} \longleftrightarrow (\kr(m^*)-\kr(m))\left(1-\frac{\pi_\theta(m^*)}{\pi_\theta(m^*)}\right)(\kr(m^*)-\pi_\theta\transpose\kr).\]
which can be simplified as:
\begin{align*}
    (\kr(m^*)-\kr(m))\left(1-\frac{\pi_\theta(m^*)}{\pi_\theta(m^*)}\right)(\kr(m^*)-\pi_\theta\transpose\kr) & = (\kr(m^*)-\kr(m))\left(1-\exp\left(\theta_t(m^*)-\theta_t(m)  \right) \right)(\kr(m^*)-\pi_\theta\transpose\kr).
\end{align*}
The above condition can be rearranged as:
\[\kr(m^*)-\kr(m)\geq \left(1-\exp\left(\theta_t(m^*)-\theta_t(m) \right)\right)\left(\kr(m^*)-\pi_{\theta_{t}}\transpose\kr\right). \]
By lemma \ref{lemma:gradient ascent lemma}, we have that $V^{\pi_{\theta_{t+1}}}\geq V^{\pi_{\theta_{t}}}\implies \pi_{\theta_{t+1}}\transpose \kr \geq \pi_{\theta_{t}}\transpose \kr$. Hence,
\[0<\kr(m^*)-\pi_{\theta_{t+1}}\transpose\kr \leq \pi_{\theta_{t}}\transpose\kr. \]
Also, we note:
\[\theta_{t+1}(m^*) -\theta_{t+1}(m) = \theta_t(m^*) + \eta \frac{dV^{\pi_t}}{d\theta_t(m^*)} - \theta_{t+1}(m) -\eta \frac{dV^{\pi_t}}{d\theta_t(m)} \geq \theta_t(m^*)-\theta_t(m).\]
This implies, $1-\exp\left(\theta_{t+1}(m^*) -\theta_{t+1}(m) \right)\leq 1-\exp\left(\theta_t(m^*)-\theta_t(m) \right)$.

Next, we observe that by the assumption $\pi_t(m^*)< \pi_t(m)$, we have \[1-\exp\left(\theta_t(m^*)-\theta_t(m) \right) =1-\frac{\pi_t(m^*)}{\pi_t(m)}>0. \]
Hence we have,
\begin{align*}
    \left(1-\exp\left(\theta_{t+1}(m^*)-\theta_{t+1}(m) \right)\right)\left(\kr(m^*)-\pi_{\theta_{t+1}}\transpose\kr\right) &\leq \left(1-\exp\left(\theta_t(m^*)-\theta_t(m) \right)\right)\left(\kr(m^*)-\pi_{\theta_{t}}\transpose\kr\right)\\
    &\leq \kr(m^*)-\kr(m). 
\end{align*}
Equivalently,
\[\left(1-\frac{\pi_{t+1}(m^*)}{\pi_{t+1}(m)} \right)(\kr(m^*)-\pi_{t+1}\transpose\kr )\leq \kr(m^*)-\kr(m). \]
Finishing the proof of the claim 2(i).
\newline (ii) Let $\theta_t\in \cS_1$. We observe that:
\begin{align*}
    \pi_{t+1}(m^*) &= \frac{\exp(\theta_{t+1}(m^*))}{\sm \exp(\theta_{t+1}(m))}\\
    &=\frac{\exp(\theta_{t}(m^*)+\eta \frac{dV^{\pi_t}}{d\theta_t(m^*)})}{\sm \exp(\theta_{t}(m)+\eta \frac{dV^{\pi_t}}{d\theta_t(m)})}\\
    &\geq \frac{\exp(\theta_{t}(m^*)+\eta \frac{dV^{\pi_t}}{d\theta_t(m^*)})}{\sm \exp(\theta_{t}(m)+\eta \frac{dV^{\pi_t}}{d\theta_t(m^*)})}\\
    &=\frac{\exp(\theta_{t}(m^*))}{\sm \exp(\theta_{t}(m))} = \pi_t(m^*)
\end{align*}
This completes the proof of Claim 2(ii).
\end{proof}
\textbf{Claim 3.} $\cS_2\subset \cS_1$ and $\cS_3\subset \cS_1$.
\begin{proof}
To show that $\cS_2\subset \cS_1$, let $\theta\in cS_2$. 
We have $\pi_\theta(m^*) \geq \pi_\theta(m), \forall m\neq m^*$. 
\begin{align*}
    \frac{dV^{\pi_\theta}}{d\theta(m^*)} &=\frac{1}{1-\gamma} \pi_\theta(m^*)(\kr(m^*)-\pi_\theta\transpose \kr)\\
    &>\frac{1}{1-\gamma} \pi_\theta(m)(\kr(m)-\pi_\theta\transpose \kr)\\
    &=\frac{dV^{\pi_\theta}}{d\theta(m)}.
\end{align*}
This shows that $\theta\in \cS_1$. For showing the second part of the claim, we assume $\theta\in \cS_3\cap \cS_2^c$, because if $\theta\in \cS_2$, we are done. Let $m\neq m^*$. 
We have,
\begin{align*}
    \frac{dV^{\pi_\theta}}{d\theta(m^*)} - \frac{dV^{\pi_\theta}}{d\theta(m)} &= \frac{1}{1-\gamma} \left(\pi_\theta(m^*)(\kr(m^*)-\pi_\theta\transpose \kr) -\pi_\theta(m)(\kr(m)-\pi_\theta\transpose \kr)   \right)\\
    &= \frac{1}{1-\gamma} \left(2\pi_\theta(m^*)(\kr(m^*)-\pi_\theta\transpose \kr) + \sum\limits_{i\neq m^*,m}^M \pi_\theta(i)(\kr(i)-\pi_\theta\transpose \kr)  \right)\\
    &= \frac{1}{1-\gamma} \left( \left(2\pi_\theta(m^*) + \sum\limits_{i\neq m^*,m}^M \pi_\theta(i)\right)(\kr(m^*)-\pi_\theta\transpose \kr) - \sum\limits_{i\neq m^*,m}^M \pi_\theta(i)(\kr(m^*)-\kr(i)) \right)\\
    &\geq \frac{1}{1-\gamma} \left( \left(2\pi_\theta(m^*) + \sum\limits_{i\neq m^*,m}^M \pi_\theta(i)\right)(\kr(m^*)-\pi_\theta\transpose \kr) - \sum\limits_{i\neq m^*,m}^M \pi_\theta(i) \right)\\
    &\geq  \frac{1}{1-\gamma} \left( \left(2\pi_\theta(m^*) + \sum\limits_{i\neq m^*,m}^M \pi_\theta(i)\right)\frac{\Delta}{M}- \sum\limits_{i\neq m^*,m}^M \pi_\theta(i) \right).
\end{align*}
Observe that, $\sum\limits_{i\neq m^*,m}^M \pi_\theta(i) = 1-\pi(m^*)-\pi(m)$. Using this and rearranging we get,
\[\frac{dV^{\pi_\theta}}{d\theta(m^*)} - \frac{dV^{\pi_\theta}}{d\theta(m)} \geq \frac{1}{1-\gamma}\left(\pi(m^*)\left(1+\frac{\Delta}{M}\right) - \left(1-\frac{\Delta}{M}\right) + \pi(m) \left(1-\frac{\Delta}{M}\right)\right) \geq \frac{1}{1-\gamma}\pi(m) \left(1-\frac{\Delta}{M}\right)\geq 0. \]
The last inequality follows because $\theta\in \cS_3$ and the choice of $C$. This completes the proof of Claim 3.
\end{proof}
\textbf{Claim 4.} There exists a finite $t_0$, such that $\theta_{t_0}\in \cS_3$.
\begin{proof}
The proof of this claim relies on the asymptotic convergence result of \cite{Agarwal2020}. We note that their convergence result hold for our choice of $\eta=\frac{2(1-\gamma)}{5}$. As noted in \cite{Mei2020}, the choice of $\eta$ is used to justify the gradient ascent lemma \ref{lemma:gradient ascent lemma}. Hence we have $\pi_{\theta_t}{\to} 1$ as ${t\to \infty}$. Therefore, there exists a finite $t_0$ such that $\pi_{\theta_{t_0}}(m^*)\geq C$ and hence $\theta_{t_0}\in \cS_3$.
\end{proof}
This completes the proof that there exists a $t_0$ such that $\inf\limits_{t\geq 1}\pi_{\theta_t}(m^*) = \inf\limits_{1\leq t\leq t_0}\pi_{\theta_t}(m^*)$, since once the $\theta_t\in \cS_3$, by Claim 3, $\theta_t\in \cS_1$. Further, by Claim 2, $\forall t\geq t_0$, $\theta_t\in \cS_1$ and $\pi_{\theta_t}(m^*)$ is non-decreasing after $t_0$.
\end{proof}
With uniform initialization $\theta_1(m^*)=\frac{1}{M}\geq \theta_1(m)$, for all $m\neq m^*$. Hence, $\pi_{\theta_1}(m^*)\geq \pi_{\theta_1}(m) $ for all $m\neq m^*$. This implies $\theta_1\in \cS_2$, which implies $\theta_1\in \cS_1$. As established in Claim 2, $\cS_1$ remains invariant under gradient ascent updates, implying $t_0=1$. Hence we have that $\inf\limits_{t\geq 1}\pi_{\theta_t}(m^*) = \pi_{\theta_1}(m^*)=1/M$, completing the proof of Theorem \ref{thm:convergence for bandit} and corollary \ref{cor:regret true gradient MAB}.
\end{proof}

\end{proof}
%%%%%%%%%%%%%%%%%%%%%%%%%%%%%%%%%%%%%%%%%%%%%%%%%%%%%%%%%%%%%%
\subsection{Proofs for MABs with noisy gradients}
When value gradients are unavailable, we follow a direct policy gradient algorithm instead of softmax projection. The full pseudo-code is provided here in Algorithm \ref{alg:ProjectionFreePolicyGradient}. At each round $t\geq 1$, the learning rate for $\eta$ is chosen asynchronously for each controller $m$, to be $\alpha \pi_t(m)^2$, to ensure that we remain inside the simplex, for some $\alpha \in (0,1)$. To justify its name as a policy gradient algorithm, observe that in order to minimize regret, we need to solve the following optimization problem:
\[\min\limits_{\pi\in \cP([M])} \sm \pi(m)(\kr_\mu(m^*)- \kr_\mu(m)). \]
A direct gradient with respect to the parameters $\pi(m)$ gives us a rule for the policy gradient algorithm. The other changes in the update step (eq \ref{algstep:update noisy gradien--repeated}), stem from the fact that true means of the arms are unavailable and importance sampling.
\begin{algorithm}[tb]
   \caption{Projection-free Policy Gradient (for MABs)}
   \label{alg:ProjectionFreePolicyGradient}
\begin{algorithmic}
   \STATE {\bfseries Input:} learning rate $\eta\in (0,1)$
   %\REPEAT
   \STATE Initialize each $\pi_1(m)=\frac{1}{M}$, for all $m\in [M]$.
   \FOR{$t=1$ {\bfseries to} $T$}
   \STATE $m_*(t)\leftarrow \argmax\limits_{m\in [M]} \pi_t(m)$
   \STATE Choose controller $m_t\sim \pi_t$.
   \STATE Play action $a_t \sim K_{m_t}$.
   %\IF{$x_i > x_{i+1}$}
   \STATE Receive reward  $R_{m_t}$ by pulling arm $a_t$.
   \STATE Update $\forall m\in [M], m\neq m_*(t): $ 
   \begin{equation}\label{algstep:update noisy gradien--repeated}
   \pi_{t+1}(m) = \pi_t(m) + \eta\left(\frac{R_{m}\ind_{m}}{\pi_t(m)} - \frac{R_{m_*(t)}\ind_{m_*(t)}}{\pi_t(m_*(t))}\right)    
   \end{equation}
   \STATE Set $\pi_{t+1}(m_*(t)) = 1- \sum\limits_{m\neq m_*(t)}\pi_{t+1}(m)$.
   %\ENDIF
   \ENDFOR
   %\UNTIL{$noChange$ is $true$}
\end{algorithmic}
\end{algorithm}

We have the following result.
\begin{restatable}{theorem}{regretnoisyMAB}\label{thm:regret noisy gradient}
With value of $\alpha$ chosen to be less than $\frac{\Delta_{min}}{\mathfrak{r}^\mu_{m^*}-\Delta_{min}}$, $\left(\pi_t\right)$ is a Markov process, with $\pi_t(m^*)\to 1$ as $t\to \infty, a.s.$ Further the regret till any time $T$ is bounded as
\[\cR(T) \leq \frac{1}{1-\gamma}\sum\limits_{m\neq m^*} \frac{\Delta_m}{\alpha\Delta_{min}^2} \log T + C,\]
\end{restatable}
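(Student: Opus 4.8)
The plan is to treat Algorithm~\ref{alg:ProjectionFreePolicyGradient} as a stochastic-approximation recursion on the simplex $\cP([M])$ and prove the three assertions — the Markov property, the almost-sure convergence $\pi_t(m^*)\to1$, and the logarithmic regret bound — in that order. The Markov property is immediate: the pair $(m_t,a_t)$ is sampled from a law that depends only on $\pi_t$, and $\pi_{t+1}$ is a fixed function of $\pi_t$ and $(m_t,a_t)$; hence $(\pi_t)_{t\ge1}$ is a time-homogeneous Markov chain. I would first record the feasibility invariant: the increment on any coordinate is at most $\alpha\pi_t(\cdot)$ (since $R_m\le1$), and the compensating decrement on the leader $m_*(t)$ is at most $\alpha\pi_t(m)^2/\pi_t(m_*(t))\le\alpha\pi_t(m)$ because $m_*(t)$ carries the largest mass; with $\alpha<\frac{\Delta_{min}}{\mathfrak{r}^\mu_{m^*}-\Delta_{min}}$ this keeps $(1-\alpha)\pi_t(m)\le\pi_{t+1}(m)$ nonnegative while the explicit renormalization of $\pi_{t+1}(m_*(t))$ conserves total mass, so every iterate stays in $\cP([M])$.

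For almost-sure convergence I would write, for each $m$, the update as $\pi_{t+1}(m)=\pi_t(m)+\alpha\pi_t(m)^2\,\hat g_t(m)$ with $\hat g_t(m)=\frac{R_m\mathbbm{1}_m}{\pi_t(m)}-\frac{R_{m_*(t)}\mathbbm{1}_{m_*(t)}}{\pi_t(m_*(t))}$, and note that unbiasedness of the importance-weighted reward estimates gives $\mathbb{E}[\hat g_t(m)\mid\mathcal{F}_t]=\mathfrak{r}^\mu_m-\mathfrak{r}^\mu_{m_*(t)}$. The key structural fact is that whenever $m_*(t)\neq m^*$ the conditional drift of $\pi_t(m^*)$ equals $\alpha\pi_t(m^*)^2(\mathfrak{r}^\mu_{m^*}-\mathfrak{r}^\mu_{m_*(t)})>0$ and is bounded below, so $\pi_t(m^*)$ is pushed up until it becomes the largest coordinate; the bound on $\alpha$ guarantees that once $m_*(t)=m^*$ this persists. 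After this almost-surely finite transient, each $\pi_t(m)$ with $m\neq m^*$ has conditional drift $-\alpha\Delta_m\pi_t(m)^2\le0$, hence is a nonnegative supermartingale and converges a.s.; telescoping $\mathbb{E}[\pi_{t+1}(m)]=\mathbb{E}[\pi_t(m)]-\alpha\Delta_m\mathbb{E}[\pi_t(m)^2]$ and using $\mathbb{E}[\pi_{t+1}(m)]\ge0$ yields $\sum_t\mathbb{E}[\pi_t(m)^2]<\infty$, which forces the a.s. limit to be $0$. Therefore $\pi_t(m^*)\to1$ almost surely.

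For the regret I would use the linearity of $V^\pi$ in $\pi$ (equation~\eqref{eq:bandits-value function}) to write $\cR(T)=\frac{1}{1-\gamma}\sum_{m\neq m^*}\Delta_m\sum_{t=1}^T\pi_t(m)$, so it suffices to bound $\sum_{t=1}^T\mathbb{E}[\pi_t(m)]$ for each suboptimal $m$. Setting $a_t:=\mathbb{E}[\pi_t(m)]$, the drift identity above combined with Jensen's inequality $\mathbb{E}[\pi_t(m)^2]\ge a_t^2$ gives the scalar recursion $a_{t+1}\le a_t-c\,a_t^2$, where the worst-case drift coefficient over the pre-lock-on phase is of order $c=\alpha\Delta_{min}^2$ (the finite transient contributes only an $O(1)$ term, absorbed into $C$). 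Since $u\mapsto1/u$ is convex, $\tfrac{1}{a_{t+1}}\ge\tfrac{1}{a_t}+\tfrac{a_t-a_{t+1}}{a_t^2}\ge\tfrac{1}{a_t}+c$, so telescoping gives $a_t\le\frac{1}{c(t-1)}$ and $\sum_{t=1}^T a_t\le\frac{\log T}{c}+O(1)$. Summing over $m\neq m^*$ produces exactly $\cR(T)\le\frac{1}{1-\gamma}\sum_{m\neq m^*}\frac{\Delta_m}{\alpha\Delta_{min}^2}\log T+C$.

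The main obstacle is the data-dependent leader $m_*(t)=\argmax_m\pi_t(m)$: it destroys the clean supermartingale and drift structure during the initial phase in which $m_*(t)\neq m^*$, and the constraint on $\alpha$ is precisely what is needed both to preserve simplex feasibility and to guarantee that $m_*(t)$ locks onto $m^*$ in almost-surely finite time. Showing that this transient contributes only the additive constant $C$ to the regret, and that it does not corrupt the $1/t$ decay afterwards, is the delicate part; the supermartingale-convergence and convex-comparison steps sketched above are then routine.
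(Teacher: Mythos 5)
Your high-level plan (coordinate-wise drift analysis, supermartingale convergence, and a $1/t$ convex-comparison argument for the regret) belongs to the same family as the paper's proof, which follows the excursion/embedded-chain analysis of Denisov and Walton. But the proposal has a genuine gap at its crux: the \emph{lock-on picture is wrong}, because leadership is not absorbing. If $\pi_t(m)$ is sufficiently close to $\pi_t(m^*)$ while $m^*$ leads, then a single draw of arm $m$ with reward $1$ updates $\pi_{t+1}(m)=(1+\alpha)\pi_t(m)$ and, after renormalization of the leader, makes $m$ the new leader; moreover, during such excursions the drift of a suboptimal coordinate $\pi_t(m)$ is $\alpha\pi_t(m)^2(\mathfrak{r}^\mu_m-\mathfrak{r}^\mu_{m_*(t)})$, which can be \emph{positive}. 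Consequently there is no almost-surely finite transient after which every suboptimal coordinate is a supermartingale with drift $-\alpha\Delta_m\pi_t(m)^2$ at all subsequent times; that inequality holds only on a random, non-absorbing set of times. You therefore cannot telescope $a_{t+1}\le a_t-c\,a_t^2$ over all $t$, since the inequality you feed into the telescope fails on the (recurrent) bad times. The paper resolves exactly this by a time change: it deletes the times when $\pi_t(m^*)<1/2$, proves the $1/s$ decay of $\mathbb{E}[1-p(s)]$ for the embedded process $p(s)$ (via a Cauchy--Schwarz aggregation over arms, which is also where the constant $\Delta_{min}^2/\sum_{m\neq m^*}\Delta_m$ actually comes from), and then separately proves $\sum_{t\ge1}\mathbb{P}(\pi_t(m^*)<1/2)<\infty$. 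That summability is what makes the additive constant $C$ finite and what licenses restricting the drift recursion to good times; it is a lemma in its own right, not an $O(1)$ bookkeeping remark.

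The second gap concerns the escape from small $\pi_t(m^*)$. Your claim that the drift of $\pi_t(m^*)$ during the bad phase is ``bounded below'' is false: it equals $\alpha\pi_t(m^*)^2(\mathfrak{r}^\mu_{m^*}-\mathfrak{r}^\mu_{m_*(t)})$ and vanishes quadratically as $\pi_t(m^*)\to0$, so positivity alone does not rule out the process lingering near $0$ for a non-integrable time. The paper's finite-recurrence lemma handles this by showing that $y(t)=1/\pi_t(m^*)$ satisfies $\mathbb{E}[y(t+1)\mid\mathcal{F}_t]-y(t)\le-c$ for a constant $c>0$, and this is the one place where the hypothesis $\alpha<\Delta_{min}/(\mathfrak{r}^\mu_{m^*}-\Delta_{min})$ is actually needed (it is exactly what makes $c>0$); optional stopping and the monotone convergence theorem then give finite expected hitting times. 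In your proposal the condition on $\alpha$ is invoked only for simplex feasibility and for the (false) persistence claim, so the theorem's hypothesis is never used where it matters. In short: the Markov property and the final $1/a_t$ comparison computation are fine, but the two probabilistic pillars --- escape from small $\pi_t(m^*)$ via the $1/\pi$ supermartingale, and summability of the excursion probabilities below $1/2$ --- are missing, and they constitute the actual content of the paper's proof.
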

where $C\bydef \frac{1}{1-\gamma}\sum\limits_{t\geq 1}\mathbb{P}\left\{\pi_{t}(m^*(t)) \leq \frac{1}{2}\right\}<\infty$.

We make couple of remarks before providing the full proof of Theorem \ref{thm:regret noisy gradient}.

\begin{remark}\label{remark:relation between true and noisy grad}
The ``cost" of not knowing the true gradient seems to cause the dependence on $\Delta_{min}$ in the regret, as is not the case when true gradient is available (see Theorem \ref{thm:convergence for bandit} and Corollary \ref{cor:regret true gradient MAB}). The dependence on $\Delta_{min}$ as is well known from the work of \cite{LAI19854}, is unavoidable.
\end{remark}

\begin{remark}\label{remark:dependence on delta}
The dependence of $\alpha$  on $\Delta_{min}$ can be removed by a more sophisticated choice of learning rate, at the cost of an extra $\log T$ dependence on regret \cite{Denisov2020RegretAO}.  
\end{remark}

\begin{proof}
The proof is an extension of that of Theorem 1 of \cite{Denisov2020RegretAO} for the setting that we have. The proof is divided into three main parts. In the first part we show that the recurrence time of the process $\{\pi_t(m^*)\}_{t\geq 1}$ is almost surely finite. Next we bound the expected value of the time taken by the process $\pi_t(m^*)$ to reach 1. Finally we show that almost surely, $\lim\limits_{t\to\infty}\pi_t(m^*) \to 1$, in other words the process  $\{\pi_t(m^*)\}_{t\geq 1}$ is transient. We use all these facts to show a regret bound.
\newline
Recall $m_*(t):=\argmax\limits_{m\in [M]} \pi_t(m)$.
We start by defining the following quantity which will be useful for the analysis of algorithm \ref{alg:ProjectionFreePolicyGradient}. 

Let $\tau\bydef \min \left\{t\geq 1: \pi_t(m^*) > \frac{1}{2}  \right\}$. 

Next, let $\cS:=\left\{\pi\in \cP([M]): \frac{1-\alpha}{2} \leq \pi(m^*) < \frac{1}{2}\right\}$. 

In addition, we define for any $a \in \Real$,
$\cS_a:=\left\{\pi\in \cP([M]): \frac{1-\alpha}{a} \leq \pi(m^*) < \frac{1}{x}\right\}$.
Observe  that if $\pi_1(m^*)\geq 1/a$ and $\pi_2(m^*) < 1/a$ then $\pi_1\in \cS_a$.  This fact follows just by the update step of the algorithm \ref{alg:ProjectionFreePolicyGradient}, and choosing $\eta=\alpha\pi_t(m)$ for every $m\neq m^*$.
\begin{lemma}\label{lemma:NoisyMAB:finite rec time}
For $\alpha>0$ such that $\alpha<\frac{\Delta_{min}}{\kr(m^*)-\Delta_{min}}$, we have that 
\[\sup\limits_{\pi\in \cS}\expect{\tau\given \pi_1=\pi}<\infty. \]
\end{lemma}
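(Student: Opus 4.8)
The plan is to prove positive recurrence of the coordinate $Y_t\bydef\pi_t(m^*)$ toward the half-line $\{Y>1/2\}$ by a Foster--Lyapunov (additive drift) argument, and then to convert that drift into an expected-hitting-time bound by optional stopping. The one-step change of $Y_t$ is governed by the update in Algorithm~\ref{alg:ProjectionFreePolicyGradient} with per-controller step $\eta_m=\alpha\,\pi_t(m)^2$. Writing $\ind_m\bydef\ind\{m_t=m\}$ and using that $\mathbb{E}[R_m\ind_m/\pi_t(m)\mid\mathcal{F}_t]=\mathfrak r_m$ (the importance weights are unbiased for the controller means, and rewards lie in $[0,1]$), I would first compute the conditional drift of $Y_t$ in the two configurations of the running argmax $m_*(t)$. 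If $m_*(t)=m^*$, then $Y_{t+1}=1-\sum_{m\ne m^*}\pi_{t+1}(m)$ and the drift equals $\sum_{m\ne m^*}\eta_m\Delta_m$; if $m_*(t)=m'\ne m^*$, then $m^*$ is updated directly and the drift equals $\eta_{m^*}\Delta_{m'}$. In both cases $\Delta_m=\mathfrak r_{m^*}-\mathfrak r_m>0$, so $\mathbb{E}[Y_{t+1}-Y_t\mid\mathcal{F}_t]>0$ throughout the excursion.

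Positivity alone does not suffice, because in the second configuration the drift $\eta_{m^*}\Delta_{m'}=\alpha Y_t^2\Delta_{m'}$ degrades to $0$ as $Y_t\to0$, and a priori $Y_t$ can move downward inside $\{Y<1/2\}$ before $\tau$. The key idea is therefore to run the drift argument not on $Y_t$ but on the Lyapunov function $\Phi(\pi)\bydef 1/\pi(m^*)$, which is bounded below by $2$ on $\{Y\le1/2\}$. With $X\bydef(Y_{t+1}-Y_t)/Y_t$ one has the exact identity $\mathbb{E}[\Phi(\pi_{t+1})-\Phi(\pi_t)\mid\mathcal{F}_t]=-\tfrac1{Y_t}\,\mathbb{E}[\tfrac{X}{1+X}\mid\mathcal{F}_t]$, and the prefactor $1/Y_t$ is exactly what cancels the extra power of $Y_t$ in the drift. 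In the delicate configuration $m_*(t)=m'\ne m^*$ the relative step obeys $|X|\le\alpha$ (the increase of $Y$ is at most $\alpha Y_t$ and, using $\pi_t(m')\ge Y_t$ and $R\le1$, its decrease is also at most $\alpha Y_t$); the elementary inequality $\tfrac{X}{1+X}\ge X-\tfrac{X^2}{1-\alpha}$ combined with the second-moment bound $\mathbb{E}[X^2\mid\mathcal{F}_t]=O(\alpha^2 Y_t)$ (from the reward bound and the vanishing of the cross term $\ind_{m^*}\ind_{m'}$) yields, after the $1/Y_t$ cancellation,
\[
\mathbb{E}[\Phi(\pi_{t+1})-\Phi(\pi_t)\mid\mathcal{F}_t]\ \le\ -\alpha\Delta_{min}+\frac{C\,\alpha^2}{1-\alpha}.
\]
Here the hypothesis $\alpha<\Delta_{min}/(\mathfrak r_{m^*}-\Delta_{min})$ is precisely what forces the first-order term to dominate the second-order (variance) term, making the right-hand side a strictly negative constant $-\delta$. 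The remaining configuration $m_*(t)=m^*$ is easier: there $Y_t\ge1/M$ is bounded below and the drift of $Y_t$ is itself uniformly positive, so $\Phi$ has negative drift directly. Taking the smaller of the two, $\mathbb{E}[\Phi(\pi_{t+1})-\Phi(\pi_t)\mid\mathcal{F}_t]\le-\delta$ holds uniformly on $\{Y<1/2\}$.

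Given this uniform negative drift, the conclusion is routine. The stopped process $\Phi(\pi_{t\wedge\tau})+\delta\,(t\wedge\tau)$ is a nonnegative supermartingale, so optional stopping followed by monotone convergence gives $\delta\,\mathbb{E}[\tau\mid\pi_1=\pi]\le\Phi(\pi_1)=1/\pi_1(m^*)$. For $\pi\in\cS$ we have $\pi_1(m^*)\ge(1-\alpha)/2$, hence $\mathbb{E}[\tau\mid\pi_1=\pi]\le \tfrac{2}{(1-\alpha)\delta}$, which is finite and independent of $\pi$; taking the supremum over $\cS$ proves the lemma.

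I expect the main obstacle to be the second step: producing a single potential whose drift is uniformly bounded away from zero over all of $\{Y<1/2\}$, despite the drift of $Y_t$ itself vanishing near the boundary $Y_t\to0$ (which happens exactly when $m^*$ is not the current argmax). The resolution is the joint use of the reciprocal potential $1/\pi(m^*)$, so that the $1/Y_t$ factor cancels the degradation, and the smallness condition on $\alpha$, which controls the importance-weighting variance and guarantees the negative first-order drift survives the second-order correction. Matching the precise constant in the stated $\alpha$-condition will require a careful (rather than crude) estimate of $\mathbb{E}[X^2\mid\mathcal{F}_t]$, using $R^2\le R$ and $\pi_t(m')\ge Y_t$.
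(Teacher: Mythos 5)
Your overall machinery coincides with the paper's: the reciprocal potential $\Phi(\pi)=1/\pi(m^*)$, a negative-drift (supermartingale) bound, and optional stopping. The genuine gap is your central claim that $\Phi$ has drift uniformly bounded by $-\delta<0$ on \emph{all} of $\{\pi(m^*)<1/2\}$ under the stated condition $\alpha<\Delta_{min}/(\kr(m^*)-\Delta_{min})$. That claim is false, and no sharper estimate of $\mathbb{E}[X^2\mid\mathcal{F}_t]$ can rescue it, because the obstruction survives an exact computation. In the configuration $m_*(t)=m'\neq m^*$ with Bernoulli rewards, the exact drift is
\[
\mathbb{E}\left[\Phi(\pi_{t+1})-\Phi(\pi_t)\mid\mathcal{F}_t\right]
= -\frac{\alpha\,\kr(m^*)}{1+\alpha}
\;+\;\alpha\,\kr(m')\,\frac{\pi_t(m')}{\pi_t(m')-\alpha\,\pi_t(m^*)},
\]
and when $\pi_t(m')$ is comparable to $\pi_t(m^*)$ (e.g.\ $\pi_t(m^*)=0.40$, $\pi_t(m')=0.41$ for $M=3$; or, for $M=2$, the very first state drawn from $\cS$, where $\pi_t(m^*)\approx 1/2\approx\pi_t(m')$) the correction factor is essentially $1/(1-\alpha)$, not $1+o(1)$. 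Concretely, take $\kr(m^*)=1$, $\kr(m')=0.9$, so $\Delta_{min}=0.1$, and $\alpha=0.1<\Delta_{min}/(\kr(m^*)-\Delta_{min})=1/9$: the drift at the state above is $-0.1/1.1+0.09\cdot(0.41/0.37)\approx+0.009>0$. So for admissible $\alpha$ near the top of the stated range, $1/\pi_t(m^*)$ is \emph{not} a supermartingale (plus linear term) on $\{\pi(m^*)<1/2\}$, and your optional-stopping conclusion does not follow. What your bound $-\alpha\Delta_{min}+C\alpha^2/(1-\alpha)$ with $C\approx\kr(m^*)+\kr(m')$ actually proves is the lemma under the strictly stronger condition $\alpha\lesssim\Delta_{min}/(2\kr(m^*)-\Delta_{min})$, which is not the statement.

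This is exactly why the paper does not run the drift argument up to level $1/2$. It fixes $\epsilon>0$ with $(1+\epsilon)(1+\alpha)<\kr(m^*)/(\kr(m^*)-\Delta_{min})$ (possible for every admissible $\alpha$), picks $x>M$ with $x/(x-\alpha M)\le1+\epsilon$, and establishes the supermartingale property of $1/\pi_t(m^*)-ct$ only on $\{\pi_t(m^*)\le1/x\}$: there $m^*$ cannot be the argmax, and the correction factor above is at most $x/(x-\alpha M)\le1+\epsilon$ since $\alpha\pi_t(m^*)/\pi_t(m')\le\alpha M/x$, giving drift at most $-c$ with $c=\frac{\alpha\kr(m^*)}{1+\alpha}-\alpha(\kr(m^*)-\Delta_{min})(1+\epsilon)>0$. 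Optional stopping then bounds the expected time to climb to level $1/x$, and the remaining band $[1/x,1/2)$ is handled by a separate reachability (``standard Markov chain'') argument cited from Denisov and Walton, not by a drift inequality. A second, more minor, flaw in your sketch: in the configuration $m_*(t)=m^*$ you assert that positive drift of $Y_t$ ``directly'' gives negative drift of $\Phi=1/Y_t$; this is a non sequitur, since $y\mapsto1/y$ is convex and Jensen gives $\mathbb{E}[1/Y_{t+1}]\ge1/\mathbb{E}[Y_{t+1}]$, so this case needs the same second-order treatment and again costs a constant factor in the admissible range of $\alpha$. To prove the lemma as stated, you must either adopt the paper's region splitting (deep-region drift plus a positive-probability path argument on the band) or weaken the constant in the lemma's hypothesis.
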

\begin{proof}
The proof here is for completeness. We first make note of the following useful result: For a sequence of positive real numbers $\{a_n \}_{n\geq 1}$ such that the following condition is met:
\[a(n+1)\leq a(n) - b.a(n)^2, \]
for some $b>0$, the following is always true:
\begin{equation*}\label{eq:useful derivative like inequality MAB}
    a_n\leq \frac{a_1}{1+bt}.
\end{equation*}
This inequality follows by rearranging and observing the $a_n$ is a non-increasing sequence. A complete proof can be found in eg. (\cite{Denisov2020RegretAO}, Appendix A.1). 
Returning to the proof of lemma, we proceed by showing that the sequence $1/{\pi_t(m^*)}-ct$ is a supermartingale for some $c>0$. Let $\Delta_{min}:=\Delta$ for ease of notation. Note that if the condition on $\alpha$ holds then there exists an $\epsilon>0$, such that $(1+\epsilon)(1+\alpha)< \kr^*/(\kr^*-\Delta)$, where $\kr^* :=\kr(m^*)$. We choose $c$ to be 
\[c:= \alpha. \frac{\kr^*}{1+\alpha} - \alpha(\kr^*-\Delta)(1+\epsilon) >0. \]
Next, let $x$ to be greater than $M$ and satisfying:
\[\frac{x}{x-\alpha M}\leq 1+\epsilon. \]
Let $\xi_x:=\min\{t\geq 1: \pi_t(m^*) > 1/x \}$. Since for $t=1,\ldots, \xi_x-1$, $m_*(t)\neq m^*$, we have $\pi_{t+1}(m^*) = (1+\alpha)\pi_t(m^*)$ w.p. $\pi_t(m^*)\kr^*$ and $\pi_{t+1}(m^*) = \pi_t(m^*)+ \alpha\pi_t(m^*)^2/\pi_t(m_*)^2$ w.p. $\pi_t(m_*)\kr_*(t)$, where $\kr_*(t):=\kr(m_*(t))$.  

Let $y(t):=1/{\pi_t(m^*)}$, then we observe by a short calculation that,
\begin{align*}
y(t+1)&=
    \begin{cases}
    y(t) - \frac{\alpha}{1+\alpha}y(t), & w.p. \frac{\kr^*}{y(t)}\\
    y(t) + \alpha \frac{y(t)}{\pi_t(m_*(t))y(t)-\alpha}. & w.p. \pi_t(m_*)\kr_*(t)\\
    y(t) & otherwise.
    \end{cases}
\end{align*}
We see that,
\begin{align*}
&\expect{y(t+1)\given H(t)}-y(t)\\
&= \frac{\kr^*}{y(t)}. (y(t) - \frac{\alpha}{1+\alpha}y(t)) + \pi_t(m_*)\kr_*(t).(y(t) + \alpha \frac{y(t)}{\pi_t(m_*(t))y(t)-\alpha}) - y(t)(\frac{\kr^*}{y(t)}+ \pi_t(m_*)\kr_*(t)) \\
&\leq \alpha(\kr^*-\Delta)(1+\epsilon)-\frac{\alpha \kr^*}{1+\alpha}=-c. 
\end{align*}
The inequality holds because $\kr_*(t)\leq \kr^*\Delta$ and that $\pi_t(m_*)>1/M$. By the Optional Stopping Theorem \cite{Durrett11probability:theory},
\[-c\expect{\xi_x\land t }\geq \expect{y(\xi_x \land t)-\expect{y(1)}} \geq -\frac{x}{1-\alpha}.  \]
The final inequality holds because $\pi_1(m^*)\geq \frac{1-\alpha}{x}$. 

Next, applying the monotone convergence theorem gives theta $\expect{\xi_x}\leq \frac{x}{c(1-\alpha)}$. Finally to show the result of lemma \ref{lemma:NoisyMAB:finite rec time}, we refer the reader to (Appendix A.2, \cite{Denisov2020RegretAO}), which follow from standard Markov chain arguments.
\end{proof}

Next we define an embedded Markov Chain $\{p(s), s\in \mathbb{Z}_+ \}$ as follows. First let $\sigma(k):= \min \left\{t\geq \tau(k): \pi_t(m^*) <\frac{1}{2} \right\}$ and $\tau(k):= \min \left\{t\geq \sigma(k-1): \pi_t (m^*) \geq \frac{1}{2} \right\}$.  Note that within the region $[\tau(k), \sigma(k))$, $\pi_t(m^*)\geq 1/2$ and in $[\sigma(k), \tau(k+1))$, $\pi_t(m^*)< 1/2$. We next analyze the rate at which $\pi_t(m^*)$ approaches 1. Define 
\begin{align*}
    p(s):= \pi_{t_s}(m^*) & \text{ where } & t_s=s+ \sum\limits_{i=0}^k(\tau(i+1)-\sigma(i))\\
    & \text{ for } & s\in \left[\sum\limits_{i=0}^k(\sigma(i)-\tau(i)),\sum\limits_{i=0}^{k+1}(\sigma(i)-\tau(i))  \right)
\end{align*}
Also let,
\[\sigma_s := \min\left\{t>0: \pi_{t+t_s}(m^*)>1/2     \right\} \]
and, \[\tau_s:=\min\left\{t>\sigma_s: \pi_{t+t_s}(m^*)\leq 1/2 \right\} \]
\begin{lemma}\label{lemma:noisy MAB expected bound on process }
The process $\{p(s)\}_{s\geq 1}$, is a submartingale. Further, $p(s)\to 1$, as $s\to \infty$. Finally,
\[\expect{p(s)} \geq 1- \frac{1}{1+ \alpha\frac{\Delta^2}{\left(\sum\limits_{m'\neq m^*}\Delta_{m'} \right)} s}. \]
\end{lemma}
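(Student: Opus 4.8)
The plan is to track the single coordinate $p(s)=\pi_{t_s}(m^*)$ through its complement $\tilde q(s):=1-p(s)=\sum_{m\neq m^*}\pi_{t_s}(m)$, and to show that, restricted to the ``good'' excursions where $\pi_t(m^*)\geq \tfrac12$, this complement obeys a quadratic contraction in conditional expectation. First I would observe that on a good interval $\pi_t(m^*)\geq\tfrac12$ forces $m_*(t)=\argmax_m\pi_t(m)=m^*$, so the update in Algorithm~\ref{alg:ProjectionFreePolicyGradient} uses $m^*$ as the reference arm. Using $\expect{R_m\ind_{m}\mid H(t)}=\pi_t(m)\kr_m$ together with the asynchronous step size $\eta=\alpha\pi_t(m)^2$, a one-line calculation gives for each $m\neq m^*$
\[
\expect{\pi_{t+1}(m)-\pi_t(m)\mid H(t)}=\alpha\,\pi_t(m)^2(\kr_m-\kr_{m^*})=-\alpha\,\pi_t(m)^2\Delta_m,
\]
and summing over $m\neq m^*$,
\[
\expect{\tilde q(t+1)-\tilde q(t)\mid H(t)}=-\alpha\sum_{m\neq m^*}\Delta_m\,\pi_t(m)^2\leq 0 .
\]
Since $\tilde q$ is bounded, this already shows $p$ is a bounded submartingale along the good excursions.

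Next I would convert the drift into a genuine quadratic contraction by Cauchy--Schwarz: writing $\tilde q=\sum_{m\neq m^*}\sqrt{\Delta_m}\,\pi(m)\cdot\Delta_m^{-1/2}$,
\[
\tilde q^2\leq\Big(\sum_{m\neq m^*}\Delta_m\pi(m)^2\Big)\Big(\sum_{m\neq m^*}\tfrac1{\Delta_m}\Big),\qquad \sum_{m\neq m^*}\tfrac1{\Delta_m}\leq\sum_{m\neq m^*}\tfrac{\Delta_m}{\Delta^2}=\frac{\sum_{m'\neq m^*}\Delta_{m'}}{\Delta^2},
\]
where the second bound uses $\Delta=\Delta_{min}\leq\Delta_m$. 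Hence $\sum_{m\neq m^*}\Delta_m\pi(m)^2\geq \big(\Delta^2/\sum_{m'\neq m^*}\Delta_{m'}\big)\tilde q^2$, so the drift obeys
\[
\expect{\tilde q(t+1)\mid H(t)}\leq \tilde q(t)-b\,\tilde q(t)^2,\qquad b:=\alpha\,\frac{\Delta^2}{\sum_{m'\neq m^*}\Delta_{m'}} .
\]
Taking full expectations and applying Jensen ($\expect{\tilde q^2}\geq(\expect{\tilde q})^2$) yields $u(s+1)\leq u(s)-b\,u(s)^2$ for $u(s):=\expect{\tilde q(s)}$. The elementary inequality already invoked in the proof of Lemma~\ref{lemma:NoisyMAB:finite rec time} then gives $1/u(s)\geq 1/u(0)+bs\geq 1+bs$ (using $u(0)\leq\tfrac12\leq1$), i.e. $u(s)\leq 1/(1+bs)$, which is exactly $\expect{p(s)}\geq 1-\big(1+\alpha\tfrac{\Delta^2}{\sum_{m'\neq m^*}\Delta_{m'}}s\big)^{-1}$. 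Finally $p(s)\to 1$ a.s.\ follows because a bounded submartingale converges a.s., and since $p(s)\leq 1$ while $\expect{p(s)}\to 1$, bounded convergence forces the a.s.\ limit to equal $1$.

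The main obstacle is not the drift algebra but the bookkeeping of the embedded process across excursions. By construction $p(s)$ concatenates only the good intervals $[\tau(k),\sigma(k))$, so each increment of $s$ at an interval boundary skips an entire bad excursion $[\sigma(k),\tau(k+1))$ on which $\pi_t(m^*)<\tfrac12$ and the reference arm may differ from $m^*$; the per-step recursion for $u(s)$ must therefore also be justified at these gluing points. To do so I would use (i) Lemma~\ref{lemma:NoisyMAB:finite rec time}, which guarantees each excursion returns to $\{\pi_t(m^*)\geq\tfrac12\}$ in a.s.-finite (indeed bounded-in-expectation) time, and (ii) the step-size control: each update moves $\pi_t(m^*)$ by at most $O(\alpha)$, so both the value just before an excursion and the re-entry value lie in $[\tfrac12,\tfrac12+O(\alpha))$. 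This keeps the submartingale increment across each glue nonnegative up to an $O(\alpha)$ term that is absorbed by taking $\alpha$ small (as is anyway required in Lemma~\ref{lemma:NoisyMAB:finite rec time}). Making this gluing estimate precise, rather than the drift computation, is where the real work lies.
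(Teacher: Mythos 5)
Your drift algebra is exactly the paper's: the conditional one-step drift $-\alpha\sum_{m\neq m^*}\Delta_m\pi_t(m)^2$, the quadratic lower bound (your Cauchy--Schwarz step gives the same constant as the paper's Jensen step), Jensen at the level of expectations, and the elementary recursion all appear in the paper's proof. Your route to $p(s)\to 1$ a.s.\ (bounded submartingale converges; $\expect{p(s)}\to 1$ and $p(s)\leq 1$ force the limit to equal $1$) is in fact cleaner than the paper's optional-stopping argument with the stopping times $\phi_a$. The gap is in the gluing step, which you rightly single out as the crux but then resolve with an argument that does not work.

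The paper's gluing argument is a pointwise domination, not an error estimate. By construction of the embedded process, either $\pi_{t_s+1}(m^*)\geq 1/2$, in which case $p(s+1)=\pi_{t_s+1}(m^*)$, or $\pi_{t_s+1}(m^*)<1/2$, in which case $p(s+1)$ is the re-entry value, which is $\geq 1/2>\pi_{t_s+1}(m^*)$. Hence $p(s+1)\geq\pi_{t_s+1}(m^*)$ almost surely, and therefore
\[
\expect{p(s+1)\given H(t_s)}\;\geq\;\expect{\pi_{t_s+1}(m^*)\given H(t_s)}\;=\;p(s)+\alpha\sum_{m\neq m^*}\Delta_m\pi_{t_s}(m)^2\;\geq\;p(s)+b\bigl(1-p(s)\bigr)^2,
\]
with no error term at all: skipping a bad excursion can only increase the value relative to the one-step update, because the process re-enters above the very level it dropped below. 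Your proposed fix --- an increment that is ``nonnegative up to an $O(\alpha)$ term that is absorbed by taking $\alpha$ small'' --- fails on two counts. First, $\alpha$ is a fixed parameter of the algorithm (constrained only by $\alpha<\Delta_{min}/(\kr(m^*)-\Delta_{min})$), so no limit $\alpha\to 0$ is available. Second, and more fundamentally, the drift you must preserve is $b(1-p(s))^2$ with $b=\alpha\Delta^2/\sum_{m'\neq m^*}\Delta_{m'}$, which is itself linear in $\alpha$ and vanishes quadratically as $p(s)\to 1$; an additive error of order $\alpha$ is therefore of the same order as the largest possible drift and swamps it whenever $1-p(s)$ is small, so neither the submartingale property nor the recursion $u(s+1)\leq u(s)-b\,u(s)^2$ --- and hence the $1/(1+bs)$ bound --- would survive. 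Note also that Lemma~\ref{lemma:NoisyMAB:finite rec time} is not what repairs the glue; it is needed only so that excursions terminate and $p(s)$ is well defined for all $s$. Replacing your gluing paragraph with the one-line domination above closes the gap.
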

\begin{proof}
We first observe that,
\begin{align*}
    p(s+1) &=
    \begin{cases}
    \pi_{t_s+1}(m^*) & if\, \pi_{t_s+1}(m^*) \geq 1/2\\
    \pi_{t_s+\tau+s}(m^*) & if\, \pi_{t_s+1}(m^*) < 1/2
    \end{cases}
\end{align*}
Since $\pi_{t_s+\tau_s}(m^*)\geq 1/2$, we have that,
\[p(s+1)\geq \pi_{t_s+1}(m^*) \text{ and } p(s)= \pi_{t_s}(m^*). \]
Since at times $t_s$, $\pi_{t_s}(m^*)>1/2$, we know that $m^*$ is the leading arm. Thus by the update step, for all $m\neq m^*$,
\[\pi_{t_s+1}(m) = \pi_{t_s}(m) +\alpha \pi_{t_s}(m)^2\left[\frac{\ind_mR_m(t_s)}{\pi_{t_s}(m)} - \frac{\ind_{m^*}R_{m^*}(t_s)}{\pi_{t_s}(m^*)} \right]. \]
Taking expectations both sides,
\[\expect{\pi_{t_s+1}(m)\given H(t_s)} - {\pi_{t_s}(m)} = \alpha\pi_{t_s}(m)^2(\kr_m-\kr_{m^*}) = -\alpha \Delta_m\pi_{t_s}(m)^2. \]
Summing over all $m\neq m^*$:
\[-\expect{\pi_{t_s+1}(m^*)\given H(t_s)} + {\pi_{t_s}(m^*)} = -\alpha\sum\limits_{m\neq m^*}\Delta_m\pi_{t_s}(m)^2. \]
By Jensen's inequality, 
\begin{align*}
\sum\limits_{m\neq m^*}\Delta_m\pi_{t_s}(m)^2 &= \left(\sum\limits_{m'\neq m^*}\Delta_{m'} \right) \sum\limits_{m\neq m^*}\frac{\Delta_m}{\left(\sum\limits_{m'\neq m^*}\Delta_{m'} \right)}\pi_{t_s}(m)^2 \\
&\geq \left(\sum\limits_{m'\neq m^*}\Delta_{m'} \right) \left(\sum\limits_{m\neq m^*}\frac{\Delta_m\pi_{t_s}(m)}{\left(\sum\limits_{m'\neq m^*}\Delta_{m'} \right)} \right)^2\\
&\geq \left(\sum\limits_{m'\neq m^*}\Delta_{m'} \right) \frac{\Delta^2\left(\sum\limits_{m\neq m^*}\pi_{t_s}(m)\right)^2}{\left(\sum\limits_{m'\neq m^*}\Delta_{m'} \right)^2}\\
&= \frac{\Delta^2\left(1-\pi_{t_s}(m^*)\right)^2}{\left(\sum\limits_{m'\neq m^*}\Delta_{m'} \right)}.
\end{align*}
Hence we get,
\[p(s) - \expect{p(s+1)\given H(t_s)} \leq  -\alpha \frac{\Delta^2\left(1-p(s)\right)^2}{\left(\sum\limits_{m'\neq m^*}\Delta_{m'} \right)} \implies \expect{p(s+1)\given H(t_s)}\geq p(s) +  \alpha \frac{\Delta^2\left(1-p(s)\right)^2}{\left(\sum\limits_{m'\neq m^*}\Delta_{m'} \right)}. \]
This implies immediately that $\{p(s)\}_{s\geq 1}$ is a submartingale.

Since, $\{p(s)\}$ is non-negative and bounded by 1, by Martingale Convergence Theorem, $\lim_{s\to \infty} p(s)$ exists. We will now show that the limit is 1.  Clearly, it is sufficient to show that $\limsup\limits_{s\to \infty} p(s)=1$. For $a>2$, let 
\[\phi_a:= \min\left\{s\geq 1: p(s)\geq \frac{a-1}{a} \right\}.  \]
As is shown in \cite{Denisov2020RegretAO}, it is sufficient to show $\phi_a<\infty$, with probability 1, because then one can define a sequence of stopping times for increasing $a$, each finite w.p. 1. which implies that $p(s)\to 1$. By the previous display, we have 
\[\expect{p(s+1)\given H(t_s)}- p(s) \geq  \alpha \frac{\Delta^2}{\left(\sum\limits_{m'\neq m^*}\Delta_{m'} \right)a^2} \]
as long as $p(s)\leq \frac{a-1}{a}$. Hence by applying Optional Stopping Theorem and rearranging we get,
\[\expect{\phi_a}\leq \lim_{s\to \infty} \expect{\phi_a \land s} \leq \frac{\left(\sum\limits_{m'\neq m^*}\Delta_{m'} \right)a^2}{\alpha \Delta}(1-\expect{p(1)}) <\infty. \]
Since $\phi_a$ is a non-negative random variable with finite expectation, $\phi_a<\infty a.s.$
Let $q(s)=1-p(s)$. We have :
\[\expect{q(s+1)} -\expect{q(s)} \leq -\alpha \frac{\Delta^2\left(q(s)\right)^2}{\left(\sum\limits_{m'\neq m^*}\Delta_{m'} \right)}. \]
By the useful result \ref{eq:useful derivative like inequality MAB}, we get,
\[\expect{q(s)} \leq \frac{\expect{q(1)}}{1+ \alpha\frac{\Delta^2\expect{q(1)}}{\left(\sum\limits_{m'\neq m^*}\Delta_{m'} \right)}s } \leq  \frac{1}{1+ \alpha\frac{\Delta^2}{\left(\sum\limits_{m'\neq m^*}\Delta_{m'} \right)} s}.\]
This completes the proof of the lemma.
\end{proof}
Finally we provide a lemma to tie the results above. We refer (Appendix A.5 \cite{Denisov2020RegretAO}) for the proof of this lemma. 
\begin{lemma}\label{lemma:technical lemm noisy MAB}
\[\sum\limits_{t\geq 1}\prob{\pi_t(m^*)<1/2} <\infty. \]
Also, with probability 1, $\pi_t(m^*)\to 1$, as $t\to \infty$.
\end{lemma}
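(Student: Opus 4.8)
The statement has two parts, and the plan is to derive the second from the first via Borel--Cantelli, so the real work is the summability $\sum_{t\ge1}\mathbb{P}(\pi_t(m^*)<1/2)<\infty$. By Tonelli this equals $\mathbb{E}[N]$, where $N:=\sum_{t\ge1}\mathbf{1}\{\pi_t(m^*)<1/2\}$ is the total time spent below $1/2$. First I would decompose $N$ into the successive below-$1/2$ excursions delimited by the stopping times $\sigma(k),\tau(k{+}1)$ already defined, so that $N=\sum_{k}\big(\tau(k{+}1)-\sigma(k)\big)$ and the number of excursions is $K$. The key structural fact is that while $\pi_t(m^*)\ge1/2$ the leading arm is $m^*$, and a one-step computation of the update rule of Algorithm~\ref{alg:ProjectionFreePolicyGradient} gives $|\pi_{t+1}(m^*)-\pi_t(m^*)|\le\alpha/2$; hence each excursion begins in the set $\cS=\{\pi:\tfrac{1-\alpha}{2}\le\pi(m^*)<\tfrac12\}$. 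By the strong Markov property and Lemma~\ref{lemma:NoisyMAB:finite rec time}, every excursion has conditional expected length at most $R:=\sup_{\pi\in\cS}\mathbb{E}[\tau\mid\pi_1=\pi]<\infty$, so a Wald-type summation over excursions yields $\mathbb{E}[N]\le R\,\mathbb{E}[K]$. The problem is thereby reduced to showing $\mathbb{E}[K]<\infty$.

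To bound $\mathbb{E}[K]$ I would work with the embedded submartingale $\{p(s)\}$ of Lemma~\ref{lemma:noisy MAB expected bound on process} and set $q(s):=1-p(s)\in[0,\tfrac12]$. That lemma's drift estimate rewrites as $\mathbb{E}[q(s{+}1)-q(s)\mid H(t_s)]\le -c\,q(s)^2$ with $c=\alpha\Delta^2/\big(\sum_{m'\neq m^*}\Delta_{m'}\big)$. Telescoping this inequality over $s$ and using $q\ge0$ gives $\sum_{s\ge1}\mathbb{E}[q(s)^2]\le \mathbb{E}[q(1)]/c\le 1/(2c)<\infty$, which is the crucial summability. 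Finally, an excursion can be initiated at embedded step $s$ only if the next original step crosses below $1/2$, which by the one-step bound forces $p(s)<1/2+\alpha/2$, i.e.\ $q(s)>\tfrac{1-\alpha}{2}$; hence $\mathbf{1}\{\text{excursion initiated at }s\}\le \big(\tfrac{2}{1-\alpha}\big)^2 q(s)^2$, and summing gives $\mathbb{E}[K]\le\big(\tfrac{2}{1-\alpha}\big)^2\sum_{s}\mathbb{E}[q(s)^2]<\infty$. Combined with $\mathbb{E}[N]\le R\,\mathbb{E}[K]$ this proves the first claim.

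For the second claim, the summability $\sum_t\mathbb{P}(\pi_t(m^*)<1/2)<\infty$ together with the first Borel--Cantelli lemma gives that almost surely $\pi_t(m^*)\ge1/2$ for all $t$ beyond some finite $T_0(\omega)$; equivalently, only finitely many excursions occur. On this event the embedded process eventually indexes consecutive real times, $t_s=t_{s_0}+(s-s_0)$ for large $s$, so $\pi_t(m^*)=p(s)$ for all large $t$, and the almost-sure limit $p(s)\to1$ from Lemma~\ref{lemma:noisy MAB expected bound on process} forces $\pi_t(m^*)\to1$ almost surely.

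I expect the main obstacle to be the bound $\mathbb{E}[K]<\infty$. The naive route---Markov's inequality applied to the first-moment decay $\mathbb{E}[q(s)]\lesssim 1/s$ coming from Lemma~\ref{lemma:noisy MAB expected bound on process}---only gives $\mathbb{P}(\text{excursion at }s)\lesssim 1/s$, whose sum \emph{diverges}; it is essential to pass to the second moment via the $-c\,q^2$ drift and exploit $\sum_s\mathbb{E}[q(s)^2]<\infty$. A secondary technical point is carefully reconciling the two time-scales (real time $t$ versus the embedded index $s$) so that the excursion-length bound from Lemma~\ref{lemma:NoisyMAB:finite rec time} and the submartingale analysis of $\{p(s)\}$ can be multiplied together, which is precisely where the strong Markov property and the landing-in-$\cS$ observation do the work.
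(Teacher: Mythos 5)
Your strategy is sound, and it is worth noting that the paper itself does not spell out a proof of this lemma at all --- it defers to Appendix~A.5 of \citet{Denisov2020RegretAO} --- so what you have written is a self-contained reconstruction from the paper's two preceding lemmas, and most of it checks out. The one-step computation showing that a down-crossing from $\{\pi_t(m^*)\ge 1/2\}$ must land in $\cS=\{\pi:\tfrac{1-\alpha}{2}\le\pi(m^*)<\tfrac{1}{2}\}$ is correct (the decrease is at most $\alpha\pi_t(m)R_m\le\alpha(1-\pi_t(m^*))\le\alpha/2$); the strong-Markov/Wald bound $\mathbb{E}[N]\le R\,\mathbb{E}[K]$ with $R=\sup_{\pi\in\cS}\mathbb{E}[\tau\mid\pi_1=\pi]<\infty$ from Lemma~\ref{lemma:NoisyMAB:finite rec time} is legitimate; and the crux --- telescoping the quadratic drift $\mathbb{E}[q(s+1)\mid H(t_s)]\le q(s)-c\,q(s)^2$ from the proof of Lemma~\ref{lemma:noisy MAB expected bound on process} to get $\sum_s\mathbb{E}[q(s)^2]\le\mathbb{E}[q(1)]/c\le 1/(2c)$, then converting this into $\mathbb{E}[K]<\infty$ because an excursion can only be initiated when $q(s)>\tfrac{1-\alpha}{2}$ --- is exactly right. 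Your remark that the first-moment bound $\mathbb{E}[q(s)]\lesssim 1/s$ is non-summable and therefore useless, so that one \emph{must} pass to second moments, identifies the real content of the lemma. The final step (Borel--Cantelli, then $\pi_t(m^*)=p(s)$ for all large $t$, then $p(s)\to 1$ a.s.) is also fine.

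There is, however, one genuine gap: the initial segment. Algorithm~\ref{alg:ProjectionFreePolicyGradient} initializes uniformly, $\pi_1(m^*)=1/M$, so for $M>2$ the process starts below $1/2$ and in general below $\tfrac{1-\alpha}{2}$, hence \emph{outside} $\cS$. Your claim that ``each excursion begins in $\cS$'' holds only for excursions created by a down-crossing from above $1/2$; the zeroth segment $\{t<\tau(1)\}$ is not of this form, its contribution $\mathbb{E}[\tau(1)-1]$ to $\mathbb{E}[N]$ is not captured by $R\,\mathbb{E}[K]$, and Lemma~\ref{lemma:NoisyMAB:finite rec time}, stated as a supremum over $\cS$ only, cannot bound it. Since the lemma asserts unconditional summability from $t=1$ under the algorithm's own initialization, you need a separate argument that the expected hitting time of $\{\pi(m^*)\ge 1/2\}$ from the uniform start is finite (even almost-sure finiteness of $\tau(1)$, which your Borel--Cantelli step tacitly uses, requires this). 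The natural patch is to extend the $1/\pi_t(m^*)$ supermartingale argument inside the proof of Lemma~\ref{lemma:NoisyMAB:finite rec time}: its negative-drift computation applies whenever $m^*$ is not the leading arm, and one must additionally note that on stretches where $m^*$ leads but $\pi_t(m^*)<1/2$ the drift of $\pi_t(m^*)$ is strictly positive, so such stretches cannot stall the hitting time. As written, your argument establishes $\sum_{t\ge \tau(1)}\mathbb{P}(\pi_t(m^*)<1/2)<\infty$ rather than the stated claim; everything else goes through once this boundary case is closed.
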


\underline{Proof of regret bound:}
Since $\kr^*-\kr(m)\leq 1$, we have by the definition of regret (see eq \ref{eq:regret definition MAB})
\[\cR(T) = \expect{\frac{1}{1-\gamma}\sum\limits_{t=1}^T\left(\sm \pi^*(m)\kr_m - \pi_t(m)\kr_m \right)}.\]
Here we recall that $\pi^*=e_{m^*}$, we have:
\begin{align*}
\cR(T) &= \frac{1}{1-\gamma}\expect{\sum\limits_{t=1}^T\left(\sm (\pi^*(m)\kr_m - \pi_t(m)\kr_m) \right)}\\
&=\frac{1}{1-\gamma}\expect{\sum\limits_{m=1}^M\left(\sum\limits_{t=1}^T (\pi^*(m)\kr_m - \pi_t(m)\kr_m) \right)}\\
&=\frac{1}{1-\gamma}\expect{\sum\limits_{t=1}^T\left(\kr^* -\sm \pi_t(m)\kr_m \right)}\\
&=\frac{1}{1-\gamma}\expect{\left(\sum\limits_{t=1}^T\kr^* -\sum\limits_{t=1}^T\sm \pi_t(m)\kr_m \right)}\\
&=\frac{1}{1-\gamma}\expect{\left(\sum\limits_{t=1}^T\kr^*(1-\pi_t(m^*)) -\sum\limits_{t=1}^T\sum\limits_{m\neq m^*} \pi_t(m)\kr_m \right)}\\
&=\frac{1}{1-\gamma}\expect{\left(\sum\limits_{t=1}^T\sum\limits_{m\neq m^*}\kr^*\pi_t(m) -\sum\limits_{t=1}^T\sum\limits_{m\neq m^*} \pi_t(m)\kr_m \right)}\\
&= \frac{1}{1-\gamma} \sum\limits_{m\neq m^*} (\kr^*-\kr_m)\expect{\sum\limits_{t=1}^T \pi_t(m)}.
\end{align*}
Hence we have,
\begin{align*}
\cR(T) &= \frac{1}{1-\gamma} \sum\limits_{m\neq m^*} (\kr^*-\kr_m)\expect{\sum\limits_{t=1}^T \pi_t(m)}\\
&\leq \frac{1}{1-\gamma} \sum\limits_{m\neq m^*}\expect{\sum\limits_{t=1}^T \pi_t(m)}\\
&=\frac{1}{1-\gamma} \expect{\sum\limits_{t=1}^T (1-\pi_t(m^*))}\\
\end{align*}
We analyze the following  term:
\[\expect{\sum\limits_{t=1}^T  (1-\pi_t(m^*))} =\expect{\sum\limits_{t=1}^T  (1-\pi_t(m^*))\ind\{\pi_t(m^*)\geq 1/2 \}}+\expect{\sum\limits_{t=1}^T  (1-\pi_t(m^*))\ind\{\pi_t(m^*)<1/2 \}} \]
\[=\expect{\sum\limits_{t=1}^T  (1-\pi_t(m^*))\ind\{\pi_t(m^*)\geq 1/2 \}} +C_1.  \]
where, $C_1:=\sum\limits_{t=1}^\infty \prob{\pi_t(m^*)<1/2}<\infty$ by Lemma \ref{lemma:technical lemm noisy MAB}. Next we observe that,
\[\expect{\sum\limits_{t=1}^T  (1-\pi_t(m^*))\ind\{\pi_t(m^*)\geq 1/2 \}} = \expect{\sum\limits_{s=1}^T  q(s)\ind\{\pi_t(m^*)\geq 1/2 \}} \leq \expect{\sum\limits_{s=1}^T  q(s)} \]
\[= \sum\limits_{t=1}^T \frac{1}{1+ \alpha\frac{\Delta^2}{\left(\sum\limits_{m'\neq m^*}\Delta_{m'} \right)} s} \leq \sum\limits_{t=1}^T \frac{{\left(\sum\limits_{m'\neq m^*}\Delta_{m'} \right)}}{ \alpha{\Delta^2} s} \]
\[ \leq \frac{{\left(\sum\limits_{m'\neq m^*}\Delta_{m'} \right)}}{ \alpha{\Delta^2} }\log T. \]

Putting things together, we get,
\begin{align*}
\cR(T) &\leq \frac{1}{1-\gamma} \left( \frac{{\left(\sum\limits_{m'\neq m^*}\Delta_{m'} \right)}}{ \alpha{\Delta^2} }\log T + C_1 \right)\\
&=\frac{1}{1-\gamma} \left( \frac{{\left(\sum\limits_{m'\neq m^*}\Delta_{m'} \right)}}{ \alpha{\Delta^2} }\log T \right) + C .
\end{align*}
This completes the proof of Theorem \ref{thm:regret noisy gradient}.

\end{proof}

\section{Proofs for MDPs}
%%%%%%%%%%%%%%%%%%%%%%%%%%%%%%%%%%%%%%%%%%%%%%%%%%%

%%%%%%%%%%%%%%%%%%%%%%%%%%%%%%%%%%%%%%%%%%%%%%%%%%%%
First we recall the policy gradient theorem.
\begin{theorem}[Policy Gradient Theorem \cite{Sutton2000}]\label{thm:policy gradient theorem}
\[\frac{\partial}{\partial \theta}V^{\pi_\theta} (\mu) = \frac{1}{1-\gamma}\sum\limits_{s\in \cS} d_\mu^{\pi_\theta}(s) \sa \frac{\partial \pi_\theta(a|s)}{\partial \theta}Q^{\pi_\theta}(s,a). \]
\end{theorem}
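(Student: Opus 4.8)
The plan is to derive the identity by differentiating the Bellman recursion for $V^{\pi_\theta}$ and then unrolling the resulting recursion in time. I would begin from the one-step decomposition $V^{\pi_\theta}(s) = \sa \pi_{\theta}(a|s)\, Q^{\pi_\theta}(s,a)$ together with $Q^{\pi_\theta}(s,a) = r(s,a) + \gamma\sum_{s'\in\cS}\tP(s'|s,a)V^{\pi_\theta}(s')$. Differentiating the first equation with respect to $\theta$ gives two terms by the product rule; since $r(s,a)$ is independent of $\theta$, the gradient of $Q^{\pi_\theta}$ picks up only the discounted continuation, namely $\dt Q^{\pi_\theta}(s,a) = \gamma\sum_{s'}\tP(s'|s,a)\,\dt V^{\pi_\theta}(s')$. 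Substituting yields the key recursion
\begin{equation*}
\dt V^{\pi_\theta}(s) = \Phi(s) + \gamma\sum_{s'\in\cS}P^{\pi_\theta}(s'|s)\,\dt V^{\pi_\theta}(s'),
\end{equation*}
where I set $\Phi(s):=\sa \dt\pi_\theta(a|s)\,Q^{\pi_\theta}(s,a)$ and $P^{\pi_\theta}(s'|s):=\sa\pi_\theta(a|s)\tP(s'|s,a)$ is the state-to-state kernel induced by $\pi_\theta$.

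Next I would unroll this recursion. Iterating it indefinitely expresses $\dt V^{\pi_\theta}(s)$ as the discounted sum $\sum_{t\geq 0}\gamma^t\sum_{s'}\mathbb{P}(s_t=s'\mid s_0=s,\pi_\theta)\,\Phi(s')$, i.e.\ a $t$-step transition weighting of the per-state quantity $\Phi$. Taking the expectation over the initial distribution $s_0\sim\mu$ and interchanging the (absolutely convergent) sums, I recognize the discounted state-visitation measure $d_\mu^{\pi_\theta}(s) = (1-\gamma)\sum_{t\geq 0}\gamma^t\,\mathbb{P}(s_t=s\mid s_0\sim\mu,\pi_\theta)$ from the glossary. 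This collapses $\sum_{t\geq0}\gamma^t\mathbb{P}(s_t=s)$ into $\tfrac{1}{1-\gamma}d_\mu^{\pi_\theta}(s)$, producing exactly
\begin{equation*}
\dt V^{\pi_\theta}(\mu) = \frac{1}{1-\gamma}\sum_{s\in\cS} d_\mu^{\pi_\theta}(s)\sa \dt\pi_\theta(a|s)\,Q^{\pi_\theta}(s,a).
\end{equation*}

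The only genuine technical point is justifying the two interchanges: differentiating termwise inside the infinite-horizon expansion, and swapping the order of the time sum with the state sum. Both are routine here because $\cS$ and $\cA$ are finite and $\gamma\in(0,1)$: the geometric factor $\gamma^t$ makes every series (and the series of its $\theta$-derivatives, which are bounded since $Q^{\pi_\theta}$ is bounded by $\tfrac{1}{1-\gamma}\max_{s,a}|r(s,a)|$ and $\pi_\theta$ is smooth in $\theta$) absolutely and uniformly convergent, so dominated convergence and Fubini apply without difficulty. I do not expect any real obstacle beyond this standard bookkeeping; the substantive content is entirely in setting up the recursion correctly and identifying the visitation measure, and the statement is the classical result of \citet{Sutton2000} specialized to our notation.
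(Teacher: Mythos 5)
Your proof is correct and complete: differentiating the Bellman fixed point $V^{\pi_\theta}(s)=\sum_a\pi_\theta(a|s)Q^{\pi_\theta}(s,a)$, unrolling the resulting linear recursion via $(I-\gamma P^{\pi_\theta})^{-1}=\sum_{t\geq0}\gamma^t(P^{\pi_\theta})^t$, and recognizing $\sum_{t\geq0}\gamma^t\mathbb{P}(s_t=s)=\tfrac{1}{1-\gamma}d_\mu^{\pi_\theta}(s)$ is exactly the classical argument, and your appeal to finiteness of $\cS,\cA$ and $\gamma\in(0,1)$ suffices to justify the interchanges. Note, though, that the paper itself supplies no proof of this statement: it is recalled as a black-box citation to \citet{Sutton2000} and immediately consumed to derive the Gradient Simplification lemma, which specializes the gradient to the softmax-over-controllers parameterization $\pi_\theta(a|s)=\sum_m\pi_\theta(m)K_m(s,a)$. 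Your derivation is therefore a faithful, self-contained reconstruction of the cited result rather than an alternative to anything in the paper; it also confirms the theorem applies to the paper's improper mixture class, since the only property of the parameterization your argument uses is differentiability of $\theta\mapsto\pi_\theta(a|s)$.
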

Let $s\in \cS$ and $m\in [m]$.
Let $\Q^{\pi_\theta}(s,m)\bydef \sum\limits_{a\in \cA}K_m(s,a)Q^{\pi_\theta}(s,a) $. Also let $\A(s,m) \bydef \Q(s,m)-V(s)$.
\begin{restatable}[Gradient Simplification]{lemma}{gradientofV}\label{lemma:Gradient simplification}
The softmax policy gradient with respect to the parameter $\theta\in \Real^M$ is $\frac{\partial}{\partial \theta_m}V^{\pi_{\theta}}(\mu) = \frac{1}{1-\gamma}\sum\limits_{s\in \cS}d_\mu^{\pi_\theta}(s) \pi_\theta(m)\tilde{A}(s,m)$,
where $\tilde{A}(s,m):=\Q(s,m)-V(s)$ and $\Q(s,m):= \sum\limits_{a\in \cA}K_m(s,a)Q^{\pi_\theta}(s,a) $, and $d_\mu^{\pi_\theta}(.)$ is the \textit{discounted state visitation measure} starting with an initial distribution $\mu$ and following policy $\pi_\theta$.
\end{restatable}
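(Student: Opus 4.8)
The plan is to derive this directly from the Policy Gradient Theorem (Theorem~\ref{thm:policy gradient theorem}) by exploiting the specific structure of our improper policy class, namely that $\pi_\theta(a|s) = \sm \pi_\theta(m) K_m(s,a)$ with $\pi_\theta(\cdot)$ a softmax distribution over the $M$ controllers. The only nontrivial ingredient is computing the derivative $\partial \pi_\theta(a|s)/\partial\theta_m$ and recognizing that the two resulting terms reassemble into $\Q$ and $V$.

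First I would differentiate the induced action distribution. Since $\pi_\theta(a|s)$ is linear in the controller weights $\pi_\theta(m')$, the chain rule gives $\frac{\partial \pi_\theta(a|s)}{\partial\theta_m} = \sum_{m'} \frac{\partial \pi_\theta(m')}{\partial\theta_m}\, K_{m'}(s,a)$. Plugging in the standard softmax derivative $\frac{\partial \pi_\theta(m')}{\partial\theta_m} = \pi_\theta(m')\left(\mathbbm{1}\{m'=m\} - \pi_\theta(m)\right)$ and collecting terms yields
\[
\frac{\partial \pi_\theta(a|s)}{\partial\theta_m} = \pi_\theta(m)\left(K_m(s,a) - \sum_{m'}\pi_\theta(m')K_{m'}(s,a)\right) = \pi_\theta(m)\left(K_m(s,a) - \pi_\theta(a|s)\right),
\]
where the last equality simply re-uses the definition of $\pi_\theta(a|s)$.

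Next I would substitute this expression into Theorem~\ref{thm:policy gradient theorem}. Because the factor $\pi_\theta(m)$ is independent of the state $s$ and action $a$, it pulls out of the action sum, leaving for each state $s$ the bracket $\sa K_m(s,a)Q^{\pi_\theta}(s,a) - \sa \pi_\theta(a|s) Q^{\pi_\theta}(s,a)$. The first sum is exactly $\Q^{\pi_\theta}(s,m)$ by definition, while the second is the standard identity $V^{\pi_\theta}(s) = \sa \pi_\theta(a|s)Q^{\pi_\theta}(s,a)$. Hence the bracket equals $\Q^{\pi_\theta}(s,m) - V^{\pi_\theta}(s) = \tilde{A}(s,m)$, and summing against $\frac{1}{1-\gamma} d_\mu^{\pi_\theta}(s)$ gives precisely the claimed identity.

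I expect no genuine obstacle here; the result is essentially a bookkeeping exercise. The one place to be careful is the softmax differentiation together with the collapse of the double sum $\sum_{m'}\pi_\theta(m')\pi_\theta(m)K_{m'}(s,a)$ back into $\pi_\theta(m)\pi_\theta(a|s)$, which is what allows the ``baseline'' $V^{\pi_\theta}(s)$ to appear and converts $\Q$ into the advantage $\tilde{A}$. This mirrors the familiar advantage form of the policy gradient, but with the score function now attached to \emph{controllers} rather than to state--action pairs, which is exactly what produces the state-independent $\psi_\theta(m)$ used later.
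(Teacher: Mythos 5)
Your proposal is correct and follows essentially the same route as the paper: both start from the Policy Gradient Theorem, use linearity of $\pi_\theta(a|s)=\sum_m \pi_\theta(m)K_m(s,a)$ together with the softmax derivative $\frac{\partial \pi_\theta(m')}{\partial\theta_m}=\pi_\theta(m')(\mathbbm{1}\{m'=m\}-\pi_\theta(m))$, pull the state-independent factor $\pi_\theta(m)$ out of the action sum, and identify the remaining bracket as $\Q^{\pi_\theta}(s,m)-V^{\pi_\theta}(s)=\tilde{A}(s,m)$. No gaps; the bookkeeping is exactly the paper's.
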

%\gradientofV*
%\vspace{-0.3cm}
The interpretation of $\tilde{A}(s,m)$ is the advantage of following controller $m$ at state $s$ and then following the policy $\pi_\theta$ for all time versus following $\pi_\theta$ always.
As mentioned in section \ref{sec:PG theory}, we proceed by proving smoothness of the $V^{\pi}$ function over the space $\Real^M$.
\begin{proof}
From the policy gradient theorem \ref{thm:policy gradient theorem}, we have:
\begin{align*}
    \frac{\partial}{\partial \theta_{m'}} V^{\pi_\theta}(\mu) &= \frac{1}{1-\gamma}\sum\limits_{s\in \cS} d_\mu^{\pi_\theta}(s) \sa \frac{\partial \pi_{\theta_{m'}}(a|s)}{\partial \theta}Q^{\pi_\theta}(s,a)\\
    &=\frac{1}{1-\gamma}\sum\limits_{s\in \cS} d_\mu^{\pi_\theta}(s) \sa \frac{\partial }{\partial {\theta_{m'}}}\left(\sm \pi_\theta(m)K_m(s,a)\right)Q^{\pi_\theta}(s,a)\\
    &=\frac{1}{1-\gamma}\sum\limits_{s\in \cS} d_\mu^{\pi_\theta}(s) \sm \sa \left(\frac{\partial }{\partial {\theta_{m'}}} \pi_\theta(m)\right)K_m(s,a)Q(s,a)\\
    &=\frac{1}{1-\gamma}\sum\limits_{s\in \cS} d_\mu^{\pi_\theta}(s) \sa \pi_{m'}\left(K_{m'}(s,a) - \sm \pi_mK_m(s,a) \right)Q(s,a)\\
    &=\frac{1}{1-\gamma}\sum\limits_{s\in \cS} d_\mu^{\pi_\theta}(s)\pi_{m'} \sa \left(K_{m'}(s,a) - \sm \pi_mK_m(s,a) \right)Q(s,a)\\
    &=\frac{1}{1-\gamma}\sum\limits_{s\in \cS} d_\mu^{\pi_\theta}(s)\pi_{m'} \left[\sa K_{m'}(s,a)Q(s,a) -\sa\sm \pi_mK_m(s,a)Q(s,a) \right]\\
    &=\frac{1}{1-\gamma}\sum\limits_{s\in \cS} d_\mu^{\pi_\theta}(s)\pi_{m'} \left[\Q(s,m') -V(s) \right]\\
    &=\frac{1}{1-\gamma}\sum\limits_{s\in \cS} d_\mu^{\pi_\theta}(s)\pi_{m'} \A^{\pi_\theta}(s,m').
\end{align*}
\end{proof}
\allowdisplaybreaks
%%%%%%%%%%%%%%%%%%%%%%%%%%%%%%%%%%%%%%%%%%%%%%%%%%%%%%%%%%%%%%%%%
\begin{restatable} {lemma}{smoothnesslemmaMDP}\label{lemma:smoothness of V}
${V}^{\pi_{\theta}}\left(\mu\right)$ is $\frac{7\gamma^2+4\gamma+5}{2\left(1-\gamma\right)^2}$-smooth.
\end{restatable}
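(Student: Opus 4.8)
The plan is to reduce $\beta$-smoothness to a uniform bound on the second directional derivative of the scalar map $\alpha \mapsto g(\alpha) := V^{\pi_{\theta+\alpha u}}(\mu)$ taken along an arbitrary unit direction $u \in \mathbb{R}^M$. Since $V^{\pi_\theta}(\mu)$ is a smooth (indeed real-analytic) function of $\theta$, the statement that $\nabla_\theta V^{\pi_\theta}$ is $\beta$-Lipschitz is equivalent to $\sup_{\theta}\sup_{\|u\|_2=1}|g''(0)|\le\beta$. So it suffices to bound $|g''(0)|$ for an arbitrary base point $\theta$ and unit $u$; this also matches the template already used for the bandit case, where $|y^\top(\nabla^2 V)y|$ is bounded directly.

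First I would record the softmax derivative identities along $u$. Writing $\bar u := \mathbb{E}_{m\sim\pi_\theta}[u_m]$ and $\mathrm{Var}_{\pi_\theta}(u):=\sum_m\pi_\theta(m)(u_m-\bar u)^2$, one has $\tfrac{d}{d\alpha}\pi_\theta(m)=\pi_\theta(m)(u_m-\bar u)$ and $\tfrac{d^2}{d\alpha^2}\pi_\theta(m)=\pi_\theta(m)\big[(u_m-\bar u)^2-\mathrm{Var}_{\pi_\theta}(u)\big]$. Because $\sum_a K_m(s,a)=1$, these yield the uniform-in-$s$ bounds $\sum_a\big|\tfrac{d}{d\alpha}\pi_\theta(a\mid s)\big|\le\sqrt{\mathrm{Var}_{\pi_\theta}(u)}$ and $\sum_a\big|\tfrac{d^2}{d\alpha^2}\pi_\theta(a\mid s)\big|\le 2\,\mathrm{Var}_{\pi_\theta}(u)$; the single inequality $\mathrm{Var}_{\pi_\theta}(u)\le\mathbb{E}_{\pi_\theta}[u^2]\le\|u\|_2^2=1$ is what ties every subsequent estimate to the direction $u$ and ultimately discharges it.

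Next I would push these through the resolvent form of the value. Let $P(\alpha)$ and $r(\alpha)$ be the state-transition matrix and expected-reward vector of $\pi_{\theta+\alpha u}$, so $V(\alpha)=M(\alpha)r(\alpha)$ with $M(\alpha)=(I-\gamma P(\alpha))^{-1}$ and $g(\alpha)=\mu^\top V(\alpha)$. Differentiating twice gives $V'=M(\gamma P'V+r')$ and $V''=M\big(2\gamma P'V'+\gamma P''V+r''\big)$. Using $\|M\|_\infty\le\tfrac{1}{1-\gamma}$, $\mu^\top M=\tfrac{1}{1-\gamma}(d_\mu^{\pi_\theta})^\top$, $\|V\|_\infty\le\tfrac{1}{1-\gamma}$, and the row-sum bounds of the previous paragraph (which control $\|P'\|_\infty,\|r'\|_\infty,\|P''\|_\infty,\|r''\|_\infty$), one bounds $|g''(0)|$ term by term. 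Equivalently, one can differentiate the closed form $\tfrac{\partial V}{\partial\theta_m}=\tfrac{1}{1-\gamma}\sum_s d_\mu^{\pi_\theta}(s)\pi_\theta(m)\tilde{A}^{\pi_\theta}(s,m)$ from the Gradient Simplification lemma, producing Hessian terms from differentiating $d_\mu^{\pi_\theta}$, $\pi_\theta(m)$, and $\tilde{A}^{\pi_\theta}(s,m)$ in turn.

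The main obstacle is the constant accounting, concentrated in the term carrying the derivative of the value function itself (the $P'V'$ term, equivalently the $\partial_\theta\tilde{A}$ term, which re-differentiates $Q^{\pi_\theta}$ and hence the resolvent). A crude treatment bounds $\|V'\|_\infty\le\tfrac{\sqrt{\mathrm{Var}_{\pi_\theta}(u)}}{(1-\gamma)^2}$ and yields only an $O\big((1-\gamma)^{-3}\big)$ coefficient. To land the stated $\tfrac{7\gamma^2+4\gamma+5}{2(1-\gamma)^2}$ one must keep the $\gamma$'s explicit and exploit extra structure: most usefully, the contracted measure $(d_\mu^{\pi_\theta})^\top P'$ has \emph{zero total mass} (since $\sum_a\tfrac{d}{d\alpha}\pi_\theta(a\mid s)=0$), so only the span (oscillation) of $V'$ rather than its sup-norm enters, and the resolvent contracts $\ell_1$-mass on the left. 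Combining the three contributions $2\gamma P'V'$, $\gamma P''V$, and $r''$ with these refinements, and using $\mathrm{Var}_{\pi_\theta}(u)\le 1$ to remove $u$, should produce the quadratic $7\gamma^2+4\gamma+5$ in the numerator over $(1-\gamma)^2$, giving $|g''(0)|\le\tfrac{7\gamma^2+4\gamma+5}{2(1-\gamma)^2}$ uniformly, which is the claimed smoothness constant.
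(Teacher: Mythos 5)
Your overall route is the same as the paper's: parameterize along a ray $\theta+\alpha u$, bound the second directional derivative of $\alpha\mapsto V^{\pi_{\theta+\alpha u}}(\mu)$, and expand it through the resolvent as $V''=M\left(2\gamma P'V'+\gamma P''V+r''\right)$ --- this is precisely the paper's decomposition into its four terms $T_1,\dots,T_4$ (note $2\gamma MP'V'=T_1+T_3$). Your softmax identities and the variance-based bounds $\sum_a\left|\tfrac{d}{d\alpha}\pi_\theta(a\mid s)\right|\le\sqrt{\mathrm{Var}_{\pi_\theta}(u)}$ and $\sum_a\left|\tfrac{d^2}{d\alpha^2}\pi_\theta(a\mid s)\right|\le 2\,\mathrm{Var}_{\pi_\theta}(u)$ are correct, and in fact slightly tighter than the paper's corresponding bounds $2\|u\|_2$ and $3\|u\|_2^2$.

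The gap is your final paragraph, which is where the result would actually have to be proved: you never carry out the constant accounting, and the refinement you lean on cannot deliver what you want. The zero-row-sum structure of $P'$ does let you replace $\|V'\|_\infty$ by (half) the span of $V'$, but $\mathrm{span}(V')$ is itself of order $(1-\gamma)^{-2}$: the resolvent $M$ is only non-expansive up to a factor $1/(1-\gamma)$ on the span seminorm, it does not gain a factor of $(1-\gamma)$, so the term $MP'V'$ remains of order $(1-\gamma)^{-3}$ and no bookkeeping turns it into $(1-\gamma)^{-2}$. Indeed, the numerator $7\gamma^2+4\gamma+5$ is exactly what the \emph{crude} term-by-term bound produces over the cube: $\frac{8\gamma^2}{(1-\gamma)^3}+\frac{7\gamma}{(1-\gamma)^2}+\frac{5/2}{1-\gamma}=\frac{7\gamma^2+4\gamma+5}{2(1-\gamma)^3}$. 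This is what the paper's own proof concludes --- its appendix derivation ends with the constant $\frac{7\gamma^2+4\gamma+5}{2(1-\gamma)^3}$ --- so the exponent $2$ in the lemma statement you were asked to prove is inconsistent with the paper's own proof (evidently a typo, propagated into the step sizes in the main text). The honest completion of your argument is therefore to drop the speculative refinement, run the term-by-term bounds you already set up with the $\gamma$'s kept explicit, and obtain the smoothness constant $\frac{7\gamma^2+4\gamma+5}{2(1-\gamma)^3}$; the $(1-\gamma)^{-2}$ scaling is not something your proposed mechanism --- or the paper --- establishes.
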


\begin{proof}
\allowdisplaybreaks
The proof uses ideas from \cite{Agarwal2020} and \cite{Mei2020}. Let $\theta_{\alpha} = \theta+\alpha u$, where $u\in \Real^M$, $\alpha\in \Real$. For any $s\in \cS$, 
\begin{align*}
\allowdisplaybreaks
    \sum\limits_a \abs{\frac{\partial\pi_{\theta_{\alpha}}(a| s)}{\partial \alpha}\Big|_{\alpha=0}} &= \sum\limits_a \abs{\innprod{\frac{\partial\pi_{\theta_{\alpha}}(a| s)}{\partial \theta_{\alpha}}\Big|_{\alpha=0}, \frac{\partial\theta_\alpha}{\partial\alpha}}} = \sum\limits_a \abs{\innprod{\frac{\partial\pi_{\theta_{\alpha}}(a| s)}{\partial \theta_{\alpha}}\Big|_{\alpha=0}, u}} \\
    &=\sum\limits_a\abs{\sum\limits_{m''=1}^M\sum\limits_{m=1}^M\pi_{\theta_{m''}}\left(\ind_{mm''}-\pi_{\theta_m}\right)K_m(s,a)u(m'') }\\
    &= \sum\limits_a\abs{\sum\limits_{m''=1}^M\pi_{\theta_{m''}}\left(K_{m''}(s,a)u(m'')-\sum\limits_{m=1}^M K_m(s,a)u(m'')\right) }\\
    &\leq \sum\limits_a\sum\limits_{m''=1}^M\pi_{\theta_{m''}}K_{m''}(s,a)\abs{u(m'')} + \sum\limits_a\sum\limits_{m''=1}^M\sum\limits_{m=1}^M\pi_{\theta_{m''}}\pi_{\theta_{m}}K_{m}(s,a)\abs{u(m'')} \\
    &= \sum\limits_{m''=1}^M\pi_{\theta_{m''}}\abs{u(m'')}\underbrace{\sum\limits_a K_{m''}(s,a)}_{=1} + \sum\limits_{m''=1}^M\sum\limits_{m=1}^M\pi_{\theta_{m''}}\pi_{\theta_{m}}\abs{u(m'')}\underbrace{\sum\limits_a K_{m}(s,a) }_{=1}\\
    &=\sum\limits_{m''=1}^M\pi_{\theta_{m''}}\abs{u(m'')}+ \sum\limits_{m''=1}^M\sum\limits_{m=1}^M\pi_{\theta_{m''}}\pi_{\theta_{m}}\abs{u(m'')}\\
    &=2\sum\limits_{m''=1}^M \pi_{\theta_{m''}}\abs{u(m'')} \leq 2\norm{u}_2.
\end{align*}
Next we bound the second derivative.
\allowdisplaybreaks
\[\sum\limits_a \abs{\frac{\partial^2 \pi_{\theta_{\alpha}}(a\given s)}{\partial\alpha^2}\given_{\alpha=0}} = \sum
\limits_a \abs{\innprod{\frac{\partial}{\partial\theta_\alpha}\frac{\partial \pi_{\theta_{\alpha}}(a\given s)}{\partial\alpha}\given_{\alpha=0},u}}=\sum
\limits_a \abs{\innprod{\frac{\partial^2 \pi_{\theta_{\alpha}}(a\given s)}{\partial\alpha^2}\given_{\alpha=0}u,u}}.\]
Let $H^{a,\theta}\bydef \frac{\partial^2 \pi_{\theta_{\alpha}}(a\given s)}{\partial \theta^2} \in \Real^{M\times M}$. We have,
\begin{align*}
\allowdisplaybreaks
    H^{a,\theta}_{i,j} &= \frac{\partial}{\partial \theta_j}\left(\sum\limits_{m=1}^{M} \pi_{\theta_i}\left(\ind_{mi}-\pi_{\theta_m}\right)K_m(s,a) \right)\\
    &= \frac{\partial}{\partial \theta_j}\left(\pi_{\theta_i}K_i(s,a)-\sum\limits_{m=1}^{M} \pi_{\theta_i}\pi_{\theta_m}K_m(s,a) \right)\\
    &=\pi_{\theta_j}(\ind_{ij}-\pi_{\theta_i})K_i(s,a) -\sum\limits_{m=1}^M K_m(s,a) \frac{\partial \pi_{\theta_i}\pi_{\theta_m}}{\partial\theta_j}\\
    &= \pi_j(\ind_{ij}-\pi_i)K_i(s,a) - \sum\limits_{m=1}^M K_m(s,a)\left(\pi_j(\ind_{ij}-\pi_i)\pi_m + \pi_i\pi_j(\ind_{mj}-\pi_m)\right)\\
    &= \pi_j\left( (\ind_{ij}-\pi_i)K_i(s,a) - \sum\limits_{m=1}^M \pi_m(\ind_{ij}-\pi_i)K_m(s,a) -\sum\limits_{m=1}^M\pi_i(\ind_{mj}-\pi_m)K_m(s,a)   \right).
\end{align*}

Plugging this into the second derivative, we get,
\begin{align*}
\allowdisplaybreaks
\begin{split}
    & \abs{\innprod{\frac{\partial^2}{\partial\theta^2}\pi_\theta(a|s)u,u}}\\
    &= \abs{\sum\limits_{j=1}^M\sum\limits_{i=1}^MH_{i,j}^{a,\theta}u_iu_j}\\
    &=\abs{\sum\limits_{j=1}^M\sum\limits_{i=1}^M  \pi_j\left( (\ind_{ij}-\pi_i)K_i(s,a) - \sum\limits_{m=1}^M \pi_m(\ind_{ij}-\pi_i)K_m(s,a) -\sum\limits_{m=1}^M\pi_i(\ind_{mj}-\pi_m)K_m(s,a)   \right)u_iu_j  }\\
    &=\Bigg|\sum\limits_{i=1}^M \pi_iK_i(s,a)u_i^2 - \si\sj\pi_i\pi_jK_i(s,a)u_iu_j - \si\sm\pi_i\pi_mK_m(s,a)u_i^2\\ &\qquad +\si\sj\sm \pi_i\pi_j\pi_m K_m(s,a)u_iu_j - \si\sj\pi_i\pi_j K_j(s,a)u_iu_j  \\
    &\qquad+\si\sj\sm \pi_i\pi_j\pi_m K_m(s,a)u_iu_j \Bigg|\\
    &= \Bigg| \si\pi_iK_i(s,a) u_i^2 -2\si\sj\pi_i\pi_jK_i(s,a)u_iu_j\\
    &\qquad - \si\sm\pi_i\pi_mK_m(s,a) u_i^2 + 2\si\sj\sm\pi_i\pi_j\pi_mK_m(s,a)u_iu_j\Bigg|\\
    &= \Bigg|\si\pi_iu_i^2\left(K_i(s,a) -\sm\pi_mK_m(s,a) \right) -2\si\pi_iu_i\sj\pi_ju_j\left( K_i(s,a) -\sm\pi_mK_m(s,a) \right)\Bigg|\\
    &\leq \si\pi_iu_i^2\underbrace{\abs{K_i(s,a) -\sm\pi_mK_m(s,a)}}_{\leq 1} +2 \si\pi_i\abs{u_i} \sj\pi_j\abs{u_j} \underbrace{\abs{K_i(s,a) -\sm\pi_mK_m(s,a)}}_{\leq 1}\\
    &\leq \norm{u}_2^2 +2 \si\pi_i\abs{u_i}\sj\pi_j\abs{u_j}\leq 3\norm{u}^2_2.
 \end{split}
\end{align*}

The rest of the proof is similar to \cite{Mei2020} and we include this for completeness. Define $P(\alpha)\in \Real^{S\times S}$, where $\forall (s,s'),$
\[\left[P(\alpha) \right]_{(s,s')} = \sa \pi_{\theta_\alpha} (a\given s).\tP(s'|s,a). \]
The derivative w.r.t. $\alpha$ is,
\[ \left[\frac{\partial}{\partial \alpha} P(\alpha)\Big|_{\alpha=0} \right]_{(s,s') } = \sa \left[\frac{\partial}{\partial \alpha}\pi_{\theta_\alpha} (a\given s)\Big|_{\alpha=0}\right].\tP(s'|s,a).\]
For any vector $x\in \Real^S$, 
\[\left[\frac{\partial}{\partial \alpha} P(\alpha)\Big|_{\alpha=0}x \right]_{(s) } = \sum\limits_{s'\in \cS}\sa \left[\frac{\partial}{\partial \alpha}\pi_{\theta_\alpha} (a\given s)\Big|_{\alpha=0}\right].\tP(s'|s,a). x(s'). \]

The $l_\infty$ norm can be upper-bounded as,
\begin{align*}
    \norm{\frac{\partial}{\partial \alpha} P(\alpha)\Big|_{\alpha=0}x}_\infty &= \max\limits_{s\in \cS}\abs{ \sum\limits_{s'\in \cS}\sa \left[\frac{\partial}{\partial \alpha}\pi_{\theta_\alpha} (a\given s)\Big|_{\alpha=0}\right].\tP(s'|s,a). x(s') }\\
    &\leq \max\limits_{s\in \cS} \sum\limits_{s'\in \cS}\sa \abs{\frac{\partial}{\partial \alpha}\pi_{\theta_\alpha} (a\given s)\Big|_{\alpha=0}}.\tP(s'|s,a). \norm{x}_\infty\\
    &\leq 2\norm{u}_2\norm{x}_\infty.
\end{align*}
Now we find the second derivative,
\begin{align*}
    \left[\frac{\partial^2P(\alpha)}{\partial\alpha^2} \Big|_{\alpha=0}\right]_{(s,s')} = \sa \left[\frac{\partial^2\pi_{\theta_\alpha}(a|s)}{\partial \alpha^2}\Big|_{\alpha=0} \right]\tP(s'|s,a)
\end{align*}
taking the $l_\infty$ norm,
\begin{align*}
\norm{\left[\frac{\partial^2P(\alpha)}{\partial\alpha^2} \Big|_{\alpha=0}\right]x}_{\infty} &= \max_s \abs{\sum\limits_{s'\in \cS}\sa \left[\frac{\partial^2\pi_{\theta_\alpha}(a|s)}{\partial \alpha^2}\Big|_{\alpha=0} \right]\tP(s'|s,a)x(s')}\\
&\leq \max_s \sum\limits_{s'\in \cS}\left[\abs{\frac{\partial^2\pi_{\theta_\alpha}(a|s)}{\partial \alpha^2}\Big|_{\alpha=0}} \right]\tP(s'|s,a)\norm{x}_\infty \leq 3\norm{u}_2\norm{x}_\infty.
\end{align*}
Next we observe that the value function of $\pi_{\theta_\alpha}:$
\[V^{\pi_{\theta_\alpha}}(s) = \underbrace{\sa \pi_{\theta_{\alpha}}(a|s)r(s,a)}_{r_{\theta_\alpha}} + \gamma \sa \pi_{\theta_{\alpha}}(a|s) \sum\limits_{s'\in \cS} \tP(s'|s,a)V^{\pi_{\theta_{\alpha}}}(s').  \]
In matrix form,
\begin{align*}
    V^{\pi_{\theta_{\alpha}}} = r_{\theta_{\alpha}} + \gamma P(\alpha)V^{\pi_{\theta_{\alpha}}}\\
    \implies \left(Id-\gamma P(\alpha) \right)V^{\pi_{\theta_{\alpha}}} = r_{\theta_{\alpha}}\\
    V^{\pi_{\theta_{\alpha}}} = \left(Id-\gamma P(\alpha) \right)^{-1}r_{\theta_{\alpha}}.
\end{align*}
Let $M(\alpha):=\left(Id-\gamma P(\alpha) \right)^{-1}= \sum\limits_{t=0}^\infty \gamma^t [P(\alpha)]^t $.
Also, observe that 
\[\mathbf{1} = \frac{1}{1-\gamma} \left(Id-\gamma P(\alpha) \right) \mathbf{1}\implies M(\alpha)\mathbf{1}=\frac{1}{1-\gamma}\mathbf{1}.\]
\[\implies \forall i \norm{[M(\alpha)]_{i,:}}_1 = \frac{1}{1-\gamma}\]
where $[M(\alpha)]_{i,:}$ is the $i^{th}$ row of $M(\alpha)$.
Hence for any vector $x\in \Real^S$, $\norm{M(\alpha)x}_\infty\leq \frac{1}{1-\gamma} \norm{x}_\infty.$

By assumption \ref{assumption:bounded reward}, we have $\norm{r_{\theta_\alpha}}_{\infty} = \max_s \abs{r_{\theta_\alpha}(s)} \leq 1$. Next we find the derivative of $r_{\theta_\alpha}$ w.r.t $\alpha$.
\begin{align*}
    \abs{\frac{\partial r_{\theta_\alpha}(s)}{\partial \alpha}} &= \abs{\left(\frac{\partial r_{\theta_\alpha}(s)}{\partial \theta_\alpha}\right)\transpose \frac{\partial \theta_\alpha}{\partial \alpha}}\\
    &\leq \abs{\sum\limits_{m''=1}^M\sm \sa \pi_{\theta_\alpha}(m'') (\ind_{mm''}-\pi_{\theta_\alpha}(m))K_m(s,a) r(s,a)u(m'')  }\\
    &=\abs{\sum\limits_{m''=1}^M \sa \pi_{\theta_\alpha}(m'')K_{m''}(s,a) r(s,a)u(m'') - \sum\limits_{m''=1}^M \sm\sa \pi_{\theta_\alpha}(m'')\pi_{\theta_\alpha}(m)K_m(s,a) r(s,a)u(m'')}\\
    %&\leq \abs{\sum\limits_{m''=1}^M \sa \pi_{\theta_\alpha}(m'')K_{m''}(s,a) u(m'') - \sum\limits_{m''=1}^M \sm\sa \pi_{\theta_\alpha}(m'')\pi_{\theta_\alpha}(m)K_m(s,a) u(m'')}\\
    &\leq \abs{\sum\limits_{m''=1}^M \sa \pi_{\theta_\alpha}(m'')K_{m''}(s,a) r(s,a) - \sum\limits_{m''=1}^M \sm\sa \pi_{\theta_\alpha}(m'')\pi_{\theta_\alpha}(m)K_m(s,a) r(s,a)}\norm{u}_\infty \leq \norm{u}_2.\\
\end{align*}
Similarly, we can calculate the upper-bound on second derivative,
\begin{align*}
    \norm{\frac{\partial r_{\theta_\alpha}}{\partial \alpha^2}}_\infty &= \max_s \abs{ \frac{\partial r_{\theta_\alpha}(s)}{\partial \alpha^2} }\\
    &=\max_s \abs{\left( \frac{\partial}{\partial \alpha} \left\{ \frac{\partial r_{\theta_\alpha}(s)}{\partial \alpha}\right\} \right)\transpose \frac{\partial \theta_\alpha}{\partial \alpha}}\\
    &= \max_s \abs{\left( \frac{\partial^2 r_{\theta_\alpha}(s)}{\partial \alpha^2}   \frac{\partial \theta_\alpha}{\partial \alpha}  \right)\transpose   \frac{\partial \theta_\alpha}{\partial \alpha} }
    &\leq 5/2 \norm{u}_2^2.
\end{align*}
Next, the derivative of the value function w.r.t $\alpha$ is given by,
\[\frac{\partial V^{\pi_{\theta_\alpha}}(s)}{\partial \alpha} = \gamma e_s\transpose M(\alpha) \frac{\partial P(\alpha)}{\partial\alpha}M(\alpha)r_{\theta_\alpha} + e_s\transpose M(\alpha) \frac{\partial r_{\theta_\alpha}}{\partial \alpha}.\]
And the second derivative,
\begin{align*}
\begin{split}
    \frac{\partial^2 V^{\pi_{\theta_\alpha}}(s)}{\partial \alpha^2} &= \underbrace{2\gamma^2e_s\transpose M(\alpha)\frac{\partial P(\alpha)}{\partial\alpha} M(\alpha)\frac{\partial P(\alpha)}{\partial\alpha}M(\alpha)r_{\theta_\alpha}}_{T1} + \underbrace{\gamma e_s\transpose M(\alpha) \frac{\partial^2 P(\alpha)}{\partial\alpha^2} M(\alpha)r_{\theta_\alpha}}_{T2}\\& + \underbrace{2\gamma  e_s\transpose M(\alpha)\frac{\partial P(\alpha)}{\partial\alpha} M(\alpha)\frac{\partial r_{\theta_\alpha}}{\partial \alpha}}_{T3}+ \underbrace{e_s\transpose M(\alpha)\frac{\partial^2 r_{\theta_\alpha}}{\partial\alpha^2}}_{T4}.
\end{split}
\end{align*}
We use the above derived bounds to bound each of the term in the above display. The calculations here are same as shown for Lemma 7 in \cite{Mei2020}, except for the particular values of the bounds. Hence we directly, mention the final bounds that we obtain and refer to \cite{Mei2020} for the detailed but elementary calculations.
\begin{align*}
    \abs{T1} &\leq \frac{4}{(1-\gamma)^3} \norm{u}_2^2\\
    \abs{T2} &\leq \frac{3}{(1-\gamma)^2} \norm{u}_2^2\\
    \abs{T3} &\leq \frac{2}{(1-\gamma)^2} \norm{u}_2^2\\
    \abs{T4} &\leq \frac{5/2}{(1-\gamma)} \norm{u}_2^2.
\end{align*}
Combining the above bounds we get,
\[\abs{\frac{\partial^2 V^{\pi_{\theta_\alpha}}(s)}{\partial \alpha^2}\Big|_{\alpha=0}}\leq \left(\frac{8\gamma^2}{(1-\gamma)^3} + \frac{3\gamma}{(1-\gamma)^2} + \frac{4\gamma}{(1-\gamma)^2} + \frac{5/2}{(1-\gamma)} \right)\norm{u}_2^2 \]
\[=\frac{7\gamma^2+4\gamma+5}{2(1-\gamma)^3}\norm{u}_2. \]
Finally, let $y\in \Real^M$ and fix a $\theta \in \Real^M$:
\begin{align*}
    \abs{y\transpose\frac{\partial^2 V^{\pi_{\theta}}(s)}{\partial\theta^2}y} &=\abs{\frac{y}{\norm{y}_2}\transpose\frac{\partial^2 V^{\pi_{\theta}}(s)}{\partial\theta^2}\frac{y}{\norm{y}_2}}.\norm{y}_2^2\\
    &\leq \max\limits_{\norm{u}_2=1}\abs{\innprod{\frac{\partial^2 V^{\pi_{\theta}}(s)}{\partial\theta^2}u,u }}.\norm{y}_2^2\\
    &= \max\limits_{\norm{u}_2=1}\abs{\innprod{\frac{\partial^2 V^{\pi_{\theta_\alpha}}(s)}{\partial\theta_\alpha^2}\Big|_{\alpha=0}\frac{\partial\theta_\alpha}{\partial \alpha},\frac{\partial\theta_\alpha}{\partial \alpha} }}.\norm{y}_2^2\\
    &= \max\limits_{\norm{u}_2=1}\abs{\frac{\partial^2V^{\pi_{\theta_\alpha}}(s)}{\partial\alpha^2}\Big|_{\alpha=0}}.\norm{y}_2^2\\
    &\leq \frac{7\gamma^2+4\gamma+5}{2(1-\gamma)^3}\norm{y}_2^2.
\end{align*}
Let $\theta_\xi:= \theta + \xi(\theta'-\theta)$ where $\xi\in [0,1]$. By Taylor's theorem $\forall s,\theta,\theta'$,
\[\abs{V^{\pi_{\theta'}}(s) -V^{\pi_{\theta}}(s) -\innprod{\frac{\partial V^{\pi_{\theta}}(s)}{\partial \theta} } } = \frac{1}{2}.\abs{(\theta'-\theta)\transpose \frac{\partial^2V^{\pi_{\theta_\xi}}(s) }{\partial \theta_\xi^2}(\theta'-\theta) } \]
\[\leq \frac{7\gamma^2+4\gamma+5}{4(1-\gamma)^3}\norm{\theta'-\theta}_2^2. \]
Since $V^{\pi_\theta}(s)$ is $\frac{7\gamma^2+4\gamma+5}{2(1-\gamma)^3}$ smooth for every $s$, $V^{\pi_\theta}(\mu)$ is also $\frac{7\gamma^2+4\gamma+5}{2(1-\gamma)^3}-$ smooth.
\end{proof}

%%%%%%%%%%%%%%%%%%%%%%%%%%%%%%%%%%%%%%%%%%%%%%%%%%%%%%%%%%%%%%%%%%
\allowdisplaybreaks
\begin{lemma}[Value Difference Lemma-1]\label{lemma:value diffence lemma}
For any two policies $\pi$ and $\pi'$, and for any state $s\in \cS$, the following is true.
\[V^{\pi'}(s)-V^{\pi}(s) = \frac{1}{1-\gamma} \sum\limits_{s'\in \cS}d_s^{\pi'}(s')\sm \pi'_m \tilde{A}(s',m).\]
\end{lemma}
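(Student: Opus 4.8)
The plan is to reduce this to the classical \emph{performance difference lemma} of \citet{kakade2002approximately} and then absorb the improper-mixture structure with a single algebraic identity. The whole statement is just the performance difference lemma rewritten so that the per-step advantage is taken over \emph{controllers} rather than over actions, so the real content is the reduction $\sum_a \pi'(a\mid s')A^\pi(s',a) = \sm \pi'_m \tilde A^\pi(s',m)$.

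First I would establish the telescoping identity
\[
V^{\pi'}(s)-V^{\pi}(s) = \mathbb{E}^{\pi'}_{s}\left[\sum_{t=0}^\infty \gamma^t A^{\pi}(s_t,a_t)\right],
\]
where the expectation is over trajectories generated by $\pi'$ started from $s$. This follows by writing $A^\pi(s_t,a_t)=Q^\pi(s_t,a_t)-V^\pi(s_t)=r(s_t,a_t)+\gamma\,\mathbb{E}[V^\pi(s_{t+1})\mid s_t,a_t]-V^\pi(s_t)$, substituting into the discounted sum, and noting that the running-reward part collapses to $V^{\pi'}(s)$ while the $\gamma V^\pi(s_{t+1})-V^\pi(s_t)$ part telescopes to $-V^\pi(s)$.

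Next I would rewrite the trajectory expectation in terms of the discounted state-visitation measure $d_s^{\pi'}$. Since $d_s^{\pi'}(s')=(1-\gamma)\sum_{t\ge0}\gamma^t\,\mathbb{P}^{\pi'}(s_t=s'\mid s_0=s)$, grouping the time sum by the visited state $s'$ and the action $a$ drawn from $\pi'(\cdot\mid s')$ yields
\[
V^{\pi'}(s)-V^{\pi}(s)=\frac{1}{1-\gamma}\sum_{s'\in\cS} d_s^{\pi'}(s')\sa \pi'(a\mid s')\, A^{\pi}(s',a),
\]
the factor $(1-\gamma)^{-1}$ precisely matching the normalization in the definition of $d_s^{\pi'}$.

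Finally I would collapse the action sum into a controller sum. Using $\pi'(a\mid s')=\sm \pi'_m K_m(s',a)$, the fact that $\sa\pi'(a\mid s')=1$, and the definitions $\Q^{\pi}(s',m)=\sa K_m(s',a)Q^{\pi}(s',a)$ and $\A^{\pi}(s',m)=\Q^{\pi}(s',m)-V^{\pi}(s')$, I obtain
\[
\sa \pi'(a\mid s')A^{\pi}(s',a)=\sm \pi'_m\Big(\sa K_m(s',a)Q^{\pi}(s',a)-V^{\pi}(s')\Big)=\sm \pi'_m \A^{\pi}(s',m),
\]
and substituting this into the previous display gives the claim. I do not anticipate a genuine obstacle: this is the standard performance difference lemma, and the only care required is matching the $(1-\gamma)$ normalization convention of $d_s^{\pi'}$ and verifying the one-line controller-to-action reduction, which hinges on $Q^{\pi}$ and $V^{\pi}$ being the action-value and value of the \emph{same} base policy $\pi$ appearing on the left-hand side.
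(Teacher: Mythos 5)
Your proof is correct, but it takes a genuinely different route from the paper's. The paper derives the identity natively in controller space: it writes $V^{\pi'}(s)-V^{\pi}(s)=\sm \pi'_m\left(\Q^{\pi'}(s,m)-\Q^{\pi}(s,m)\right)+\sm(\pi'_m-\pi_m)\Q^{\pi}(s,m)$, observes that the first term is $\gamma$ times the expected next-state value difference under $\pi'$ (a Bellman recursion for $\Q^{\pi'}-\Q^{\pi}$ at a fixed controller $m$, using $\sm\pi'_m\sa K_m(s,a)=\sa\pi'(a\mid s)$), unrolls that recursion to produce the visitation measure $d_s^{\pi'}$, and only at the end converts $\sm(\pi'_m-\pi_m)\Q^{\pi}(s',m)$ into $\sm\pi'_m\tilde{A}^{\pi}(s',m)$ via $\sm\pi_m\Q^{\pi}(s',m)=V^{\pi}(s')$ and $\sm\pi'_m=1$. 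You instead re-derive the classical action-level performance difference lemma of \cite{kakade2002approximately} by telescoping $A^{\pi}(s_t,a_t)$ along trajectories of $\pi'$, and then lift it to controllers through the one-line identity $\sa\pi'(a\mid s')A^{\pi}(s',a)=\sm\pi'_m\tilde{A}^{\pi}(s',m)$; this lift is legitimate because sampling $m\sim\pi'$ and then $a\sim K_m(s',\cdot)$ induces exactly the state-action policy $\pi'(\cdot\mid s')=\sm\pi'_m K_m(s',a)$, so the classical lemma applies to the induced policies on the same MDP. Your route buys modularity — it makes transparent that the lemma is the standard PDL with the improper-mixture structure confined to a single algebraic reduction — while the paper's route is self-contained in controller space and its decomposition mirrors the companion Lemma~\ref{lemma:value diffrence lemma-2}, which swaps the roles of $\pi$ and $\pi'$ in the same splitting. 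The only point deserving explicit mention in your write-up is that the telescoping requires $\gamma^t V^{\pi}(s_t)\to 0$ along trajectories, which holds here since rewards are bounded in $[0,1]$ and $\gamma\in(0,1)$.
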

\begin{proof}
\begin{align*}
\allowdisplaybreaks
   V^{\pi'}(s) - V^{\pi}(s) &= \sm\pi_m'\Q'(s,m) - \sm \pi_m \Q(s,m)\\
   &=\sm \pi_m'\left(\Q'(s,m)-\Q(s,m) \right) +\sm (\pi_m'-\pi_m)\Q(s,m)\\
   &=\sm(\pi_m'-\pi_m)\Q(s,m) +\underbrace{\sm\pi_m'\sum\limits_{a\in \cA} K_m(s,a)}_{=\sum\limits_{a\in \cA}\pi_\theta(a|s)}\sum\limits_{s'\in \cS} \tP(s'|s,a)\left[V^{\pi'}(s')-V^{\pi}(s') \right]\\
   &=\frac{1}{1-\gamma}\sum\limits_{s'\in \cS} d_s^{\pi'}(s') \sum\limits_{m'=1}^M (\pi'_{m'}-\pi_{m'})\Q(s',m')\\
   &=\frac{1}{1-\gamma}\sum\limits_{s'\in \cS} d_s^{\pi'}(s') \sum\limits_{m'=1}^M \pi'_{m'}(\Q{s',m'}-V(s'))\\
   &=\frac{1}{1-\gamma}\sum\limits_{s'\in \cS} d_s^{\pi'}(s') \sum\limits_{m'=1}^M \pi'_{m'}\tilde{A}(s',m').
\end{align*}

\end{proof}
%%%%%%%%%%%%%%%%%%%%%%%%%%%%%%%%%%%%%%%%%%%%%%%%%%%%%%%%%%%%%%%%%
\begin{lemma}(Value Difference Lemma-2)\label{lemma:value diffrence lemma-2}
For any two policies $\pi$ and $\pi'$ and state $s\in \cS$, the following is true.
\[V^{\pi'}(s)-V^{\pi}(s) = \frac{1}{1-\gamma} \sum\limits_{s'\in \cS} d_s^{\pi}(s')\sm (\pi_m'-\pi_m)\Q^{\pi'}(s',m). \]
\end{lemma}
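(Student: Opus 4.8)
The plan is to mirror the proof of Value Difference Lemma-1 (Lemma~\ref{lemma:value diffence lemma}), but to decompose the value gap in the \emph{opposite} order, so that the policy-difference term attaches to $\Q^{\pi'}$ rather than to $\Q^{\pi}$, and the resulting recursion propagates under $\pi$ instead of $\pi'$. Concretely, I would start from $V^{\pi'}(s)=\sm \pi'_m\,\Q^{\pi'}(s,m)$ and $V^{\pi}(s)=\sm \pi_m\,\Q^{\pi}(s,m)$, and add and subtract $\sm \pi_m\,\Q^{\pi'}(s,m)$ to obtain
\[
V^{\pi'}(s)-V^{\pi}(s)=\sm(\pi'_m-\pi_m)\,\Q^{\pi'}(s,m)+\sm \pi_m\left(\Q^{\pi'}(s,m)-\Q^{\pi}(s,m)\right).
\]
This is the single genuine difference from Lemma-1, where instead one subtracts $\sm \pi'_m\,\Q^{\pi}$; that choice produced $d_s^{\pi'}$ and $\tilde A^{\pi}$, whereas the present choice will produce $d_s^{\pi}$ and $\Q^{\pi'}$.

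Next I would simplify the second sum using the Bellman-type identity
\[
\Q^{\pi'}(s,m)-\Q^{\pi}(s,m)=\gamma\sum_{s'\in\cS}\tilde P(s'\mid s,m)\left(V^{\pi'}(s')-V^{\pi}(s')\right),
\]
which holds because the reward contribution $\tr(s,m)$ is identical in both controller-value functions and cancels. Summing against $\pi_m$ recognizes $\sm \pi_m\,\tilde P(s'\mid s,m)$ as the state-to-state transition kernel induced by the policy $\pi$, call it $P^{\pi}(s'\mid s)$, which yields a self-referential recursion for the gap $g(s):=V^{\pi'}(s)-V^{\pi}(s)$:
\[
g(s)=\sm(\pi'_m-\pi_m)\,\Q^{\pi'}(s,m)+\gamma\sum_{s'\in\cS}P^{\pi}(s'\mid s)\,g(s').
\]

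Finally I would unroll this recursion. Writing it in vector form $g=h+\gamma P^{\pi}g$ with $h(s):=\sm(\pi'_m-\pi_m)\,\Q^{\pi'}(s,m)$ and inverting gives $g=(Id-\gamma P^{\pi})^{-1}h=\sum_{t\ge0}\gamma^{t}(P^{\pi})^{t}h$, exactly the geometric-series representation already exploited in the smoothness argument (Lemma~\ref{lemma:smoothness of V}). Identifying the $t$-step transition probabilities with the discounted state-visitation measure via $\sum_{t\ge0}\gamma^{t}\big[(P^{\pi})^{t}\big]_{s,s'}=\tfrac{1}{1-\gamma}\,d_s^{\pi}(s')$ then produces the claimed identity. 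The computation is entirely routine; there is no real obstacle beyond correctly choosing which cross term to isolate in the first step, since it is that choice alone that determines whether the recursion travels under $\pi$ or $\pi'$ and hence whether $d_s^{\pi}$ (with $\Q^{\pi'}$) or $d_s^{\pi'}$ (with $\tilde A^{\pi}$) appears in the final formula.
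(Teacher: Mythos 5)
Your proposal is correct and follows essentially the same route as the paper's proof: the paper likewise adds and subtracts $\sm \pi_m \Q^{\pi'}(s,m)$, cancels the rewards in the Bellman expansion of $\Q^{\pi'}-\Q^{\pi}$, recognizes the induced transition kernel under $\pi$, and unrolls the resulting recursion into $\frac{1}{1-\gamma}d_s^{\pi}$. The only (cosmetic) difference is that you phrase the recursion through the controller-level kernel $\tilde P(\cdot\mid s,m)$ and an explicit Neumann series, whereas the paper passes through the action-level kernel $\sum_a \pi(a\mid s)\tP(s'\mid s,a)$ and states the unrolling implicitly.
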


\begin{proof}
We will use $\Q$ for $\Q^{\pi}$ and $\Q'$ for $\Q^{\pi'}$ as a shorthand.
\begin{align*}
\begin{split}
    V^{\pi'}(s)-V^{\pi}(s) &= \sm \pi_m'\Q'(s,m) - \sm \pi_m\Q(s,m)\\
    &=\sm(\pi_m'-\pi_m)\Q'(s,m) + \sm \pi_m(\Q'(s,m)-\Q(s,m))\\
    &= \sm(\pi_m'-\pi_m)\Q'(s,m) + \\ & \gamma \sm \pi_m\left(\sum\limits_{a\in \cA} K_m(s,a)\sum\limits_{s'\in \cS} \tP(s'|s,a)V'(s') - \sum\limits_{a\in \cA} K_m(s,a)\sum\limits_{s'\in \cS} \tP(s'|s,a)V(s') \right)\\
    &=\sm(\pi_m'-\pi_m)\Q'(s,m) + \gamma\sum\limits_{a\in \cA} \pi_\theta(a|s) \sum\limits_{s'\in \cS} \tP(s'|s,a) \left[V'(s)-V(s') \right]\\
    &=\frac{1}{1-\gamma} \sum\limits_{s'\in \cS} d_s^{\pi}(s') \sm (\pi'_m-\pi_m) \Q'(s',m).
\end{split}
\end{align*}

\end{proof}

%%%%%%%%%%%%%%%%%%%%%%%%%%%%%%%%%%%%%%%%%%%%%%%%%%%%%%%%%%%%%%%%%
\begin{assumption}\label{assumption:bounded reward}
The reward $r(s,a)\in [0,1]$, for all pairs $(s,a)\in \cS\times \cA$.
\end{assumption}
\begin{assumption}\label{assumption:positivity of advantage}
Let $\pi^*\bydef \argmax\limits_{\pi\in \cP_M} V^\pi(s_0)$. We make the following assumption.
\[\mathbb{E}_{m\sim \pi^*} \left[Q^{\pi_\theta}(s,m) \right] -V^{\pi_\theta}(s) \geq 0, \forall s\in \cS,\forall \pi_\theta \in \Pi. \]
\end{assumption}
% \begin{assumption}\label{assumption:positivity of advantage and best controller}
% There is always a positive advantage by playing the best controller.
% \[\tilde{A}(s,m^*) = \Q(s,m^*)-V(s) \geq 0, \, \forall {s\in \cS}. \]
% \end{assumption}

%%%%%%%%%%%%%%%%%%%%%%%%%%%%%%%%%%%%%%%%%%%%%%%%%%%%%%%%%%%%%%%%%
Let the best controller be a point in the $M-simplex$, i.e., $K^* \bydef \sm\pi^*_mK_m$.
\nonuniformLE*
\begin{proof}
\allowdisplaybreaks
\begin{align*}
    \norm{\frac{\partial}{\partial\theta}V^{\pi_\theta}(\mu)}_2 &=\left(\sm \left(\frac{\partial V^{\pi_\theta}(\mu)}{\partial\theta_m}\right)^2  \right)^{1/2}\\
    &\geq \frac{1}{\sqrt{M}} \sm \abs{\frac{\partial V^{\pi_\theta}(\mu)}{\partial\theta_m}} \text{      \quad\quad (Cauchy-Schwarz)}\\
    &=\frac{1}{\sqrt{M}} \sm \frac{1}{1-\gamma} \abs{\sum\limits_{s\in \cS} d_\mu^{\pi_\theta}(s)\pi_m\tilde{A}(s,m)} \text{\quad \quad Lemma \ref{lemma:Gradient simplification}}\\
    &\geq\frac{1}{\sqrt{M}} \sm \frac{\pi_m^*\pi_m}{1-\gamma} \abs{\sum\limits_{s\in \cS} d_\mu^{\pi_\theta}(s)\tilde{A}(s,m)}\\
    &\geq \left(\min\limits_{m:\pi^*_{\theta_{m}}>0} \pi_{\theta_m} \right) \frac{1}{\sqrt{M}} \sm \frac{\pi_m^*}{1-\gamma} \abs{\sum\limits_{s\in \cS} d_\mu^{\pi_\theta}(s)\tilde{A}(s,m)}\\
    &\geq \left(\min\limits_{m:\pi^*_{\theta_{m}}>0} \pi_{\theta_m} \right) \frac{1}{\sqrt{M}} \abs{\sm \frac{\pi_m^*}{1-\gamma} \sum\limits_{s\in \cS} d_\mu^{\pi_\theta}(s)\tilde{A}(s,m)}\\
   % &= \left(\min\limits_{m:\pi^*_{\theta_{m}}>0} \pi_{\theta_m} \right) \frac{1}{\sqrt{M}} \abs{\sm \frac{\pi_m^*}{1-\gamma} \sum\limits_{s\in \cS} d_\mu^{\pi_\theta}(s)\tilde{A}(s,m)}\\
    &= \left(\min\limits_{m:\pi^*_{\theta_{m}}>0} \pi_{\theta_m} \right) \abs{\frac{1}{\sqrt{M}} \sum\limits_{s\in \cS} d_\mu^{\pi_\theta}(s){\sm \frac{\pi_m^*}{1-\gamma} \tilde{A}(s,m)}}\\
    &= \left(\min\limits_{m:\pi^*_{\theta_{m}}>0} \pi_{\theta_m} \right) \frac{1}{\sqrt{M}} \sum\limits_{s\in \cS} d_\mu^{\pi_\theta}(s){\sm \frac{\pi_m^*}{1-\gamma} \tilde{A}(s,m)} \text{\quad \quad Assumption \ref{assumption:positivity of advantage}}\\
    %&=\frac{1}{\sqrt{M}} \sm \frac{\pi_m^*\pi_m}{1-\gamma} \sum\limits_{s\in \cS} \frac{d_\mu^{\pi_\theta}(s)}{d_{\rho}^{\pi^*}(s)}d_{\rho}^{\pi^*}(s)\tilde{A}(s,m)\\
    &\geq \frac{1}{\sqrt{M}}\frac{1}{1-\gamma}\left(\min\limits_{m:\pi^*_{\theta_{m}}>0} \pi_{\theta_m} \right) \norm{\frac{d_{\rho}^{\pi^*}}{d_{\mu}^{\pi_\theta}}}_{\infty}^{-1} \sum\limits_{s\in \cS} d_{\rho}^*(s) \sm \pi_m^*\tilde{A}(s,m)\\
    &=\frac{1}{\sqrt{M}}\left(\min\limits_{m:\pi^*_{\theta_{m}}>0} \pi_{\theta_m} \right) \norm{\frac{d_{\rho}^{\pi^*}}{d_{\mu}^{\pi_\theta}}}_{\infty}^{-1} \left[V^*(\rho) -V^{\pi_\theta}(\rho) \right] \text{\quad \quad Lemma \ref{lemma:value diffence lemma}}. 
\end{align*}
\end{proof}
\subsection{Proof of the Theorem \ref{thm:convergence of policy gradient}}
\maintheorem*
Let $\beta:=\frac{7\gamma^2+4\gamma+5}{\left(1-\gamma\right)^2}$. We have that,
\begin{align*}
    V^*(\rho) -V^{\pi_\theta}(\rho) &= \frac{1}{1-\gamma} \sum\limits_{s\in \cS} d_\rho^{\pi_\theta}(s) \sm (\pi^*_m-\pi_m)\Q^{\pi^*}(s,m) \text{$\qquad$ (Lemma \ref{lemma:value diffrence lemma-2})}\\
    &= \frac{1}{1-\gamma} \sum\limits_{s\in \cS} \frac{d_\rho^{\pi_\theta}(s)}{d_\mu^{\pi_\theta}(s)}d_\mu^{\pi_\theta}(s) \sm (\pi^*_m-\pi_m)\Q^{\pi^*}(s,m) \\
    &\leq \frac{1}{1-\gamma} \norm{\frac{1}{d_\mu^{\pi_\theta}}}_{\infty} \sum\limits_{s\in \cS} \sm (\pi^*_m-\pi_m)\Q^{\pi^*}(s,m) \\
    &\leq \frac{1}{(1-\gamma)^2} \norm{\frac{1}{\mu}}_\infty \sum\limits_{s\in \cS} \sm (\pi^*_m-\pi_m)\Q^{\pi^*}(s,m) \\
    &=\frac{1}{(1-\gamma)} \norm{\frac{1}{\mu}}_\infty\left[V^*(\mu) -V^{\pi_\theta}(\mu) \right] \text{$\qquad$ (Lemma \ref{lemma:value diffrence lemma-2})}.
\end{align*}
Let $\delta_t\bydef V^*(\mu) -V^{\pi_{\theta_t}}(\mu)$.
\begin{align*}
    \delta_{t+1}-\delta_t &= V^{\pi_{\theta_{t}}}(\mu)-V^{\pi_{\theta_{t+1}}}(\mu) \text{\qquad (Lemma \ref{lemma:smoothness of V})}\\
    &\leq -\frac{1}{2\beta} \norm{\frac{\partial}{\partial\theta}V^{\pi_{\theta_{t}}}(\mu)}^2_2 \text{\qquad (Lemma \ref{lemma:gradient ascent lemma}  )}\\
    &\leq -\frac{1}{2\beta} \frac{1}{{M}}\left(\min\limits_{m:\pi^*_{\theta_{m}}>0} \pi_{\theta_m} \right)^2 \norm{\frac{d_{\rho}^{\pi^*}}{d_{\mu}^{\pi_\theta}}}_{\infty}^{-2}  \delta_t^2 \text{\qquad (Lemma \ref{lemma:nonuniform lojaseiwicz inequality})}\\
    &\leq -\frac{1}{2\beta} \left(1-\gamma\right)^2 \frac{1}{{M}}\left(\min\limits_{m:\pi^*_{\theta_{m}}>0} \pi_{\theta_m} \right)^2 \norm{\frac{d_{\rho}^{\pi^*}}{d_{\mu}^{\pi_\theta}}}_{\infty}^{-2}  \delta_t^2\\
    &\leq -\frac{1}{2\beta} \left(1-\gamma\right)^2 \frac{1}{{M}}\left(\min\limits_{1\leq s\leq t}\min\limits_{m:\pi^*_{\theta_{m}}>0} \pi_{\theta_m} \right)^2 \norm{\frac{d_{\rho}^{\pi^*}}{d_{\mu}^{\pi_\theta}}}_{\infty}^{-2}  \delta_t^2\\\\
    &= -\frac{1}{2\beta} \frac{1}{M}\left(1-\gamma\right)^2 \norm{\frac{d_\mu^{\pi^*}}{\mu}}_{\infty}^{-2}c_t^2 \delta_t^2,\\
\end{align*}   
where $c_t\bydef \min\limits_{1\leq s\leq t} \min\limits_{m:\pi^*_m>0}\pi_{\theta_s}(m)$. 
% \begin{assumption}\label{assumption:positive c}
% We assume that the constant $c>0$. 
% \end{assumption}
Hence we have that,
\begin{equation}\label{eq:induction step}
\delta_{t+1} \leq \delta_t - \frac{1}{2\beta} \frac{\left(1-\gamma\right)^2}{M} \norm{\frac{d_\mu^{\pi^*}}{\mu}}_{\infty}^{-2}c_t^2 \delta_t^2.
\end{equation}
The rest of the proof follows from a induction argument over $t\geq1$.

\underline{Base case:} Since $\delta_t\leq \frac{1}{1-\gamma}$, and $c_t \in (0,1)$, the result holds for all $t\leq \frac{2\beta M}{(1-\gamma)}\norm{\frac{d_\mu^{\pi^*}}{\mu}}_{\infty}^2.$ 

For ease of notation, let $\phi_t\bydef \frac{2\beta M}{c_t^2(1-\gamma)^2}\norm{\frac{d_\mu^{\pi^*}}{\mu}}_{\infty}^2$. We need to show that $\delta_t\leq \frac{\phi_t}{ t}$, for all $t\geq 1$.

\underline{Induction step:} Fix a $t\geq 2$, assume $\delta_t \leq \frac{\phi_t}{t}$.

Let $g:\Real\to \Real$ be a function defined as $g(x) = x-\frac{1}{\phi_t}x^2$. One can verify easily that $g$ is monotonically increasing in $\left[ 0, \frac{\phi_t}{2}\right]$. Next with equation \ref{eq:induction step}, we have
\begin{align*}
    \delta_{t+1} &\leq \delta_t -\frac{1}{\phi_t} \delta_t^2\\
    &= g(\delta_t)\\
    &\leq g(\frac{\phi_t}{ t})\\
    &\leq \frac{\phi_t}{t} - \frac{\phi_t}{t^2}\\
    &= \phi_t\left(\frac{1}{t}-\frac{1}{t^2} \right)\\
    &\leq \phi_t\left(\frac{1}{t+1}\right)\\
    &\leq \phi_{t+1}\left(\frac{1}{t+1}\right).
\end{align*}
where the last step follows from the fact that $c_{t+1}\leq c_t$ (infimum over a larger set does not increase the value). This completes the proof. 

\begin{lemma}\label{lemma:gradient ascent lemma}
Let $f:\Real^M\to \Real$ be $\beta-$smooth. Then gradient ascent with learning rate $\frac{1}{\beta}$ guarantees, for all $x,x'\in \Real^M$:
\[f(x)-f(x') \leq -\frac{1}{2\beta}\norm{\frac{df(x)}{dx}}_2^2. \]
\end{lemma}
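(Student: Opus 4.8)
The plan is to invoke the standard quadratic bound implied by $\beta$-smoothness and then substitute the specific gradient-ascent update into it. Here the point $x'$ should be understood as the one-step gradient-ascent iterate starting from $x$, namely $x' = x + \frac{1}{\beta}\frac{df(x)}{dx}$; the claimed inequality is an ascent guarantee for this particular $x'$ (it cannot hold for a genuinely arbitrary pair, so this reading is the intended one).

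First I would recall the defining property of $\beta$-smoothness already used in the paper's MAB smoothness argument: for all $x, x' \in \Real^M$,
\[
\abs{f(x') - f(x) - \innprod{\frac{df(x)}{dx},\, x'-x}} \le \frac{\beta}{2}\norm{x'-x}_2^2 ,
\]
which in particular yields the one-sided lower bound $f(x') \ge f(x) + \innprod{\frac{df(x)}{dx},\, x'-x} - \frac{\beta}{2}\norm{x'-x}_2^2$.

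Next I would substitute $x'-x = \frac{1}{\beta}\frac{df(x)}{dx}$ into this bound. The linear term evaluates to $\innprod{\frac{df(x)}{dx},\, \frac{1}{\beta}\frac{df(x)}{dx}} = \frac{1}{\beta}\norm{\frac{df(x)}{dx}}_2^2$, while the quadratic penalty evaluates to $\frac{\beta}{2}\cdot\frac{1}{\beta^2}\norm{\frac{df(x)}{dx}}_2^2 = \frac{1}{2\beta}\norm{\frac{df(x)}{dx}}_2^2$. Subtracting the latter from the former gives $f(x') \ge f(x) + \frac{1}{2\beta}\norm{\frac{df(x)}{dx}}_2^2$, and rearranging produces exactly $f(x) - f(x') \le -\frac{1}{2\beta}\norm{\frac{df(x)}{dx}}_2^2$.

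There is essentially no technical obstacle here, as this is the classical ascent lemma; the entire argument is a two-line substitution into the smoothness inequality. The only point worth flagging is the notational caveat above regarding the role of $x'$, together with the fact that one must use the two-sided (absolute value) form of $\beta$-smoothness to extract the needed lower bound on $f(x')$.
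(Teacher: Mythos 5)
Your proposal is correct and follows essentially the same route as the paper's own proof: apply the $\beta$-smoothness quadratic bound, substitute the gradient-ascent step $x'-x=\frac{1}{\beta}\frac{df(x)}{dx}$, and collect the linear and quadratic terms. Your reading of $x'$ as the one-step ascent iterate (rather than an arbitrary point, as the lemma's "for all $x,x'$" phrasing misleadingly suggests) is exactly what the paper's proof implicitly assumes, and your write-up is in fact cleaner than the paper's, which contains sign and notation typos in its displayed computation.
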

\begin{proof}
\begin{align*}
    f(x) -f(x') &\leq -\innprod{\frac{\partial f(x)}{\partial x}} + \frac{\beta}{2}.\norm{x'-x}_2^2\\
    &= \frac{1}{\beta} \norm{\frac{df(x)}{dx}}_2^2 + \frac{\beta}{2}\frac{1}{\beta^2}\norm{\frac{df(x)}{dx}}_2^2\\
    &= -\frac{1}{2\beta}\norm{\frac{df(x)}{dx}}_2^2.
\end{align*}
\end{proof}
%%%%%%%%%%%%%%%%%%%%%%%%%%%%%%%%%%%%%%%%%%%%%%%%%%%%%%%%%%%%%%%%%

\section{Proofs for (Natural) Actor-critic based improper learning}
We will begin with some useful lemmas.
\begin{lemma}\label{lemma:value of Lpsi}
For any $\theta,\theta\in \Real^M$, we have $\norm{\psi_\theta(m)-\psi_{\theta'}(m)}_2\leq \norm{\theta-\theta'}_2$.
\end{lemma}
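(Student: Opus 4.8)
The plan is to compute the score function $\psi_\theta(m)$ in closed form, observe that the state-independent softmax structure makes the difference $\psi_\theta(m)-\psi_{\theta'}(m)$ collapse to the difference of the two softmax probability vectors, and then reduce the claim to the $1$-Lipschitzness of the softmax map in the Euclidean norm.

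First I would write $\log \pi_\theta(m) = \theta_m - \log\sum_{m'=1}^M e^{\theta_{m'}}$ and differentiate coordinatewise to get $[\psi_\theta(m)]_k = \ind_{mk} - \pi_\theta(k)$. In vector form this reads $\psi_\theta(m) = e_m - \pi_\theta$, where $e_m$ is the $m$-th standard basis vector and $\pi_\theta := (\pi_\theta(1),\dots,\pi_\theta(M))^\top$ is the softmax probability vector. The crucial point is that $e_m$ does not depend on $\theta$, so upon subtracting it cancels: $\psi_\theta(m)-\psi_{\theta'}(m) = \pi_{\theta'}-\pi_\theta$, whence $\norm{\psi_\theta(m)-\psi_{\theta'}(m)}_2 = \norm{\pi_\theta-\pi_{\theta'}}_2$. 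This reduces the lemma to showing that $\theta\mapsto\pi_\theta$ is $1$-Lipschitz with respect to $\norm{\cdot}_2$.

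To establish this, I would bound the spectral norm of the softmax Jacobian $J(\theta) = \nabla_\theta \pi_\theta = \mathrm{diag}(\pi_\theta) - \pi_\theta\pi_\theta^\top$ by $1$ uniformly in $\theta$, and then conclude via the mean-value inequality $\norm{\pi_\theta-\pi_{\theta'}}_2 \le \big(\sup_\xi \norm{J(\xi)}_{op}\big)\,\norm{\theta-\theta'}_2$. For the Jacobian bound, note $J(\theta)$ is exactly the covariance matrix of the categorical distribution $\pi_\theta$: for any unit vector $u$, $u^\top J(\theta) u = \sum_{m} \pi_\theta(m) u_m^2 - \big(\sum_{m} \pi_\theta(m) u_m\big)^2 = \mathrm{Var}_{\pi_\theta}(u) \ge 0$, and moreover $\mathrm{Var}_{\pi_\theta}(u) \le \sum_{m} \pi_\theta(m) u_m^2 \le \max_m u_m^2 \le \norm{u}_2^2 = 1$. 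Thus the symmetric positive-semidefinite matrix $J(\theta)$ has all eigenvalues in $[0,1]$, giving $\norm{J(\theta)}_{op}\le 1$.

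None of the steps is a genuine obstacle; the only mild subtlety is justifying the mean-value inequality for the vector-valued map $\theta\mapsto\pi_\theta$, which follows from writing $\pi_\theta-\pi_{\theta'} = \int_0^1 J\big(\theta' + t(\theta-\theta')\big)(\theta-\theta')\,dt$ and pulling the norm under the integral. I expect that to be the routine part; the entire conceptual content sits in the cancellation $\psi_\theta(m)-\psi_{\theta'}(m)=\pi_{\theta'}-\pi_\theta$ and in the variance-based eigenvalue bound on $J(\theta)$.
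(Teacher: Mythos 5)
Your proposal is correct and follows essentially the same route as the paper's own proof: both compute the score function coordinatewise as $[\psi_\theta(m)]_k = \ind_{mk} - \pi_\theta(k)$, observe that the $\theta$-independent indicator part cancels in the difference so that $\norm{\psi_\theta(m)-\psi_{\theta'}(m)}_2 = \norm{\pi_\theta - \pi_{\theta'}}_2$, and then conclude by the $1$-Lipschitzness of the softmax map. The only difference is that the paper cites a reference for that Lipschitz property, whereas you prove it yourself via the spectral bound $\norm{\mathrm{diag}(\pi_\xi) - \pi_\xi\pi_\xi^\top}_{op}\le 1$ (the variance argument) combined with the integral mean-value inequality, which makes your argument self-contained.
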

\begin{proof}
\allowdisplaybreaks
Recall, $\psi_{\theta}(m):= \nabla_\theta \log \pi_\theta(m)$. Fix $m'\in [M]$, 
\begin{align*}
    \frac{\partial\log\pi_\theta(m)}{\partial \theta_{m'}} &= \frac{\partial\log\left( \frac{e^{\theta_m}}{\sum\limits_{j=1}^M e^{\theta_j}}  \right)}{\partial \theta_{m'}}\\
    &= \frac{\partial}{\partial\theta_{m'}} \left(\theta_m-\log\left(\sum\limits_{j=1}^M e^{\theta_j}  \right) \right)\\
    &= \mathbbm{1}\{m'=m\} - \frac{e^{\theta_{m'}}}{\sum\limits_{j=1}^M e^{\theta_j} } \\
    &= \mathbbm{1}\{m'=m\} - \pi_{\theta}(m').
\end{align*}
\begin{align*}
    \norm{\psi_\theta(m)-\psi_{\theta'}(m)}_2\leq \norm{\theta-\theta'}_2 &=\norm{\nabla_\theta \log \pi_{\theta}(m)-\nabla_\theta \log \pi_{\theta'}(m)}_2\\
    &=\norm{\pi_\theta(.) - \pi_{\theta'}(.) }_2\\
    &\leq^{(*)} \norm{\theta-\theta'}_2. 
\end{align*}
Here (*) follows from the fact that the softmax function is 1-Lipschitz \cite{softmax}.
\end{proof}
\begin{lemma}\label{lemma:value of Cpsi}
For all $m\in [M]$ and $\theta\in \Real^M$,  $\norm{\psi_\theta(m)}_2 \leq \sqrt{2} $.
\end{lemma}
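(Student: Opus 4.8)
The plan is to read off an explicit formula for the score vector $\psi_\theta(m)$ from the computation already carried out in the proof of Lemma~\ref{lemma:value of Lpsi}, and then bound its Euclidean norm by a direct application of the constraint that $\pi_\theta(\cdot)$ is a probability distribution. Concretely, that earlier computation shows that the $m'$-th coordinate of $\psi_\theta(m) = \nabla_\theta \log \pi_\theta(m)$ is
\[
\left[\psi_\theta(m)\right]_{m'} = \ind_{m'm} - \pi_\theta(m'),
\]
so the task reduces to bounding $\sum_{m'=1}^M \left(\ind_{m'm} - \pi_\theta(m')\right)^2$.

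Writing $p_{m'} := \pi_\theta(m')$, I would split the sum into the $m'=m$ term and the rest, giving
\[
\norm{\psi_\theta(m)}_2^2 = (1 - p_m)^2 + \sum_{m' \neq m} p_{m'}^2.
\]
The key step is the elementary inequality $\sum_{m' \neq m} p_{m'}^2 \leq \left(\sum_{m' \neq m} p_{m'}\right)^2$, valid because every $p_{m'}$ is nonnegative (sum of squares is at most the square of the sum for nonnegative reals). Since $p_{m'} \ge 0$ and $\sum_{m'} p_{m'} = 1$, we have $\sum_{m' \neq m} p_{m'} = 1 - p_m$, so the right-hand side collapses to $(1-p_m)^2$ and
\[
\norm{\psi_\theta(m)}_2^2 \leq (1-p_m)^2 + (1-p_m)^2 = 2(1-p_m)^2 \leq 2,
\]
where the final bound uses $p_m \in [0,1]$ so that $(1-p_m)^2 \le 1$. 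Taking square roots yields $\norm{\psi_\theta(m)}_2 \le \sqrt{2}$, as claimed.

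I do not expect any serious obstacle here: the bound is uniform in both $\theta$ and $m$ and relies only on the softmax gradient identity and the simplex constraint. The only point requiring a moment's care is the inequality $\sum_{m' \neq m} p_{m'}^2 \leq (\sum_{m' \neq m} p_{m'})^2$, which must be invoked with the nonnegativity hypothesis made explicit; everything else is routine. It is worth noting that the constant $\sqrt 2$ is essentially tight, being approached as $\pi_\theta$ concentrates its mass on a single controller other than $m$, which is consistent with its later use as the constant $C_\psi$ in the convergence analysis of ACIL.
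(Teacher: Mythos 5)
Your proof is correct and takes essentially the same route as the paper: both read off the score formula $[\psi_\theta(m)]_{m'} = \ind_{m'm} - \pi_\theta(m')$ from the computation in Lemma~\ref{lemma:value of Lpsi} and bound the norm using only nonnegativity and the fact that $\pi_\theta$ sums to one. If anything, your write-up supplies the algebra the paper compresses into the one-line remark that the 2-norm of a probability vector is at most $1$ --- by itself that remark only yields $\norm{e_m - \pi_\theta(\cdot)}_2 \le 2$ via the triangle inequality, and getting $\sqrt{2}$ requires either your coordinate-splitting argument with $\sum_{m'\neq m}\pi_\theta(m')^2 \le \bigl(\sum_{m'\neq m}\pi_\theta(m')\bigr)^2$, or the equivalent expansion $\norm{e_m-\pi_\theta(\cdot)}_2^2 = 1 - 2\pi_\theta(m) + \norm{\pi_\theta(\cdot)}_2^2 \le 2$.
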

\begin{proof}
Proof follows by noticing that $\norm{\psi_\theta(m)}_2=\norm{\nabla_\theta \log \pi_\theta(m)}_2\leq \sqrt{2}$, where the last inequality follows because the 2-norm of a probability vector is bounded by 1. 
\end{proof}

\begin{lemma}\label{lemma:value of Cpi}
For all $\theta,\theta'\in \Real^M$, $\norm{\pi_\theta(.)-\pi_{\theta'}(.)}_{TV}\leq \frac{\sqrt{M}}{2}\norm{\theta-\theta'}_2$.
\end{lemma}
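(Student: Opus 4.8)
The plan is to chain together two elementary facts: the definition of total variation distance in terms of the $\ell_1$ norm, and the $1$-Lipschitzness of the softmax map in the $\ell_2$ norm that was already extracted in the proof of Lemma~\ref{lemma:value of Lpsi}. Since both $\pi_\theta(\cdot)$ and $\pi_{\theta'}(\cdot)$ are probability distributions over the finite set $[M]$, I would start by recalling that for discrete measures the total variation distance is exactly half the $\ell_1$ distance, i.e. $\norm{\pi_\theta(\cdot)-\pi_{\theta'}(\cdot)}_{TV}=\tfrac{1}{2}\norm{\pi_\theta(\cdot)-\pi_{\theta'}(\cdot)}_1$.

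The next step is to pass from the $\ell_1$ norm to the $\ell_2$ norm using the standard Cauchy--Schwarz bound in $\Real^M$, namely $\norm{x}_1\leq\sqrt{M}\,\norm{x}_2$ for any $x\in\Real^M$. Applying this with $x=\pi_\theta(\cdot)-\pi_{\theta'}(\cdot)$ gives $\norm{\pi_\theta(\cdot)-\pi_{\theta'}(\cdot)}_1\leq\sqrt{M}\,\norm{\pi_\theta(\cdot)-\pi_{\theta'}(\cdot)}_2$. Finally I would invoke the $1$-Lipschitz property of the softmax function (established in Lemma~\ref{lemma:value of Lpsi}, where it was shown that $\norm{\pi_\theta(\cdot)-\pi_{\theta'}(\cdot)}_2\leq\norm{\theta-\theta'}_2$), which bounds the right-hand side by $\sqrt{M}\,\norm{\theta-\theta'}_2$. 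Combining the three displays yields
\[
\norm{\pi_\theta(\cdot)-\pi_{\theta'}(\cdot)}_{TV}
=\tfrac{1}{2}\norm{\pi_\theta(\cdot)-\pi_{\theta'}(\cdot)}_1
\leq\tfrac{\sqrt{M}}{2}\norm{\pi_\theta(\cdot)-\pi_{\theta'}(\cdot)}_2
\leq\tfrac{\sqrt{M}}{2}\norm{\theta-\theta'}_2,
\]
which is the claim.

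There is no real obstacle here: the result is a direct composition of a norm-equivalence inequality with a Lipschitz estimate already proven earlier in the paper. The only thing to be mildly careful about is the direction of the $\ell_1$/$\ell_2$ comparison (one wants $\norm{\cdot}_1\leq\sqrt{M}\norm{\cdot}_2$, which is the Cauchy--Schwarz direction and is the source of the $\sqrt{M}$ factor), and to make sure the $1$-Lipschitz constant of softmax is quoted correctly from Lemma~\ref{lemma:value of Lpsi} rather than re-derived. Everything else is a one-line substitution.
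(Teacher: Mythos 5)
Your proof is correct and follows essentially the same route as the paper: total variation equals half the $\ell_1$ norm, then $\norm{x}_1\leq\sqrt{M}\norm{x}_2$, then the $1$-Lipschitzness of softmax. In fact your write-up is slightly more complete than the paper's, whose displayed proof stops at the bound $\frac{\sqrt{M}}{2}\norm{\pi_\theta(\cdot)-\pi_{\theta'}(\cdot)}_2$ and leaves the final appeal to the softmax Lipschitz estimate (from Lemma~\ref{lemma:value of Lpsi}) implicit.
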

\begin{proof}
\begin{align*}
    \norm{\pi_\theta(.)-\pi_{\theta'}(.)}_{TV} &= \frac{1}{2}\norm{\pi_\theta(.)-\pi_{\theta'}(.)}_{1}\\
    &\leq \frac{\sqrt{M}}{2}\norm{\pi_\theta(.)-\pi_{\theta'}(.)}_{2}.
\end{align*}
The inequality follows from relation between 1-norm and 2-norm.
\end{proof}
\begin{proposition}\label{prop:restatement of Prop 1 in AC paper}
For any $\theta,\theta'\in \Real^M$, 
\[ \nabla V(\theta) - \nabla V(\theta')\leq \sqrt{M}L_V \norm{\theta-\theta'}_2  \]
where $L_V=\frac{2\sqrt{2}C_{\kappa \xi} +1}{1-\gamma}$, and $C_{\kappa \xi}=\left( 1+\left\lceil{\log_\xi\frac{1}{\kappa}}\right\rceil +\frac{1}{1-\xi}  \right)$.
\end{proposition}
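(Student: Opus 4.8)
The plan is to start from the expectation form of the gradient supplied by the Modified Policy Gradient Theorem, $\nabla V(\theta)=\mathbb{E}_{(s,m)\sim\nu_{\pi_\theta}}[\tilde Q^{\pi_\theta}(s,m)\psi_\theta(m)]$, and to control $\nabla V(\theta)-\nabla V(\theta')$ by an add-and-subtract decomposition that isolates the three places $\theta$ enters: the score $\psi_\theta$, the critic $\tilde Q^{\pi_\theta}$, and the occupancy measure $\nu_{\pi_\theta}$. Concretely I would write
\begin{align*}
\nabla V(\theta)-\nabla V(\theta') &= \mathbb{E}_{\nu_{\pi_\theta}}\!\big[\tilde Q^{\pi_\theta}(\psi_\theta-\psi_{\theta'})\big] + \mathbb{E}_{\nu_{\pi_\theta}}\!\big[(\tilde Q^{\pi_\theta}-\tilde Q^{\pi_{\theta'}})\psi_{\theta'}\big] \\
&\quad + \Big(\mathbb{E}_{\nu_{\pi_\theta}}-\mathbb{E}_{\nu_{\pi_{\theta'}}}\Big)\big[\tilde Q^{\pi_{\theta'}}\psi_{\theta'}\big],
\end{align*}
and bound the three terms separately.

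The first term is routine: since rewards lie in $[0,1]$ (Assumption \ref{assumption:bounded reward}) and $\tilde Q^{\pi_\theta}$ is a convex combination of $Q$-values, $|\tilde Q^{\pi_\theta}|\le\tfrac{1}{1-\gamma}$, so Lemma \ref{lemma:value of Lpsi} (the score map is $1$-Lipschitz) gives a bound of order $\tfrac{1}{1-\gamma}\|\theta-\theta'\|_2$; this supplies the ``$+1$'' in the numerator of $L_V$. The remaining two terms both invoke the ergodicity hypothesis. For the third term I would use $\|\tilde Q^{\pi_{\theta'}}\psi_{\theta'}\|_\infty\le\tfrac{\sqrt2}{1-\gamma}$ (Lemma \ref{lemma:value of Cpsi}) together with $|\mathbb{E}_\mu f-\mathbb{E}_{\mu'}f|\le 2\|f\|_\infty\|\mu-\mu'\|_{TV}$, reducing everything to a bound on $\|\nu_{\pi_\theta}-\nu_{\pi_{\theta'}}\|_{TV}$. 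For the second term, since $\tilde r$ and $\tilde P$ do not depend on $\theta$, $|\tilde Q^{\pi_\theta}-\tilde Q^{\pi_{\theta'}}|\le\gamma\|V^{\pi_\theta}-V^{\pi_{\theta'}}\|_\infty$; writing $V^{\pi_\theta}(s)=\tfrac{1}{1-\gamma}\langle\nu^{\pi_\theta}_s,\tilde r\rangle$ as a discounted-occupancy average of the bounded reward $\tilde r$ (so that only a single factor $\tfrac{1}{1-\gamma}$ appears, rather than the $\tfrac{1}{(1-\gamma)^2}$ one would get from a naive value-difference argument) again reduces the bound to an occupancy-measure sensitivity.

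The main obstacle is therefore the sensitivity of the occupancy measure to the policy, i.e. bounding $\|\nu_{\pi_\theta}-\nu_{\pi_{\theta'}}\|_{TV}$ by a multiple of $\|\theta-\theta'\|_2$. Here I would exploit that $\nu_{\pi_\theta}$ is the stationary distribution of the resolvent chain with kernel $\bar P$ (the observation made just before Assumption \ref{assumption:ergodicity}), so the uniform ergodicity applies directly. Writing $\bar P_\theta$ for the pair-kernel, stationarity $\nu_{\pi_\theta}=\nu_{\pi_\theta}\bar P_\theta$ yields, after iterating $\nu_{\pi_{\theta'}}-\nu_{\pi_\theta}=\nu_{\pi_{\theta'}}(\bar P_{\theta'}-\bar P_\theta)+(\nu_{\pi_{\theta'}}-\nu_{\pi_\theta})\bar P_\theta$, the telescoping identity
\begin{equation*}
\nu_{\pi_{\theta'}}-\nu_{\pi_\theta}=\sum_{k=0}^{\infty}\nu_{\pi_{\theta'}}(\bar P_{\theta'}-\bar P_\theta)\,\bar P_\theta^{\,k},
\end{equation*}
valid because the zero-mass signed measure $(\nu_{\pi_{\theta'}}-\nu_{\pi_\theta})\bar P_\theta^{\,n}$ vanishes in total variation as $n\to\infty$ by uniform ergodicity. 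Since $\bar P_\theta$ depends on $\theta$ only through the factor $\pi_\theta$, each increment $\nu_{\pi_{\theta'}}(\bar P_{\theta'}-\bar P_\theta)$ is a zero-mass signed measure of total-variation norm at most $\|\pi_\theta-\pi_{\theta'}\|_{TV}$. Applying the geometric contraction of $\bar P_\theta$ on zero-mass measures and splitting the sum at the mixing threshold $k^\ast=\lceil\log_\xi\tfrac1\kappa\rceil$ --- the trivial contraction $\le1$ for $k\le k^\ast$ and the bound $\kappa\xi^k$ for $k>k^\ast$ --- produces exactly $C_{\kappa\xi}=1+\lceil\log_\xi\tfrac1\kappa\rceil+\tfrac{1}{1-\xi}$, giving $\|\nu_{\pi_\theta}-\nu_{\pi_{\theta'}}\|_{TV}\le C_{\kappa\xi}\|\pi_\theta-\pi_{\theta'}\|_{TV}$ (and the same machinery, applied per start state $s$, controls $\|V^{\pi_\theta}-V^{\pi_{\theta'}}\|_\infty$ in the second term).

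Finally I would convert policy total-variation distance to parameter distance via Lemma \ref{lemma:value of Cpi}, $\|\pi_\theta-\pi_{\theta'}\|_{TV}\le\tfrac{\sqrt M}{2}\|\theta-\theta'\|_2$, so that the second and third terms each contribute order $\tfrac{\sqrt2\,C_{\kappa\xi}}{1-\gamma}\sqrt M\,\|\theta-\theta'\|_2$. Summing the three contributions and collecting constants yields $\|\nabla V(\theta)-\nabla V(\theta')\|_2\le\sqrt M\,\tfrac{2\sqrt2\,C_{\kappa\xi}+1}{1-\gamma}\|\theta-\theta'\|_2=\sqrt M\,L_V\,\|\theta-\theta'\|_2$. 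Once the occupancy-measure sensitivity lemma is in hand, I expect only careful book-keeping of the $\sqrt2$ factor and the total-variation normalization conventions, rather than any conceptual difficulty, to remain.
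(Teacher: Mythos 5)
Your proposal is correct and takes essentially the same approach as the paper: the paper's own proof is a one-line deferral to Proposition 1 of \cite{Improving-AC-NAC} combined with Lemmas~\ref{lemma:value of Cpi}, \ref{lemma:value of Cpsi}, \ref{lemma:value of Lpsi} and the bounded-reward assumption, and your three-term decomposition, score-function bounds, and mixing-based occupancy-sensitivity argument is precisely that proof written out, recovering the same constants $C_{\kappa \xi}$ and $L_V$. The only point to watch is that your bound on $\|V^{\pi_\theta}-V^{\pi_{\theta'}}\|_\infty$ applies the ergodicity machinery per start state, while Assumption~\ref{assumption:ergodicity} is stated only for the $\rho$-restarted kernel $\bar{P}$ --- a detail the paper's citation-style proof glosses over as well.
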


\begin{proof}
We follow the same steps as in Proposition 1 in \cite{Improving-AC-NAC} along with Lemmas~\ref{lemma:value of Cpi},\ref{lemma:value of Cpsi},\ref{lemma:value of Lpsi} and that the maximum reward is bounded by 1.
\end{proof}

We will now restate a useful result from \cite{Improving-AC-NAC}, about the convergence of the critic parameter $w_t$ to the equilibrium point $w^*$ of the underlying ODE, applied to our setting.
\begin{proposition}\label{prop: restatement of Lemma 2 in ac paper}
Suppose assumptions`\ref{assumption:bound on w} and ~\ref{assumption:ergodicity} hold. Then is $\beta\leq \min\left\{\frac{\Gamma_L}{16},\frac{8}{\Gamma_L}  \right\}$ and $H\geq \left( \frac{4}{\Gamma_L}+2\alpha \right)\left[\frac{1536[1+(\kappa-1)\xi]}{(1-\xi)\Gamma_L}  \right]$. We have
\[ \expect{\norm{w_{T_c}-w^*}_2^2}\leq \left(1-\frac{\Gamma_L}{16}\alpha  \right)^{T_c}\norm{w_0-w^*}_2^2 + \left(\frac{4}{\Gamma_L}+2\alpha \right)\frac{1536(1+R_w^2)[1+(\kappa-1)\xi]}{(1-\xi)H}.   \]

If we further let $T_c\geq \frac{16}{\Gamma_L\alpha}\log \frac{2\norm{w_0-w^*}^2_2}{\epsilon}$ and $H\geq \left(\frac{4}{\Gamma_L}+2\alpha \right)\frac{3072(R_w^2+1)[1+(\kappa-1)\xi]}{(1-\xi)\Gamma_L\epsilon}$, then we have $\expect{\norm{w_{T_c}-w^*}_2^2}\leq \epsilon$ with total sample complexity given by $T_cH=\mathcal{O}\left(\frac{1}{\alpha\epsilon}\log\frac{1}{\epsilon} \right)$.
\end{proposition}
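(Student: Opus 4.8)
The plan is to read the critic recursion in Algorithm~\ref{alg:Critic-TD} as a mini-batch, single-trajectory TD$(0)$ iteration with linear function approximation, and to follow the finite-time template of \citet{Improving-AC-NAC} (itself built on \citet{Bhandari-TD}), recomputing every constant for our modified reward $\tr$, transition kernel $\tilde P$, and state--controller visitation measure $\nu_\pi$. Taking $\norm{w_k-w^*}_2^2$ as the Lyapunov function and writing $\hat g_k := \frac{1}{H}\sum_{i=0}^{H-1}\cE_{w_k}(s_{k,i},m_{k,i},s_{k,i+1})\phi(s_{k,i})$ for the batch update direction, the first step is the exact expansion
\[
\norm{w_{k+1}-w^*}_2^2 = \norm{w_k-w^*}_2^2 + 2\beta\,\innprod{w_k-w^*,\hat g_k} + \beta^2\norm{\hat g_k}_2^2 .
\]
I would then take the conditional expectation given the history up to iteration $k$ and split the cross term into a mean-field drift and a Markovian-bias remainder.

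For the drift, under $\nu_\pi$ one has $\mathbb{E}_{\nu_\pi}[\hat g_k]=L_\pi w_k+v_\pi=L_\pi(w_k-w^*)$, since $w^*=-L_\pi^{-1}v_\pi$; Assumption~\ref{assumption:bound on w} then yields the negative drift $\innprod{w_k-w^*,\mathbb{E}_{\nu_\pi}[\hat g_k]}\le-\Gamma_L\norm{w_k-w^*}_2^2$. The second-order term $\beta^2\norm{\hat g_k}_2^2$ I would control using $\sup_s\norm{\phi(s)}_2\le1$, the boundedness of $\tr$, and the fact that the critic iterates stay within a ball of radius $R_w$, producing a contribution of the form $\beta^2(c_0+c_1\norm{w_k-w^*}_2^2)$ with $c_0,c_1$ scaling like $1+R_w^2$; the step-size restriction $\beta\le\min\{\Gamma_L/16,\,8/\Gamma_L\}$ is calibrated exactly so that this quadratic-in-$\beta$ term is dominated by the linear drift, leaving an effective contraction factor $1-\tfrac{\Gamma_L}{16}\beta$.

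The crux is the Markovian-bias remainder $\innprod{w_k-w^*,\hat g_k-\mathbb{E}_{\nu_\pi}[\hat g_k]}$: the samples $(s_{k,i},m_{k,i},s_{k,i+1})$ are generated along one trajectory rather than drawn i.i.d.\ from $\nu_\pi$, so $\hat g_k$ is correlated with $w_k$ through their shared past. Here I would invoke the uniform ergodicity of Assumption~\ref{assumption:ergodicity}, $\sup_s\norm{\mathbb{P}(s_t\in\cdot\mid s_0=s,\pi)-\xi_\pi(\cdot)}_{TV}\le\kappa\xi^t$: splitting the batch index at a mixing horizon of order $\lceil\log_\xi\tfrac1\kappa\rceil$, bounding the few pre-mixing terms crudely and the post-mixing terms by the geometric TV-decay, and then averaging over the $H$ samples gives an aggregate bias of order $\tfrac{(1+R_w^2)[1+(\kappa-1)\xi]}{(1-\xi)H}$. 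This is precisely where the constants $1+(\kappa-1)\xi$ and $(1-\xi)^{-1}$ surface, the latter from summing $\sum_t\xi^t$, and where the stated lower bound on $H$ is needed to make the bias small enough to be absorbed. I expect this step to be the main obstacle, since it is the one place where the single-trajectory (non-resetting) nature of ACIL forces the full mixing machinery rather than a simple i.i.d.\ variance bound.

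Combining the three estimates gives the one-step inequality $\mathbb{E}[\norm{w_{k+1}-w^*}_2^2\mid\mathcal F_k]\le(1-\tfrac{\Gamma_L}{16}\beta)\norm{w_k-w^*}_2^2+(\tfrac{4}{\Gamma_L}+2\alpha)\tfrac{1536(1+R_w^2)[1+(\kappa-1)\xi]}{(1-\xi)H}$, and I would finish by unrolling it over $k=0,\dots,T_c-1$: the homogeneous part contracts to $(1-\tfrac{\Gamma_L}{16}\beta)^{T_c}\norm{w_0-w^*}_2^2$, while the geometric sum $\sum_{j\ge0}(1-\tfrac{\Gamma_L}{16}\beta)^j\le\tfrac{16}{\Gamma_L\beta}$ converts the per-step residual into the stated steady-state term, recovering the first display. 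Choosing $T_c\ge\frac{16}{\Gamma_L\alpha}\log\frac{2\norm{w_0-w^*}_2^2}{\epsilon}$ drives the geometric part below $\epsilon/2$ and the stated lower bound on $H$ drives the residual below $\epsilon/2$, so $\mathbb{E}[\norm{w_{T_c}-w^*}_2^2]\le\epsilon$, and multiplying $T_c$ by the per-iteration cost $H$ yields $T_cH=\mathcal O(\tfrac{1}{\alpha\epsilon}\log\tfrac1\epsilon)$. The one point I would double-check is the apparent interchange of the critic step size $\beta$ in the hypotheses with the $\alpha$ appearing in the contraction factor and in $T_c$; I would keep whichever step size actually multiplies the critic update consistent throughout so that the constants match.
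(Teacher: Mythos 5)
Your proposal is correct and takes essentially the same route as the paper: the paper's own proof is a one-line deferral to Theorem~4 of \citet{Improving-AC-NAC}, adapted via the feature bound $\norm{\phi(s)(\gamma\phi(s')-\phi(s))^\top}_F\leq 1+\gamma\leq 2$, and your Lyapunov-drift / mean-path / Markovian-bias-with-mixing decomposition is precisely that argument written out for $\tr$, $\tilde P$ and $\nu_\pi$ — including your correct observation that the $\alpha$ appearing in the contraction factor and in the bound on $T_c$ is a typo for the critic step size $\beta$. The only bookkeeping slip is that the displayed quantity $\left(\tfrac{4}{\Gamma_L}+2\alpha\right)\tfrac{1536(1+R_w^2)[1+(\kappa-1)\xi]}{(1-\xi)H}$ is the steady-state residual, so your one-step inequality should carry it multiplied by the contraction rate $\tfrac{\Gamma_L\beta}{16}$ (the per-step bias and variance terms are $O(\beta/H)$ and $O(\beta^2)$, respectively); as written, unrolling the geometric sum would inflate the final bound by a factor $\tfrac{16}{\Gamma_L\beta}$.
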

\begin{proof}
Proof follows along the similar lines as in Thm. 4 in \citet{Improving-AC-NAC} and by using $\norm{\phi(s)(\gamma\phi(s')-\phi(s))^\top}_F\leq (1+\gamma)\leq 2$ and assuming $\norm{\phi(s)}_2\leq 1$ for all $s,s'\in \cS$.
\end{proof}

\subsection{Actor-critic based improper learning}
\begin{proof}[Proof of Theorem~\ref{thm:AC main theorem}]
Let $v_t(w):=\frac{1}{B}\sum\limits_{i=0}^{B-1}\cE(s_{t,i},m_{t,i},s_{t,i+1})\psi_{\theta_t}(m_{t,i})$ and $A_w(s,m):=\mathbb{E}_{\bar{P}}\left[\cE(s,m,s')|(s,m) \right]$ and $g(w,\theta):=\mathbb{E}_{\nu_\theta}[A_w(s,m)\psi_\theta(m)]$ for all $\theta\in \Real^M,w\in\Real^d,s\in \cS, m\in [M]$. 
Using Prop~\ref{prop:restatement of Prop 1 in AC paper} we get,
\begin{align*}
\begin{split}
    V(\theta_{t+1}) &\geq V(\theta_t) + \innprod{\nabla_{\theta}V(\theta_t), \theta_{t+1}-\theta_t} - \frac{\sqrt{M}L_V}{2}\norm{\theta_{t+1}-\theta_t}_2^2\\
    &=V(\theta_t) + \alpha\innprod{\nabla_{\theta}V(\theta_t), v_t(w_t)-\nabla_{\theta}V(\theta_t)+\nabla_{\theta}V(\theta_t)} - \frac{\sqrt{M}L_V\alpha^2}{2}\norm{v_t(w_t)}_2^2\\
    &=V(\theta_t) +\alpha\norm{\nabla_{\theta}V(\theta_t)}_2^2 \\ &+\alpha\innprod{\nabla_{\theta}V(\theta_t), v_t(w_t)-\nabla_{\theta}V(\theta_t)}- \frac{\sqrt{M}L_V\alpha^2}{2}\norm{v_t(w_t)}_2^2\\
    &\geq V(\theta_t) +\left(\frac{1}{2}\alpha- {\sqrt{M}L_V\alpha^2} \right)\norm{\nabla_{\theta}V(\theta_t)}_2^2  -\left(\frac{1}{2}\alpha+ {\sqrt{M}L_V\alpha^2} \right)\norm{v_t(w_t)-\nabla_{\theta}V(\theta_t)}_2^2
    \end{split}
\end{align*}
Taking expectations and rearranging, we have
\begin{align*}
    &\left(\frac{1}{2}\alpha- {\sqrt{M}L_V\alpha^2} \right)\expect{\norm{\nabla_{\theta}V(\theta_t)}_2^2|\cF_t}\\
    &\leq \expect{V(\theta_{t+1})|\cF_t} -V(\theta_t) + \left(\frac{1}{2}\alpha+ {\sqrt{M}L_V\alpha^2} \right)\expect{\norm{v_t(w_t)-\nabla_{\theta}V(\theta_t)}_2^2|\cF_t}.
\end{align*}
Next we will upperbound $\expect{\norm{v_t(w_t)-\nabla_{\theta}V(\theta_t)}_2^2|\cF_t}$.
\begin{align*}
    & {\norm{v_t(w_t)-\nabla_{\theta}V(\theta_t)}_2^2}\\
    &\leq 3\norm{v_t(w_t) -v_t(w^*_{\theta_t}) }_2^2 + 3\norm{v_t(w^*_{\theta_t})-g(w^*_{\theta_t})}_2^2 + 3\norm{g(w^*_{\theta_t})-\nabla_{\theta}V(\theta_t)}_2^2.
\end{align*}
\begin{align*}
    \norm{v_t(w_t) -v_t(w^*_{\theta_t}) }_2^2 &= \norm{\frac{1}{B} \sum\limits_{i=0}^{B-1} [\cE_{w_t}(s_{t,i}, m_{t,i}, s_{t, i+1})-\cE_{w^*_{\theta_t}}(s_{t,i}, m_{t,i}, s_{t, i+1})]\psi(m_{t,i})}_2^2 \\
    &\leq \frac{1}{B}\sum\limits_{i=0}^{B-1}\norm{[\cE_{w_t}(s_{t,i}, m_{t,i}, s_{t, i+1})-\cE_{w^*_{\theta_t}}(s_{t,i}, m_{t,i}, s_{t, i+1})]\psi(m_{t,i})}_2^2\\
    &\leq \frac{2}{B}\sum\limits_{i=0}^{B-1}\norm{[\cE_{w_t}(s_{t,i}, m_{t,i}, s_{t, i+1})-\cE_{w^*_{\theta_t}}(s_{t,i}, m_{t,i}, s_{t, i+1})]}_2^2\\
    &=\frac{2}{B}\sum\limits_{i=0}^{B-1} \norm{(\gamma \phi(s_{t,i+1})-\phi(s_{t,i}))^\top(w_t-w^*_{\theta_t})}_2^2\\
    &\leq \frac{8}{B}\sum\limits_{i=0}^{B-1} \norm{(w_t-w^*_{\theta_t})}_2^2=8 \norm{(w_t-w^*_{\theta_t})}_2^2.
\end{align*}
Next we have,
\begin{align*}
    \norm{g(w^*_{\theta_t})-\nabla_{\theta}V(\theta_t)}_2^2 &=\norm{\mathbb{E}_{\nu_{\theta_t}}[A_{w^*_{\theta_t}}(s,m)\psi_{\theta_t}(m)]-\mathbb{E}_{\nu_{\theta_t}}[A_{\pi_{\theta_t}}(s,m)\psi_{\theta_t}(m)]  }_2^2\\
    &\leq 2 \mathbb{E}_{\nu_{\theta_t}}\norm{A_{w^*_{\theta_t}}(s,m)-A_{\pi_{\theta_t}}(s,m)}_2^2\\
    &=2\mathbb{E}_{\nu_{\theta_t}}\left[ | \gamma\expect{V_{w^*_{\theta_t}}(s')-V_{\pi_{{\theta_t}}}(s')|s,m}+ +V_{\pi_{{\theta_t}}}(s)-V_{w^*_{\theta_t}}(s) |^2 \right]\\
    &\leq  8 \Delta_{critic}.
\end{align*}

Finally we bound the last term $\norm{v_t(w^*_{\theta_t})-g(w^*_{\theta_t})}_2^2$ by using Assumption~\ref{assumption:ergodicity} we have,
\begin{align*}
    \norm{v_t(w^*_{\theta_t})-g(w^*_{\theta_t})}_2^2 &\leq \expect{\norm{\frac{1}{B}\sum\limits_{i=0}^{B-1}\cE_{w^*_{\theta_t}}(s_{t,i}, m_{t,i}, s_{t,i+1})\psi_{\theta_t}(m_{t,i}) - \mathbb{E}_{\nu_{\theta_t}}[A_{w^*_{\theta_t}}(s,m)\psi_{\theta_t}(m)]}_2^2 |\cF_t}. 
\end{align*}
We will now proceed in the similar manner as in \cite{Improving-AC-NAC} (eq 24 to eq 26), and using Lemma~\ref{lemma:value of Cpsi}, we have
\begin{align*}
    \expect{\norm{v_t(w^*_{\theta_t})-g(w^*_{\theta_t})}_2^2|\cF_t} &\leq \frac{32(1+R_w)^2[1+(\kappa-1)\xi]}{B(1-\xi)}.
\end{align*}
Putting things back we have,
\begin{align*}
    \expect{{\norm{v_t(w_t)-\nabla_{\theta}V(\theta_t)}_2^2}\Big|\cF_t } &\leq \frac{96(1+R_w)^2[1+(\kappa-1)\xi]}{B(1-\xi)} + {24} \expect{\norm{(w_t-w^*_{\theta_t})}_2^2} + 24 \Delta_{critic}.
\end{align*}

Hence we get,
\begin{align*}
\begin{split}
    &\left(\frac{1}{2}\alpha- {\sqrt{M}L_V\alpha^2} \right)\expect{\norm{\nabla_{\theta}V(\theta_t)}_2^2}\\&\leq \expect{V(\theta_{t+1})} -\expect{V(\theta_t)} \\&+ \left(\frac{1}{2}\alpha+ {\sqrt{M}L_V\alpha^2} \right)\left(\frac{96(1+R_w)^2[1+(\kappa-1)\xi]}{B(1-\xi)} 
    + {24} \expect{\norm{(w_t-w^*_{\theta_t})}_2^2} + 24 \Delta_{critic}\right).
    \end{split}
\end{align*}
We put $\alpha=\frac{1}{4L_V\sqrt{M}}$ above to get, 
\begin{align*}
    \begin{split}
        \left(\frac{1}{16L_V\sqrt{M}} \right)\expect{\norm{\nabla_{\theta}V(\theta_t)}_2^2}&\leq \expect{V(\theta_{t+1})} -\expect{V(\theta_t)} \\&+ \left(\frac{1}{4L_V\sqrt{M}} \right)\left(\frac{96(1+R_w)^2[1+(\kappa-1)\xi]}{B(1-\xi)} 
    + {24} \expect{\norm{(w_t-w^*_{\theta_t})}_2^2} + 24 \Delta_{critic}\right).
    \end{split}
\end{align*}
which simplifies as
\begin{align*}
    \begin{split}
        &\expect{\norm{\nabla_{\theta}V(\theta_t)}_2^2}\\&\leq 16L_V\sqrt{M}\left(\expect{V(\theta_{t+1})} -\expect{V(\theta_t)}\right) + \frac{384(1+R_w)^2[1+(\kappa-1)\xi]}{B(1-\xi)} 
    + {96} \expect{\norm{(w_t-w^*_{\theta_t})}_2^2} + 96 \Delta_{critic}.
    \end{split}
\end{align*}
Taking summation over $t=0,1,2,\ldots, T-1$ and dividing by $T$,
\begin{align*}
\begin{split}
    &\expect{\norm{\nabla_{\theta}V(\theta_{\hat{T}})}_2^2}\\&=\frac{1}{T}\sum\limits_{t=0}^{T-1}\expect{\norm{\nabla_{\theta}V(\theta_t)}_2^2}\\
    &\leq \frac{16L_V\sqrt{M}\left(\expect{V(\theta_{T})} -\expect{V(\theta_0)}\right)}{T} + \frac{384(1+R_w)^2[1+(\kappa-1)\xi]}{B(1-\xi)} 
    + {96} \frac{1}{T}\sum\limits_{t=0}^{T-1}\expect{\norm{(w_t-w^*_{\theta_t})}_2^2} + 96 \Delta_{critic}\\
    &\leq \frac{16L_V\sqrt{M}}{(1-\gamma)T} + \frac{384(1+R_w)^2[1+(\kappa-1)\xi]}{B(1-\xi)} 
    + {96} \frac{1}{T}\sum\limits_{t=0}^{T-1}\expect{\norm{(w_t-w^*_{\theta_t})}_2^2} + 96 \Delta_{critic}
    \end{split}
\end{align*}
We now let $B\geq  \frac{1152}{(1+R_w)^2[1+(\kappa-1)\xi]}{(1-\xi)\epsilon}$, $\expect{\norm{(w_t-w^*_{\theta_t})}_2^2} \leq \frac{\epsilon}{288}$ and $T\geq \frac{48L_V\sqrt{M}}{(1-\gamma)\epsilon}$, then we have
\[\expect{\norm{\nabla_{\theta}V(\theta_{\hat{T}})}_2^2} \leq \epsilon + \cO(\Delta_{critic}).\]
This leads to the final sample complexity of $(B+HT_c)T=\left(\frac{1}{\epsilon}+\frac{\sqrt{M}}{\epsilon}\log\frac{1}{\epsilon} \right)\left(\frac{\sqrt{M}}{(1-\gamma)^2\epsilon} \right)=\cO\left(\frac{M}{(1-\gamma)^2\epsilon^2}\log\frac{1}{\epsilon}\right)$
\end{proof}

\subsection{Natural-actor-critic based improper learning}

%%% Proof of theorem for Natural actor critic
\subsubsection{Proof of Theorem~\ref{thm:NAC main theorem}}
\begin{proof}
We first show that the natural actor-critic improper learner converges to a stationary point. We will then show convergence to the global optima which is what is different from that of \cite{Improving-AC-NAC}.

Let $v_t(w):=\frac{1}{B}\sum_{i=0}^{B-1} \cE_w(s_{t,i},m_{t,i}, s_{t,i+1})\psi_{\theta_t}(m_{t,i})$, $A_w(s,m):=\mathbb{E}_{\tilde{P}}[\cE(s,m,s')|s,m]$ and $g(w,\theta):=\mathbb{E}_{\nu_\theta}[A_w(s,m)\psi_\theta(m)]$ for $w\in \Real^d$ and $\theta\in \Real^M$. Also let $u_t(w):=[F_t(\theta_t)+\lambda I]^{-1}\left[\frac{1}{B}\sum_{i=0}^{B-1}\cE_w(s_{t,i},m_{t,i},s_{t,i+1})\psi_{\theta_t}(m_{t,i})\right] = [F_t(\theta_t)+\lambda I]^{-1}v_t(w)$.
    
Recall Prop~\ref{prop:restatement of Prop 1 in AC paper}. We have
\begin{lemma}\label{lemma:NAC stationary pt convergence}
Assume $\sup_{s\in \cS}\norm{\phi(s)}_2 \leq 1$. Under Assumptions~\ref{assumption:ergodicity} and \ref{assumption:bound on w} with step-sizes chosen as $\alpha=\left(\frac{\lambda^2}{2\sqrt{M}L_V(1+\lambda)}\right)$, we have
\begin{align*}
\begin{split}
    \mathbb{E}[\norm{\nabla_\theta V(\theta_{\hat{T}})}_2^2]
    &=\frac{1}{T}\sum_{t=0}^{T-1} \mathbb{E}[\norm{\nabla_\theta V(\theta_t)}_2^2] \\
    &\leq \frac{16\sqrt{M}L_V(1+\lambda)^2}{\lambda^2}\frac{\expect{V(\theta_T)}-V(\theta_0)}{T} + \frac{108}{\lambda^2}[2(1+\lambda)^2+\lambda^2]\frac{\sum_{t=0}^{T-1}\expect{\norm{w_t-w_{\theta_t}^*}_2^2}}{T} \\
    &+ [2(1+\lambda)^2 +\lambda^2]\left( \frac{32}{\lambda^4(1-\gamma)^2} + \frac{432(1+2R_w)^2}{\lambda^2}  \right) \frac{1+(\kappa-1)\xi}{(1-\xi)B} +\frac{216}{\lambda^2}[2(1+\lambda)^2+\lambda^2]\Delta_{critic}.
\end{split}
\end{align*}
\end{lemma}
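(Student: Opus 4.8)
The plan is to follow the template of the proof of Theorem~\ref{thm:AC main theorem} (the basic-AC stationary-point bound), modifying only the steps where the natural-gradient preconditioner $G_t=F_t(\theta_t)+\lambda I$ enters. Writing the NAC update as $\theta_{t+1}-\theta_t=\alpha\,u_t(w_t)$ with $u_t(w_t)=G_t^{-1}v_t(w_t)$, I would start from the $\sqrt{M}L_V$-smoothness of $V$ supplied by Prop~\ref{prop:restatement of Prop 1 in AC paper}:
\[
V(\theta_{t+1}) \ge V(\theta_t) + \alpha\innprod{\nabla_\theta V(\theta_t),\, G_t^{-1}v_t(w_t)} - \tfrac{\sqrt{M}L_V}{2}\alpha^2\norm{G_t^{-1}v_t(w_t)}_2^2 .
\]
The crucial new ingredient is the two-sided spectral control of $G_t$. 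Since each $\psi_{\theta_t}(m_{t,i})\psi_{\theta_t}(m_{t,i})^\top$ is PSD with $\norm{\psi_{\theta_t}(m)}_2\le\sqrt{2}$ (Lemma~\ref{lemma:value of Cpsi}), the empirical Fisher matrix obeys $0\preceq F_t(\theta_t)\preceq 2I$, whence $\lambda I\preceq G_t\preceq(2+\lambda)I$ and therefore $\tfrac{1}{2+\lambda}I\preceq G_t^{-1}\preceq\tfrac1\lambda I$. The lower bound guarantees the natural-gradient direction stays positively correlated with $\nabla V$ (progress), while the upper bound caps the step length (smoothness penalty).

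Next I would decompose $v_t(w_t)=\nabla_\theta V(\theta_t)+\big(v_t(w_t)-\nabla_\theta V(\theta_t)\big)$ and treat the linear and quadratic terms separately. For the linear term, $\innprod{\nabla V, G_t^{-1}\nabla V}\ge\tfrac{1}{2+\lambda}\norm{\nabla V}_2^2$, and the residual $\innprod{\nabla V, G_t^{-1}(v_t-\nabla V)}$ is handled by the $G_t^{-1}$-weighted Cauchy--Schwarz inequality followed by Young's inequality: this absorbs half of the positive progress term and leaves a $\tfrac1\lambda\norm{v_t-\nabla V}_2^2$ error. For the smoothness term, $\norm{G_t^{-1}v_t(w_t)}_2^2\le\tfrac{1}{\lambda^2}\norm{v_t(w_t)}_2^2\le\tfrac{2}{\lambda^2}\big(\norm{\nabla V}_2^2+\norm{v_t-\nabla V}_2^2\big)$.

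Collecting coefficients and inserting the calibrated step-size $\alpha=\tfrac{\lambda^2}{2\sqrt{M}L_V(1+\lambda)}$ keeps the net multiplier of $\norm{\nabla_\theta V(\theta_t)}_2^2$ positive, of order $\alpha/(1+\lambda)$ coming from the $\tfrac{1}{2+\lambda}$ progress factor; the various powers of $\lambda$ in the statement (the $1/\lambda^2$ prefactors and the $1/\lambda^4$ on the variance term) are precisely the book-keeping produced by the $\norm{G_t^{-1}}\le1/\lambda$ bounds combined with the division by this $O(\alpha/(1+\lambda))$ multiplier. After this, one obtains an inequality of the shape $c\,\alpha\,\mathbb{E}\norm{\nabla V(\theta_t)}_2^2\le\mathbb{E}[V(\theta_{t+1})]-\mathbb{E}[V(\theta_t)]+(\text{estimation error})$. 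The remaining error $\mathbb{E}\norm{v_t(w_t)-\nabla_\theta V(\theta_t)}_2^2$ is then bounded exactly as in Theorem~\ref{thm:AC main theorem}, splitting into (i) the critic error $\mathbb{E}\norm{w_t-w^*_{\theta_t}}_2^2$ (using $\norm{\gamma\phi(s')-\phi(s)}_2\le2$ under $\norm{\phi}_2\le1$), (ii) a Markov-chain sampling/variance term of order $\tfrac{(1+R_w)^2[1+(\kappa-1)\xi]}{(1-\xi)B}$ furnished by the uniform-ergodicity Assumption~\ref{assumption:ergodicity}, and (iii) the function-approximation bias $\Delta_{critic}$. Finally I would take expectations, sum over $t=0,\dots,T-1$, telescope $\sum_t(V(\theta_{t+1})-V(\theta_t))=V(\theta_T)-V(\theta_0)$, divide by $T$, and use that $\hat T$ is uniform on $\{0,\dots,T-1\}$ to identify $\tfrac1T\sum_t\mathbb{E}\norm{\nabla V(\theta_t)}_2^2$ with $\mathbb{E}\norm{\nabla V(\theta_{\hat T})}_2^2$, yielding the stated bound.

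The main obstacle is the first block: unlike vanilla AC, the descent quantity is $\innprod{\nabla V, G_t^{-1}v_t}$ rather than $\innprod{\nabla V, v_t}$, so one must simultaneously exploit the lower eigenvalue bound of $G_t^{-1}$ (to guarantee progress) and its upper eigenvalue bound (to keep the smoothness penalty under control), and it is the interplay of these two---tuned through $\alpha\propto\lambda^2/(1+\lambda)$---that both keeps the net $\norm{\nabla V}_2^2$ multiplier positive and determines the $\lambda$-dependence of every error constant. All remaining steps port over from Theorem~\ref{thm:AC main theorem} essentially verbatim.
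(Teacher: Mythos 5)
Your overall architecture is the one the paper intends (its own proof of this lemma is a one-line pointer to the first part of Theorem 6 of \citet{Improving-AC-NAC} together with the machinery of Theorem~\ref{thm:AC main theorem}): smoothness from Prop~\ref{prop:restatement of Prop 1 in AC paper}, two-sided spectral control of $G_t=F_t(\theta_t)+\lambda I$, the three-way error decomposition (critic error, Markov-sampling variance, $\Delta_{critic}$), and telescoping over a uniformly drawn $\hat T$. However, the step you yourself identify as the crux --- the interplay between the lower spectral bound (progress) and the upper spectral bound (smoothness penalty) --- does not go through with your bookkeeping. Concretely: from Lemma~\ref{lemma:value of Cpsi} you get $F_t\preceq 2I$, hence the progress term is $\innprod{\nabla V,G_t^{-1}\nabla V}\ge\frac{1}{2+\lambda}\norm{\nabla V}_2^2$; your cross-term absorption halves this to $\frac{1}{2(2+\lambda)}$; and your smoothness bound $\norm{G_t^{-1}v_t}_2^2\le\frac{2}{\lambda^2}\bigl(\norm{\nabla V}_2^2+\norm{v_t-\nabla V}_2^2\bigr)$ contributes a penalty coefficient $\frac{\sqrt{M}L_V\alpha}{\lambda^2}=\frac{1}{2(1+\lambda)}$ at the prescribed $\alpha=\frac{\lambda^2}{2\sqrt{M}L_V(1+\lambda)}$. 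The net multiplier of $\norm{\nabla_\theta V(\theta_t)}_2^2$ is therefore $\alpha\bigl[\frac{1}{2(2+\lambda)}-\frac{1}{2(1+\lambda)}\bigr]<0$, not ``positive of order $\alpha/(1+\lambda)$'': the per-iteration descent inequality never materializes, so there is nothing to telescope. Even if you absorb the cross term with an arbitrarily small Young parameter, the available margin is only $\frac{1}{2+\lambda}-\frac{1}{2(1+\lambda)}=\frac{\lambda}{2(2+\lambda)(1+\lambda)}=O(\lambda)$, and dividing through by it yields a leading coefficient of order $\sqrt{M}L_V(1+\lambda)^2(2+\lambda)/\lambda^3$ --- weaker than the lemma's $16\sqrt{M}L_V(1+\lambda)^2/\lambda^2$ by a factor of $1/\lambda$.

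The repair is to keep the bookkeeping in the $G_t^{-1}$-weighted geometry instead of converting to Euclidean norms early. Since $G_t^{-1}\preceq\frac{1}{\lambda}I$ implies $G_t^{-2}\preceq\frac{1}{\lambda}G_t^{-1}$, one has $\norm{G_t^{-1}v_t}_2^2\le\frac{1}{\lambda}\,v_t^\top G_t^{-1}v_t$; writing $v_t=\nabla V+\delta_t$ and expanding both the linear term and this quadratic form in the inner product $x^\top G_t^{-1}y$ gives
\[
\alpha\innprod{\nabla V,G_t^{-1}v_t}-\tfrac{\sqrt{M}L_V\alpha^2}{2}\norm{G_t^{-1}v_t}_2^2\;\ge\;\alpha\Bigl(\tfrac{1}{2}-\tfrac{\sqrt{M}L_V\alpha}{\lambda}\Bigr)\nabla V^\top G_t^{-1}\nabla V-\alpha\Bigl(\tfrac{1}{2}+\tfrac{\sqrt{M}L_V\alpha}{\lambda}\Bigr)\delta_t^\top G_t^{-1}\delta_t,
\]
and at the prescribed step size $\frac{\sqrt{M}L_V\alpha}{\lambda}=\frac{\lambda}{2(1+\lambda)}\le\frac{1}{2}$, so the progress coefficient is $\frac{\alpha}{2(1+\lambda)}\nabla V^\top G_t^{-1}\nabla V\ge\frac{\alpha}{2(1+\lambda)(2+\lambda)}\norm{\nabla V}_2^2>0$, while the error term carries at most $\frac{\alpha}{\lambda}\norm{\delta_t}_2^2$. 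Dividing by this positive coefficient recovers the lemma's $(1+\lambda)^2/\lambda^2$ leading constant (using $2+\lambda\le 4$, legitimate in the regime $\lambda\le 2$ relevant to $\lambda=\cO(\Delta_{critic})$ in Theorem~\ref{thm:NAC main theorem}), after which your error decomposition, summation and the identification of $\frac{1}{T}\sum_t$ with $\mathbb{E}[\cdot]$ at $\hat T$ go through verbatim. Alternatively, the stated constants follow directly under the normalization $\norm{\psi}_2\le 1$ (i.e.\ $F_t\preceq I$) used in the cited source, but that contradicts the paper's own Lemma~\ref{lemma:value of Cpsi}, so the weighted-geometry argument is the one that actually closes the gap.
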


\begin{proof}
Proof is similar to first part of proof of Thm 6 in \cite{Improving-AC-NAC} and similar to Thm~\ref{thm:AC main theorem}, along with using Prop ~\ref{prop:restatement of Prop 1 in AC paper} and Lemmas ~\ref{lemma:value of Cpi}, ~\ref{lemma:value of Cpsi} and \ref{lemma:value of Lpsi}.

% \begin{align*}
% \begin{split}
%     V(\theta_{t+1})&\geq V(\theta_t)+\innprod{\nabla_{\theta}V(\theta_t), \theta_{t+1}-\theta_t} - \frac{L_V\sqrt{M}}{2}\norm{\theta_{t+1}-\theta_t}_2^2\\
%     &=V(\theta_t)+\alpha \innprod{\nabla_{\theta}V(\theta_t), (F(\theta_t)+\lambda I)^{-1}\nabla_{\theta}V(\theta_t)} +\alpha \innprod{\nabla_{\theta}V(\theta_t), u_t(w_t)-(F(\theta_t)+\lambda I)^{-1}\nabla_{\theta}V(\theta_t)}\\
%     &- \frac{L_V\alpha^2\sqrt{M}}{2}\norm{u_t(w_t)-(F(\theta_t)+\lambda I)^{-1}\nabla_{\theta}V(\theta_t) +(F(\theta_t)+\lambda I)^{-1}\nabla_{\theta}V(\theta_t)}_2^2\\
%     &\geq V(\theta_t)+\frac{4\alpha}{1+\lambda} \norm{\nabla_{\theta}V(\theta_t)}_2^2 +\alpha \innprod{\nabla_{\theta}V(\theta_t), u_t(w_t)-(F(\theta_t)+\lambda I)^{-1}\nabla_{\theta}V(\theta_t)}\\
%     &- {L_V\alpha^2\sqrt{M}}\norm{u_t(w_t)-(F(\theta_t)+\lambda I)^{-1}\nabla_{\theta}V(\theta_t)}_2^2 -{L_V\alpha^2\sqrt{M}}\norm{(F(\theta_t)+\lambda I)^{-1}\nabla_{\theta}V(\theta_t)}_2^2\\
%     &\geq V(\theta_t) +\left(\frac{2\alpha}{1+\lambda}-\frac{L_V\alpha^2\sqrt{M}}{\lambda^2}  \right)\norm{\nabla_{\theta}V(\theta_t)}_2^2\\
%     &-\left({2\alpha(1+\lambda)}+L_V\alpha^2\sqrt{M}\right)\norm{u_t(w_t)-(F(\theta_t)+\lambda I)^{-1}\nabla_{\theta}V(\theta_t)}_2^2.
%     \end{split}
% \end{align*}
% Following eq(33), 

\end{proof}    
\allowdisplaybreaks 
We now move to proving the global optimality of natural actor critic based improper learner. Let $KL(\cdot,\cdot)$ be the KL-divergence between two distributions. We denote $\tt{D}(\theta):=KL(\pi^*, \pi_\theta)$, $u_{\theta_t}^\lambda:=(F(\theta_t)+\lambda I)^{-1}\nabla_\theta V(\theta_t)$ and $u^\dagger_{\theta_t}:=F(\theta_t)^\dagger \nabla_\theta V(\theta_t)$. We see that
\allowdisplaybreaks
\begin{align*}
\allowdisplaybreaks
\begin{split}
\allowdisplaybreaks
    &\tt{D}({\theta_t}) -\tt{D}({\theta_{t+1}}) \\
    &= \sum\limits_{m=1}^M \pi^*(m) \log\pi_{\theta_{t+1}}(m) - \log\pi_{\theta_{t}}(m)\\
    &\stackrel{(i)}{=} \sum\limits_{s\in \cS}d_\rho^{\pi^*}(s)\sum\limits_{m=1}^M \pi^*(m) \log\pi_{\theta_{t+1}}(m) - \log\pi_{\theta_{t}}(m)\\
    &= \mathbb{E}_{\nu_{\pi^*}} [\log\pi_{\theta_{t+1}}(m) - \log\pi_{\theta_{t}}(m)]\\
    &\stackrel{(ii)}{\geq} \mathbb{E}_{\nu_{\pi^*}}\left[\nabla_\theta\log(\pi_{\theta_t}(m)) \right]^\top(\theta_{t+1}-\theta_t) -\frac{\norm{\theta_{t+1}-\theta_t}_2^2}{2}\\
    &=\mathbb{E}_{\nu_{\pi^*}}\left[\psi_{\theta_t}(m) \right]^\top(\theta_{t+1}-\theta_t) -\frac{\norm{\theta_{t+1}-\theta_t}_2^2}{2}\\
    &=\alpha\mathbb{E}_{\nu_{\pi^*}}\left[\psi_{\theta_t}(m) \right]^\top u_t(w_t) -\frac{\alpha^2\norm{u_t(w_t)}_2^2}{2}\\
    &=\alpha\mathbb{E}_{\nu_{\pi^*}}\left[\psi_{\theta_t}(m) \right]^\top u^\lambda_{\theta_t} + \alpha\mathbb{E}_{\nu_{\pi^*}}\left[\psi_{\theta_t}(m) \right]^\top (u_t(w_t)-u^\lambda_{\theta_t})-\frac{\alpha^2\norm{u_t(w_t)}_2^2}{2}\\
    &=\alpha\mathbb{E}_{\nu_{\pi^*}}\left[\psi_{\theta_t}(m) \right]^\top u^\dagger_{\theta_t} + \alpha\mathbb{E}_{\nu_{\pi^*}}\left[\psi_{\theta_t}(m) \right]^\top (u^\lambda_{\theta_t}-u^\dagger_{\theta_t}) + \alpha\mathbb{E}_{\nu_{\pi^*}}\left[\psi_{\theta_t}(m) \right]^\top (u_t(w_t)-u^\lambda_{\theta_t})-\frac{\alpha^2\norm{u_t(w_t)}_2^2}{2}\\
    &=\alpha\mathbb{E}_{\nu_{\pi^*}}[A_{\pi_{\theta_t}}(s,m)] + \alpha\mathbb{E}_{\nu_{\pi^*}}[\psi_{\theta_t}(m)^\top u^\dagger_{\theta_t} - A_{\pi_{\theta_t}}(s,m)] + \alpha\mathbb{E}_{\nu_{\pi^*}}\left[\psi_{\theta_t}(m) \right]^\top (u^\lambda_{\theta_t}-u^\dagger_{\theta_t}) \\&+ \alpha\mathbb{E}_{\nu_{\pi^*}}\left[\psi_{\theta_t}(m) \right]^\top (u_t(w_t)-u^\lambda_{\theta_t})-\frac{\alpha^2\norm{u_t(w_t)}_2^2}{2}\\
    &\stackrel{(iii)}{=} (1-\gamma) (V({\pi^*})-V(\pi_{\theta_t})) + \alpha\mathbb{E}_{\nu_{\pi^*}}[\psi_{\theta_t}(m)^\top u^\dagger_{\theta_t} - A_{\pi_{\theta_t}}(s,m)] + \alpha\mathbb{E}_{\nu_{\pi^*}}\left[\psi_{\theta_t}(m) \right]^\top (u^\lambda_{\theta_t}-u^\dagger_{\theta_t}) \\&+ \alpha\mathbb{E}_{\nu_{\pi^*}}\left[\psi_{\theta_t}(m) \right]^\top (u_t(w_t)-u^\lambda_{\theta_t})-\frac{\alpha^2\norm{u_t(w_t)}_2^2}{2}\\
    &\geq (1-\gamma) (V({\pi^*})-V(\pi_{\theta_t})) - \alpha\sqrt{\mathbb{E}_{\nu_{\pi^*}}[[\psi_{\theta_t}(m)^\top u^\dagger_{\theta_t} - A_{\pi_{\theta_t}}(s,m)]^2]} + \alpha\mathbb{E}_{\nu_{\pi^*}}\left[\psi_{\theta_t}(m) \right]^\top (u^\lambda_{\theta_t}-u^\dagger_{\theta_t}) \\&+ \alpha\mathbb{E}_{\nu_{\pi^*}}\left[\psi_{\theta_t}(m) \right]^\top (u_t(w_t)-u^\lambda_{\theta_t})-\frac{\alpha^2\norm{u_t(w_t)}_2^2}{2}\\
    &\stackrel{(iv)}{\geq} (1-\gamma) (V({\pi^*})-V(\pi_{\theta_t})) - \sqrt{\norm{\frac{\nu_{\pi^*}}{\nu_{\pi_{\theta_t}}}}_{\infty}}\alpha\sqrt{\mathbb{E}_{\nu_{\pi^*}}[[\psi_{\theta_t}(m)^\top u^\dagger_{\theta_t} - A_{\pi_{\theta_t}}(s,m)]^2]} + \alpha\mathbb{E}_{\nu_{\pi^*}}\left[\psi_{\theta_t}(m) \right]^\top (u^\lambda_{\theta_t}-u^\dagger_{\theta_t}) \\&+ \alpha\mathbb{E}_{\nu_{\pi^*}}\left[\psi_{\theta_t}(m) \right]^\top (u_t(w_t)-u^\lambda_{\theta_t})-\frac{\alpha^2\norm{u_t(w_t)}_2^2}{2}\\
    &\stackrel{(v)}{\geq} (1-\gamma) (V({\pi^*})-V(\pi_{\theta_t})) - \sqrt{\frac{1}{1-\gamma}\norm{\frac{\nu_{\pi^*}}{\nu_{\pi_{\theta_0}}}}_{\infty}}\alpha\sqrt{\mathbb{E}_{\nu_{\pi^*}}[[\psi_{\theta_t}(m)^\top u^\dagger_{\theta_t} - A_{\pi_{\theta_t}}(s,m)]^2]}  \\&+\alpha\mathbb{E}_{\nu_{\pi^*}}\left[\psi_{\theta_t}(m) \right]^\top (u^\lambda_{\theta_t}-u^\dagger_{\theta_t}) + \alpha\mathbb{E}_{\nu_{\pi^*}}\left[\psi_{\theta_t}(m) \right]^\top (u_t(w_t)-u^\lambda_{\theta_t})-\frac{\alpha^2\norm{u_t(w_t)}_2^2}{2}\\
    &\stackrel{(vi)}{\geq}  (1-\gamma) (V({\pi^*})-V(\pi_{\theta_t})) - \sqrt{\frac{1}{1-\gamma}\norm{\frac{\nu_{\pi^*}}{\nu_{\pi_{\theta_0}}}}_{\infty}}\alpha\sqrt{\mathbb{E}_{\nu_{\pi^*}}[[\psi_{\theta_t}(m)^\top u^\dagger_{\theta_t} - A_{\pi_{\theta_t}}(s,m)]^2]} - \alpha C_{soft}\lambda \\&- 2\alpha\norm{ u_t(w_t)-u^\lambda_{\theta_t}}_2-\frac{\alpha^2\norm{u_t(w_t)}_2^2}{2}.
    \end{split}
\end{align*}
where (i) is by taking an extra expectation without changing the inner summand, (ii) follows by Lemma~\ref{lemma:value of Lpsi} and Lemma 5 in \cite{Improving-AC-NAC}, (iii) follows by the value difference lemma (Lemma~\ref{lemma:value diffence lemma}), (iv) follows by defining $\norm{\frac{\nu_{\pi^*}}{\nu_{\pi_{\theta_t}}}}_{\infty}:=\max_{s,m}\frac{\nu_{\pi^*}(s,m)}{\nu_{\pi_{\theta_t}}(s,m)}$, (v) follows because $\nu_{\pi_{\theta_t}}(s,m) \geq (1-\gamma)\nu_{\pi_{\theta_0}}(s,m)$, (vi) follows by Lemma6 in \cite{Improving-AC-NAC}and Lemma~\ref{lemma:value of Cpsi}.

Next, we denote $\Delta_{actor}:=\max_{\theta\in \Real^M}\min_{w\in \Real^d}\mathbb{E}_{\nu_{\pi_\theta}}[[\psi_\theta^\top w - A_{\pi_\theta}(s,m)]^2]$ as the actor error. 
\begin{align*}
\allowdisplaybreaks
\begin{split}
    &\tt{D}({\theta_t}) -\tt{D}({\theta_{t+1}}) \\
    &\geq(1-\gamma) (V({\pi^*})-V(\pi_{\theta_t})) - \sqrt{\frac{1}{1-\gamma}\norm{\frac{\nu_{\pi^*}}{\nu_{\pi_{\theta_0}}}}_{\infty}}\alpha\sqrt{\Delta_{actor}} - \alpha C_{soft}\lambda \\&- 2\alpha\norm{ u_t(w_t)-u^\lambda_{\theta_t}}_2-\frac{\alpha^2\norm{u_t(w_t)}_2^2}{2}\\
    &\geq (1-\gamma) (V({\pi^*})-V(\pi_{\theta_t})) - \sqrt{\frac{1}{1-\gamma}\norm{\frac{\nu_{\pi^*}}{\nu_{\pi_{\theta_0}}}}_{\infty}}\alpha\sqrt{\Delta_{actor}} - \alpha C_{soft}\lambda \\&- 2\alpha\norm{ u_t(w_t)-u^\lambda_{\theta_t}}_2-\frac{\alpha^2\norm{u_t(w_t)-u^\lambda(\theta_t)}_2^2}{2}-\frac{\alpha^2}{\lambda^2}\norm{\nabla_{\theta}V(\theta_t)}_2^2.\\
\end{split}
\end{align*}
Rearranging and dividing by $(1-\gamma)\alpha$, and taking expectation both sides we get
\begin{align*}
\begin{split}
    & V({\pi^*})-\expect{V(\pi_{\theta_t})}\\
    &\leq \frac{\expect{\tt{D}(\theta_t)}-\expect{\tt{D}(\theta_{t+1})}}{(1-\gamma)\alpha} +\frac{2\sqrt{\mathbb{E}[\norm{ u_t(w_t)-u^\lambda_{\theta_t}}_2^2]}}{1-\gamma} + \frac{\alpha\expect{\norm{u_t(w_t)-u^\lambda(\theta_t)}_2^2}}{2(1-\gamma)} \\&+ \frac{\alpha}{\lambda^2(1-\gamma)}\expect{\norm{\nabla_{\theta}V(\theta_t)}_2^2} + \sqrt{\frac{1}{(1-\gamma)^3}\norm{\frac{\nu_{\pi^*}}{\nu_{\pi_{\theta_0}}}}_{\infty}}\sqrt{\Delta_{actor}} + \frac{C_{soft}\lambda}{1-\gamma}.\\
    \end{split}
\end{align*}
Next we use the same argument as in eq (33) and Lemma 2 in \citet{Improving-AC-NAC} to bound the second term.
\begin{align*}
    \expect{\norm{u_t(w_t)-u^\lambda_{\theta_t}}_2^2} &\leq \frac{C}{B}+ \frac{108\expect{\norm{w_t-w^*_{\theta_t}}_2^2}}{\lambda^2} + \frac{216\Delta_{critic}}{\lambda^2}.
\end{align*}
where $C:=\frac{18}{\lambda^2}\frac{24(1+2R_w)^2[1+(\kappa-1)\xi]}{B(1-\xi)} + \frac{4}{\lambda^4(1-\gamma)^2}.\frac{8[1+(\kappa-1)\xi]}{(1-\xi)B}.$
Using this in the bound and using $\sqrt{a+b}\leq \sqrt{a}+\sqrt{b}$ for positive $a,b$ above, we have,
\begin{align*}
    \begin{split}
        & V({\pi^*})-\expect{V(\pi_{\theta_t})}\\
        &\leq  \frac{\expect{\tt{D}(\theta_t)}-\expect{\tt{D}(\theta_{t+1})}}{(1-\gamma)\alpha} +\frac{2}{1-\gamma}\left( \sqrt{\frac{C}{B}}+ 11\sqrt{\frac{\expect{\norm{w_t-w^*_{\theta_t}}_2^2}}{\lambda^2}} + 15\sqrt{\frac{\Delta_{critic}}{\lambda^2}}  \right) \\&+ \frac{\alpha}{2(1-\gamma)}
        \left({\frac{C}{B}}+ {108\frac{\expect{\norm{w_t-w^*_{\theta_t}}_2^2}}{\lambda^2}} + {216\frac{\Delta_{critic}}{\lambda^2}}  \right)
        \\&+ \frac{\alpha}{\lambda^2(1-\gamma)}\expect{\norm{\nabla_{\theta}V(\theta_t)}_2^2} + \sqrt{\frac{1}{(1-\gamma)^3}\norm{\frac{\nu_{\pi^*}}{\nu_{\pi_{\theta_0}}}}_{\infty}}\sqrt{\Delta_{actor}} + \frac{C_{soft}\lambda}{1-\gamma}.
    \end{split}
\end{align*}
    Summing over all $t=0,1,\ldots,T-1$ and then dividing by $T$ we get,
    \begin{align*}
    \begin{split}
        &V({\pi^*})-\frac{1}{T}\sum\limits_{t=0}^{T-1}\expect{V(\pi_{\theta_t})}\\
        & \leq  \frac{{\tt{D}(\theta_0)}-\expect{\tt{D}(\theta_{T})}}{(1-\gamma)\alpha T} +\frac{2}{(1-\gamma)}\left( \sqrt{\frac{C}{B}} + 15\sqrt{\frac{\Delta_{critic}}{\lambda^2}}  \right) +\frac{22}{(1-\gamma)T}\sum_{t=0}^{T-1}\sqrt{\frac{\expect{\norm{w_t-w^*_{\theta_t}}_2^2}}{\lambda^2}} \\&+ \frac{\alpha}{2(1-\gamma)}
        \left({\frac{C}{B}} + {216\frac{\Delta_{critic}}{\lambda^2}}  \right) + \frac{54\alpha}{(1-\gamma)T}\sum\limits_{t=0}^{T-1}{\frac{\expect{\norm{w_t-w^*_{\theta_t}}_2^2}}{\lambda^2}}
        \\&+ \frac{\alpha}{\lambda^2(1-\gamma)T}\sum\limits_{t=0}^{T-1}\expect{\norm{\nabla_{\theta}V(\theta_t)}_2^2} + \sqrt{\frac{1}{(1-\gamma)^3}\norm{\frac{\nu_{\pi^*}}{\nu_{\pi_{\theta_0}}}}_{\infty}}\sqrt{\Delta_{actor}} + \frac{C_{soft}\lambda}{1-\gamma}.
    \end{split}
    \end{align*}
    We now put the value of $\alpha \leq\frac{\lambda^2}{2\sqrt{M}L_V(1+\lambda)}$, we get,
    \begin{align*}
        \begin{split}
          &V({\pi^*})-\frac{1}{T}\sum\limits_{t=0}^{T-1}\expect{V(\pi_{\theta_t})}\\
          &\leq C_1\frac{\sqrt{M}}{T} + \frac{C_2}{\sqrt{B}} + C_3\sqrt{\Delta_{critic}} + \frac{C_4}{T}\sum_{t=0}^{T-1}\sqrt{\expect{\norm{w_t-w^*_{\theta_t}}_2^2}} \\
          &+ \frac{C_5}{B} + C_6\sqrt{\Delta_{critic}} +  \frac{C_7}{T}\sum\limits_{t=0}^{T-1}\expect{\norm{w_t-w^*_{\theta_t}}_2^2} + \frac{C_8}{T}\sum\limits_{t=0}^{T-1}\expect{\norm{\nabla_{\theta}V(\theta_t)}_2^2} \\
          &+ \sqrt{\frac{1}{(1-\gamma)^3}\norm{\frac{\nu_{\pi^*}}{\nu_{\pi_{\theta_0}}}}_{\infty}}\sqrt{\Delta_{actor}} + C_9\lambda. 
        \end{split}
    \end{align*}
    Letting $T=\mathcal{O}\left( \frac{\sqrt{M}}{(1-\gamma)^2\epsilon} \right)$, $B=\cO\left( \frac{1}{(1-\gamma)^2\epsilon^2} \right)$ then $\expect{\norm{\nabla_{\theta}V(\theta_t)}_2^2}\leq \epsilon^2$ and
    \[V({\pi^*})-\frac{1}{T}\sum\limits_{t=0}^{T-1}\expect{V(\pi_{\theta_t})} \leq \epsilon + \cO\left(\sqrt{\frac{\Delta_{actor}}{(1-\gamma)^3} }\right) + \cO(\Delta_{critic}) + \cO(\lambda). \]
    This leads to the total sample complexity as 
    \[(B+HT_c)T = \cO\left(\left( \frac{1}{(1-\gamma)^2\epsilon^2} + \frac{\sqrt{M}}{\epsilon^2}\log\frac{1}{\epsilon} \right)\frac{\sqrt{M}}{(1-\gamma)^2\epsilon}\right) = \cO\left(\frac{M}{(1-\gamma)^4\epsilon^3} \log\frac{1}{\epsilon}\right).\]
\end{proof}

%%%%%%%%%%%%%%%%%%%%%%%%%%%%%%%%%%%%%%%%%%%%%%%%%%%%%%%%%%%%%%%%%%%%

\section{Simulation Details}\label{sec:simulation-details}
In this section we describe the details of the Sec.~\ref{sec:simulation-results}. Recall that since neither value functions nor value gradients are available in closed-form, we modify SoftMax PG (Algorithm~\ref{alg:mainPolicyGradMDP}) to make it generally implementable using a combination of (1) rollouts to estimate the value function of the current (improper) policy and (2) a stochastic approximation-based approach to estimate its value gradient. 

The Softmax PG with Gradient Estimation or \textbf{SPGE} (Algorithm ~\ref{alg:mainPolicyGradMDPgradEst}), and the gradient estimation algorithm ~\ref{alg:gradEst}, \emph{\color{blue}GradEst}, are shown below. 

 \begin{minipage}{0.38\textwidth}
\begin{algorithm}[H]
    \centering
      \caption{Softmax PG with Gradient Estimation ({\color{blue}SPGE})}
  \label{alg:mainPolicyGradMDPgradEst}
    \begin{algorithmic}[1]
    \STATE {\bfseries Input:} learning rate $\eta>0$, perturbation parameter $\alpha>0$, Initial state distribution $\mu$
   %\REPEAT
   \STATE Initialize each $\theta^1_m=1$, for all $m\in [M]$, $s_1\sim \mu$
   \FOR{$t=1$ {\bfseries to} $T$}
   \STATE Choose controller $m_t\sim \pi_t$.
   \STATE Play action $a_t \sim K_{m_t}(s_{t},:)$.
   %\IF{$x_i > x_{i+1}$}
   \STATE Observe $s_{t+1}\sim \tP(.|s_t,a_t)$.
   \STATE ${\widehat{\nabla_{\theta^t} V^{\pi_{\theta_t}}(\mu)}}= \text{\tt GradEst}({\theta_t},\alpha,\mu)$
   \STATE Update: \\$\theta^{t+1} = \theta^{t}+ \eta. \widehat{\nabla_{\theta^t} V^{\pi_{\theta_t}}(\mu)}$.
   %\ENDIF
   \ENDFOR    
    \end{algorithmic}
\end{algorithm}
\end{minipage}
\hfill
\begin{minipage}{0.55\textwidth}
\begin{algorithm}[H]
    \centering
      \caption{GradEst ({\color{blue}subroutine for SPGE})}
  \label{alg:gradEst}
    \begin{algorithmic}[1]
    \STATE {\bfseries Input:} Policy parameters $\theta$, parameter $\alpha>0$, Initial state distribution $\mu$.
   %\REPEAT
   \FOR{$i=1$ {\bfseries to} $\#\tt{runs}$}
   \STATE $u^i\sim Unif(\mathbb{S}^{M-1}).$
   \STATE $\theta_\alpha = \theta+\alpha.u^i$
   \STATE $\pi_\alpha = \softmax(\theta_\alpha)$
   \FOR{$l=1$ {\bfseries to} $\#\tt{rollouts}$}
   \STATE Generate trajectory $(s_0,a_0,r_0,s_1,a_1,r_1,\ldots, s_{\tt{lt}},a_{\tt{lt}},r_{\tt{lt}})$ using the policy $\pi_\alpha:$ and $s_0 \sim \mu$. 
   \STATE $\tt{reward}^l=\sum\limits_{j=0}^{lt}\gamma^jr_j$
   \ENDFOR
   \STATE  $\tt{mr}(i) = \tt{mean}(\tt{reward})$
   \ENDFOR
   \STATE $\tt{GradValue} = \frac{1}{\#runs}\sum\limits_{i=1}^{\#\tt{runs}}{\tt{mr}}(i).u^i.\frac{M}{\alpha}.$
   \STATE {\bfseries Return:} $\tt{GradValue}$.
\end{algorithmic}
\end{algorithm}
\end{minipage}

Next we report some extra simulations we performed under different environments.

\subsection{State Dependent controllers -- Chain MDP}\label{subsec:State Dependent controllers -- Chain MDP} We consider a linear chain MDP as shown in Figure \ref{fig:linearChainMDP}. As evident from the figure, $\abs{\cS}=10$ and the learner has only two actions available, which are $\cA = \{\tt{left}, \tt{right}\}$. Hence the name `chain'. The numbers on the arrows represent the reward obtained with the transition. The initial state is $s_1$. We let $s_{100}$ as the terminal state. Let us define 2 base controllers, $K_1$ and $K_2$, as follows. 
\begin{align*}
    K_1(\tt{left}\given s_j) &=\begin{cases}
    1, & j\in[9]\backslash \{5\}\\
    0.1, & j = 5\\
    0, & j=10.
    \end{cases}
\end{align*}
\begin{align*}
    K_2(\tt{left}\given s_j) &=\begin{cases}
    1, & j\in[9]\backslash \{6\}\\
    0.1, & j = 6\\
    0, & j=10.
    \end{cases}
\end{align*}
and obviously $K_i(\tt{right}|s_j) = 1-K_i(\tt{left}|s_j)$ for $i=1,2$.
An improper mixture of the two controllers, i.e., $(K_1+K_2)/2$ is the optimal in this case. We show that our policy gradient indeed converges to the `correct' combination, see Figure ~\ref{fig:Chainfigure}. 
\begin{figure}
    \centering
    \includegraphics[scale=0.7]{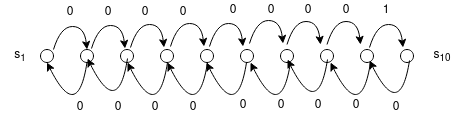}
    \caption{A chain MDP with 10 states.}
    \label{fig:linearChainMDP}
\end{figure}
\begin{figure}
    \centering
    \includegraphics[scale=0.75]{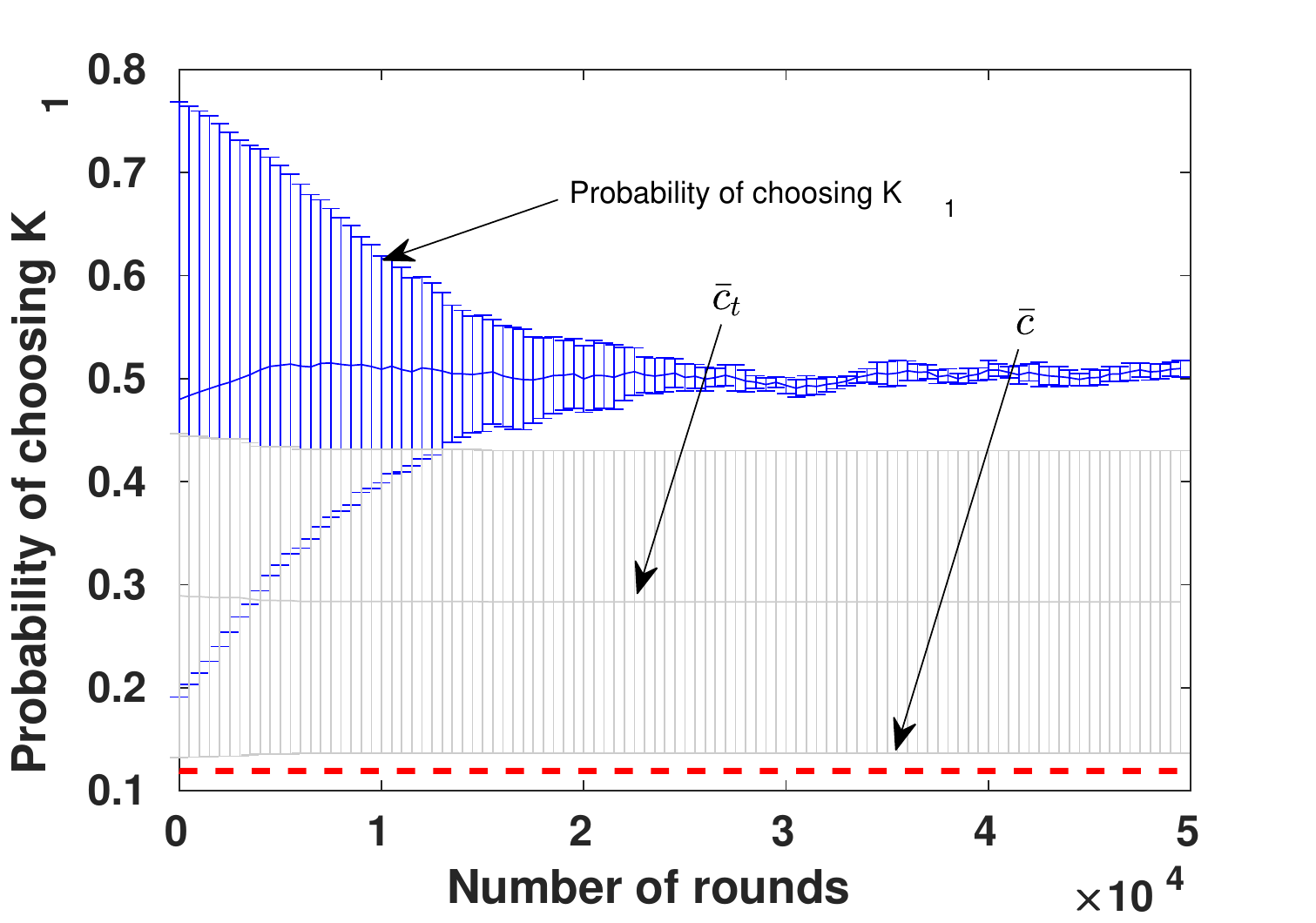}
    \caption{Softmax PG alg applied to the linear Chain MDP with various randomly chosen initial distribution. Plot shows probability of choosing controller $K_1$ averaged over $\#\tt{trials}$}.
    \label{fig:Chainfigure}
\end{figure}
We here provide an elementary calculation of our claim that the mixture $K_{\tt{mix}}:=(K_1+K_2)/2$ is indeed better than applying $K_1$ or $K_2$ for all time. We first analyze the value function due to $K_i, i=1,2$ (which are the same due to \emph{symmetry} of the problem and the probability values described).
\begin{align*}
    V^{K_i}(s_1) &= \expect{\sum\limits_{t\geq0} \gamma^tr_t(a_t,s_t)}\\
    &=0.1\times \gamma^9+0.1\times 0.9\times 0.1\times \gamma^{11} + 0.1\times 0.9\times 0.1\times0.9\times 0.1\times \gamma^{13} \ldots\\
    &=0.1\times \gamma^9 \left(1+\left(0.1\times 0.9 \gamma^2\right)+ \left(0.1\times 0.9 \gamma^2\right)^2+\ldots \right)=\frac{0.1\times \gamma^9}{1-0.1\times0.9\times \gamma^2}.
\end{align*}
We will next analyze the value if a true mixture controller i.e., $K_{\tt{mix}}$ is applied to the above MDP. The analysis is a little more intricate than the above. We make use of the following key observations, which are elementary but crucial. 
\begin{enumerate}
    \item Let $\tt{Paths}$ be the set of all sequence of states starting from $s_1$, which terminate at $s_{10}$ which can be generated under the policy $K_{\tt{mix}}$. Observe that 
    \begin{equation}
        V^{K_{\tt{mix}}}(s_1) = \sum\limits_{\underline{p}\in \tt{Paths}}\gamma^{{\tt{length}}(\underline{p})}\prob{\underline{p}}.1.
    \end{equation}
    Recall that reward obtained from the transition $s_9\to s_{10}$ is 1.
    \item Number of distinct paths with exactly $n$ loops: $2^n$.
    \item Probability of each such distinct path with $n$ cycles:
    \begin{align*}
        &=\underbrace{(0.55\times 0.45)\times (0.55\times 0.45)\times \ldots (0.55\times 0.45)}_{n \,\tt{ times}}\times 0.55\times 0.55\times \gamma^{9+2n}\\
        &=\left(0.55\right)^2\times \gamma^9\left(0.55\times 0.45\times \gamma^2 \right)^n.
    \end{align*}
    \item Finally, we put everything together to get:
    \begin{align*}
        V^{K_{\tt{mix}}}(s_1) = &\sum\limits_{n=0}^{\infty} 2^n\times \left(0.55\right)^2 \times \gamma^9\times \left(0.55\times 0.45\times \gamma^2 \right)^n\\
        &=\frac{\left(0.55\right)^2\times \gamma^9}{1-2\times 0.55\times0.45\times \gamma^2} > V^{K_i}(s_1).
    \end{align*}
\end{enumerate}
This shows that a mixture performs better than the constituent controllers. The plot shown in Fig. ~\ref{fig:Chainfigure} shows the Softmax PG algorithm (even with estimated gradients and value functions) converges to a (0.5,0.5) mixture correctly.
%%%%%%%%%%%%%%%%%%%%%%%%%%%%%%%%%%%%%%%%%%%%%%%%%%%%%%%%%%%%%%%%%%%%%%
\subsection{Stationary Bernoulli Queues}

\begin{figure}[t]
 \centering
\subfigure[Arrival rate:$(\lambda_1,\lambda_2)=(0.49,0.49)$]{\label{subfig:equal arrival rate}\includegraphics[scale = 0.32]{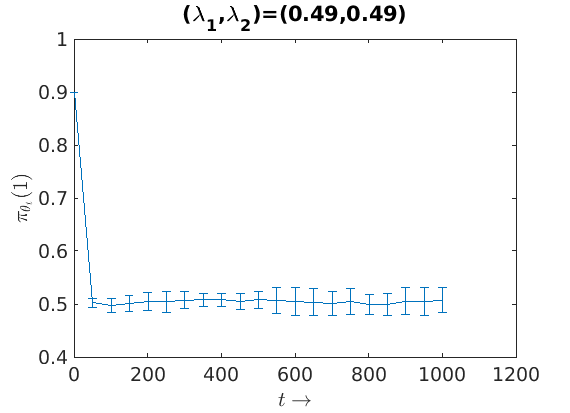}}
\qquad
\subfigure[Arrival rate:$(\lambda_1,\lambda_2)=(0.49,0.49)$]{\label{subfig:equal arrival rate}\includegraphics[scale = 0.32]{plots/queuiung_example_corrected.png}}
\qquad        
\subfigure[Arrival rate:$(\lambda_1,\lambda_2)=(0.3,0.4)$]{\label{subfig:unequal arrival rate}\includegraphics[scale = 0.32]{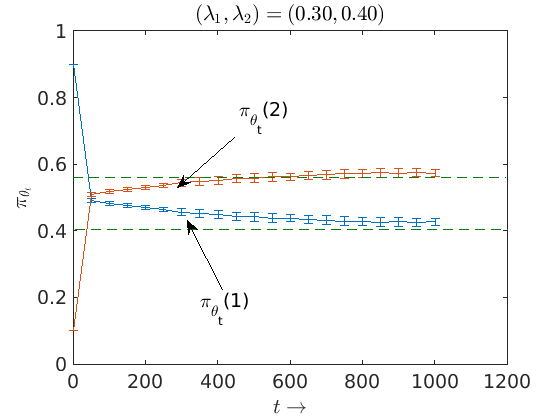}}
\qquad
\subfigure[(Estimated) Value functions for case with the two base policies and Longest Queue First (``LQF") ]{\label{subfig:Value function comparisom}        \includegraphics[scale = 0.32]{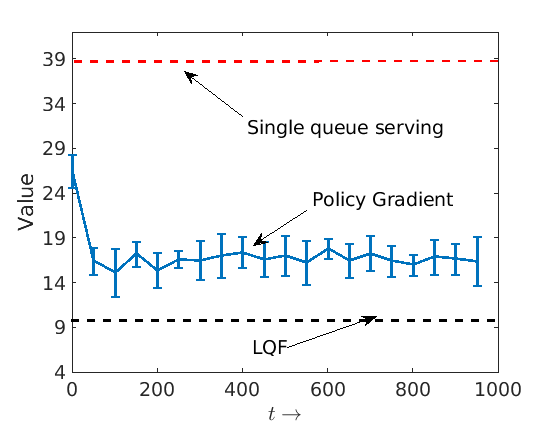}}
\qquad
\subfigure[Case with 3 experts: Always Queue~1, Always Queue~2 and LQF.]{\label{subfig:3experts}\includegraphics[scale = 0.32]{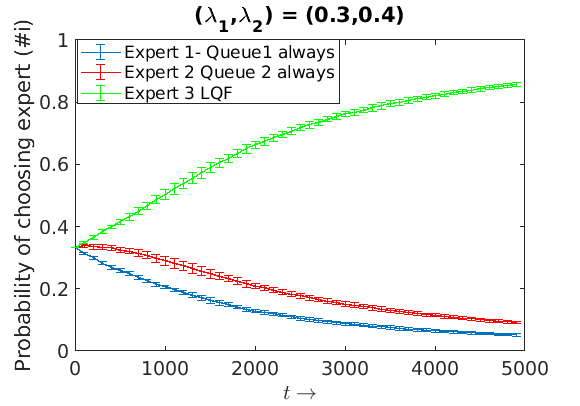}}
\caption{Softmax policy gradient algorithm applies show convergence to the best mixture policy.}
\label{fig:simulations}
\end{figure}

% \begin{figure*}[t]
% \centering     %%% not \center
% \subfigure[Cartpole with $\{K_1=K_{opt}, K_2=K_{opt}+\Delta\}$. ]{\label{subfig:cartpole--asym}\includegraphics[ height=3.2cm, width=4.2cm]{plots/Cartpole--asym--both-ct--large.png}}
% \hfill
% \subfigure[Cartpole with $\{K_1=K_{opt}-\Delta, K_2=K_{opt}+\Delta\}$. ]{\label{subfig:cartpole--symm}\includegraphics[height=3.2cm, width=4.2cm]{plots/Cartpole--sym--both-ct.png}}
% \hfill
% \subfigure[Softmax PG applied to a Path Graph Network.]{\label{subfig:BQ5}\includegraphics[height=3.2cm, width=4.2cm]{plots/5controllers--both-ct.png}}
% \hfill
% \subfigure[2 queue system with time-varying arrival rates]{\label{subfig:nonstatbernoulliqueues}\includegraphics[height=3.2cm, width=4.2cm]{plots/NonStat2trans--mod-ct.png}}
% \caption{Softmax policy gradient algorithm applied to the cartpole control and path graph scheduling tasks. Each plot shows (a) the  learnt probabilities of various base controllers over time, and (b) the minimum probability $\bar{c_t}$ and $\bar{c}^T$ as described in text.}
% \label{fig:simulations2}
% \end{figure*}

We study two different settings (1) where in the first case the optimal policy is a strict improper combination of the available controllers and (2) where it is at a corner point, i.e., one of the available controllers itself is optimal. Our simulations show that in both the cases, PG converges to the correct controller distribution. 

Recall the example that we discussed in Sec.~\ref{sec:motivating-queueing-example}. We consider the case with Bernoulli arrivals with rates $\boldsymbol{\lambda}=[\lambda_1,\lambda_2]$ and are given two base/atomic controllers $\{K_1,K_2\}$, where controller $K_i$ serves Queue~$i$ with probability $1$, $i=1,2$. As can be seen in Fig.~\ref{subfig:equal arrival rate} when $\boldsymbol{\lambda}=[0.49,0.49]$ (equal arrival rates), GradEst converges to an improper mixture policy that serves each queue with probability $[0.5,0.5]$. Note that this strategy will also stabilize the system whereas both the base controllers lead to instability (the queue length of the unserved queue would obviously increase without bound). Figure \ref{subfig:unequal arrival rate}, shows that with unequal arrival rates too, GradEst quickly converges to the best policy. 

Fig.~\ref{subfig:Value function comparisom} shows the evolution of the value function of GradEst (in blue) compared with those of the base controllers (red) and the \emph{Longest Queue First} policy (LQF) which, as the name suggests, always serves the longest queue in the system (black). LQF, like any policy that always serves a nonempty queue in the system whenever there is one\footnote{Tie-breaking rule is irrelevant.}, is known to be optimal in the sense of delay minimization for this system \cite{LQF2016}. See Sec.~\ref{sec:simulation-details} in the Appendix for more details about this experiment.

Finally, Fig.~\ref{subfig:3experts} shows the result of the second experimental setting with three base controllers, one of which is delay optimal. The first two are $K_1,K_2$ as before and the third controller, $K_3$, is LQF. Notice that $K_1,K_2$ are both queue length-agnostic, meaning they could attempt to serve empty queues as well. LQF, on the other hand, always and only serves nonempty queues. Hence, in this case the optimal policy is attained at one of the corner points, i.e., $[0,0,1]$. The plot shows the PG algorithm converging to the correct point on the simplex. 

Here, we justify the value of the two policies which always follow one fixed queue, that is plotted as straight line in Figure \ref{subfig:Value function comparisom}. Let us find the value of the policy which always serves queue 1. The calculation for the other expert (serving queue 2 only) is similar. Let $q_i(t)$ denote the length of queue $i$ at time $t$. We note that since the expert (policy) always recommends to serve one of the queue, the expected \textit{cost} suffered in any round $t$ is $c_t=q_1(t)+q_2(t) = 0 + t.\lambda_2$. Let us start with empty queues at $t=0$. 
 \begin{align*}
     V^{Expert 1}(\mathbf{\underline{0}}) &= \expect{\sum\limits_{t=0}^T \gamma^tc_t\given Expert 1}\\
     &= \sum\limits_{t=0}^T \gamma^t. t. \lambda_2\\
     &\leq\lambda_2.\frac{\gamma}{(1-\gamma)^2}.
 \end{align*}
With the values, $\gamma=0.9$ and $\lambda_2=0.49$, we get $V^{Expert 1}(\mathbf{\underline{0}})\leq 44$, which is in good agreement with the bound shown in the figure.

%%%%%%%%%%%%%%%%%%%%%%%%%%%%%%%%%%%%%%%%%%%%%%%%%%%%%%%%%%%%%%%%%%%%%%%%%%%
\subsection{Details of Path (Interference) Graph Networks}

\begin{figure*}[t]
\centering
    \subfigure[A basic path-graph interference system with $N=4$ communication links.]{\label{subfig:PGN-TxRx}
        \includegraphics[scale = 0.37]{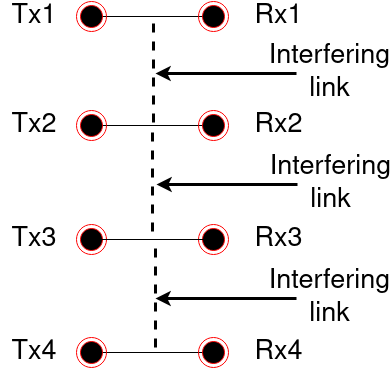}}
    %     \caption{A basic path-graph interference system with $N=4$ communication links.}
    %     \label{subfig:PGN-TxRx}
    % \end{subfigure}%
    % \hfill
    \qquad
    \subfigure[The associated conflict (interference) graph is a \emph{path-graph}.]{\label{subfig:PGN-conflict graph}
        \includegraphics[scale = 0.3]{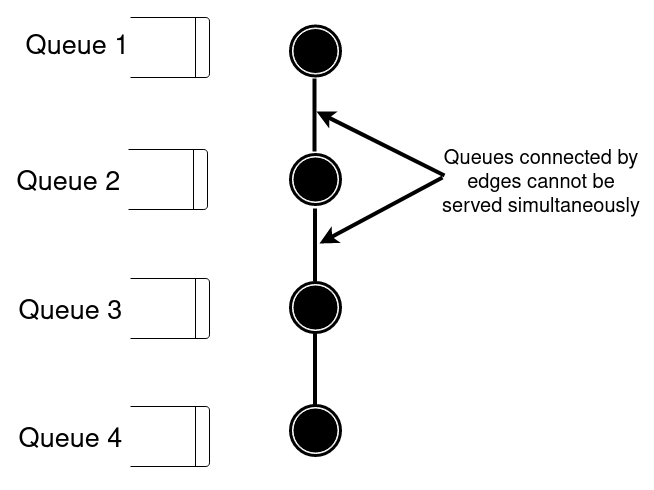}}
    %     \caption{The associated conflict (interference) graph is a \emph{path-graph}.}
    %     \label{subfig:PGN-conflict graph}
    % \end{subfigure}%
\caption{An example of a path graph network. The interference constraints are such that physically adjacent queues cannot be served simultaneously.}
\label{fig:PGN}
\end{figure*}
Consider a system of parallel transmitter-receiver pairs as shown in Figure ~\ref{subfig:PGN-TxRx}. Due to the physical arrangement of the Tx-Rx pairs, no two adjacent systems can be served simultaneously because of interference. This type of communication system is commonly referred to as a \textit{path graph network} \cite{Mohan2020ThroughputOD}. Figure ~\ref{subfig:PGN-conflict graph} shows the corresponding \emph{conflict graph}. Each Tx-Rx pair can be thought of as a queue, and the edges between them represent that the two connecting queues, cannot be served simultaneously. On the other hand, the sets of queues which can be served simultaneously are called \emph{independent sets} in the queuing theory literature. In the figure above, the independent sets are $\{\emptyset,\{1\}, \{2\}, \{3\}, \{4\}, \{1,3\}, \{2,4\}, \{1,4\} \}$. 

Finally, in Table \ref{table:meandelay}, we report the mean delay values of the 5 base controllers we used in our simulation Fig. \ref{subfig:BQ5}, Sec.\ref{sec:simulation-results}. We see the controller $K_2$ which was chosen to be MER, indeed has the lowest cost associated, and as shown in Fig. ~\ref{subfig:BQ5}, our Softmax PG algorithm (with estimated value functions and gradients) converges to it.

\begin{table}[ht]\caption{Mean Packet Delay Values of Path Graph Network Simulation.}
\centering 
\begin{tabular}{c c c}% centered columns (4 columns)
\hline\hline                     %inserts double horizontal lines
Controller & Mean delay (\# time slots) over 200 trials & Standard deviation \\ [0.5ex]% inserts table 
%heading
\hline              % inserts single horizontal line
$K_1(MW)$ & 22.11  & 0.63  \\
% inserting body of the table
${\color{blue}K_2(MER)}$ & \textbf{\color{blue}20.96} &  0.65  \\
$K_3(\{1,3\})$ & 80.10 &  0.92  \\
$K_4(\{2,4\})$ & 80.22 & 0.90 \\
$K_5(\{1,4\})$ & 80.13 &  0.91 \\ [1ex]      % [1ex] adds vertical space
\hline%inserts single line
\end{tabular}\label{table:meandelay}% is used to refer this table in the text
\end{table}

\subsection{Cartpole Experiments} 
%A simulation showing that the converged controller is indeed able to stabilize the cartpole system.
We investigate further the example in our simulation in which the two constituent controllers are $K_{opt}+\Delta$ and $K_{opt}-\Delta$. We use OpenAI gym to simulate this situation. In the Figure ~\ref{subfig:cartpole--symm}, it was shown our Softmax PG algorithm (with estimated values and gradients) converged to a improper mixture of the two controllers, i.e., $\approx(0.53,0.47)$. Let $K_{\tt{conv}}$ be defined as the (randomized) controller which chooses $K_1$ with probability 0.53, and $K_2$ with probability 0.47. Recall from Sec.~\ref{sec:motivating-cartpole-example} that this control law converts the linearized cartpole into an Ergodic Parameter Linear System (EPLS).  In Table~\ref{table:meanfalls} we report the average number of rounds the pendulum stays upright when different controllers are applied for all time, over trajectories of length 500 rounds. The third column displays an interesting feature of our algorithm. Over 100 trials, the base controllers do not stabilize the pendulum for a relatively large number of trials, however, $K_{\tt{conv}}$ successfully does so most of the times.
\begin{table}[ht]\caption{A table showing the number of rounds the constituent controllers manage to keep the cartpole upright.}
\centering 
\begin{tabular}{c c c}% centered columns (4 columns)
\hline\hline                     %inserts double horizontal lines
Controller & \makecell{Mean number of rounds \\before the pendulum falls $\land$ 500} & \makecell{$\#$ Trials out of 100 in which \\the pendulum falls before 500 rounds} \\ [0.5ex]% inserts table 
%heading
\hline              % inserts single horizontal line
$K_1(K_{opt}+\Delta)$ & 403  & 38  \\

${K_2(K_{opt}-\Delta)}$ & 355 &  46  \\
${\color{blue}K_{\tt{conv}}}$ & 465 &  \textbf{\color{blue}8}  \\ [1ex]
\hline%inserts single line
\end{tabular}\label{table:meanfalls}% is used to refer this table in the text
\end{table}

We mention here that if one follows $K^*$, which is the optimum controller matrix one obtains by solving the standard Discrete-time Algebraic Riccatti Equation (DARE) \cite{bertsekas11dynamic}, the pole does not fall over 100 trials. However, as indicated in Sec.\ref{sec:Introduction}, constructing the optimum controller for this system from scratch requires exponential, in the number of state dimension, sample complexity \cite{chen-hazan20black-box-control-linear-dynamical}. On the other hand $K_{\tt{conv}}$ performs very close to the optimum, while being sample efficient.

%%%%%%%%%%%%%%%%%%%%%%%%%%%%%%%%%%%%%%%%%%%%%%%%%%%%%%%%%%%%%%%%%%%%%%%%%%%%
%%%%%%%%%%%%%%%%%%%%%%%%%%%%%%%%%%%%%%%%%%%%%%%%%%%%%%%%%%%%%%%%%%%%%%%%%%%%%

%\subsection{Choice of hyperparameters.} 
\textbf{Choice of hyperparameters.} In the simulations, we set learning rate to be $10^{-4}$, $\#\tt{runs}=10,\#\tt{rollouts}=10, \tt{lt}= 30$, discount factor $\gamma=0.9$ and $\alpha=1/\sqrt{\#\tt{runs}}$. All the simulations have been run for 20 trials and the results shown are averaged over them. We capped the queue sizes at 1000. 

 \subsection{Some extra simulations for natural-actor-critic based improper learner NACIL}
 
 \begin{itemize}
     \item First we show a queuing theory where we have 2 queues to be served and we have two base controllers similar to as we discussed in the Sec~\ref{sec:motivating-examples}. However,  here we have two different arrival rates for the two queues $(\lambda_1,\lambda_2)\equiv (0.4,0.3)$, i.e., the arrival rates are unequal. We plot in Fig.~\ref{fig:NAC unequal arrivals} the probability of choosing the two different controllers. We see that ACIL converges to the ``correct" mixture of the base controllers.
     \item Next, we show a simulation on the setting in Sec.~\ref{subsec:State Dependent controllers -- Chain MDP}, which we called a Chain MDP. We recall that this setting consists of two base controllers $K_1$ and $K_2$, however a $(1/2,1/2)$ mixture of the two controllers was shown (analytically) to perform better than each individual ones. As the plot in Fig.~\ref{fig:NAC chain MDP} shows NACIL identifies the correct combination and follows it. 
 \end{itemize}
 {\bfseries{Choice of hyperparameters.}} For the queuing theoretic simulations of Algorithm~\ref{alg:actor-critic improper RL alg} ACIL, we choose $\alpha=10^{-4}$, $\beta=10^{-3}$. We choose the identity mapping $\phi(s)\equiv s$, where $s$ is the current state of the system which is a $N-$length vector, which consists of the $i^{th}$ queue length at the $i^{th}$ position. $\lambda$ was chosen to be $0.1$. The other parameters are chosen as $B=50$, $H=30$ and $T_c=20$. We choose a buffer of size 1000 to keep the states bounded, i.e., if a queue exceeds a size 1000, those arrivals are ignored and queue length is not increased. This is used to normalize the $\norm{\phi(s)}_2$ across time. 
\begin{figure*}
    \centering
    \includegraphics[scale=0.75]{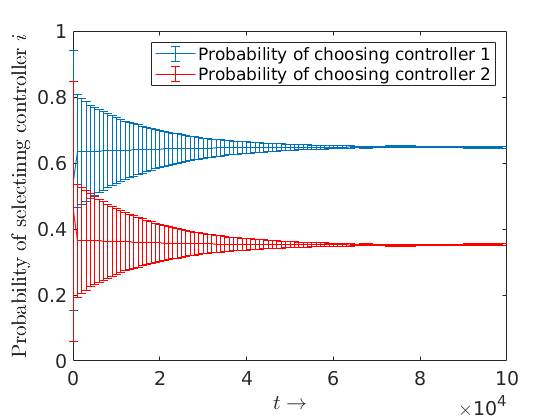}
    \caption{NACIL alg applied to the a queuing system with two queues, having arrival rates $(\lambda_1,\lambda_2)\equiv(0.4,0.3)$. Plot shows probability of choosing controllers $K_1$ and $K_2$ averaged over 20 trials}.
    \label{fig:NAC unequal arrivals}
\end{figure*}

\begin{figure*}
    \centering
    \includegraphics[scale=0.75]{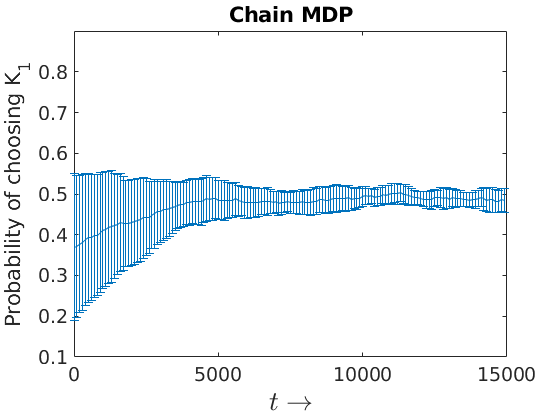}
    \caption{NACIL alg applied to the linear Chain MDP with various randomly chosen initial distribution. Plot shows probability of choosing controller $K_1$ averaged over 20 trials}.
    \label{fig:NAC chain MDP}
\end{figure*}
%%%%%%%%%%%%%%%%%%%%%%%%%%%%%%%%%%%%%%%%%%%%%%%%%%%%%%%%%%%%%%%%%%%%%%%%%%%%%%%
%%%%%%%%%%%%%%%%%%%%%%%%%%%%%%%%%%%%%%%%%%%%%%%%%%%%%%%%%%%%%%%%%%%%%%%%%%%%%%%
\section{Additional Comments}
\begin{itemize}
    
\item \textit{Comment on the `simple' experimental settings.} The motivating examples may seem ``simple" and trainable from scratch with respect to progress in the field of RL. However, our main point is that there are situations where, for example, one may have trained controllers for a range of environments {\em in simulation}. However, the real life environment may differ from the simulated ones. We demonstrate that exploiting such basic pre-learnt controllers via our approach can help in generating a better (meta) controller for a new, unseen environment, instead of learning a new controller for the new environment from scratch.

\item \textit{On characterizing the performance of the optimal mixture policy.} As correctly noticed by the reviewer, the inverted pendulum experiment showed that the optimal mixture policy can vastly outperform the component controllers. Currently, however, we do not provide any theoretical guarantees regarding this, since this depends on the structure of the policy space and the underlying MDP, which is very challenging. We hope to explore this task in our future work.
\end{itemize}
\section{Discussion}
 \label{sec:conclusion-and-discussion}
% \vspace{-5pt}
  We have considered the problem of using a menu of baseline controllers and combining them using improper probabilistic mixtures to form a superior controller. In many relevant MDP learning settings, we saw that this is indeed possible, and the policy gradient and actor-critic based analyses indicate that this approach may be widely applicable. %
% % In this paper, we considered the problem of choosing the best mixture of controllers for Reinforcement Learning and made the first attempt at improper learning in the RL setting. 
%One natural option in this case is to run each controller separately for a long time and then choose the best one based on estimated returns. While quite plausible, this \enquote{explore-then-exploit} approach is likely to be severely suboptimal in terms of rates. Moreover, it is not clear how to obtain the best mixture of base controllers as opposed to the best base controller. We recall the queuing example (Sec.~\ref{sec:motivating-queueing-example}) where the best mixture may be strictly superior each base controller.
% %
This work opens up a plethora of avenues. One can consider a richer class of mixtures that can look at the current state and mix accordingly. For example, an attention model can be used to choose which controller to use, or other state-dependent models can be relevant. %The learning architecture should not change dramatically since we are using gradients for the selection process which currently is simple, but may be replaced by a more complex architecture. 
Another example is to artificially force switching across controllers to occur less frequently than in every round. The can help create \emph{momentum} and allow the controlled process to 'mix' better, when using complex controllers.

A few caveats are in order regarding the potential societal impact and consequences of this work. As such, this paper offers a way of combining or `blending' a given class of decision-making entities in the hope of producing a `better' one. In this process, the definitions of what constitutes `optimal' or `expected' behavior from a policy are likely to be subjective, and may encode biases and attitudes of the system designer(s). More importantly, it is possible that the base policy class (or some elements of it) have undesirable properties to begin with (e.g., bias or insensitivity), which could get amplified in the improper learning process as an unintended outcome. We sound ample caution to practitioners who contemplate adopting this method. 

Finally, in the present setting, the base controllers are fixed. It would be interesting to consider adding adaptive, or 'learning' controllers as well as the fixed ones. Including the base controllers can provide baseline performance below which the performance of the learning controllers would not drop.

\end{document}